\newcommand{\red}[1]{\textcolor{black}{#1}}
\theoremstyle{plain}
\newtheorem{theorem}{Theorem}[section]
\newtheorem{lemma}[theorem]{Lemma}
\theoremstyle{definition}
\newtheorem{assumption}[theorem]{Assumption}
\theoremstyle{remark}
\DeclareMathOperator*{\argmax}{arg\,max}
\newcommand{\thetac}{\theta^{\mathrm{c}}}
\newcommand{\Deltac}{\Delta^{\mathrm{c}}}
\newcommand{\Deltar}{\Delta^{\mathrm{r}}}
\newcommand{\thetar}{\theta^{\mathrm{r}}}
\newcommand{\muc}{\mu^{\mathrm{c}}}
\newcommand{\mur}{\mu^{\mathrm{r}}}
\newcommand{\truethetac}{\theta^{\mathrm{c}}}
\newcommand{\truethetar}{\theta^{\mathrm{r}}}
\newcommand{\hthetact}{\hat{\theta}_t^{\mathrm{c}}}
\newcommand{\hthetart}{\hat{\theta}_t^{\mathrm{r}}}
\newcommand{\EE}{\mathbb{E}} % Expectation
\newcommand{\PP}{\mathbb{P}} % Probability
\newcommand{\cbr}[1]{\left\{#1\right\}}
\newcommand{\leo}{o} % Define \leo as a placeholder for small-o notation
\newcommand*{\one}{\mathds{1}}
\def\cd{\cdot}
\title{Asymptotically Optimal \\ Linear Best Feasible Arm Identification with Fixed Budget}
\author[1]{\href{mailto:<jiebian@u.nus.edu>?Subject=Your UAI 2025 paper}{Jie Bian}{}}
\author[2,1]{Vincent Y. F. Tan}
\affil[1]{%
    Department of Electrical and Computer Engineering\\
    National University of Singapore\\
    Singapore
}
\affil[2]{%
    Department of Mathematics\\
    National University of Singapore\\
    Singapore
}  
\begin{document}
\maketitle

\begin{abstract}
 %The problem of identifying the best feasible arm within a fixed budget has garnered significant attention in recent years. However, there remains a gap in the literature---the exact exponential rate of decrease of the error probability to zero has not been established, even in the relatively simple setting of $K$-armed bandits with Gaussian noise. In this paper, we fill this gap by investigating the problem in the context of linear bandits. We propose a novel algorithm for best feasible arm identification that achieves an exponential rate of decay of the error probability. The exponent matches that of an information-theoretic lower bound. This is done by  leveraging a posterior sampling framework in the min learner within a game-based sampling rule involving a min- and a max-learner. This strategy is in the same spirit as Thompson sampling. 
%Additionally, we demonstrate the effectiveness of our algorithm through empirical evaluations on several instances of varying hardness, corroborating the theoretical result, and showcasing that the algorithm outperforms several benchmark algorithms. % its practical performance.
The challenge of identifying the best feasible arm within a fixed budget has attracted considerable interest in recent years. However, a notable gap remains in the literature: the exact exponential rate at which the error probability approaches zero has yet to be established, even in the relatively simple setting of $K$-armed bandits with Gaussian noise. 
In this paper, we address this gap by examining the problem within the context of linear bandits. We introduce a novel algorithm for best feasible arm identification that guarantees an exponential decay in the error probability. Remarkably, the decay rate—characterized by the exponent—matches the theoretical lower bound derived using information-theoretic principles. 
Our approach leverages a posterior sampling framework embedded within a game-based sampling rule involving a min-learner and a max-learner. This strategy shares its foundations with Thompson sampling, but is specifically tailored to optimize the identification process under fixed-budget constraints. 
Furthermore, we validate the effectiveness of our algorithm through comprehensive empirical evaluations across various problem instances with different levels of complexity. The results corroborate our theoretical findings and demonstrate that our method outperforms several benchmark algorithms in terms of both accuracy and efficiency.
\end{abstract}

\vspace{-.1in}
\section{Introduction}\vspace{-.1in}
\label{Introduction}
\red{Many real-world decision-making systems, such as recommendation platforms, clinical trials, and autonomous agents, must make selections under feasibility constraints that reflect resource limitations, safety thresholds, or business requirements. For instance, a system may wish to recommend actions that maximize utility while adhering to pricing limits, fairness constraints, or operational risks. These constraints are often modeled as linear inequalities over contextual features, with parameters—such as cost or risk scores—that are unknown and must be inferred from data. Moreover, these estimates are inherently noisy due to model approximation, user behavior variability, and environmental uncertainty. We formalize this as follows: an arm \( z \in \mathbb{R}^d \) is considered feasible if its expected cost satisfies \( \langle \thetac, z \rangle \leq \tau \), where \( \thetac \in \mathbb{R}^d \) is an unknown cost vector and \( \tau \in \mathbb{R} \) is a known threshold. The goal is to identify the best feasible arm—the one with the highest expected reward among all feasible options—based on noisy observations of both reward and cost. This setting captures a wide range of constrained decision-making problems and offers a tractable framework for inference and optimization. It can be viewed as a constrained variant of the best arm identification (BAI) problem in multi-armed bandits (MAB). In classical BAI, the objective is to identify the arm with the highest expected reward under a fixed budget or confidence. However, feasibility constraints introduce new challenges, especially under a fixed sampling budget where confidence intervals cannot be reliably maintained. This motivates the development of new algorithms that jointly reason over reward and constraint uncertainty within limited resources.
}

%\red{This problem falls under the best arm identification (BAI) framework in multi-armed bandits (MAB).}
The  MAB  problem is a central challenge in machine learning and statistical decision theory (\cite{lattimore2020bandit}). Here, a decision-maker, often referred to as the player, repeatedly selects one of several available arms. Each arm corresponds to a distinct action, and upon selection, the player receives a reward drawn from an unknown probability distribution associated with that arm.

The MAB problem can be categorized into two main settings: regret minimization and best arm identification. In regret minimization, the goal is to maximize cumulative reward, or equivalently, minimize cumulative regret, defined as the gap between the algorithm’s cumulative reward and that of always pulling the best arm at each time step. In contrast, best arm identification focuses on maximizing the probability of correctly identifying the arm with the highest expected reward (\cite{maron1997racing}), either within a predetermined budget (\cite{gabillon2012best, komiyama2022minimax}) or with a specified level of confidence (\cite{kalyanakrishnan2012pac, kuroki2020polynomial}).

MAB problems have broad applications, including online advertising, clinical trials, and recommendation systems (\cite{kuleshov2014algorithms,bouneffouf2019survey}). Despite their utility, these problems present key challenges, such as managing the exploration-exploitation trade-off, scalability across large action spaces, and incorporating contextual information.

\vspace{-.1in}
\subsection{Regret Minimization}\vspace{-.1in}

Regret minimization focuses on reducing the cumulative difference between the rewards obtained by the algorithm and those of the optimal strategy, requiring a careful balance between exploration and exploitation. In the classic $K$-armed bandit problem, algorithms such as UCB1 \cite{auer2002finite} and Thompson Sampling \cite{dc35850b-2ca1-314f-9e0d-470713436b17} are widely used. UCB1 selects arms based on an upper confidence bound that prioritizes arms with higher uncertainty until sufficient data is gathered, while Thompson Sampling maintains a posterior distribution over rewards and selects arms probabilistically based on reward samples. In the linear bandit setting, where rewards are linear functions of a known context vector with unknown parameters, LinUCB \cite{abbasi2011improved} extends UCB1 by incorporating parameter uncertainty into the confidence bounds, and LinTS \cite{agrawal2013thompson} applies a similar approach by sampling from the posterior distribution of the linear parameters to select arms.

\vspace{-.1in}
\subsection{Fixed Budget Best Arm Identification}\vspace{-.1in}

The BAI problem seeks to identify the arm with the highest expected reward, either under a fixed budget of arm pulls or by minimizing the number of trials required to achieve a specified level of confidence. Unlike regret minimization, this objective places a greater emphasis on exploration to ensure reliable identification of the optimal arm.

In the finite fixed budget setting, two algorithms are particularly noteworthy: Successive Rejects and Sequential Halving. The Successive Rejects algorithm \cite{audibert2010best} iteratively eliminates the arm with the lowest empirical mean reward after allocating a predefined number of trials to all arms in each phase. This process continues until only the best arm remains. The Sequential Halving algorithm \cite{karnin2013almost} adopts a similar phase-based approach but eliminates half of the remaining arms with the lowest observed rewards after each round. Subsequent rounds concentrate trials on the more promising arms, improving the efficiency of arm elimination.

Within the infinite fixed-budget setting, the primary objective of algorithms is to asymptotically achieve an exponential rate for the upper bound on regret as the time horizon $T$ approaches infinity. Algorithms in this category often prioritize efficient exploration-exploitation trade-offs to maximize long-term performance. One notable example is the Top-Two Thompson Sampling (TTTS) algorithm, introduced by \citet{russo2016simple}. The Top-Two Thompson Sampling algorithm introduces a critical tuning parameter, denoted by $\beta$. Achieving optimal rate of posterior convergence under this algorithm requires that $\beta$ precisely match the optimal allocation rate associated with the best arm. Only under this condition does the algorithm attain the optimal rate of posterior convergence. This dependency underscores the importance of parameter selection and adaptive tuning mechanisms for Bayesian algorithms operating in infinite horizon settings. Recently, new algorithm (\cite{li2024optimal}) called Pure Exploration with Projection-Free Sampling (PEPS) was developed; this algorithm does not need tuning of $\beta$ and achieves the optimal rate of posterior convergence when the parameter set is convex and bounded (\cite{kone2024pareto}).

\vspace{-.1in}
\subsection{Related Work}\vspace{-.1in}

The literature on MABs  has evolved significantly across three key dimensions: (1) unconstrained vs.\ constrained settings, (2) regret minimization vs.\ best arm identification objectives, and (3) fixed-confidence vs.\ fixed-budget paradigms. Our work bridges a critical gap in constrained BAI under fixed budgets by proposing a tuning-free algorithm with matching asymptotic bounds. Below, we contextualize our contributions within these axes of research.

\textbf{Unconstrained Best Arm Identification.} Early work in linear bandits focused on unconstrained settings. Recent breakthroughs, such as the min-max game framework by \citet{li2024optimal}, achieve minimax optimal error rates without tuning parameters. For structured bandits, \citet{azizi2021fixed} proposed an early fixed-budget method for linear BAI, though their theoretical guarantees fall short of tightness. \citet{yang2022minimax} offered a more refined analysis, but their upper bound retains a constant suboptimality gap compared to the lower bound. These advances, however, do not account for resource constraints.

\textbf{Constrained Bandits: Regret Minimization.} In constrained regret minimization, confidence intervals for rewards and costs drive algorithm design. Works such as \citet{amani2019linear, moradipari2021safe} and kernelized extensions \citep{zhou2022kernelized} ensure sublinear regret. Contextual settings further refine this approach \citep{pacchiano2024contextual}. While effective for minimizing cumulative regret, these methods are ill-suited for identifying the optimal feasible arm under fixed budgets.

\textbf{Constrained Best Arm Identification.} Under fixed-confidence settings, safety constraints dominate. \citet{katz2019top, camilleri2022active} maintain conservative safe arm sets via confidence intervals, while \citet{wang2022best, shang2023price} address pricing-specific constraints. These methods prioritize statistical efficiency over budget allocation, limiting their applicability to fixed-budget scenarios. \citet{Hou23} proposed an  optimal BAI algorithm that accounts for variance constraints. 

\textbf{Fixed-Budget Challenges.} The fixed-budget setting demands optimal resource allocation. Early efforts in \( K \)-armed bandits \citep{faizal2022constrained} and combinatorial settings \citep{tang2024pure} lack matching error bounds. Recent progress in linear bandits \citep{yang2022minimax} achieves minimax optimality, yet these ignore constraints entirely. A key unresolved challenge lies in balancing cost-awareness with asymptotic optimality under limited samples.

\textbf{Infinite-Budget and Parameter Sensitivity.} For infinite horizons, \citet{yang2025stochastically} proposed Top-Two Thompson Sampling with \( \beta \)-optimality, requiring careful parameter tuning. Parallel advances in unconstrained settings \citep{li2024optimal} eliminate tuning parameters via posterior sampling, but extending this to constrained problems remains open.

\textbf{Feasibility vs.\ Optimization.} Prior work such as \citet{katz2018feasible} identifies entire feasible arm sets, diverging from our goal of pinpointing the best feasible arm. Similarly, Pareto set identification \citep{kone2024pareto} focuses on multi-objective trade-offs rather than constrained single-objective optimization.

\vspace{-.1in}
\subsection{Main Contributions}\vspace{-.1in}
Our contributions in this paper are fourfold:

\begin{enumerate}
    \item We propose a novel algorithm BLFAIPS that achieves matching upper and lower bounds for the best feasible arm identification problem under a fixed budget. This is the first algorithm in this domain to achieve optimality in the exponent of the error probability.

    \item We derive two distinct forms of the exponential rate in the lower bound (Theorems~\ref{Thm:lb1} and~\ref{Thm:equ_lowerboundterm}). One form is derived from a Bayesian perspective, while the other arises from a frequentist viewpoint. We further prove the equivalence of these two formulations, thus unifying both theoretical perspectives.

    \item We improve the structure introduced in \cite{li2024optimal} by replacing the exponential weights algorithm with AdaHedge and eliminating the need for the doubling trick. Additionally, we relax the assumption that the reward is bounded, which was a requirement in their theoretical guarantees and proofs.

    \item We conduct empirical studies on synthetic and real-world datasets to evaluate the performance of our algorithm against several baseline methods on representative problem instances. Our results demonstrate that the proposed algorithm consistently outperforms or remains competitive with other state-of-the-art algorithms across different scenarios.

\end{enumerate}

\section{Problem Setup}
We consider the BAI  problem for linear bandits in the fixed budget setting with a budget $T$. In contrast to the conventional setting in which there are no constraints, we impose a constraint on the permissible arm to be selected at the end of the horizon $T$. The precise problem setting is as follows. There are two finite sets of arms -- a {\em training set} $\mathcal{X} \subset \mathbb{R}^d$ and a {\em testing set} $\mathcal{Z} \subset \mathbb{R}^d$. At each time $t \in [T]$, based on the learner's past selections and their corresponding outcomes, the learner chooses an arm $X_t \in \mathcal{X}$ and the environment reveals noisy versions of a reward (r) and a cost (c) which are respectively denoted as 
\begin{align}
    Y^{\mathrm{r}}_t &= \langle \theta^{\mathrm{r}}, X_t\rangle+\epsilon_t ,\quad \mbox{and} \label{eqn:reward}\\
    Y^{\mathrm{c}}_t &= \langle \theta^{\mathrm{c}}, X_t\rangle+\eta_t, \label{eqn:cost} 
\end{align}
where $\varepsilon_t$ and  $\eta_t$ are zero mean independent Gaussian noises with variances $\sigma^2$ and $\gamma^2$ respectively. In Eqns.~\eqref{eqn:reward} and~\eqref{eqn:cost}, $\theta^{\mathrm{r}} \in \Theta^{\mathrm{r}}$ and  $ \theta^{\mathrm{c}} \in \Theta^{\mathrm{c}}$ are {\em unknown} $d$-dimensional vectors. The decision $X_t $ at each time $t$ is based on the history $\mathcal{H}_{t-1} :=  \{  (X_s, Y_s^{\mathrm{r}}, Y_s^{\mathrm{c}} )\}_{s=1}^{t-1}$. Our goal is to select the arm that maximizes the mean reward subject to a constraint on the cost, measured according to the test vectors in $\mathcal{Z}$. More precisely, we denote $\mu^{\mathrm{r}}(z):=\langle \theta^{\mathrm{r}}, z \rangle$ and $\mu^{\mathrm{c}}(z):=\langle \theta^{\mathrm{c}}, z \rangle$ as the expected reward and cost of the testing arm vector $z\in \mathcal{Z}$. We define the {\em best feasible arm as}
\begin{align}
    z^*:= \argmax_{ z\in \mathcal{Z} :\mu^{\mathrm{c}}(z)\le\tau   }  \mu^{\mathrm{r}}(z), \nonumber
\end{align}
where $\tau\in\mathbb{R}$ is the known {\em threshold} on the cost.  At the end of the horizon, the learner recommends an arm $z_{\mathrm{out}} \in \mathcal{Z}$ which is its best guess of $z^*$. 
For ease of exposition later, we define the following three sets. 
\begin{itemize}[itemsep=0pt, parsep=0pt, topsep=0pt]
    \item Suboptimal arm set \\$\mathcal{S} := \{ z\in \mathcal{Z}: \mu^{\mathrm{r}}(z)\le \langle\theta^{\mathrm{r}}, z^* \rangle   \}$;
    \item Superoptimal arm set \\$\overline{\mathcal{S}} := \{ z\in \mathcal{Z}: \mu^{\mathrm{r}}(z)> \langle\theta^{\mathrm{r}}, z^* \rangle   \}$;
    \item Feasible arm set \\ $\mathcal{F} := \{ z \in \mathcal{Z}: \mu^{\mathrm{c}}(z)\le \tau \}$;
    \item Infeasible arm set  \\$\overline{\mathcal{F}} := \{ z \in \mathcal{Z}: \mu^{\mathrm{c}}(z)> \tau \}$
\end{itemize}
For convenient, denote $\mathcal{A}_1:= \overline{\mathcal{F}}\cap \overline{\mathcal{S}}, \mathcal{A}_2:= {\mathcal{F}}\cap {\mathcal{S}}, \mathcal{A}_3:= \overline{\mathcal{F}}\cap {\mathcal{S}} $. For any $z\in \mathcal{Z}$, define $\overline{\Theta}_z \subset \Theta^\mathrm{r} \times \Theta^\mathrm{c}$ such that for any $(\theta_1, \theta_2) \in \overline{\Theta}_z$, the arm $z$ is not the best feasible arm while $\thetar=\theta_1, \thetac=\theta_2$.

%(i) the  {\em  infeasible} arm set $\mathcal{Z}_1:=\cbr{z_i\in \mathcal{Z} \mid \muc_i>\tau \wedge \mur_i \ge \mur_1}$; (ii) the {\em   subfeasible} arm set $\mathcal{Z}_2:=\cbr{i\in \mathcal{Z} \mid \muc_i\le\tau \wedge \mur_i < \mur_1}$, the subinfeasible arm set $\mathcal{Z}_3:=\cbr{i\in \mathcal{Z} \mid \muc_i>\tau \wedge \mur_i \le \mur_1}$.

Furthermore we have the following assumptions:
\begin{assumption}
\label{ass1}
    $\Theta^\mathrm{r}, \Theta^\mathrm{c}$ are both bounded and closed with non-empty interiors such that $\max_{\theta \in \Theta^\mathrm{r}} \lVert \theta \rVert_2 \le R_1, \max_{\theta \in \Theta^\mathrm{c}} \lVert \theta \rVert_2 \le R_2~.$
\end{assumption}
\begin{assumption}
\label{ass2}
    The training and testing sets are bounded such that $\max_{x \in \mathcal{X}} \lVert x \rVert_2 \le L_1, \max_{z \in \mathcal{Z}} \lVert z \rVert_2 \le L_2~.$
\end{assumption}
\begin{assumption}
\label{ass3}
    The best feasible arm $z_1$ is unique and $\text{span}(\mathcal{Z}) \subset \text{span}(\mathcal{X})~.$
\end{assumption}

\textbf{Notations: }We define $\lVert x \rVert_A:= \sqrt{  x^\top A x}$ for any $d$-dimensional column vector $x$ and positive definite $d \times d$ matrix $A$. Given a finite set $\mathcal{X} \subset \mathbb{R}^d$, we define the  probability simplex as 
$\Delta_{\mathcal{X}} := \big\{ \lambda \in \mathbb{R}_{\geq 0}^{|\mathcal{X}|} : \sum_{i=1}^{|\mathcal{X}|} \lambda_i = 1 \big\}.$ For any $\lambda \in \Delta_{\mathcal{X}}$, we define the matrix $A(\lambda)$ as $A(\lambda) := \sum_{x \in \mathcal{X}} \lambda_x x x^\top.$ Denote two sequences of positive real numbers $\cbr{a_n}$ and $\cbr{b_n}$ to be equal to first order in the exponent as   $a_n\stackrel{\cdot}{=}b_n$ if $\lim_{n\to \infty} \frac{1}{n}\log\frac{a_n}{b_n}=0$.

\section{Best Linear Feasible Arm Identification with Posterior Sampling (BLFAIPS)}
\begin{algorithm}
\begin{algorithmic}[1]
\Require Finite set of arms $\mathcal{X} \subset \mathbb{R}^d$, $\mathcal{Z} \subset \mathbb{R}^d$, time horizon $T$, posterior reward and cost distribution variance parameter $\eta_\mathrm{r}$, $\eta_\mathrm{c}$, shrinking rate of exploration $\alpha=\frac{1}{4}$, AdaHedge initial learning rate $\eta_1:=\infty$, cost threshold constant $\tau \in \mathbb{R}$, initial cumulative mixability gap $\Delta_0=0$.
\State \textbf{Define} $\lambda^G = \arg\min_{\lambda \in \Delta_{\mathcal{X}}} \max_{x \in \mathcal{X}} \lVert x\rVert^2_{\mathcal{A}(\lambda)^{-1}}$
\State $\lambda_1 = \frac{1}{|\mathcal{X}|} \mathbf{1}$
\State Initialize $V_0 = I, S_0 = 0, p_1 = \mathcal{N}(0,V_0), \hat{\theta}_1$ arbitrarily
\For{$t = 1, 2, \dots, T$}
    \State $\gamma_t = t^{-\alpha}$
    \Comment{Posterior Sampling}
    \State $\hat{\mathcal{F}} := \{ z\in \mathcal{Z} \mid z^\top \hat{\theta}^{\mathrm{c}}_t \le \tau \}$
    \If{$|\hat{\mathcal{F}}| \neq \emptyset$}
        \State Compute $\hat{z}_t = \arg\max_{z \in \hat{\mathcal{F}}} z^\top \hat{\theta}^{\mathrm{r}}_t$
    \Else
        \State Uniformly sample $\hat{z}_t$ from $\mathcal{Z}$
    \EndIf
    \State Sample $(\theta^{\mathrm{r}}_t,  \theta^{\mathrm{c}}_t) \sim \mathcal{N}(\hat{\theta}^{\mathrm{r}}_t, \eta_r^{-1} V_{t-1}^{-1}) \otimes \mathcal{N}(\hat{\theta}^{\mathrm{c}}_t, \eta_c^{-1} V_{t-1}^{-1}) \mid \overline{\Theta}_{\hat{z}_t}$
    \State Sample $X_t \sim \lambda_t$ where $\lambda_t = (1-\gamma_t) \lambda_t + \gamma_t \lambda^G$
    \State Observe $y^{\mathrm{r}}_t = \langle \theta^{\mathrm{r}}_t, X_t \rangle + \epsilon_t$ where $\epsilon_t \sim \mathcal{N}(0,\sigma^2)$
    \State Observe $y^{\mathrm{c}}_t = \langle \theta^{\mathrm{c}}_t, X_t \rangle + \gamma_t$ where $\gamma_t \sim \mathcal{N}(0,\gamma^2)$
    \Comment{AdaHedge}
    \State Receive the loss vector $l_t \in \mathbb{R}^{|\mathcal{X}|}$ where 
     $l_{t,x} := -\Big( \frac{\lVert \theta_t^{\mathrm{r}} - \hat{\theta}_t^{\mathrm{r}} \rVert^2_{x x^\top}}{\sigma^2} + \frac{\lVert \theta_t^{\mathrm{c}} - \hat{\theta}_t^{\mathrm{c}} \rVert^2_{x x^\top}}{\gamma^2} \Big)$
    \State Compute the Hedge loss $h_t = \sum_{i=1}^{|\mathcal{X}|} \lambda_{t,i} l_{t,i}$
    \State Update $V_t = V_{t-1} + X_t X_t^\top$
    \State $S^\mathrm{c}_t = S^\mathrm{c}_{t-1} + x_t y^\mathrm{c}_t$, $S^\mathrm{r}_t = S^\mathrm{r}_{t-1} + x_t y^\mathrm{r}_t$
    \State $\hat{\theta}^\mathrm{r}_{t+1} = V_t^{-1} S^\mathrm{r}_t$, $\hat{\theta}^\mathrm{c}_{t+1} = V_t^{-1} S^\mathrm{c}_t$
    \State Update $\lambda_{t+1} \propto \lambda_t e^{-\eta_t l_t}$
    \State Compute mixed loss $$m_t := -\frac{1}{\eta_t} \log\left( \sum_{i \in \mathcal{X}} \lambda_{t,i} e^{-\eta_t l_{t,i}} \right)$$
    \State Compute the mixability gap $\delta_t := h_t - m_t$, $\Delta_t = \Delta_{t-1} + \delta_t$
    \State Update the AdaHedge learning rate $\eta_{t+1} = \frac{\log(|\mathcal{X}|)}{\Delta_t}$
\EndFor
\State Sample $(\theta^{\mathrm{r}}_{T+1}, \theta^{\mathrm{c}}_{T+1}) \sim \mathcal{N}(\hat{\theta}^\mathrm{r}_{T+1}, \sigma^2 V_T^{-1}) \otimes \mathcal{N}(\hat{\theta}^\mathrm{c}_{T+1}, \gamma^2 V_T^{-1}) \mid \Theta$
\State $\hat{\mathcal{F}} \gets \{ z \in \mathcal{Z} \mid z^\top \theta^{\mathrm{c}} \le \tau \}$
\If{$|\hat{\mathcal{F}}| \neq \emptyset$}
    \State Compute $\hat{z}_t \gets \arg\max_{z \in \hat{\mathcal{F}}} z^\top \hat{\theta}^{\mathrm{r}}_{T+1}$
\Else
    \State Uniformly sample $\hat{z}_t$ from $\mathcal{Z}$
\EndIf
\State \textbf{Output:} $\hat{z}_{\mathrm{out}} \gets \arg\max_{z \in \hat{\mathcal{F}}} z^\top \theta^{\mathrm{r}}$
\end{algorithmic}
\caption{Best Linear Feasible arm identification with Posterior Sampling (BLFAIPS)}
 \label{alg:1}
\end{algorithm}

We introduce a novel approach for linear feasible arm identification utilizing posterior sampling. This algorithm, called BLFAIPS (and whose pseudocode is in Algorithm~\ref{alg:1}), employs posterior sampling for the min-learner and AdaHedge for the max-learner. We call them  {\em min-learner} and {\em max-learner} since by Sion's minimax theorem, the exponential rate $\Gamma$ in the lower bound, as we will see in Theorem~\ref{Thm:equ_lowerboundterm}, can be represented by:
\begin{align*}
    &\Gamma:=\max_{w\in \Delta_\mathcal{X}}\inf_{(\theta_1, \theta_2)\in\overline{\Theta}_{z^*}}\! \frac{1}{2} \bigg( \frac{\lVert\theta_1\!-\!\thetar\rVert^2_{A(w)^{}}}{\sigma^2}\!+\!\frac{\lVert\theta_2\!-\!\thetac\rVert^2_{A(w)^{}}}{\gamma^2}\bigg)
    \\&=\max_{w\in \Delta_\mathcal{X}}\min_{p\in{\Delta(\overline{\Theta}_{z^*})}} \frac{1}{2} \EE\bigg[  \frac{\lVert\theta_1\!-\!\thetar\rVert^2_{A(w)^{}}}{\sigma^2}\!+\!\frac{\lVert\theta_2\!-\!\thetac\rVert^2_{A(w)^{}}}{\gamma^2}\bigg]
    \\&=\min_{p\in{\Delta(\overline{\Theta}_{z^*})}} \max_{w\in \Delta_\mathcal{X}} \frac{1}{2}\EE \bigg[ \frac{\lVert\theta_1\!-\!\thetar\rVert^2_{A(w)^{}}}{\sigma^2}\!+\!\frac{\lVert\theta_2\!-\!\thetac\rVert^2_{A(w)^{}}}{\gamma^2}\bigg],
\end{align*}
where the expectations above are with respect to $(\theta_1,\theta_2)\sim p$.
\red{The quantity \( \Gamma \) captures the hardness of the best feasible arm identification problem—it characterizes the exponential rate at which the optimal arm can be distinguished from all other (possibly infeasible) arms under reward and cost uncertainty. To match this fundamental limit, our algorithm is designed to mirror the structure of this bound: the max-learner adaptively selects sampling distributions to maximize distinguishability, while the min-learner simulates adversarial parameters that minimize it. Since both components achieve sublinear regret over time (though our focus is not on regret), the overall algorithm converges to the optimal error exponent in the long run.} A comprehensive description of the proposed algorithm is provided below.

\textbf{Initialization}
In Line 1, we define the exploration distribution using the G-optimal design, which is specifically designed to determine the weights of the arms by minimizing the maximum confidence over the arms. This design ensures that the exploration process effectively gathers sufficient information about the unknown linear coefficients, $\thetar$ and $\thetac$. In Lines 2-3, we initialize the weight for the AdaHedge algorithm with a uniform distribution, set the covariance matrix to the identity matrix, and initialize $S_0$ as the $d \times 1$ zero matrix, following the standard initialization procedure for Ridge regression.

\textbf{Min-learner}
From Lines 7 to 16, we perform posterior sampling for the min-learner. In Line 7, we first compute the empirical feasible arm set, and if this set is empty, we randomly select the best empirical feasible arm. If the set is not empty, we proceed in Line 9 to compare the empirical rewards to identify the empirical best feasible arm. In Line 13, we then perform posterior sampling of $(\thetar_t, \thetac_t)$ from the posterior distribution $\mathcal{N}(\hat{\theta}^{\mathrm{r}}_t, \eta_r^{-1} V_{t-1}^{-1}) \otimes \mathcal{N}(\hat{\thetac_t}, \eta_c^{-1} V_{t-1}^{-1})$ within the space $\overline{\Theta}_{\hat{z}_t}$, ensuring that $\hat{z}_t$ is not the best feasible arm. For the learning parameters, we choose $\eta_r:=\frac{\eta}{\sigma^2},\eta_c:=\frac{\eta}{\gamma^2}$ where $\eta:=\min \cbr{\frac{\sigma^2}{8L^2R_1^2},\frac{\gamma^2}{8L^2R_2^2}}.$
Following this, in Line~14, we draw the arm under the distribution $\lambda_t$, which combines the AdaHedge distribution and the G-optimal distribution, with the proportion of the G-optimal distribution decreasing exponentially over time. Finally, in Lines 15-16, we observe the reward and cost corresponding to the selected arm, which are utilized for updating the ridge estimator in Line 20.

\textbf{Max-learner}
From Lines 18 to 24, we employ AdaHedge for the max-learner instead of the traditional exponential weighting algorithm, enabling the learning rate parameter to automatically adapt to the environment without the need for further tuning. Specifically, in Line 18, we compute the loss for each arm \( x \) at time step \( t \) as 
$$
l_{t, x} := -\left(\frac{\lVert \thetar_t - \hat{\theta}^{\mathrm{r}}_t \rVert^2_{x x^\top}}{\sigma^2} + \frac{\lVert \thetac_t - \hat{\theta}^{\mathrm{c}}_t \rVert^2_{x x^\top}}{\gamma^2}\right),
$$
where the first term represents the loss from the reward and the second term represents the loss from the cost. This choice of loss function is derived from our lower bound analysis in Section~\ref{sec:4}. Following this, in the subsequent lines, we apply the AdaHedge algorithm to determine the distribution for the next time step and update the necessary parameters to calculate the learning rate for the next round.

\textbf{Recommendation}
From Lines 26 to 33, upon accumulating the relevant information, the algorithm recommends the arm that is most likely to be the optimal feasible arm. This procedure adheres to the principles of posterior sampling, closely following the approach outlined in Lines 7 to 13.

\red{\textbf{Min-max intuition} Our algorithm is guided by a game-theoretic interpretation of the information-theoretic lower bound. Intuitively, this corresponds to a zero-sum game between two players: the learner (max-player) selects a sampling distribution over arms to maximize information gain, while the adversary (min-player) chooses alternative parameters that are hardest to distinguish from the true ones. This formulation focuses exploration on the most informative regions of the parameter space. The structure mirrors the lower bound in Theorem~4.2, where the exponent is expressed as a max–min over sampling distributions and perturbations within the feasible set \( \overline{\Theta}_{z^*} \). We include a diagram in the appendix to illustrate this interaction and support readers less familiar with game-based bandit formulations.
}

\textbf{Novelty of our algorithm}
Compared to the linear best arm identification algorithm without constraints proposed by \citet{li2024optimal}, our algorithm, BLFAIPS, incorporates several key innovations:

First, we replace the exponential weights algorithm with AdaHedge (Lines 17-24). 
\red{By utilizing AdaHedge, our algorithm does not require the doubling trick employed in~\citet{li2024optimal} to achieve anytime adaptivity. Unlike standard Hedge-style algorithms that need manually scheduled learning rates or restarts, AdaHedge automatically adapts to the loss sequence, which leads to smoother convergence and more effective use of the fixed sampling budget.}

Second, we define a novel loss function (Line 18), derived from our lower bound theorem (Theorem~\ref{Thm:equ_lowerboundterm}). This loss function generalizes the one used in \citet{li2024optimal}, allowing for greater flexibility and improved theoretical guarantees within constrained settings.

Finally, in Line 13, our algorithm incorporates constraints directly into the posterior sampling process. This enhancement contrasts with the min-learner approach of \citet{li2024optimal}, enabling our algorithm to better account for feasibility constraints during decision-making.

These improvements collectively enhance both the theoretical and practical performance of our algorithm under resource-constrained scenarios.

\section{Theoretical Guarantees}
\label{sec:4}
In this section, we  state an information-theoretic lower bound on the error probability.  We provide intuitive explanations for each of the terms in the bound.

\vspace{-.1in}
\subsection{Lower Bound}\vspace{-.1in}
\begin{theorem} \label{Thm:lb1}
    Under the environment $\Theta^\mathrm{r}=\Theta^\mathrm{c}=\mathbb{R}^d$, let $\Pi^{\mathrm{r}}_t:=\mathcal{N}(\hat{\theta}_t^{\mathrm{r}}, \Sigma_t^{\mathrm{r}}), \Pi^{\mathrm{c}}_t:=\mathcal{N}(\hat{\theta}_t^{\mathrm{c}}, \Sigma_t^{\mathrm{c}})$ be the posterior distribution of the unknown $\thetar$ and $ \thetac$ at time step $t$, for any sampling rule where $V_w:=\sum_{i=1}^K w_{i} x_i x_i^{\top}$ and $w_i:=\lim_{n\rightarrow \infty}\frac{T_{i,n}}{n}$, where $W:=\cbr{w=(w_1, w_2, \ldots, w_K): \sum_i w_i=1, w_i\ge0, \, \forall \, i\in [K]}$, 
    \begin{align*}
        \limsup_{T\rightarrow\infty} -\frac{1}{T} \log \PP_{\Pi_T^{\mathrm{r}}, \Pi_T^{\mathrm{c}}}(z_{\text{out}}\neq z^*)\le\Gamma
    \end{align*}
    where  the hardness term is
    \begin{align*}
     \Gamma  := \max_{w\in W}\min_{z\in \mathcal{Z}} \min_{i=1,2,3,4} \cbr{f_i(w,z)   }, 
     \end{align*}
     and 
     \begin{align*}
     f_1(w,z) &:= \frac{(\Deltac(z))^2}{2\gamma^2\lVert z\rVert^2_{V_w^{-1}}} \cdot \one \cbr{z\in \mathcal{A}_1}, \\
    f_2(w,z) &:= \frac{(\Deltar(z))^2}{2\sigma^2\lVert z-z^* \rVert^2_{V_w^{-1}}} \cdot \one \cbr{z\in \mathcal{A}_2}, \\
    f_3(w,z) &:= \bigg( \frac{(\Deltac(z))^2}{2\gamma^2\lVert z\rVert^2_{V_w^{-1}}} + \frac{(\Deltar(z))^2}{2\sigma^2\lVert z-z^*\rVert^2_{V_w^{-1}}} \bigg) 
    \\&\cdot \one \cbr{z\in \mathcal{A}_3}, \\
    f_4(w,z) &:= \frac{(\Deltac(z))^2}{2\gamma^2\lVert z\rVert^2_{V_w^{-1}}} \cdot \one \cbr{z=z^*}.
\end{align*}
\end{theorem}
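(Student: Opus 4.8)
The plan is to convert the posterior error probability into the posterior mass of the misspecification region $\overline{\Theta}_{z^*}$ and then extract its exponential decay from one-dimensional Gaussian tail estimates. Since the recommended arm $z_{\mathrm{out}}$ is by construction the best feasible arm under a draw $(\theta_1,\theta_2)\sim\Pi_T^{\mathrm{r}}\otimes\Pi_T^{\mathrm{c}}$, we have the exact identity
\begin{align*}
\PP_{\Pi_T^{\mathrm{r}},\Pi_T^{\mathrm{c}}}(z_{\mathrm{out}}\neq z^*)=(\Pi_T^{\mathrm{r}}\otimes\Pi_T^{\mathrm{c}})(\overline{\Theta}_{z^*}).
\end{align*}
First I would record the posterior asymptotics. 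With the accumulated design $\sum_{s\le T}X_sX_s^\top$, the assumed limiting allocation $w_i=\lim_n T_{i,n}/n$ gives $\tfrac1T\sum_{s\le T}X_sX_s^\top\to V_w$, invertible on $\mathrm{span}(\mathcal{Z})$ by Assumption~\ref{ass3}, so that $T\,\Sigma_T^{\mathrm{r}}\to\sigma^2V_w^{-1}$ and $T\,\Sigma_T^{\mathrm{c}}\to\gamma^2V_w^{-1}$, while the posterior means are consistent, $\hat{\theta}_T^{\mathrm{r}}\to\thetar$ and $\hat{\theta}_T^{\mathrm{c}}\to\thetac$.

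The core step is to lower bound $(\Pi_T^{\mathrm{r}}\otimes\Pi_T^{\mathrm{c}})(\overline{\Theta}_{z^*})$ by the mass of a single explicit failure event, chosen per competing arm $z$ according to its category. If a draw makes some $z$ feasible and assigns it strictly larger sampled reward than $z^*$, then $z^*$ is not the sampled best feasible arm, so the draw lies in $\overline{\Theta}_{z^*}$; the same holds if the draw renders $z^*$ itself infeasible. Concretely:
\begin{itemize}[itemsep=0pt, parsep=0pt, topsep=0pt]
\item for $z\in\mathcal{A}_1$ I use $\cbr{\langle\theta_2,z\rangle\le\tau}\cap\cbr{\langle\theta_1,z\rangle>\langle\theta_1,z^*\rangle}$; as $z\in\overline{\mathcal{S}}$ the reward factor has probability tending to $1$ (rate $0$), whereas the cost factor is a Gaussian lower tail of rate $f_1(w,z)$;
\item for $z\in\mathcal{A}_2$ the same intersection works, but now the cost factor tends to $1$ and the reward factor contributes rate $f_2(w,z)$;
\item for $z\in\mathcal{A}_3$ both factors are genuine tails, and by independence of $\Pi_T^{\mathrm{r}}$ and $\Pi_T^{\mathrm{c}}$ the two rates add to $f_3(w,z)$;
\item for $z=z^*$ I use $\cbr{\langle\theta_2,z^*\rangle>\tau}$, an upper Gaussian tail of rate $f_4(w,z^*)$.
\end{itemize}
Each rate follows from the exact formula $\Pi_T^{\mathrm{c}}(\langle\theta_2,z\rangle\le\tau)=\Phi\big((\tau-\langle\hat{\theta}_T^{\mathrm{c}},z\rangle)/\sqrt{z^\top\Sigma_T^{\mathrm{c}}z}\big)$ and its reward analogue, together with the covariance asymptotics and $\log\Phi(-x)=-x^2/2+o(x^2)$; substituting $T\Sigma_T^{\mathrm{c}}\to\gamma^2V_w^{-1}$ and $\langle\hat{\theta}_T^{\mathrm{c}},z\rangle\to\muc(z)$ reproduces exactly $\tfrac{(\Deltac(z))^2}{2\gamma^2\lVert z\rVert^2_{V_w^{-1}}}$, and similarly $\tfrac{(\Deltar(z))^2}{2\sigma^2\lVert z-z^*\rVert^2_{V_w^{-1}}}$ for the reward side.

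To finish, monotonicity of measure makes $(\Pi_T^{\mathrm{r}}\otimes\Pi_T^{\mathrm{c}})(\overline{\Theta}_{z^*})$ at least each single-arm mass, so
\begin{align*}
\limsup_{T\to\infty}-\tfrac1T\log\PP_{\Pi_T^{\mathrm{r}},\Pi_T^{\mathrm{c}}}(z_{\mathrm{out}}\neq z^*)\le\min_{z\in\mathcal{Z}}\min_{i}f_i(w,z),
\end{align*}
and since this holds for the rule's own limiting allocation $w$, the left side is at most $\max_{w'\in W}\min_{z}\min_{i}f_i(w',z)=\Gamma$. I expect the main obstacle to be the uniformity needed in the first step: making the Gaussian exponent exact requires simultaneously controlling the random, data-dependent posterior mean and covariance, i.e.\ establishing almost-sure convergence of $\tfrac1T\sum_sX_sX_s^\top$ to an invertible $V_w$ and handling the regime where a gap $\Deltac(z)$ or $\Deltar(z)$ is close to zero; once these are in place the one-dimensional tail computations are routine.
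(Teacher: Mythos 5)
Your proposal is correct and follows essentially the same route as the paper: both identify the error probability with the posterior mass of $\overline{\Theta}_{z^*}$, lower-bound it via per-arm failure events split according to the four categories $\mathcal{A}_1,\mathcal{A}_2,\mathcal{A}_3,\{z^*\}$, extract the rates $f_i(w,z)$ from one-dimensional Gaussian tail estimates using the limiting design $V_w$, and then pass to $\min_z\min_i$ and bound by the max over $w$. The paper's Lemmas~\ref{lem:asp_1}--\ref{lem:asp4} carry out exactly the case analysis you sketch, and your noted obstacle (almost-sure convergence of the scaled design matrix) is handled there by sandwiching $V_{\pi}$ between fixed matrices so the polynomial prefactors vanish in the exponent.
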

The complexity of the linear BAI problem under feasibility constraints is characterized by the hardness parameter $\Gamma$. This parameter is composed of four   terms, denoted by $f_1(w, z)$, $f_2(w, z)$, $f_3(w, z)$, and $f_4(w, z)$, each reflecting a specific source of difficulty in distinguishing arms based on feasibility and optimality conditions.

The term $f_1(w, z)$ captures the challenge of misidentifying superoptimal and infeasible arms as the best feasible arm. This occurs when arms with superior rewards, but violating feasibility constraints, are incorrectly favored due to insufficient exploration of feasibility conditions. The term $f_2(w, z)$ quantifies the difficulty arising from confusing feasible yet suboptimal arms with the optimal feasible arm. In such cases, the agent may exploit arms that satisfy constraints but do not yield the highest reward due to inadequate exploitation of known reward estimates.

Similarly, $f_3(w, z)$ corresponds to the hardness induced by incorrectly identifying infeasible and suboptimal arms as optimal. This issue arises when infeasibility is not adequately recognized during exploration, leading to wasted budget on arms that neither satisfy constraints nor provide optimal rewards. Finally, $f_4(w, z)$ represents the difficulty associated with mistakenly classifying the true best feasible arm as  being infeasible.

\begin{theorem}
\label{Thm:equ_lowerboundterm}
  Under the same assumptions as Theorem~\ref{Thm:lb1}, we establish the equivalence between two distinct expressions of the hardness term in the lower bound:
  \begin{align*}
      \Gamma\!:=\!\max_{w\in \Delta_\mathcal{X}}\!\inf_{(\theta_1, \theta_2)\in\overline{\Theta}_{z^*}}\! \frac{1}{2} \bigg(\! \frac{\lVert\theta_1-\thetar\rVert^2_{A(w)^{}}}{\sigma^2}\!+\!\frac{\lVert\theta_2-\thetac\rVert^2_{A(w)^{}}}{\gamma^2}\!\bigg)
  \end{align*}
\end{theorem}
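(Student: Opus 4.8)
The plan is to establish, for each fixed $w\in\Delta_\mathcal{X}$, the inner identity
\begin{align*}
\inf_{(\theta_1,\theta_2)\in\overline{\Theta}_{z^*}} g_w(\theta_1,\theta_2) = \min_{z\in\mathcal{Z}}\min_{i=1,2,3,4} f_i(w,z),
\end{align*}
where $g_w(\theta_1,\theta_2):=\frac{1}{2}\big(\frac{\lVert\theta_1-\thetar\rVert^2_{A(w)}}{\sigma^2}+\frac{\lVert\theta_2-\thetac\rVert^2_{A(w)}}{\gamma^2}\big)$; taking $\max_{w}$ on both sides then yields equivalence with the form of $\Gamma$ in Theorem~\ref{Thm:lb1} (recall $A(w)=V_w$). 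The first and crucial observation is that $g_w$ separates additively as $g_w=g_w^{\mathrm{r}}(\theta_1)+g_w^{\mathrm{c}}(\theta_2)$, the reward part depending only on $\theta_1$ and the cost part only on $\theta_2$.

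Next I would decompose the event that $z^*$ is not the best feasible arm. By definition $(\theta_1,\theta_2)\in\overline{\Theta}_{z^*}$ exactly when either (i) $z^*$ is infeasible, $z^{*\top}\theta_2>\tau$, or (ii) some arm $z\neq z^*$ is simultaneously feasible, $z^\top\theta_2\le\tau$, and beats $z^*$ in reward, $(z-z^*)^\top\theta_1>0$. Writing $B_0$ for region (i) and $C_z$ for region (ii), we have $\overline{\Theta}_{z^*}=B_0\cup\bigcup_{z\neq z^*}C_z$, a finite union since $\mathcal{Z}$ is finite. Because the infimum over a finite union is the minimum of the infima over the pieces, it suffices to evaluate $\inf_{B_0}g_w$ and each $\inf_{C_z}g_w$.

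The key tool is the closed-form projection onto a half-space under a Mahalanobis norm: for $c^\top a>b$, $\min\{\tfrac{1}{2}\lVert\theta-a\rVert_M^2 : c^\top\theta\le b\}=\frac{(c^\top a-b)^2}{2\,c^\top M^{-1}c}$, while the minimum is $0$ when $c^\top a\le b$. I would apply this separately to the reward and cost parts, which is legitimate precisely because of the additive separation of $g_w$ and because each region's constraints act on disjoint variables: $B_0$ constrains only $\theta_2$ (leaving $\theta_1=\thetar$, contributing $0$), while $C_z$ constrains $\theta_2$ through feasibility and $\theta_1$ through reward independently. Matching the projection values to the category of $z$ under the true parameters then gives: for $z\in\mathcal{A}_1=\overline{\mathcal{F}}\cap\overline{\mathcal{S}}$ the reward constraint is already slack, leaving only the feasibility projection $f_1(w,z)$; for $z\in\mathcal{A}_2=\mathcal{F}\cap\mathcal{S}$ with $z\neq z^*$ the feasibility constraint is slack, leaving $f_2(w,z)$; for $z\in\mathcal{A}_3=\overline{\mathcal{F}}\cap\mathcal{S}$ both constraints are active and, by separability, the costs add to $f_3(w,z)$; and the category $\mathcal{F}\cap\overline{\mathcal{S}}$ is empty for $z\neq z^*$ since $z^*$ is the unique best feasible arm. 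Finally $\inf_{B_0}g_w$ projects $\thetac$ across $z^{*\top}\theta_2=\tau$, yielding $f_4(w,z^*)$.

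The main obstacle is the careful bookkeeping of this decomposition together with two technical points. First, since each $C_z$ and $B_0$ is defined by strict inequalities (open sets), I must argue that the infimum over the open region equals the projection value onto the closed boundary; this follows from continuity of $g_w$ and the fact that the boundary lies in the closure and is approachable from within. Second, I must verify that letting the "free" variable move never lowers the cost below the single-constraint projection — this is immediate from separability, as the unconstrained part is minimized at the true value with cost $0$. With the inner identity established for every $w$, taking the maximum over $w\in\Delta_\mathcal{X}$ closes the argument.
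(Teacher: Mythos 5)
Your proposal is correct and follows essentially the same route as the paper: decompose $\overline{\Theta}_{z^*}$ into the per-arm displacement regions plus the ``$z^*$ infeasible'' region, exploit the additive separation of the objective in $(\theta_1,\theta_2)$, and compute the Mahalanobis projection onto each half-space to recover $f_1,\dots,f_4$. The only cosmetic difference is that you invoke the closed-form half-space projection formula directly, whereas the paper rederives it via the KKT/Lagrangian conditions; your explicit treatment of the strict-inequality (open-set) boundary issue is a point the paper glosses over but does not change the argument.
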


    Compared to the equivalence of the two hardness terms without constraint, the difference here is that the KL-divergence in the reward and cost  are summarized with the respective rates since the variances of the noise in reward and cost are different.
    If we set $\tau\to \infty$, then our lower bound particularizes to the hardness term as in the sample complexity of pure exploration for linear bandits (\cite{jedra2020optimal}). This form of the exponential rate in the lower bound is especially useful, as it aligns with the structure of our upper bound, which is derived in the same form.

    \vspace{-.1in}
\subsection{Upper Bound}\vspace{-.1in}
\red{We now state the upper bound on the error probability. Theorem~\ref{upperbound} establishes that the error probability of our proposed algorithm BLFAIPS decays exponentially with rate (exponent) at least \( \Gamma \), where \( \Gamma \) is the same hardness constant appearing in the lower bound of Theorem~\ref{Thm:lb1}. This implies that BLFAIPS  is asymptotically optimal: no strategy can achieve  faster exponential decay rate. % in the worst case.
}
\begin{theorem}
    \label{upperbound}
    Under Assumptions~\ref{ass1},~\ref{ass2}, and~\ref{ass3}, with probability 1,
    \begin{align*}
        \liminf_{T \to \infty} -\frac{1}{T} \log \mathbb{P}_{(\thetar_{T+1},\thetac_{T+1})\sim p_{T+1}} (\hat{z}_\text{{out}} \neq z^*) \ge \Gamma, 
\quad 
    \end{align*}
    where $p_{T+1}:=\mathcal{N}(\hat{\theta}^\mathrm{r}_{T+1}, \sigma^2 V_{T}^{-1})\otimes \mathcal{N}(\hat{\theta}^\mathrm{c}_{T+1}, \gamma^2 V_{T}^{-1} )| \Theta_{}~.$
\end{theorem}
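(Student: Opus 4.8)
The plan is to reduce the posterior error probability to the posterior mass on the confusing region $\overline{\Theta}_{z^*}$, estimate that mass by a Gaussian large-deviation bound, and then show that the empirical design matrix drives the resulting exponent up to $\Gamma$. \textbf{Reduction and Gaussian exponent.} The recommendation $\hat z_{\mathrm{out}}$ is a deterministic function of the single posterior draw $(\thetar_{T+1},\thetac_{T+1})\sim p_{T+1}$: it is the best feasible arm under those parameters. Thus the error event is exactly $\{(\thetar_{T+1},\thetac_{T+1})\in\overline{\Theta}_{z^*}\}$, and $\PP_{p_{T+1}}(\hat z_{\mathrm{out}}\neq z^*)=p_{T+1}(\overline{\Theta}_{z^*})$. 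Since $\overline{\Theta}_{z^*}$ is a finite union of half-space intersections (over the ways some $z\neq z^*$ can displace $z^*$), a union bound plus the standard Gaussian tail estimate for the product measure $\mathcal N(\hat\theta^{\mathrm r}_{T+1},\sigma^2V_T^{-1})\otimes\mathcal N(\hat\theta^{\mathrm c}_{T+1},\gamma^2 V_T^{-1})$ yields
\begin{equation*}
-\tfrac1T\log p_{T+1}(\overline{\Theta}_{z^*})\ge\frac{1}{2}\inf_{(\theta_1,\theta_2)\in\overline{\Theta}_{z^*}}\!\bigg(\frac{\lVert\theta_1-\hat\theta^{\mathrm r}_{T+1}\rVert^2_{V_T/T}}{\sigma^2}+\frac{\lVert\theta_2-\hat\theta^{\mathrm c}_{T+1}\rVert^2_{V_T/T}}{\gamma^2}\bigg)-\frac{\log|\mathcal Z|}{T}.
\end{equation*}
Writing $V_T/T=A(\bar\lambda_T)+o(1)$ with $\bar\lambda_T:=\tfrac1T\sum_{t\le T}\lambda_t$ the averaged sampling distribution, the right-hand side is the empirical analogue $\Phi(\bar\lambda_T;\hat\theta^{\mathrm r}_{T+1},\hat\theta^{\mathrm c}_{T+1})$ of the hardness functional, where $\Phi(w;a,b):=\inf_{(\theta_1,\theta_2)\in\overline{\Theta}_{z^*}}\tfrac12\big(\lVert\theta_1-a\rVert^2_{A(w)}/\sigma^2+\lVert\theta_2-b\rVert^2_{A(w)}/\gamma^2\big)$.

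\textbf{Consistency and design convergence.} The forced exploration—mixing the $G$-optimal design $\lambda^G$ with weight $\gamma_t=t^{-1/4}$—makes $\sum_{t\le T}\gamma_t$ grow like $T^{3/4}$, so $\lambda_{\min}(V_T)\to\infty$ almost surely. Standard least-squares consistency then gives $\hat\theta^{\mathrm r}_{T+1}\to\thetar$ and $\hat\theta^{\mathrm c}_{T+1}\to\thetac$ a.s., and martingale concentration of $X_t\sim\lambda_t$ justifies $V_T/T=A(\bar\lambda_T)+o(1)$; together these let me replace the estimators by the true parameters in $\Phi$ with vanishing error.

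\textbf{Game value (main step).} It remains to prove $\liminf_T\Phi(\bar\lambda_T;\thetar,\thetac)\ge\Gamma$, where $\Gamma=\max_w\Phi(w;\thetar,\thetac)$ by Theorem~\ref{Thm:equ_lowerboundterm}. Viewing the per-round interaction as a zero-sum game with payoff $G(w,p):=\EE_{(\theta_1,\theta_2)\sim p}[\lVert\theta_1-\thetar\rVert^2_{A(w)}/\sigma^2+\lVert\theta_2-\thetac\rVert^2_{A(w)}/\gamma^2]/2$—linear in $p$ and concave in $w$—I use that both players are no-regret. The max-learner's AdaHedge, run on the losses $l_{t,x}$ (whose $\lambda_t$-weighted value equals $-G(\lambda_t,\cdot)$ up to the centering at $\hat\theta_t$), has regret $R^{\max}_T=o(T)$; the min-learner's constrained posterior sampling, whose draws best-respond to the current estimate as the posterior covariance $V_{t-1}^{-1}\to0$ and $\hat z_t\to z^*$, has regret $R^{\min}_T=o(T)$. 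The standard sandwich for no-regret self-play then gives $\Phi(\bar\lambda_T;\thetar,\thetac)=\min_p G(\bar\lambda_T,p)\ge\Gamma-(R^{\max}_T+R^{\min}_T)/T$, so $\liminf_T\Phi(\bar\lambda_T;\thetar,\thetac)\ge\Gamma$. Continuity of $\Phi$ in its arguments, combined with Step~3, upgrades this to the estimator-centered functional, and feeding it into the Gaussian exponent of Step~1 completes the proof.

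\textbf{Main obstacle.} The hardest part is the game-value step, for two reasons absent from the unconstrained analysis of \citet{li2024optimal}. First, since we do not assume bounded rewards, the losses $l_{t,x}\propto\lVert\thetar_t-\hat\theta^{\mathrm r}_t\rVert^2_{xx^\top}$ are unbounded; the AdaHedge regret must therefore be controlled through light-tailed (sub-Gaussian/sub-exponential) high-probability bounds on $\lVert\thetar_t-\hat\theta^{\mathrm r}_t\rVert$ and $\lVert\thetac_t-\hat\theta^{\mathrm c}_t\rVert$ rather than a uniform loss range. Second, the min-learner now samples from the feasibility-constrained set $\overline{\Theta}_{\hat z_t}$, so proving its $o(T)$ best-response regret requires uniform control of the moving estimate $\hat z_t$ and of the cost-constraint boundary at threshold $\tau$, ensuring the constrained draws concentrate on the true confusing region $\overline{\Theta}_{z^*}$ in the limit.
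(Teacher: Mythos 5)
Your proposal follows essentially the same route as the paper: reduce the error to the posterior mass of $\overline{\Theta}_{z^*}$, extract the Gaussian exponent $\frac{1}{2}\inf_{(\theta_1,\theta_2)\in\overline{\Theta}_{z^*}}(\lVert\theta_1-\cdot\rVert^2_{V_T}/\sigma^2+\lVert\theta_2-\cdot\rVert^2_{V_T}/\gamma^2)$, and then drive that empirical exponent up to $\Gamma$ via the no-regret sandwich for the AdaHedge max-learner and the constrained posterior-sampling min-learner — which is exactly the paper's good-event machinery ($E_{5,\infty}$, the Laplace approximation, and the $F_1$–$F_6$ decomposition behind $E_{6,\delta}$), with your Gaussian tail/union bound serving as a one-sided substitute for the paper's two-sided Laplace approximation. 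The two obstacles you flag (unbounded losses for AdaHedge, and the moving constraint set $\overline{\Theta}_{\hat z_t}$) are precisely what the paper resolves via the boundedness event $E_{2,\delta}$ and the identification event $E_{4,\delta}$ together with the exp-concavity-based potential argument for the min-learner's regret.
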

    %This theorem establishes that the upper bound on the exponent of the error probability of our algorithm achieves the optimal rate, as derived in the lower bound presented in Theorem~\ref{Thm:lb1}.
    To the best of our knowledge, our algorithm BLFAIPS is the first to demonstrate matching upper and lower bounds in the literature on BAI with constraints under a fixed budget, both in the $K$-armed bandit setting (\cite{faizal2022constrained}) and the linear bandit setting (\cite{tang2024pure}).

\vspace{-.1in}
\section{Proof Sketch of Theorem~\ref{upperbound}}\vspace{-.1in}
We provide a proof sketch for Theorem~\ref{upperbound}, which is the central result of this paper. To prove this theorem, we first introduce the following lemmas concerning good events:
\begin{lemma}
    \label{lem:Good_Event_1}
    Define  the good event  as
    \begin{multline*}
E_{1,\delta}:= \bigcap_{t=1}^T\bigg\{ \lVert \hat{\theta}_t^r - \theta^r  \rVert_{V_{t-1}} \le \sqrt{\beta_1\left(t, \frac{1}{\delta^2}\right)},   \text{and}\\
\lVert \hat{\theta}_t^c - \theta^c  \rVert_{V_{t-1}} \le \sqrt{\beta_2\left(t, \frac{1}{\delta^2}\right)} \bigg\}
\end{multline*}

    where $\beta_1(t,\frac{1}{\delta^2}):= (S_1 + \sigma\sqrt{2 \log(\frac{1}{\delta^2}) + d \log\left(\frac{d + t L^2}{d}\right)})^2$, $\beta_2(t,\frac{1}{\delta^2}):= (S_2 + \gamma\sqrt{2 \log(\frac{1}{\delta^2}) + d \log\left(\frac{d + t L^2}{d}\right)})^2$.
    
    Then with probability $1-2\delta$, good event $E_{1,\delta}$ holds.
\end{lemma}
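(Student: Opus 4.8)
The plan is to recognize this as a standard self-normalized (``method of mixtures'') concentration bound for the regularized least-squares estimators, applied separately to the reward and cost channels and then combined by a union bound. With the initialization $V_0=I$, the estimators are the ridge solutions $\hat{\theta}_t^r=V_{t-1}^{-1}S_{t-1}^r$ and $\hat{\theta}_t^c=V_{t-1}^{-1}S_{t-1}^c$, where $V_{t-1}=I+\sum_{s=1}^{t-1}X_sX_s^\top$ and $S_{t-1}^r=\sum_{s=1}^{t-1}X_sy_s^r$. First I would substitute $y_s^r=\langle\theta^r,X_s\rangle+\epsilon_s$, generated by the \emph{true} parameter, and use $\sum_{s<t}X_sX_s^\top=V_{t-1}-I$ to obtain the exact decomposition
\begin{align*}
  \hat{\theta}_t^r-\theta^r = V_{t-1}^{-1}\Big(\sum_{s=1}^{t-1}X_s\epsilon_s\Big)-V_{t-1}^{-1}\theta^r .
\end{align*}
Measuring in the $V_{t-1}$-norm and applying the triangle inequality splits the error into a \emph{noise term} $\big\|\sum_{s<t}X_s\epsilon_s\big\|_{V_{t-1}^{-1}}$ and a \emph{regularization bias} $\|\theta^r\|_{V_{t-1}^{-1}}\le\|\theta^r\|_2\le S_1$, where the last step uses $V_{t-1}^{-1}\preceq I$ together with the norm bound of Assumption~\ref{ass1} (so $S_1=R_1$, and analogously $S_2=R_2$ for the cost).

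\textbf{Key steps.} The heart of the argument is controlling the noise term. I would invoke the self-normalized tail inequality for vector-valued martingales of \citet{abbasi2011improved}: since the Gaussian reward noise $\epsilon_s$ is $\sigma$-sub-Gaussian and $X_s$ is $\mathcal{H}_{s-1}$-measurable, with probability at least $1-\delta'$, \emph{simultaneously for all} $t$,
\begin{align*}
  \Big\|\sum_{s=1}^{t-1}X_s\epsilon_s\Big\|_{V_{t-1}^{-1}}^2 \le 2\sigma^2\log\!\Big(\tfrac{\det(V_{t-1})^{1/2}}{\delta'}\Big).
\end{align*}
The anytime (uniform-in-$t$) nature of this bound is exactly what allows the intersection over $t=1,\dots,T$ in $E_{1,\delta}$ to hold \emph{without} an additional union bound over time. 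It then remains to control $\det(V_{t-1})$: the trace bound $\operatorname{tr}(V_{t-1})=d+\sum_{s<t}\|X_s\|_2^2\le d+tL^2$ combined with the AM--GM inequality on the eigenvalues gives $\det(V_{t-1})\le\big((d+tL^2)/d\big)^d$, hence $\log\det(V_{t-1})\le d\log\big((d+tL^2)/d\big)$. Taking confidence parameter $\delta'=\delta^2$ so that the term $2\log(1/\delta^2)$ appears, and combining the three estimates, yields $\|\hat{\theta}_t^r-\theta^r\|_{V_{t-1}}\le\sqrt{\beta_1(t,1/\delta^2)}$ with probability at least $1-\delta^2$. The identical computation with the $\gamma$-sub-Gaussian cost noise $\eta_s$ --- reusing the \emph{same} design matrix $V_{t-1}$ and therefore the same determinant bound --- gives the cost inequality with $\sqrt{\beta_2(t,1/\delta^2)}$, and a union bound over the two channels produces both with probability at least $1-2\delta^2\ge 1-2\delta$.

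\textbf{Main obstacle.} The genuinely delicate point is not the algebra but verifying that the hypotheses of the self-normalized bound hold under the BLFAIPS sampling rule. I would need to confirm that $X_t$ is predictable with respect to the natural filtration $\mathcal{H}_{t-1}$ --- which it is, since $\lambda_t$, the G-optimal mixture, and the constrained posterior draw are all deterministic functions of the history, with $X_t$ then sampled using fresh independent randomness --- and, crucially, that the observations feeding the ridge estimators are generated by the \emph{true} parameters $(\theta^r,\theta^c)$ with conditionally zero-mean Gaussian noise that is independent of $X_t$ and across the two channels. Only under this centering does $\{X_s\epsilon_s\}$ form a martingale-difference sequence, which is what makes the decomposition above exact and the concentration inequality applicable. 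Pinning down this predictability and independence structure cleanly (in the presence of the algorithm's overloaded symbols, such as $\gamma_t$) is where the care lies; once it is in place, the remaining estimates are routine.
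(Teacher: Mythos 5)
Your proof is correct and follows essentially the same route as the paper: the paper's own argument is simply a union bound over the reward and cost channels combined with two invocations of Theorem~2 of \citet{abbasi2011improved} (its Lemmas on the confidence ellipsoids for $\hat{\theta}_t^{\mathrm{r}}$ and $\hat{\theta}_t^{\mathrm{c}}$), which is exactly the self-normalized bound you reprove inline via the ridge decomposition, the anytime martingale tail inequality, and the determinant--trace estimate. The only difference is that you unpack the cited theorem rather than quoting it, and your verification of the predictability of $X_t$ under the BLFAIPS sampling rule is a hypothesis the paper leaves implicit.
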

This lemma, which establishes the concentration property of the ridge regression estimator over the entire time horizon, plays a crucial role in the linear bandit literature.

\begin{lemma}
\label{lem:Good_Event_2}
Define the good event:
\begin{multline*}
    E_{2,\delta} := \bigcap_{t=1}^T\Big\{ \max_{x\in \mathcal{X}} \lvert \langle \hat{\theta}_t^r, x \rangle \rvert \le B_1 
    ,~\max_{x\in \mathcal{X}} \lvert \langle \hat{\theta}_t^c, x \rangle \rvert \le B_2 \Big\}
\end{multline*}
where
\[
B_1 = L R_1 + L \sqrt{\beta_1\left(T, \frac{1}{\delta^2}\right)}, \;\; B_2 = L R_2 + L \sqrt{\beta_2\left(T, \frac{1}{\delta^2}\right)}
\]
Then,
\[
E_{1,\delta} \subseteq E_{2,\delta}
\]
\end{lemma}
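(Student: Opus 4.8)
The plan is to show that on the event $E_{1,\delta}$ the predicted reward and cost of every training arm are uniformly bounded, which is exactly the defining condition of $E_{2,\delta}$; establishing this pointwise (for each $t$ and each $x$) and then intersecting over $t$ gives the claimed inclusion. I would fix an arbitrary $t \in [T]$ and $x \in \mathcal{X}$ and split the prediction into a noiseless part and an estimation-error part via $\langle \hat{\theta}_t^r, x\rangle = \langle \theta^r, x\rangle + \langle \hat{\theta}_t^r - \theta^r, x\rangle$, then apply the triangle inequality. The same decomposition applies verbatim to the cost with $(\theta^r, R_1, \beta_1)$ replaced by $(\theta^c, R_2, \beta_2)$, so it suffices to treat the reward.

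For the noiseless term I would apply the ordinary Cauchy--Schwarz inequality together with Assumptions~\ref{ass1} and~\ref{ass2}, giving $\lvert\langle \theta^r, x\rangle\rvert \le \lVert \theta^r\rVert_2\, \lVert x\rVert_2 \le L R_1$. For the estimation-error term the key is to pass to the geometry induced by $V_{t-1}$: the generalized Cauchy--Schwarz inequality gives $\lvert\langle \hat{\theta}_t^r - \theta^r, x\rangle\rvert \le \lVert \hat{\theta}_t^r - \theta^r\rVert_{V_{t-1}} \lVert x\rVert_{V_{t-1}^{-1}}$, and on $E_{1,\delta}$ the first factor is bounded by $\sqrt{\beta_1(t, 1/\delta^2)}$ by the very definition of that event. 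This is the step that lets the concentration bound from Lemma~\ref{lem:Good_Event_1} feed into the current bound.

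To control the second factor $\lVert x\rVert_{V_{t-1}^{-1}}$ I would use that $V_{t-1} = I + \sum_{s=1}^{t-1} X_s X_s^\top \succeq I$, since $V_0 = I$ in the initialization and each rank-one update is positive semidefinite; by operator-antitonicity of matrix inversion this yields $V_{t-1}^{-1} \preceq I$ and hence $\lVert x\rVert_{V_{t-1}^{-1}} \le \lVert x\rVert_2 \le L$. Combining the two terms gives $\lvert\langle \hat{\theta}_t^r, x\rangle\rvert \le L R_1 + L\sqrt{\beta_1(t, 1/\delta^2)}$. The final step is to obtain a single time-uniform constant: since $\beta_1(t, 1/\delta^2)$ is non-decreasing in $t$ (its only $t$-dependence is through the increasing term $\log((d + t L^2)/d)$), we have $\beta_1(t, 1/\delta^2) \le \beta_1(T, 1/\delta^2)$ for all $t \le T$, so $\lvert\langle \hat{\theta}_t^r, x\rangle\rvert \le B_1$. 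Taking the maximum over $x$ and the intersection over $t$, and arguing identically for the cost to get $B_2$, shows $E_{1,\delta} \subseteq E_{2,\delta}$.

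There is no substantive obstacle here---the result is essentially a bookkeeping exercise built from standard inequalities---so rather than a hard step I would flag the two places that require care. First, one must use the generalized (rather than Euclidean) Cauchy--Schwarz inequality, so that the $V_{t-1}$-norm control supplied by $E_{1,\delta}$ is what enters, paired with the bound $\lVert x\rVert_{V_{t-1}^{-1}} \le \lVert x\rVert_2$ which holds precisely because $V_{t-1} \succeq I$. Second, the monotonicity of $\beta_i$ in its time argument is what converts the per-step bounds into the uniform constants $B_1, B_2$ evaluated at the horizon $T$.
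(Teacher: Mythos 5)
Your proposal is correct and follows essentially the same route as the paper's proof: the same decomposition $\langle \hat{\theta}_t^{\mathrm{r}}, x\rangle = \langle \theta^{\mathrm{r}}, x\rangle + \langle \hat{\theta}_t^{\mathrm{r}} - \theta^{\mathrm{r}}, x\rangle$, the same use of ordinary Cauchy--Schwarz for the first term and generalized Cauchy--Schwarz plus the $E_{1,\delta}$ bound for the second. You are in fact slightly more careful than the paper, which leaves implicit both the justification $\lVert x\rVert_{V_{t-1}^{-1}} \le \lVert x\rVert_2 \le L$ (via $V_{t-1}\succeq I$) and the monotonicity of $\beta_1(t,\cdot)$ in $t$ needed to pass from $t$ to $T$.
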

This lemma demonstrates that, under the conditions of Lemma~\ref{lem:Good_Event_1} and the bounded $\Theta$ assumption, the empirical reward and cost at each time step are also bounded. This result is essential for applying the Azuma-Hoeffding inequality and deriving an upper bound on regret in our analysis.
\begin{lemma}[Lemma C.14 in \citet{li2024optimal}]
    \label{lem:GoodEvent_3}
    Define the event $E_{3,\delta}:= \cbr{V_t \ge \frac{3}{4}A(\lambda^G), \forall t\ge T_1(\delta), x\in \mathcal{X}}$ where $T_1(\delta):=\max_{x \in \mathcal{X}} \Big( \frac{6 \sqrt{\log(|\mathcal{X}|  \frac{T}{\delta})}}{\lambda_x^G} \Big)^4$.\footnote{We note that $A(\lambda^G)$ depends on $x$.}
    Then with probability $1-\delta$, the event $E_{3,\delta}$ holds.
\end{lemma}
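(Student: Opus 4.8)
The plan is to reduce the positive-semidefinite (PSD) lower bound $V_t\succeq\frac34 A(\lambda^G)$ to scalar lower bounds on the individual arm-pull counts, and then to certify those counts using only the forced-exploration part of the sampling rule. Writing $N_{x,t}:=\sum_{s=1}^t\one\cbr{X_s=x}$, we have $V_t = I+\sum_{x\in\mathcal{X}}N_{x,t}\,xx^\top$, so since each $xx^\top\succeq0$ it suffices to prove $N_{x,t}\ge\frac34\lambda^G_x$ simultaneously for every $x$: summing the nonnegative combinations then yields $V_t\succeq\sum_x\frac34\lambda^G_x\,xx^\top=\frac34 A(\lambda^G)$. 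By inversion this also gives the per-arm consequence $\lVert x\rVert_{V_t^{-1}}^2\le\frac43\lVert x\rVert_{A(\lambda^G)^{-1}}^2$ alluded to in the footnote. The key structural fact is that the sampling weights obey $\lambda_{s,x}=(1-\gamma_s)\tilde\lambda_{s,x}+\gamma_s\lambda^G_x\ge\gamma_s\lambda^G_x$ with $\gamma_s=s^{-1/4}$, so the AdaHedge component only adds to the counts and can be dropped for a lower bound.

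Next I would establish the concentration. Conditionally on the history $\mathcal{H}_{s-1}$, the indicator $\one\cbr{X_s=x}$ is Bernoulli with mean $\lambda_{s,x}$, which is $\mathcal{H}_{s-1}$-measurable; hence $M_{x,t}:=\sum_{s=1}^t(\one\cbr{X_s=x}-\lambda_{s,x})$ is a martingale with increments in $[-1,1]$, and Azuma--Hoeffding gives $\PP(M_{x,t}\le -r)\le\exp(-r^2/(2t))$. The conditional mean is bounded below by the forced exploration, $\sum_{s=1}^t\lambda_{s,x}\ge\lambda^G_x\sum_{s=1}^t s^{-1/4}\ge\tfrac43\lambda^G_x\,t^{3/4}$ (up to a lower-order constant, comparing the sum with its integral). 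Choosing $r=\tfrac14\sum_{s\le t}\lambda_{s,x}$ then forces $N_{x,t}\ge\tfrac34\sum_{s\le t}\lambda_{s,x}\ge\lambda^G_x\,t^{3/4}$ on the complementary event, which already dominates the required $\tfrac34\lambda^G_x$ for all $t\ge1$.

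It then remains to make the bound hold uniformly over arms and time. I would union-bound the failure events $\cbr{M_{x,t}\le -\tfrac14\sum_{s\le t}\lambda_{s,x}}$ over $x\in\mathcal{X}$ and $t\in[T]$, each of probability at most $\exp\big(-(\sum_{s\le t}\lambda_{s,x})^2/(32t)\big)$. Demanding this be at most $\delta/(\lvert\mathcal{X}\rvert T)$ reduces to $\tfrac43\lambda^G_x\,t^{3/4}\ge\sqrt{32\,t\log(\lvert\mathcal{X}\rvert T/\delta)}$, i.e. $t\gtrsim(\log(\lvert\mathcal{X}\rvert T/\delta))^2/(\lambda^G_x)^4$; taking the worst arm reproduces precisely $T_1(\delta)=\max_x\big(6\sqrt{\log(\lvert\mathcal{X}\rvert T/\delta)}/\lambda^G_x\big)^4$, the fourth power being the signature of the $s^{-1/4}$ exploration rate. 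On the complement, which has probability at least $1-\delta$, every count satisfies the required lower bound for all $t\ge T_1(\delta)$, and the PSD assembly from the first paragraph completes the argument.

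The main obstacle is the tension between the vanishing exploration rate and the anytime requirement: because $\gamma_s=s^{-1/4}\to0$, the cumulative forced signal accumulates only at rate $t^{3/4}$, and this must be shown to outpace the $\sqrt{t\log}$ martingale fluctuations for \emph{every} $t\ge T_1(\delta)$ at once. Balancing these two rates is exactly what pins down both the exponent $4$ and the logarithmic dependence in $T_1(\delta)$, and it is the point where a careless union bound would inflate the threshold; by contrast, the Bernoulli martingale estimate and the final PSD summation are routine. A matrix Freedman inequality would give an alternative route directly at the level of $V_t$, but the finiteness of $\mathcal{X}$ makes the scalar-plus-union-bound argument both simpler and tight.
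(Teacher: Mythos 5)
Your argument is correct, but note that the paper itself offers no proof of this lemma: it is imported verbatim as Lemma~C.14 of \citet{li2024optimal}, so there is nothing internal to compare against. Your route --- reduce the PSD bound to per-arm count lower bounds, lower-bound the predictable compensator $\sum_{s\le t}\lambda_{s,x}\ge\lambda^G_x\sum_{s\le t}s^{-1/4}\gtrsim\tfrac43\lambda^G_x t^{3/4}$ using only the forced-exploration mixture, control the centered counts by Azuma--Hoeffding, and union bound over $x\in\mathcal{X}$ and $t\in[T]$ --- is the standard derivation, and the arithmetic checks out: the failure probability $\exp\bigl(-(\lambda^G_x)^2\sqrt{t}/18\bigr)\le\delta/(|\mathcal{X}|T)$ forces $t\ge 324\log^2(|\mathcal{X}|T/\delta)/(\lambda^G_x)^4$, which is comfortably dominated by the stated $T_1(\delta)=6^4\log^2(|\mathcal{X}|T/\delta)/(\lambda^G_x)^4$. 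One observation worth making explicit: as transcribed, the event reads $V_t\succeq\tfrac34A(\lambda^G)$ with no growth in $t$, which is far weaker than what the downstream analysis uses (e.g.\ the bound $\lvert\langle x,\hat\theta^{\mathrm r}_t-\theta^{\mathrm r}\rangle\rvert\le\sqrt{d\,\beta_1(t,\cdot)/t^{3/4}}$ requires $V_t\succeq c\,t^{3/4}A(\lambda^G)$). Your proof in fact delivers $N_{x,t}\ge\lambda^G_x t^{3/4}$ on the good event, hence the $t^{3/4}$-scaled matrix bound, so you are proving the version of the lemma that is actually needed; the displayed statement appears to have dropped the $t^{3/4}$ factor in transcription. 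The only cosmetic gap is that you should be explicit that the filtration contains the posterior samples $(\theta^{\mathrm r}_t,\theta^{\mathrm c}_t)$ so that the mixed weights $\tilde\lambda_t$ are genuinely predictable, but this is immediate from the algorithm's update order.
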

This lemma establishes that, under the mixed G-optimal design allocation, the algorithm gathers sufficient information about the arms by a time point that is on the order of $O(\log T)$. This allows us to determine the time  at which the empirically best feasible arm coincides with the true one.

\begin{lemma}
    \label{lem:GoodEvent_4} 
    Define the good event 
    \[
    E_{4,\delta} := \left\{ \hat{z}_t = z^*, \quad \forall t > \max \left\{ T_1(\delta) + 1, T_2(\delta) \right\} \right\},
    \]
    where
    \begin{align*}
    T_2(\delta) := \max \Bigg\{ 
    &\left( \frac{m_\text{max} |\mathcal{X}|}{\Delta_r^\text{min}} \sqrt{d \beta_1(t, \frac{1}{\delta^2})} \right)^{\frac{8}{3}}, \\
    &\left( \frac{n_\text{max} |\mathcal{X}|}{\Delta_c^\text{min}} \sqrt{d \beta_2(t, \frac{1}{\delta^2})} \right)^{\frac{8}{3}} 
    \Bigg\}.
\end{align*}
    Then
    \[
    E_{1,\delta} \cap E_{3,\delta} \subseteq E_{4,\delta}.
    \]
\end{lemma}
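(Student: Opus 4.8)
The plan is to prove the containment $E_{1,\delta}\cap E_{3,\delta}\subseteq E_{4,\delta}$ deterministically: fixing any sample path on which both good events hold, I would show that for every $t>\max\{T_1(\delta)+1,T_2(\delta)\}$ the empirical feasible set $\hat{\mathcal F}$ classifies every arm correctly and the empirical-reward maximizer over $\hat{\mathcal F}$ equals $z^*$, which is exactly $\hat z_t=z^*$. The engine of the argument is to convert the weighted-norm concentration supplied by $E_{1,\delta}$ into per-arm prediction-error bounds, and then use $E_{3,\delta}$ to drive those errors below the reward and cost gaps.

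First I would record the per-arm error bounds. For each $z\in\mathcal Z$, Cauchy--Schwarz in the $V_{t-1}$ geometry gives
\begin{align*}
|z^\top(\hat\theta_t^{\mathrm c}-\thetac)|\le \|z\|_{V_{t-1}^{-1}}\,\|\hat\theta_t^{\mathrm c}-\thetac\|_{V_{t-1}}\le \|z\|_{V_{t-1}^{-1}}\sqrt{\beta_2(t,1/\delta^2)},
\end{align*}
together with the analogous inequality for the reward with $\beta_1$, both valid on $E_{1,\delta}$ by Lemma~\ref{lem:Good_Event_1}. The remaining task is to make $\|z\|_{V_{t-1}^{-1}}$ small. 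On $E_{3,\delta}$ and for $t-1\ge T_1(\delta)$ the design matrix is bounded below by the accumulated forced-exploration design (Lemma~\ref{lem:GoodEvent_3}); since the G-optimal component is injected with shrinking mass $\gamma_s=s^{-1/4}$ whose partial sums satisfy $\sum_{s\le t}\gamma_s\asymp t^{3/4}$, the minimum eigenvalue of $V_{t-1}$ grows like $t^{3/4}$, so that $\|z\|_{V_{t-1}^{-1}}\lesssim t^{-3/8}\|z\|_{A(\lambda^G)^{-1}}$ uniformly over the finite set $\mathcal Z$. The Kiefer--Wolfowitz identity $\max_{x}\|x\|^2_{A(\lambda^G)^{-1}}=d$ together with the span condition of Assumption~\ref{ass3} supplies the $\sqrt d$ and $|\mathcal X|$ factors appearing in $T_2(\delta)$.

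Next I would compare these shrinking errors against the gaps. Let $\Delta_c^{\min}$ denote the smallest cost margin $\min_{z}|\mu^{\mathrm c}(z)-\tau|$ and $\Delta_r^{\min}$ the smallest reward gap $\min_{z\in\mathcal F\setminus\{z^*\}}(\mu^{\mathrm r}(z^*)-\mu^{\mathrm r}(z))$. Combining the cost error bound with the $t^{-3/8}$ decay and demanding it fall below $\Delta_c^{\min}$ yields precisely the condition $t\gtrsim (n_{\max}|\mathcal X|\sqrt{d\beta_2(t,1/\delta^2)}/\Delta_c^{\min})^{8/3}$, i.e.\ the second term of $T_2(\delta)$; once this holds, no arm can cross the threshold $\tau$ in prediction, so $\hat{\mathcal F}=\mathcal F$. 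Symmetrically, forcing the reward error below $\Delta_r^{\min}/2$ gives the first term of $T_2(\delta)$ and guarantees that the empirical reward ranking on the now-correct feasible set places $z^*$ strictly on top. Together these give $\hat z_t=z^*$ for all $t>\max\{T_1(\delta)+1,T_2(\delta)\}$, establishing $E_{4,\delta}$.

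The main obstacle is the step that turns the (apparently constant) lower bound in $E_{3,\delta}$ into the genuinely growing lower bound on $\lambda_{\min}(V_{t-1})$ that produces the $t^{-3/8}$ decay: this requires careful bookkeeping of the cumulative forced-exploration mass $\sum_{s\le t}\gamma_s$ and the observation that the AdaHedge component of $\lambda_t$ only adds a positive semidefinite contribution to $V_{t-1}$. A secondary subtlety is that the threshold $t\ge T_2(\delta)$ is implicit, since $\beta_1,\beta_2$ appear on both sides; but because these grow only polylogarithmically in $t$, the fixed point is well defined and dominated by the polynomial term, so the condition is consistent. Tracking the precise constants $m_{\max},n_{\max}$ and the uniformity over the finite set $\mathcal Z$ is then routine.
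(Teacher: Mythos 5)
Your proposal is correct and follows essentially the same route as the paper: combine the confidence ellipsoid from $E_{1,\delta}$ with the design-matrix growth guaranteed by $E_{3,\delta}$ (forced exploration mass $\sum_{s\le t}s^{-1/4}\asymp t^{3/4}$) to get per-arm prediction errors of order $\sqrt{d\,\beta(t,1/\delta^2)/t^{3/4}}$, transfer these to the test arms, and compare against $\Delta_r^{\min}$ and $\Delta_c^{\min}$ to force $\hat{\mathcal F}=\mathcal F$ and the correct empirical ranking. The only cosmetic difference is that the paper first bounds $|\langle x,\hat\theta_t-\theta\rangle|$ for training arms $x\in\mathcal X$ (via Lemma C.15 of \citet{li2024optimal}) and then transfers to $z\in\mathcal Z$ through the decomposition $z=\sum_x n(x)x$, which is exactly where the $m_{\max}|\mathcal X|$ and $n_{\max}|\mathcal X|$ factors arise, whereas you apply Cauchy--Schwarz directly at $z$; note that Kiefer--Wolfowitz only controls $\|x\|_{A(\lambda^G)^{-1}}$ for $x\in\mathcal X$, so the span-based decomposition is still needed to handle test arms, as you implicitly acknowledge.
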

This lemma demonstrates that, with the support of Lemmas~\ref{lem:Good_Event_1} and~\ref{lem:GoodEvent_3}, after a time point in the order of $O(\log T)$, the empirically best feasible arm is indeed the true best feasible arm. This result is crucial to our proof, as both the min-learner strategy (posterior sampling) and the max-learner strategy (AdaHedge) in our algorithm rely on the accuracy of identifying the empirically best feasible arm.

\begin{lemma}
\label{lem:GoodEvent_5}
Define
\[
E_{5,\infty} := \bigg\{\lim_{T \to \infty} \sup_{\theta_1 \in \Theta_1, \theta_2 \in \Theta_2} \frac{1}{T} \left| 
\log \frac{p_{T+1}(\theta_1, \theta_2)}{p_{T+1}(\truethetar, \truethetac)} \right.
\]
\[
\left. + \frac{T}{2} \bigg( \frac{\left\| \truethetar - \theta_1 \right\|^2_{\overline{V}_T}}{\sigma^2} 
+ \frac{\left\| \truethetac - \theta_2 \right\|^2_{\overline{V}_T}}{\gamma^2} \bigg)
\right| = 0\bigg\},
\]
with probability 1, the good event $E_{5,\infty}$ holds. This also implies that
\[
\frac{p_{T+1}(\theta_1, \theta_2)}{p_{T+1}(\truethetar, \truethetac)} \stackrel{\cdot}{=} \exp \left( -\frac{1}{2} \left( 
\frac{\left\| \truethetar\! -\! \theta_1 \right\|^2_{{V}_T}}{\sigma^2} 
\!+ \!\frac{\left\| \truethetac \!-\! \theta_2 \right\|^2_{{V}_T}}{\gamma^2} 
\right) \right).
\]
\end{lemma}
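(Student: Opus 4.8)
The plan is to compute the log-ratio of the two conditioned Gaussian densities explicitly and show that, after a completion-of-the-square cancellation, only a lower-order cross term survives. Since both the reward posterior $\mathcal{N}(\hat\theta^{\mathrm{r}}_{T+1},\sigma^2 V_T^{-1})$ and the cost posterior $\mathcal{N}(\hat\theta^{\mathrm{c}}_{T+1},\gamma^2 V_T^{-1})$ are evaluated at two points under the \emph{same} covariance, the Gaussian normalizing constants and the renormalization induced by conditioning on $\Theta$ cancel in the ratio. Writing the reward precision as $V_T/\sigma^2$ and using the decomposition $\theta_1-\hat\theta^{\mathrm{r}}_{T+1}=(\theta_1-\truethetar)+(\truethetar-\hat\theta^{\mathrm{r}}_{T+1})$ (and identically for the cost block), the quadratic leading term $\tfrac{1}{2\sigma^2}\|\theta_1-\truethetar\|^2_{V_T}$ is exactly the term added inside $E_{5,\infty}$, because $\tfrac{T}{2}\|\cdot\|^2_{\overline{V}_T}=\tfrac12\|\cdot\|^2_{V_T}$ with $\overline{V}_T=V_T/T$. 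Hence these cancel and the object inside the absolute value reduces to the cross terms:
\begin{align*}
&\log\frac{p_{T+1}(\theta_1,\theta_2)}{p_{T+1}(\truethetar,\truethetac)} + \frac{1}{2}\bigg(\frac{\|\truethetar-\theta_1\|^2_{V_T}}{\sigma^2}+\frac{\|\truethetac-\theta_2\|^2_{V_T}}{\gamma^2}\bigg) \\
&\quad= -\frac{(\theta_1-\truethetar)^\top V_T(\truethetar-\hat\theta^{\mathrm{r}}_{T+1})}{\sigma^2} - \frac{(\theta_2-\truethetac)^\top V_T(\truethetac-\hat\theta^{\mathrm{c}}_{T+1})}{\gamma^2}.
\end{align*}

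Next I would bound this residual. By the Cauchy--Schwarz inequality in the $V_T$-inner product, $\lvert(\theta_1-\truethetar)^\top V_T(\truethetar-\hat\theta^{\mathrm{r}}_{T+1})\rvert \le \lVert\theta_1-\truethetar\rVert_{V_T}\,\lVert\truethetar-\hat\theta^{\mathrm{r}}_{T+1}\rVert_{V_T}$. Assumption~\ref{ass1} bounds the first factor by $2R_1\sqrt{\lambda_{\max}(V_T)}=O(\sqrt{T})$ uniformly over $\theta_1\in\Theta^{\mathrm{r}}$, since $V_T=I+\sum_{t}X_tX_t^\top$ gives $\lambda_{\max}(V_T)\le 1+TL^2$ under Assumption~\ref{ass2}; Lemma~\ref{lem:Good_Event_1} controls the second factor by $\sqrt{\beta_1(T+1,1/\delta^2)}=O(\sqrt{\log T})$. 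Treating the cost block identically, the whole residual is $O(\sqrt{T\log T})$ uniformly in $(\theta_1,\theta_2)$ over the bounded set $\Theta^{\mathrm{r}}\times\Theta^{\mathrm{c}}$; dividing by $T$ gives $O(\sqrt{\log T/T})\to 0$, which establishes the claimed limit.

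To upgrade this high-probability estimate to the almost-sure statement, I would invoke the anytime form of the self-normalized concentration underlying Lemma~\ref{lem:Good_Event_1}: on $E_{1,\delta}$, which has probability at least $1-2\delta$, the $O(\sqrt{\log t})$ bound holds simultaneously for all $t$, so the limit above is $0$ on $E_{1,\delta}$. Since the event $E_{5,\infty}$ then contains $E_{1,\delta}$ for every $\delta>0$ and $\PP(E_{1,\delta})\ge 1-2\delta$, letting $\delta\to 0$ yields $\PP(E_{5,\infty})=1$. The displayed $\stackrel{\cdot}{=}$ consequence is then immediate: $\tfrac{1}{T}$ times $\log\frac{p_{T+1}(\theta_1,\theta_2)}{p_{T+1}(\truethetar,\truethetac)}+\tfrac12(\cdots)$ is exactly the quantity appearing in $E_{5,\infty}$ up to the rescaling $\overline{V}_T=V_T/T$, so its limit is $0$, which is the definition of first-order equality in the exponent.

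The main obstacle I anticipate is making the residual bound genuinely uniform over the parameter set while keeping the conditioning on $\Theta$ harmless. One must verify that both evaluation points lie in $\Theta$ so that the truncation constant really cancels in the ratio, and confirm that $\lambda_{\max}(V_T)$ grows at most linearly so that the $\sqrt{T}\times\sqrt{\log T}$ product is genuinely $o(T)$. The cancellation of the leading quadratic term is the conceptual heart but is a routine completion of the square; the delicate parts are the uniformity over the (bounded) parameter space and the almost-sure upgrade, both of which hinge on the anytime validity of the concentration in Lemma~\ref{lem:Good_Event_1}.
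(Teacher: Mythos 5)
Your proposal is correct and follows essentially the same route as the paper: cancel the Gaussian normalizers and the $\Theta$-truncation constant in the ratio, complete the square so the leading quadratic $\tfrac{1}{2}\lVert\cdot\rVert^2_{V_T}$ terms cancel (using $T\lVert\cdot\rVert^2_{\overline{V}_T}=\lVert\cdot\rVert^2_{V_T}$), and show the surviving cross term is $O(\sqrt{T\log T})$ uniformly over the bounded parameter set, hence $o(T)$. The only cosmetic difference is that you bound the cross term via Cauchy--Schwarz in the $V_T$-inner product together with the anytime ellipsoid bound of Lemma~\ref{lem:Good_Event_1}, whereas the paper rewrites it as the noise sum $-2\sum_{s}\epsilon_s x_s^\top(\truethetar-\theta_1)$ and invokes the corresponding self-normalized martingale bound; these are the same estimate, and your explicit almost-sure upgrade via $E_{1,\delta}\subseteq E_{5,\infty}$ for every $\delta>0$ is a slightly more careful rendering of the paper's terse ``with $\delta=1/T$, with probability 1'' step.
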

This lemma shows that as $T \to \infty$, the logarithm of the ratio between $p_{T+1}(\theta_1, \theta_2)$ and $p_{T+1}(\truethetar, \truethetac)$ is  asymptotically equal to the hardness term in the exponential lower bound, namely,  $\Gamma$. This result is crucial for applying the Laplace approximation to analyze the error probability in our proof.

\begin{lemma}
\label{lem:GoodEvent_6}
Define the good event 
\begin{align*}
&\! E_{6,\delta}\!:=\!\bigg\{
    \bigg| \max_{w \in \Delta_{\mathcal{X}}} \!\inf_{(\theta_1, \theta_2) \in \overline{\Theta}_{z^*}}\!\bigg(\!\frac{\lVert \theta_1 \!-\! \truethetar \rVert^2_{A(w)}}{2\sigma^2}\! +\! \frac{\lVert \theta_2\! -\! \truethetac \rVert^2_{A(w)}}{2\gamma^2}\!\bigg)
      \\*
&\;\;\; -\!\!\inf_{(\theta_1, \theta_2) \in \overline{\Theta}_{z^*}} \bigg(\frac{\lVert \theta_1\! - \!\truethetar \rVert^2_{\overline{V}_T}}{2\sigma^2} \!+ \!\frac{\lVert \theta_2 \!- \!\truethetac \rVert^2_{\overline{V}_T}}{2\gamma^2} \bigg)
    \bigg| \!\le\! o(1)\!
\bigg\}
\end{align*}
Then event \( E_{6,\frac{1}{T}} \) holds with probability at least \( 1 - \frac{28}{T} \), conditioned on events \( E_{1,\frac{1}{T}}, E_{2,\frac{1}{T}}, E_{3,\frac{1}{T}}, E_{4,\frac{1}{T}} \).
\end{lemma}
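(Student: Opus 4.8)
The two expressions inside the absolute value are, respectively, the population game value $\Gamma=\max_{w\in\Delta_{\mathcal X}}\inf_{\theta\in\overline{\Theta}_{z^*}}U(w,\theta)$ of Theorem~\ref{Thm:equ_lowerboundterm} and its empirical counterpart $\inf_{\theta\in\overline{\Theta}_{z^*}}U(\widehat w_T,\theta)$ evaluated at the realized design $A(\widehat w_T)=\overline V_T=V_T/T$, where I abbreviate
\[
U(w,\theta):=\frac{\lVert \theta_1-\truethetar\rVert^2_{A(w)}}{2\sigma^2}+\frac{\lVert \theta_2-\truethetac\rVert^2_{A(w)}}{2\gamma^2},\qquad \theta=(\theta_1,\theta_2).
\]
One direction is immediate: since $\widehat w_T$ is a single allocation, $\inf_\theta U(\widehat w_T,\theta)\le\max_w\inf_\theta U(w,\theta)=\Gamma$. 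The content of the lemma is thus the reverse bound $\inf_\theta U(\widehat w_T,\theta)\ge\Gamma-o(1)$, and the plan is to read the algorithm as a no-regret saddle-point solver for the game $U$, with AdaHedge the max-player over $w$ and the conditioned posterior sampler the min-player over $\theta$. Because $U$ is linear in $w$ and convex in $\theta$, Sion's theorem already fixes the value at $\Gamma$; the work is to show the realized play certifies it up to vanishing terms.

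I would obtain the reverse bound through the following chain. First, concentrate the realized design onto the average of the sampled distributions: writing $\bar\lambda_T:=\frac1T\sum_{t=1}^T\lambda_t$, a matrix Azuma--Hoeffding bound (legitimate because $\lVert X_t\rVert\le L$, because the quadratic losses are bounded on $E_{2,1/T}$, and because $E_{3,1/T}$ keeps $\overline V_T$ uniformly positive definite) gives $\lVert \overline V_T-A(\bar\lambda_T)\rVert_{\mathrm{op}}=O(\sqrt{\log T/T})$; since $\theta\in\overline{\Theta}_{z^*}$ ranges over a bounded set (Assumption~\ref{ass1}), this transfers to $\lvert U(\widehat w_T,\theta)-U(\bar\lambda_T,\theta)\rvert=o(1)$ uniformly in $\theta$, and replacing each $\hat\theta^{\mathrm r}_t,\hat\theta^{\mathrm c}_t$ in the realized losses by the truth costs a further $o(1)$ on $E_{1,1/T}\cap E_{4,1/T}$. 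Next, linearity of $U$ in $w$ gives $U(\bar\lambda_T,\theta)=\frac1T\sum_t U(\lambda_t,\theta)$, so the min-player (posterior-sampling) regret bound yields $\frac1T\sum_t U(\lambda_t,\theta_t)-\varepsilon_T^{\min}\le\inf_\theta\frac1T\sum_t U(\lambda_t,\theta)=\inf_\theta U(\bar\lambda_T,\theta)$, where $\theta_t:=(\theta^{\mathrm r}_t,\theta^{\mathrm c}_t)$. Finally the AdaHedge regret bound for the max-player gives $\frac1T\sum_t U(\lambda_t,\theta_t)\ge\frac1T\sum_t U(w^\star,\theta_t)-\varepsilon_T^{\max}$ for the maximizer $w^\star$ of $\Gamma$, and since $U(w^\star,\theta_t)\ge\inf_\theta U(w^\star,\theta)=\Gamma$ for every $t$, the right side is $\ge\Gamma-\varepsilon_T^{\max}$. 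Chaining these yields $\inf_\theta U(\widehat w_T,\theta)\ge\Gamma-\varepsilon_T^{\min}-\varepsilon_T^{\max}-o(1)$. The AdaHedge term is standard: its parameter-free regret is $O(\sqrt{T\log\lvert\mathcal X\rvert}\cdot B)$ with $B$ the loss range controlled by $E_{2,1/T}$, so $\varepsilon_T^{\max}=O(\sqrt{\log\lvert\mathcal X\rvert/T})=o(1)$; I also absorb the vanishing forced-exploration mass $\frac1T\sum_t\gamma_t=O(T^{-1/4})$ separating $\lambda_t$ from its pure-AdaHedge iterate into the $o(1)$.

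The main obstacle is the min-player guarantee $\varepsilon_T^{\min}=o(1)$, i.e.\ showing the conditioned posterior sampler is asymptotically a best response. The mechanism I would exploit is that the posterior covariances $\eta_{\mathrm r}^{-1}V_{t-1}^{-1}$ and $\eta_{\mathrm c}^{-1}V_{t-1}^{-1}$ shrink like $1/t$ while the posterior mean $\hat\theta_t\to(\truethetar,\truethetac)$ on the earlier good events; conditioning on $\overline{\Theta}_{z^*}$ (valid since $\hat z_t=z^*$ by $E_{4,1/T}$) excludes the truth, so the conditioned samples concentrate on the face of $\overline{\Theta}_{z^*}$ nearest the truth in the $A(\bar\lambda_t)$-geometry---precisely the minimizer attaining $\inf_\theta U(\lambda_t,\theta)$. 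Quantifying this concentration is where Lemma~\ref{lem:GoodEvent_5} (the Laplace/log-ratio asymptotics) enters, together with a Gaussian large-deviation estimate for the conditional law, to bound the per-step gap $U(\lambda_t,\theta_t)-\inf_\theta U(\lambda_t,\theta)$ in expectation. Summing and converting to a high-probability statement via Azuma--Hoeffding on the bounded martingale differences $U(\lambda_t,\theta_t)-\EE_t[U(\lambda_t,\theta_t)]$ (bounded through $E_{2,1/T}$) produces $\varepsilon_T^{\min}=o(1)$. The several $O(1/T)$ failure probabilities incurred by these fresh concentration steps, on top of the conditioning on $E_{1,1/T},\dots,E_{4,1/T}$, are what accumulate to the stated $1-\frac{28}{T}$ guarantee.
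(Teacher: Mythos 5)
Your overall skeleton---reading Lemma~\ref{lem:GoodEvent_6} as a saddle-point certificate, splitting the error into a design-concentration piece ($\overline{V}_T$ versus $A(\bar\lambda_T)$, handled by matrix Azuma and the forced-exploration mass $O(T^{-1/4})$), an AdaHedge max-player regret piece, and a posterior-sampling min-player regret piece---matches the paper's decomposition into the terms $F_1,\dots,F_6$, and your treatment of everything except the min-player is sound and essentially the same as the paper's.

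The genuine gap is in the min-player step, which is the heart of the lemma (the paper's $F_5$, Lemma~\ref{lem:Regret_F5}). You first state the correct target, a cumulative regret bound $\frac1T\sum_t U(\lambda_t,\theta_t)\le\inf_\theta\frac1T\sum_t U(\lambda_t,\theta)+\varepsilon_T^{\min}$, but the mechanism you then propose---that the truncated posterior concentrates on ``precisely the minimizer attaining $\inf_\theta U(\lambda_t,\theta)$,'' so that the \emph{per-step} gap $U(\lambda_t,\theta_t)-\inf_\theta U(\lambda_t,\theta)$ vanishes---is wrong. The conditioned posterior $\tilde p_t$ concentrates around the minimizer of the \emph{cumulative} quadratic $\eta_r\lVert\theta_1-\hat\theta^{\mathrm r}_t\rVert^2_{V_{t-1}}+\eta_c\lVert\theta_2-\hat\theta^{\mathrm c}_t\rVert^2_{V_{t-1}}$ over $\overline\Theta_{z^*}$, i.e.\ the nearest face in the $V_{t-1}$-geometry, which in general is not the minimizer of the instantaneous loss in the $A(\lambda_t)$- (or $x_tx_t^\top$-) geometry; the per-step gap therefore need not go to zero, and no Laplace or Gaussian large-deviation estimate will make it do so. Lemma~\ref{lem:GoodEvent_5}, which you invoke here, is used in the paper only for the final Laplace approximation of the error probability, not for this step. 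What actually closes the argument is a potential-function (FTRL-style) bound: one tracks the normalizer $W_t$ of the truncated posterior, telescopes $\log(W_{T+1}/W_1)$, splits each increment via Cauchy--Schwarz into the realized loss $\Gamma_1$ and a posterior-mean drift $\Gamma_2$, uses the \emph{concavity of $\exp(-\Gamma_1)$}---which is exactly why the algorithm requires $\eta_r\le 1/(8L^2R_1^2)$ and $\eta_c\le 1/(8L^2R_2^2)$, an ingredient absent from your proposal---to pass the expectation inside, and lower-bounds $W_{T+1}$ by a dilation argument around the cumulative minimizer. This yields the cumulative comparison against $\inf_{(\theta_1,\theta_2)\in\overline\Theta_{z^*}}\big(\eta_r\lVert\theta_1-\hat\theta^{\mathrm r}_{T+1}\rVert^2_{V_T}+\eta_c\lVert\theta_2-\hat\theta^{\mathrm c}_{T+1}\rVert^2_{V_T}\big)$ up to $O(\sqrt{T\log^2T})$, which is what the lemma needs; without it your chain of inequalities does not close.
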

This lemma shows that when events $  E_{1,\frac{1}{T}}$ $ E_{2,\frac{1}{T}}$ $ E_{3,\frac{1}{T}}$, and $ E_{4,\frac{1}{T}} $ hold, the exponent in the lower bound $\Gamma$ is asymptotically equal to the Laplace approximation of the integral of the log-ratio, as stated in Lemma~\ref{lem:GoodEvent_5}.

Then the error probability of our algorithm is
\begin{align*}  &\mathbb{P}_{(\thetar_{T+1},\thetac_{T+1})\sim p_{T+1}} (\hat{z}_\text{{out}} \neq z^*)
\\&=\frac{\int_{(\theta_1,\theta_2)\in \overline{\Theta}_{z^*}}p_{T+1}(\theta_1,\theta_2)\,\mathrm{d}\theta_1\,\mathrm{d}\theta_2}{\int_{(\theta_1,\theta_2)\in {\Theta}_{z^*}}p_{T+1}(\theta_1,\theta_2)\,\mathrm{d}\theta_1\,\mathrm{d}\theta_2}
\\&=\frac{\int_{(\theta_1,\theta_2)\in \overline{\Theta}_{z^*}}p_{T+1}(\theta_1,\theta_2)/p_{T+1}(\thetar,\thetac)\,\mathrm{d}\theta_1\mathrm{d}\theta_2}{\int_{(\theta_1,\theta_2)\in {\Theta}_{z^*}}p_{T+1}(\theta_1,\theta_2)/p_{T+1}(\thetar,\thetac)\,\mathrm{d}\theta_1\, \mathrm{d}\theta_2}
\\&\stackrel{(a)}{\stackrel{\cdot}{=}}\frac{\int_{(\theta_1,\theta_2)\in \overline{\Theta}_{z^*}} \exp \left( M\right)\,\mathrm{d}\theta_1\mathrm{d}\theta_2}{\int_{(\theta_1,\theta_2)\in {\Theta}_{z^*}}\exp \left( M\right)\,\mathrm{d}\theta_1\mathrm{d}\theta_2},
\end{align*}
where $M = M(\theta_1,\theta_2):=-\frac{1}{2} \Big( 
\frac{\left\| \truethetar - \theta_1 \right\|^2_{{V}_T}}{\sigma^2} 
+ \frac{\left\| \truethetac - \theta_2 \right\|^2_{{V}_T}}{\gamma^2} 
\Big) $ and $(a)$ comes from good event $E_{5,\infty}$.
Furthermore, from the Laplace approximation (Lemma~\ref{lem:GaussianApproximation}) and the fact that 
\begin{align*}
    \inf_{(\theta_1,\theta_2)\in {\Theta}}  
\frac{\left\| \truethetar - \theta_1 \right\|^2_{{V}_T}}{\sigma^2} 
+ \frac{\left\| \truethetac - \theta_2 \right\|^2_{{V}_T}}{\gamma^2} =0~,
\end{align*}
we have
\begin{align*}  &\mathbb{P}_{(\thetar_{T+1},\thetac_{T+1})\sim p_{T+1}} (\hat{z}_\text{{out}} \neq z^*)
\\&\stackrel{\cdot}{=}\exp \left( -\frac{T}{2} \inf_{(\theta_1,\theta_2)\in \overline{\Theta}_{z^*}}\bigg( 
\frac{\left\| \truethetar \!-\! \theta_1 \right\|^2_{\overline{V}_T}}{\sigma^2} 
\!+\! \frac{\left\| \truethetac \!-\! \theta_2 \right\|^2_{\overline{V}_T}}{\gamma^2} 
\bigg) \right).
\end{align*}
Then combining with the good event $E_{6,\delta}$, we have
\begin{align*}  \mathbb{P}_{(\thetar_{T+1},\thetac_{T+1})\sim p_{T+1}} (\hat{z}_\text{{out}} \neq z^*)\stackrel{\cdot}{=}
\exp \left( -T \Gamma \right).
\end{align*}
To summarize, with the choice $\delta=\frac{1}{T}$, with probability 1, 
\begin{align*}
        \lim_{T \to \infty} -\frac{1}{T} \log \mathbb{P}_{(\thetar_{T+1},\thetac_{T+1})\sim p_{T+1}} (\hat{z}_\text{{out}} \neq z^*) = \Gamma~.
    \end{align*}

% In the unusual situation where you want a paper to appear in the
% references without citing it in the main text, use \nocite
\vspace{-.1in}
\section{Empirical Studies}\vspace{-.1in}
For the empirical studies, we compare our proposed algorithm BLFAIPS against three baselines: a modified Linear Thompson Sampling algorithm (\cite{agrawal2013thompson}), which selects the empirically best feasible arm at each round; the Linear $\beta$-Top-Two Thompson Sampling algorithm, where $\beta$ is set to match the allocation rate of the best feasible arm; and an Oracle baseline, which pulls arms according to the optimal allocation rate derived from the lower bound established in Theorem~\ref{Thm:lb1}; the Linear Top-Two Thompson Sampling algorithm with the optimal $\beta$ under constraints, which can be viewed as an extension from $K$-armed bandits to linear bandits of the algorithm in \citet{yang2022minimax}. Due to space constraints, we show one representative plot here in the main paper. Other plots for other parameters are shown in Appendix~\ref{sec:empirical_plots}.

\vspace{-.1in}
\subsection{"End of Optimism" Instance}\vspace{-.1in}
We perform our first  experiment on the ubiquitous ``End of Optimism'' instance  (\cite{lattimore2017end}) plus an additional feasibility constraint. This can be viewed as the Soare's Instance as considered in \cite{li2024optimal} with linear constraints. To be more specific, in this instance, we choose $\thetar=[1, 0]^\top$, $\thetac=[0, 1]^\top$, $\tau=0.5$, there are  five arms: $[1, 0]^\top, [0, 0.15]^\top$, $[0, 1]^\top$, $[1.2, 1.2]^\top$, $ [\cos(\alpha), \sin(\alpha)]^\top$, where $[1, 0]^\top$ is the best feasible arm, $[0, 0.15]^\top$ is the suboptimal feasible arm, $[0, 1]^\top$ is the suboptimal and infeasible arm, $[1.2, 1.2]^\top$ is the superoptimal and infeasible arm. Figures~\ref{fig:1},~\ref{fig:2}, and~\ref{fig:3} (in Appendix~\ref{sec:eoo}) show the  accuracies  of identifying the best feasible arm over time for these algorithms when $\alpha=0.1, 0.2, 0.3$. The plots also present confidence intervals, represented by a range of plus or minus two standard deviations  from the average. We choose the time horizon $T=2,000$ and run each algorithm over 50 repetitions.

%\iffalse
\begin{figure}[h!]
    \centering
    \includegraphics[width=1\linewidth]{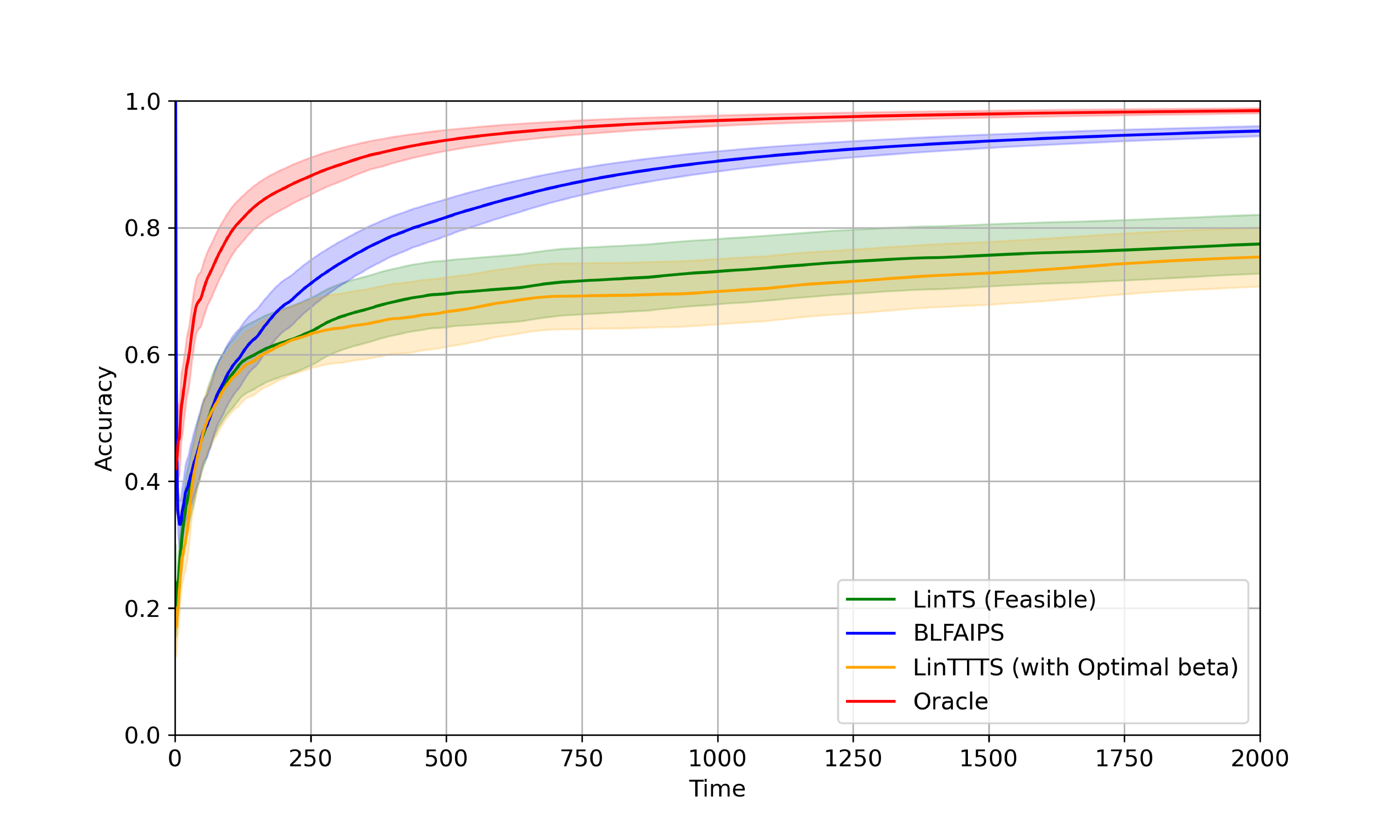}
    \caption{$\alpha=0.1$}
    \label{fig:1}
\end{figure}
%\fi

The experimental results, as illustrated by the plots, demonstrate that our proposed algorithm, BLFAIPS, consistently outperforms alternative approaches, including the constrained versions of Linear Thompson Sampling   and  Linear $\beta$-Top-two Thompson Sampling. This performance advantage persists even when the $\beta$ parameter is fine-tuned to match the true allocation rate of the given instance.

\vspace{-.1in}
\subsection{Random Instances}\vspace{-.1in}
We next evaluate the performance of our algorithm on random instances within a $d$-dimensional unit ball. For these experiments, we randomly sampled $K = 5, 20, 50$ arms from $d = 2, 20, 50$-dimensional spaces. The parameters were set as follows: $\thetar = [1, 0, \dots, 0]^\top \in \mathbb{R}^d$, $\thetac = [0, \dots, 0, 1]^\top \in \mathbb{R}^d$, and $\tau = 0.5$. Figures~\ref{fig:d2K5},~\ref{fig:d2K50}, and~\ref{fig:d2K20} (in Appendix~\ref{sec:random})  present the accuracies over time for varying values of $K$, while Figures~\ref{fig:d50K20},~\ref{fig:d2K20},~\ref{fig:d20K20} illustrate the accuracies over time for different dimensionalities $d$.

%\iffalse
\begin{figure}[h!]
    \centering
    \includegraphics[width=1\linewidth]{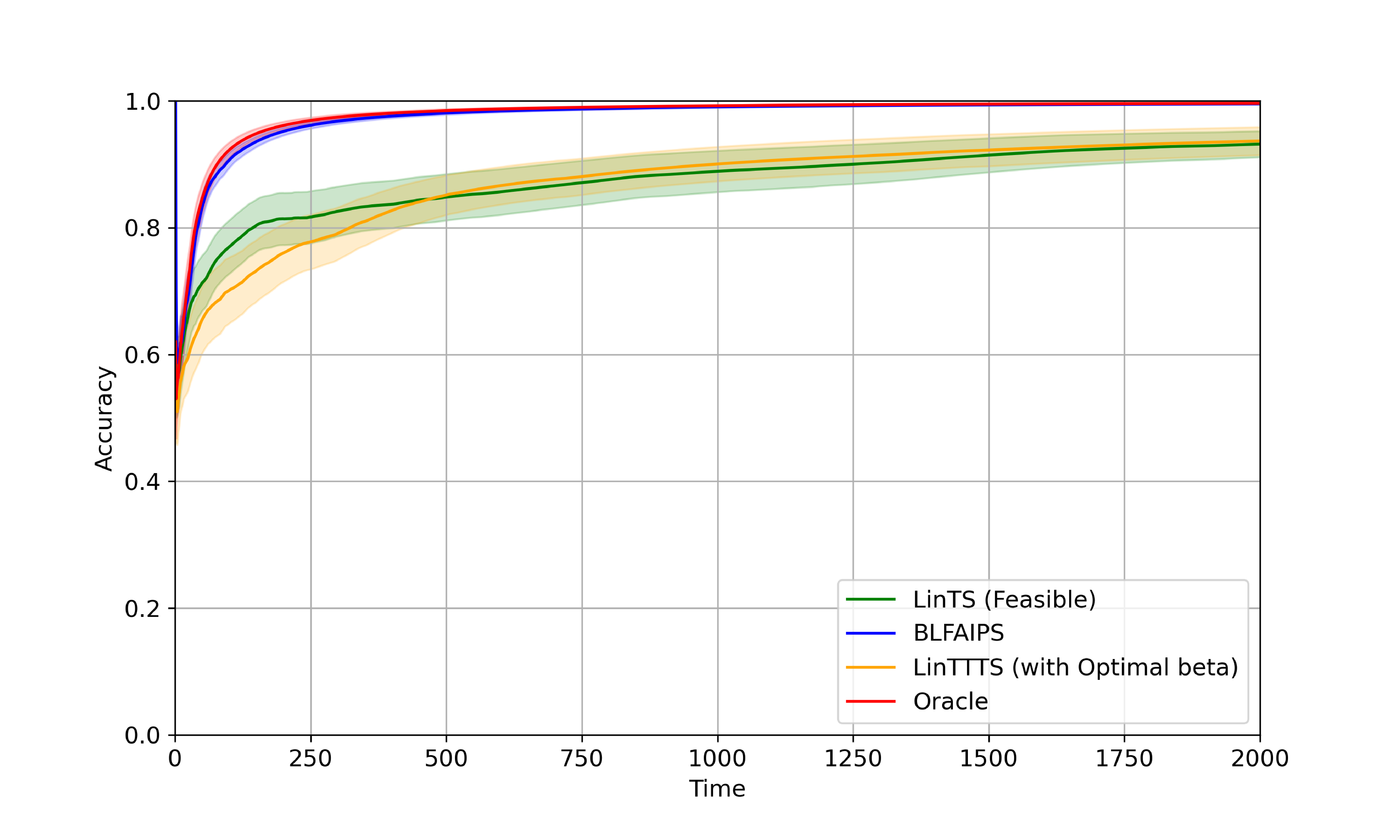}
    \caption{$d=2, K=5$}
    \label{fig:d2K5}

    \includegraphics[width=1\linewidth]{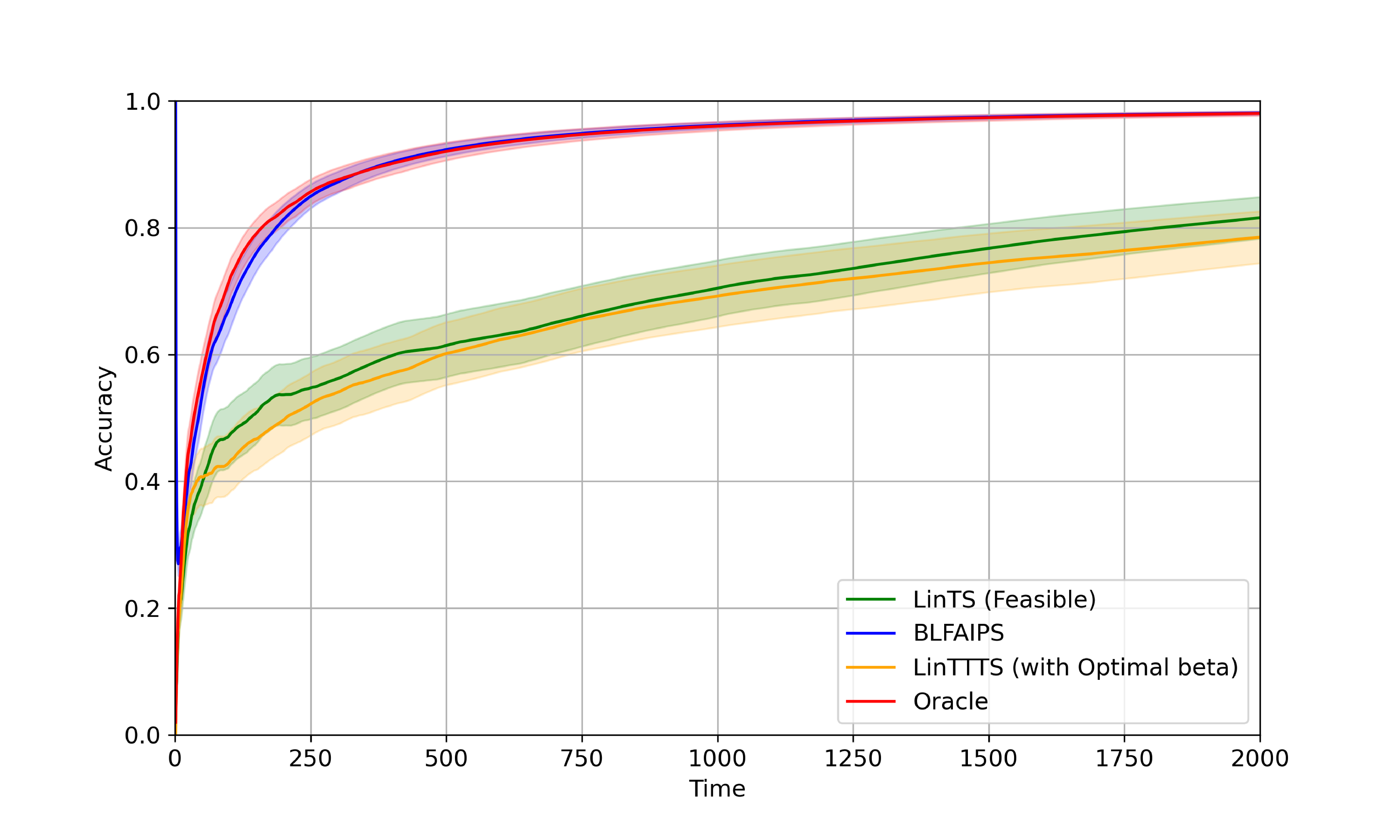}
    \caption{$d=2, K=50$}
    \label{fig:d2K50}

    \includegraphics[width=1\linewidth]{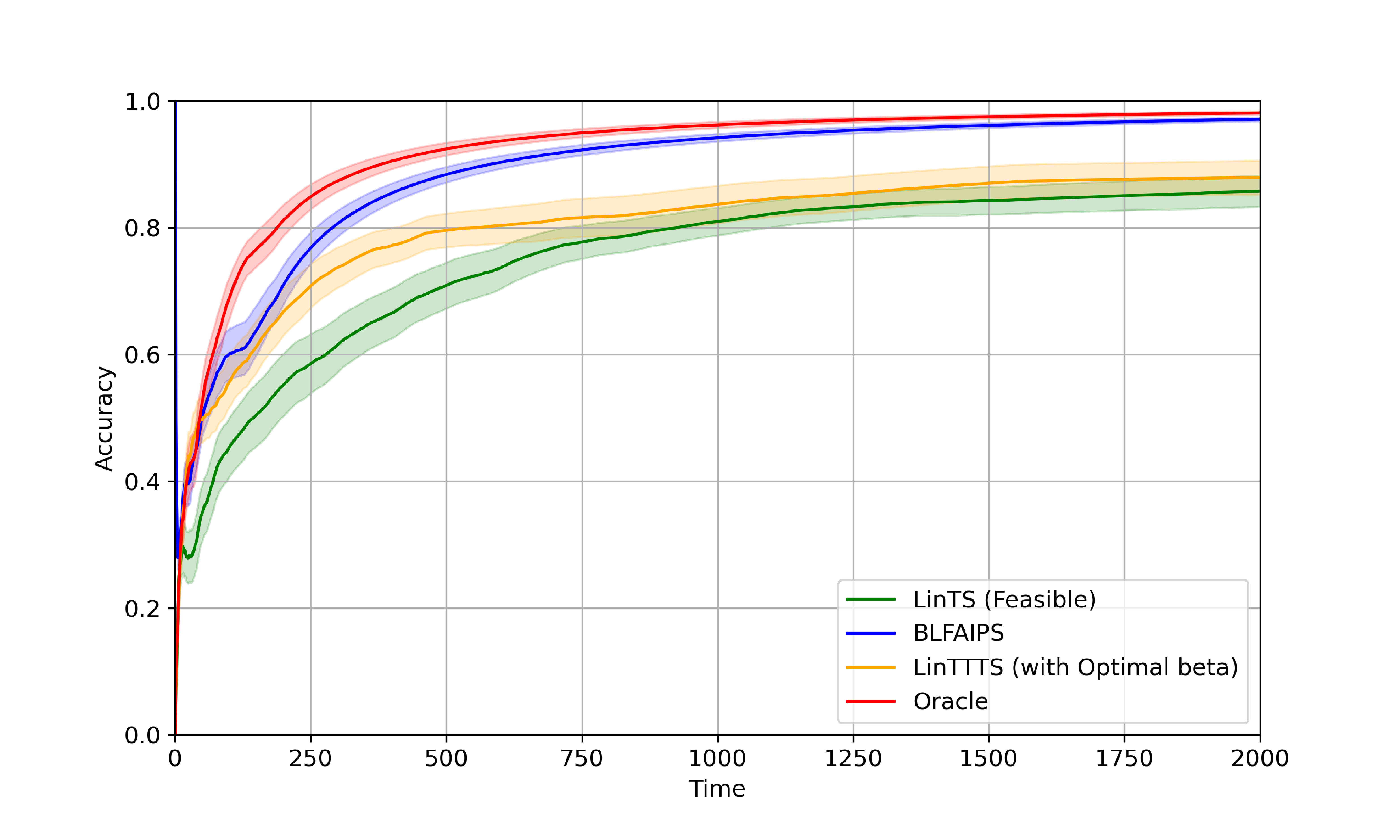}
    \caption{$d=50, K=20$}
    \label{fig:d50K20}
    \vspace{-.1in}
\end{figure}
%\fi
The results indicate that our proposed algorithm, BLFAIPS, consistently exhibits faster convergence compared to competing algorithms across varying numbers of arms $K$ and dimensionalities $d$ in random instances.

\vspace{-.1in}
\subsection{Comparison with PEPS}\vspace{-.1in}

We also compare our algorithm with the PEPS algorithm introduced by \citet{li2024optimal}, which uses AdaHedge but does not incorporate the doubling trick. While their algorithm PEPS performs well empirically, they did not provide a formal upper bound guaranteeing its asymptotic optimality. In contrast, our Theorem~\ref{upperbound} establishes such a guarantee, matching the information-theoretic lower bound (Theorem~\ref{Thm:lb1}) and closing a key theoretical gap in the literature.

We evaluate both algorithms (and others) in the unconstrained setting using the same configuration as in the random instance experiments with $d=2$ and $K=5$, except that the constraint threshold $\tau$ is set to $\infty$. When the sampling budget \( T \) is known in advance, PEPS and BLFAIPS achieve comparable performance. However, when \( T \) is {\em unknown}, as is common in many real-world applications, PEPS tends to waste almost half its budget during its initial exploration phase, leading to strictly inferior results. In such cases, BLFAIPS achieves superior performance as shown in Figure~\ref{fig:comparison_with_PEPS}.

\begin{figure}[h!]
    \centering
    \includegraphics[width=1\linewidth]{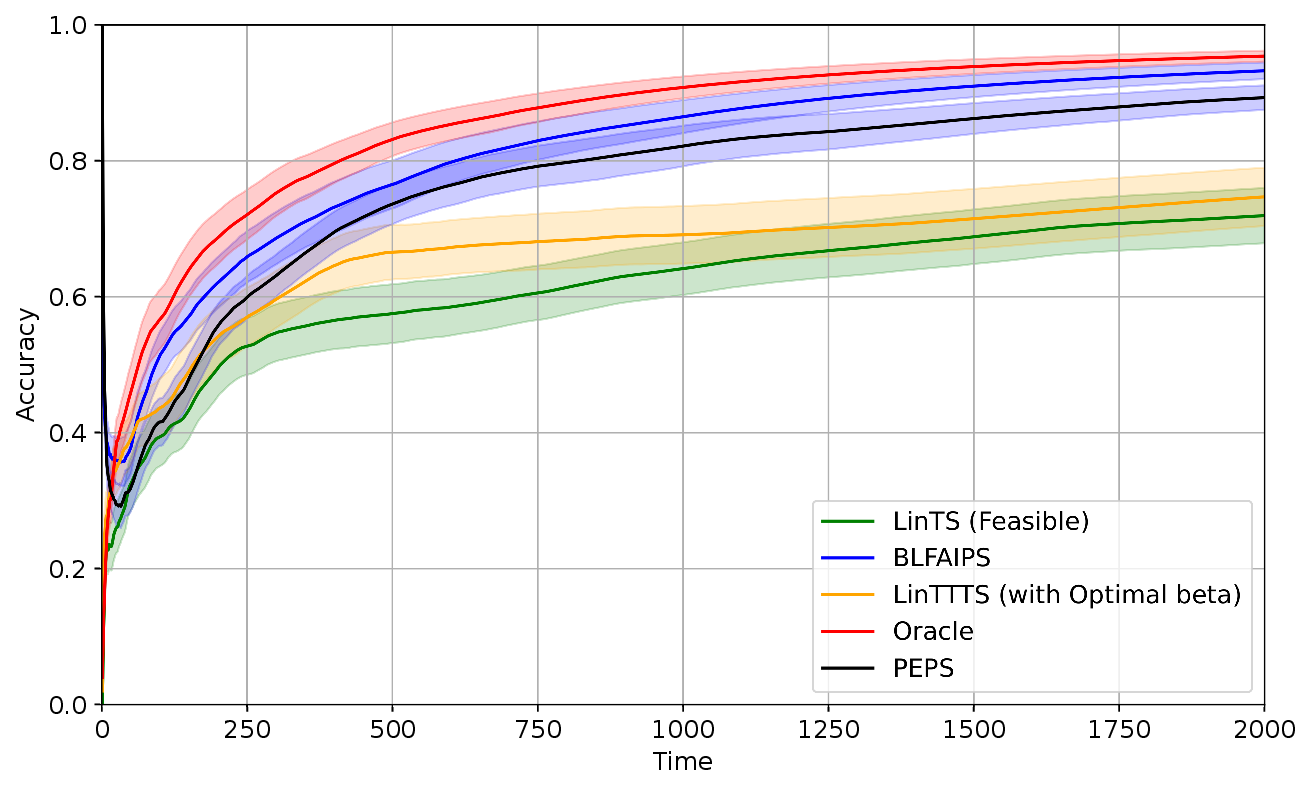}
    \caption{Comparison of BLFAIPS to PEPS for unknown~$T$}
    \label{fig:comparison_with_PEPS}
\end{figure}

\vspace{-.1in}
\subsection{Evaluation on a Real-World Dataset}\vspace{-.1in}
We also conducted an experiment utilizing a real-world dataset: the MovieLens 10M dataset augmented with IMDb ratings, based on an open-source preprocessing script~\cite{striatum2017}. Our objective was to identify the highest-rated movie (on a 1-to-5 scale) among the top 20 movies in MovieLens, subject to the constraint that the IMDb rating as of April 2025 remains below \(\tau=7.5\) (on a 1-to-10 scale).

To assess the feasibility-aware performance of our algorithm, we calculated the cumulative accuracies over 5,000 time steps for various algorithms. Accuracy is defined as the proportion of instances where the selected movie satisfies the IMDb constraint and is simultaneously the highest-rated based on user ratings.

Since this is a real-world dataset, the true problem instance is unknown. Thus,  it not possible to determine an optimal solution or a lower bound. We compared our approach against baseline configurations using the same parameters, which were set as follows: \(\sigma=1\), \(\gamma=1\), \(L=\sqrt{3}\), \(R_1=\sqrt{10}\), and \(R_2=\sqrt{20}\). Each algorithm was executed 50 times over the 5,000 time steps.

As illustrated in Figure~\ref{fig:realworld}, our proposed method  BLFAIPS significantly outperforms both the constrained versions of {Linear Feasible Thompson Sampling} and {Linear TTTS} with \(\beta=0.5\). It consistently achieves higher accuracy while maintaining similar error bars, showing the accuracy, stability, and robustness of BLFAIPS. % throu the experiments.

\begin{figure}[h!]
    \centering
    \includegraphics[width=1\linewidth]{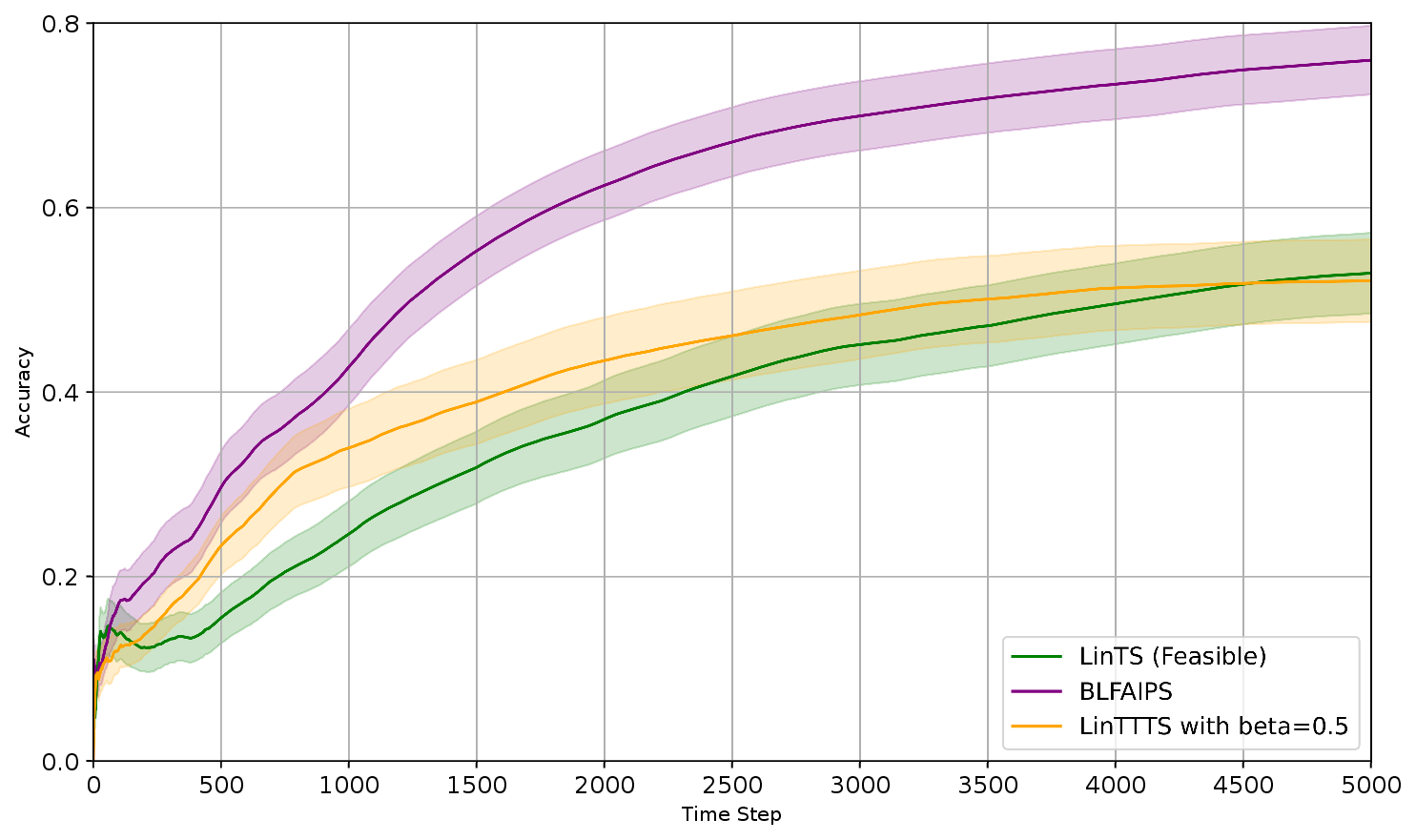}
    \caption{Comparison  of accuracies of various algorithms  on the MovieLens dataset}
    \label{fig:realworld}
\end{figure}

\vspace{-.1in}
\section{Conclusion and Future Work}\vspace{-.1in}

In this paper, we addressed the best feasible arm identification problem under a fixed budget in linear bandits. We proposed a novel algorithm that matches upper and lower bounds on the exponential rate of the error probability. To the best of our knowledge, it is the first to attain such optimality in this domain. Our contributions include two equivalent formulations of the lower bound rate from Bayesian and frequentist perspectives, structural improvements to existing methods, and empirical results showing the superiority of our approach over several baselines.

Our theoretical findings provide a basis for further exploration in constrained bandit problems. While we have established matching bounds under specific conditions, extending these results to more general settings, such as non-convex or high-dimensional parameter spaces, remains valuable. Refining guarantees under weaker assumptions, especially regarding noise and cost constraints, could further broaden applicability.

We hope our work will inspire advances in constrained best arm identification and related problems within the multi-armed bandit framework.

\paragraph{Acknowledgements} 
This research is supported by the National Research Foundation, Singapore under its
AI Singapore Programme (AISG Award No: AISG2-PhD-2023-08-044T-J), and is part of the programme
DesCartes which is supported by the National Research Foundation, Prime Minister’s Office, Singapore
under its Campus for Research Excellence and Technological Enterprise (CREATE) programme.

%The authors would like to thank Dr.\ Le Yang for discussions in the initial phase of this work.  

% References
\bibliography{uai2025-template}

\newpage

\onecolumn

\title{Asymptotically Optimal Linear Best Feasible Arm Identification with Fixed Budget\\(Supplementary Material)}
\maketitle

\appendix
\section{Table of Notation}
\begin{table}[h!]
\renewcommand{\arraystretch}{1.5} % Adjust row spacing
\begin{tabular}{p{0.3\textwidth} | p{0.6\textwidth}}
\hline
$B_1= 2L R_1+L \sqrt{\beta_1(T, \frac{1}{\delta^2})}$ & Upper bound of $\max_{x \in \mathcal{X}} \max_{t \leq T} \lvert \langle x, \hat{\theta}^{\mathrm{r}}_t \rangle \rvert
$ \\
$B_2=2L R_2+L \sqrt{\beta_2(T, \frac{1}{\delta^2})}$ & Upper bound of $\max_{x \in \mathcal{X}} \max_{t \leq T} \lvert \langle x, \hat{\theta}^{\mathrm{c}}_t \rangle \rvert
$ \\
$T_1(\delta)$ & Time while the algorithm collecting enough information for the exploration
  \\
  $\Deltar_{\text{min}}:=\min_{z\in \mathcal{Z}} (z^*-z)^\top \thetar $ & minimum gap of the reward for $\mathcal{Z}$
  \\
    $\Deltac_{\text{min}}:=\min_{z\in \mathcal{Z}} |\tau-z^\top \thetac|$ & minimum gap of the cost for $\mathcal{Z}$
    \\
    $\overline{V}_T:=\frac{1}{T} (I+\sum_{t=1}^{T} X_t X_t^\top)$ & Empirical variance matrix after $T$ rounds
    \\
    $p_{T+1} := \mathcal{N}\left(\hat{\theta}^\mathrm{r}_{T+1}, \, \eta_\mathrm{r}^{-1} V_{T}^{-1} \right) \otimes$ &posterior distribution for recommendation at time $T$\\  $\mathcal{N}\left(\hat{\theta}^\mathrm{c}_{T+1}, \, \eta\mathrm{c}^{-1} V_{T}^{-1} \right) \, \Big| \, \Theta$
    \\
    $\eta_\mathrm{r}$ &learning rate of reward for min-learner
    \\
    $\eta_\mathrm{c}$ &learning rate of cost for min-learner
    \\
    $p_t := \mathcal{N}(\hat{\theta}^{\mathrm{r}}_t, \eta_r^{-1} V_{t-1}^{-1}) \otimes $ &posterior distribution for sampling at time $t$\\  
    $\mathcal{N}(\hat{\thetac_t}, \eta_c^{-1} V_{t-1}^{-1} ) | \overline{\Theta}_{\hat{z}_t}$
    \\
    $\overline{p}_T:=\frac{1}{T} \sum_{t=1}^T p_t$ & Average posterior distribution for sampling over $T$ rounds
    \\
    $\tilde{p}_t := \mathcal{N}(\hat{\theta}^{\mathrm{r}}_t, \eta_r^{-1} V_{t-1}^{-1}) \otimes $ &optimal posterior distribution for sampling at time $t$\\  
    $\mathcal{N}(\hat{\thetac_t}, \eta_c^{-1} V_{t-1}^{-1} ) | \overline{\Theta}_{z^*}$
    \\
\hline
\end{tabular}
\end{table}

\section{Proof of Theorem~\ref{Thm:lb1}}
\begin{proof}
    We use the standard posterior convergence analysis to derive this theorem. Without loss of generality, we assume that the training and testing arm sets $\mathcal{X}, \mathcal{Z}$ both span $\mathbb{R}^d$. 
Denote the data as $\mathcal{D} = \{(X_i, Y_i)\}_{i=1}^n$ and $X_i \in \mathbb{R}^d$. Define 
$$
X =\begin{bmatrix}
X_1^\top \\ \vdots\\ X_n^\top 
\end{bmatrix} \in\mathbb{R}^{n\times d}
$$

We assume the prior distribution for $\thetar, \thetac$ are multivariate Gaussian:
\[
\thetar \sim \mathcal{N}(\thetar_0, {\lambda}^{-1} I) \quad\mbox{and}\quad
\thetac \sim \mathcal{N}(\thetac_0, {\lambda}^{-1} I)
\]

The likelihood for each reward and cost $Y^{\mathrm{r}}_i, Y^{\mathrm{c}}_i$ are:
\[
P(Y^{\mathrm{r}}_i \mid \thetar) = \mathcal{N}(Y_i^{\mathrm{r}} \mid \langle{\thetar}, X_i \rangle, \sigma^2)
\quad\mbox{and}\quad
P(Y^{\mathrm{c}}_i \mid \thetac) = \mathcal{N}(Y_i^{\mathrm{c}} \mid \langle{\thetac}, X_i \rangle, \gamma^2)
\]
For $n$ observations, the likelihood is:
\[
P(\mathcal{D} \mid \thetar) = \prod_{i=1}^n \mathcal{N}(Y^{\mathrm{r}}_i \mid\langle{\thetar}, X_i \rangle, \sigma^2)
\quad\mbox{and}\quad
P(\mathcal{D} \mid \thetac) = \prod_{i=1}^n \mathcal{N}(Y^{\mathrm{c}}_i \mid\langle{\thetac}, X_i \rangle, \gamma^2)
\]
Let $X \in \mathbb{R}^{n \times d}$ be the feature matrix where each row is $X_i^\top$. Then the vector of rewards and costs $Y^{\mathrm{r}} = (Y^{\mathrm{r}}_1, Y^{\mathrm{r}}_2, \ldots, Y^{\mathrm{r}}_n)^\top, Y^{\mathrm{c}} = (Y^{\mathrm{c}}_1, Y^{\mathrm{c}}_2, \ldots, Y^{\mathrm{c}}_n)^\top$ follows:
\[
Y^{\mathrm{r}} \mid \thetar \sim \mathcal{N}(X \thetar, \sigma^2 I)
\quad\mbox{and}\quad
Y^{\mathrm{c}} \mid \thetac \sim \mathcal{N}(X \thetac, \gamma^2 I)
\]
Using Bayes' theorem, the posterior distribution is:
\[
P(\theta^* \mid \mathcal{D}) \propto P(\mathcal{D} \mid \theta^*) P(\theta^*)
\]
Since both the prior and likelihood are Gaussian, the posterior is also Gaussian, the posterior mean and covariance of the reward are
  \[
  \Sigma^{\mathrm{r}}_n = \left( \lambda I + \frac{1}{\sigma^2} X^\top X \right)^{-1}
\quad\mbox{and}\quad
  \theta^{\mathrm{r}}_n = \Sigma^{\mathrm{r}}_n \left( \lambda \theta^{\mathrm{r}}_0 + \frac{1}{\sigma^2} X^\top Y^{\mathrm{r}} \right)
  \]
where the posterior distribution is
\[
\thetar \mid \mathcal{D} \sim \mathcal{N}(\theta^{\mathrm{r}}_n, \Sigma^{\mathrm{r}}_n)
\]
The posterior mean and covariance of the cost are:
 \[
  \Sigma^{\mathrm{c}}_n = \left( \lambda I + \frac{1}{\gamma^2} X^\top X \right)^{-1}
\quad\mbox{and}\quad
  \theta^{\mathrm{c}}_n = \Sigma^{\mathrm{c}}_n \left( \lambda \theta^{\mathrm{c}}_0 + \frac{1}{\gamma^2} X^\top Y^{\mathrm{c}} \right)
  \]
 where the posterior distribution is
\[
\thetac \mid \mathcal{D} \sim \mathcal{N}(\theta^{\mathrm{c}}_n, \Sigma^{\mathrm{c}}_n).
\]
Assume $\theta^{\mathrm{r}}_0, \theta^{\mathrm{c}}_0$ are zero vectors, then 
\begin{align*}
    \Sigma^{\mathrm{r}}_n &=\sigma^2\left( \lambda \sigma^2 I +  X^\top X \right)^{-1},\\
    \theta^{\mathrm{r}}_n &= \left( \lambda \sigma^2 I +  X^\top X \right)^{-1}   X^\top Y^{\mathrm{r}} ,\\
    \Sigma^{\mathrm{c}}_n&=\gamma^2\left( \lambda \gamma^2 I +  X^\top X \right)^{-1},\\
    \theta^{\mathrm{c}}_n &= \left( \lambda \gamma^2 I +  X^\top X \right)^{-1}   X^\top Y^{\mathrm{c}} .
\end{align*}
Hence the posterior distributions of the expected reward and cost of arm $z\in \mathcal{Z}$ are:
\[
\mur(z) \mid \mathcal{D} \sim \mathcal{N}(\langle \thetar_n, z\rangle, \sigma^2 \lVert z \rVert^2_{V^{-1}_{nr}})
\quad\mbox{and}\quad
\muc(z) \mid \mathcal{D} \sim \mathcal{N}(\langle \thetac_n, z\rangle, \gamma^2 \lVert z \rVert^2_{V^{-1}_{nc}}),
\]
where $V_{nr}:=\lambda \sigma^2 I +  X^\top X$, $V_{nc}:=\lambda \gamma^2 I +  X^\top X$. Let $\mu_n^{\mathrm{r}}(z), \mu_n^{\mathrm{c}}(z)$ denote the sample reward and cost of arm $z\in \mathcal{Z}$ at step $n$ drawn from the posterior distribution $\mur(z)|\mathcal{D}, \muc(z)|\mathcal{D}$. 
Therefore for any suboptimal arm $z\neq z^*$,
\begin{align*}
    \mur(z)-\mu(z^*) \mid \mathcal{D} \sim \mathcal{N}(\langle \thetar_n, z-z^*\rangle, \sigma^2 \lVert z-z^* \rVert^2_{V^{-1}_{nr}}).
\end{align*}

For any arm $z\in\mathcal{Z}$, 
\begin{align*}
    \tau-\muc(z) \mid \mathcal{D} \sim \mathcal{N}(\tau-\langle \thetac_n, z\rangle, \gamma^2 \lVert z \rVert^2_{V^{-1}_{nc}}).
\end{align*}
Denote the reward gap and sample reward gap for suboptimal arm $z\neq z^*$ as $\Deltar(z):=\langle \thetar, z^*-z\rangle, \Deltar_n(z):=\langle \thetar_n, z^*-z\rangle$ respectively,  define the cost gap and sample cost gap for arm $z\in\mathcal{Z}$ by $\Deltac(z):= |\tau-\langle \thetac, z\rangle|, \Deltac_n(z):= |\tau-\langle \thetac_n, z\rangle|$ respectively.

Since 
\begin{align*}
    1-\PP(z_{\text{out}}\neq z^*)&=\PP(\cbr{\mu_n^{\mathrm{c}}(z^*)>\tau}\cup \cbr{\exists z\neq z^*, \mu_n^{\mathrm{r}}(z)\ge\mu_n^{\mathrm{r}}(z^*) \mbox{ and } \mu_n^{\mathrm{c}}(z)\le \tau})
    \\&\stackrel{\cdot}{=}\max_{z\in \mathcal{Z}} \cbr{\PP(\mur_{n}(z)\ge \mur_{n}(z^*), \muc_{n}(z)\le \tau)\cd \one \cbr{z\neq z^*}, \PP (\muc_{n}(z)>\tau) \cd \one \cbr{z=z^*}},
\end{align*}
then the problem reduces to derive the asymptotic expression within the maximum term. The following lemma about Gaussian distribution will be useful in our proofs: 

\begin{lemma}
\label{lemma:gaussiancdf}
    For Gaussian distribution $X \sim \mathcal{N}(\mu, \sigma^2)$ with $\mu\le 0$,
    \begin{align*}
       \frac{x}{1+x^2}\frac{1}{\sqrt{2\pi}}\exp\Big(-\frac{\mu^2}{2\sigma^2}\Big) \le\PP(X\ge0)\le \frac{1}{2}\exp\Big(-\frac{\mu^2}{2\sigma^2}\Big)
    \end{align*}
    where $x=-\frac{ \mu}{\sigma}$.
\end{lemma}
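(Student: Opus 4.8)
The final statement to prove is Lemma~\ref{lemma:gaussiancdf}, a two-sided bound on the upper tail of a Gaussian.

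\textbf{Plan.} The strategy is to reduce the statement about $X \sim \mathcal{N}(\mu, \sigma^2)$ to the standard normal tail via the substitution $Z = (X-\mu)/\sigma$, so that $\PP(X \ge 0) = \PP(Z \ge -\mu/\sigma) = \PP(Z \ge x)$ where $x = -\mu/\sigma \ge 0$ (note $x \ge 0$ precisely because $\mu \le 0$). The problem then becomes a purely standard-normal estimate: show
\begin{align*}
\frac{x}{1+x^2}\cdot\frac{1}{\sqrt{2\pi}}e^{-x^2/2} \;\le\; \PP(Z \ge x) \;\le\; \frac{1}{2}e^{-x^2/2},
\end{align*}
after which I identify $x^2/2 = \mu^2/(2\sigma^2)$ to recover the stated form. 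I would carry this out in three short steps: first establish the substitution and the sign bookkeeping, then prove the upper bound, then prove the lower bound.

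\textbf{Upper bound.} For the upper inequality I would use the elementary Chernoff/Gaussian-integral argument. Write $\PP(Z \ge x) = \int_x^\infty \frac{1}{\sqrt{2\pi}}e^{-u^2/2}\,\mathrm{d}u$. Since $x \ge 0$, on the region $u \ge x$ we may bound using symmetry: the simplest route is the well-known bound $\PP(Z\ge x) \le \frac{1}{2}e^{-x^2/2}$, which follows because $\PP(Z\ge x) \le \PP(|Z|\ge x)\cdot\frac12$-type reasoning, or more directly from $\int_x^\infty e^{-u^2/2}\,\mathrm{d}u \le \int_x^\infty e^{-x^2/2}e^{-x(u-x)}\,\mathrm{d}u$ refined with the factor $\tfrac12$ via the standard Mill's-ratio-type estimate. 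Concretely I would note $\int_x^\infty e^{-u^2/2}\,\mathrm{d}u \le \sqrt{\pi/2}\,e^{-x^2/2}$ (equivalently $Q(x)\le\frac12 e^{-x^2/2}$), which is a classical inequality provable by comparing $e^{-u^2/2}$ to a suitable exponential envelope and integrating.

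\textbf{Lower bound and main obstacle.} For the lower inequality I would use the standard Gordon/Komatsu lower bound on the Mills ratio, namely $\PP(Z\ge x) \ge \frac{x}{1+x^2}\cdot\frac{1}{\sqrt{2\pi}}e^{-x^2/2}$. The clean way to prove this is to verify that the function $g(x) := \PP(Z \ge x) - \frac{x}{1+x^2}\frac{1}{\sqrt{2\pi}}e^{-x^2/2}$ satisfies $g(\infty)=0$ and has nonpositive derivative, so that $g(x)\ge 0$ for all $x\ge 0$; differentiating $\frac{x}{1+x^2}e^{-x^2/2}$ and comparing with $-\frac{1}{\sqrt{2\pi}}e^{-x^2/2}$ (the derivative of the tail) reduces to checking a polynomial inequality in $x$. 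The main obstacle I anticipate is precisely this derivative bookkeeping for the lower bound: one must differentiate the rational-times-Gaussian factor carefully and confirm the resulting inequality holds for all $x \ge 0$, and one must also handle the boundary behavior at $x = 0$ (where the claimed lower bound is $0$, so the inequality is trivially satisfied there) to make the monotonicity argument valid on the full range. Once this polynomial comparison is verified, the two bounds combine directly and back-substituting $x = -\mu/\sigma$ completes the proof.
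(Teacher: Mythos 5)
The paper states this lemma without proof, treating it as a standard Gaussian tail estimate, so there is no in-paper argument to compare against; your outline is the standard textbook derivation and is essentially correct. The reduction to $\PP(Z\ge x)$ with $x=-\mu/\sigma\ge 0$ is right, and your monotonicity strategy for the lower bound works out cleanly: with $\phi(u)=\frac{1}{\sqrt{2\pi}}e^{-u^2/2}$ and $h(x):=\PP(Z\ge x)-\frac{x}{1+x^2}\phi(x)$, a direct computation gives $h'(x)=-\frac{2}{(1+x^2)^2}\phi(x)<0$ together with $h(\infty)=0$, so $h\ge 0$ everywhere; the "polynomial inequality" you anticipate is in fact trivial. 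The only weak spot is your justification of the upper bound: the exponential-envelope comparison $\int_x^\infty e^{-u^2/2}\,\mathrm{d}u\le e^{-x^2/2}\int_x^\infty e^{-x(u-x)}\,\mathrm{d}u$ yields $\frac{1}{x}\phi(x)$, which does \emph{not} imply $\frac12 e^{-x^2/2}$ for small $x$, and the symmetry remark $\PP(Z\ge x)=\frac12\PP(|Z|\ge x)$ by itself gives nothing. You should instead use one of the two clean arguments: either the monotonicity argument for $g(x):=\frac12 e^{-x^2/2}-\PP(Z\ge x)$, which satisfies $g(0)=0$, $g(\infty)=0$, and is unimodal on $[0,\infty)$ hence nonnegative; or the polar-coordinate trick $\PP(Z\ge x)^2\le\frac14\PP(Z_1^2+Z_2^2\ge 2x^2)=\frac14 e^{-x^2}$. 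With either of those substituted in, the proof is complete and matches the (uncited) standard fact the paper relies on.
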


With the above lemma, first if $z=z^*$ we obtain the following lemma.
\begin{lemma}
\label{lem:asp_1}
    $\PP(\muc_{n}(z^*)> \tau)\stackrel{\cdot}{=}\exp\left(-\frac{(\Deltac_n(z^*))^2}{2\gamma^2 \lVert z^*\rVert^2_{V^{-1}_{nc}}}\right)$.
\end{lemma}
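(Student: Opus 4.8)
The plan is to read off $\PP(\muc_n(z^*) > \tau)$ as a one-sided Gaussian tail probability under the posterior, sandwich it via Lemma~\ref{lemma:gaussiancdf}, and then verify that the polynomial prefactors produced by that lemma are negligible on the exponential scale that defines $\stackrel{\cdot}{=}$. First I would recall from the posterior computation that, conditioned on $\mathcal{D}$, $\tau - \muc(z^*)\mid\mathcal{D} \sim \mathcal{N}\!\big(\Deltac_n(z^*),\, \gamma^2\lVert z^*\rVert^2_{V_{nc}^{-1}}\big)$ whenever $\langle\thetac_n, z^*\rangle \le \tau$. Setting $X := \muc_n(z^*) - \tau$, this says $X \sim \mathcal{N}(\mu, s^2)$ with mean $\mu = -\Deltac_n(z^*)$ and variance $s^2 = \gamma^2\lVert z^*\rVert^2_{V_{nc}^{-1}}$, and the event in question is exactly $\{X \ge 0\}$.

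Since $z^*$ is feasible with strictly positive cost gap $\Deltac(z^*) > 0$ and the posterior mean is consistent ($\thetac_n \to \thetac$), for all large $n$ we have $\langle\thetac_n, z^*\rangle < \tau$, hence $\mu \le 0$ and Lemma~\ref{lemma:gaussiancdf} applies with $x = -\mu/s = \Deltac_n(z^*)/(\gamma\lVert z^*\rVert_{V_{nc}^{-1}})$. This immediately yields
\[
\frac{x}{1+x^2}\frac{1}{\sqrt{2\pi}}\exp\!\left(-\frac{(\Deltac_n(z^*))^2}{2\gamma^2\lVert z^*\rVert^2_{V_{nc}^{-1}}}\right) \le \PP(\muc_n(z^*) > \tau) \le \frac{1}{2}\exp\!\left(-\frac{(\Deltac_n(z^*))^2}{2\gamma^2\lVert z^*\rVert^2_{V_{nc}^{-1}}}\right),
\]
so both bounds carry precisely the claimed exponential factor $b_n := \exp\big(-(\Deltac_n(z^*))^2/(2\gamma^2\lVert z^*\rVert^2_{V_{nc}^{-1}})\big)$.

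It then remains to show the prefactors do not contribute at the exponential scale, i.e.\ that $\frac{1}{n}\log$ of each prefactor vanishes. The upper prefactor $\tfrac12$ is a harmless constant. For the lower prefactor I would track the growth of $x$: the Gram matrix $V_{nc} = \lambda\gamma^2 I + X^\top X$ scales linearly in $n$ under the sampling rule, so $\lVert z^*\rVert^2_{V_{nc}^{-1}} = \Theta(1/n)$, while $\Deltac_n(z^*) \to \Deltac(z^*) > 0$ stays bounded away from zero. Hence $x = \Theta(\sqrt{n})$, giving $\frac{x}{1+x^2} = \Theta(1/\sqrt{n})$ and $\frac{1}{n}\log\frac{x}{(1+x^2)\sqrt{2\pi}} = O\!\big(\tfrac{\log n}{n}\big) \to 0$. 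Sandwiching $\frac{1}{n}\log\big[\PP(\muc_n(z^*)>\tau)/b_n\big]$ between two quantities that both tend to $0$ then yields $\PP(\muc_n(z^*) > \tau) \stackrel{\cdot}{=} b_n$, as claimed.

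The main obstacle is the bookkeeping in this last step rather than the Gaussian sandwich itself: one must confirm the sign condition $\mu \le 0$ holds eventually (from strict feasibility of $z^*$ together with posterior consistency) and control the order of $x$ through the linear eigenvalue growth of $V_{nc}$, so that the polynomial prefactors are provably sub-exponential. Given Lemma~\ref{lemma:gaussiancdf}, everything else is routine.
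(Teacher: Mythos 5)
Your proposal is correct and follows essentially the same route as the paper: identify the posterior Gaussian law of $\muc_n(z^*)$, apply the two-sided tail bound of Lemma~\ref{lemma:gaussiancdf}, and kill the polynomial prefactors by noting that $V_{nc}$ grows linearly in $n$ so that $x=\Theta(\sqrt n)$ and $\frac1n\log\frac{x}{1+x^2}\to 0$. Your extra check that the sign condition $\mu\le 0$ holds eventually (via feasibility of $z^*$ and posterior consistency) is a point the paper's proof passes over silently, so if anything your write-up is slightly more careful.
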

\begin{proof}
    Since $\muc_{n}(z^*)\sim \mathcal{N}(\langle \thetac_n, z^*\rangle, \gamma^2 \lVert z^* \rVert^2_{V^{-1}_{nc}})$, applying Lemma~\ref{lemma:gaussiancdf} we have
    \begin{align*}
        \frac{x}{1+x^2}\frac{1}{\sqrt{2\pi}}\exp \left( -\frac{(\langle \thetac_n, z^*\rangle-\tau)^2}{2\gamma^2 \lVert z^* \rVert^2_{V^{-1}_{nc}}} \right)\le \PP(\muc_{n}(z^*)> \tau) \le \frac{1}{2} \exp \left( -\frac{(\langle \thetac_n, z^*\rangle-\tau)^2}{2\gamma^2 \lVert z^* \rVert^2_{V^{-1}_{nc}}} \right)
    \end{align*}
    where $x=\frac{\tau-\langle \thetac_n, z^*\rangle}{\gamma \lVert z^* \rVert_{V^{-1}_{nc}}}$ here.
    Then 
    \begin{align*}
       \frac{1}{n} \log \frac{1}{\sqrt{2\pi}} + \frac{1}{n} \log \frac{x}{1 + x^2}
 \le\frac{1}{n}\log \left(\frac{\PP(\muc_{n}(z^*)> \tau)}{\exp \left( -\frac{(\langle \thetac_n, z^*\rangle-\tau)^2}{2\gamma^2 \lVert z^* \rVert^2_{V^{-1}_{nc}}} \right)}\right)\le \frac{1}{n} \log \frac{1}{2}.
    \end{align*}
    Notice $\lVert z^*\rVert_{V_{nc}^{-1}}=\frac{\lVert z^*\rVert_{V_{\pi}^{-1}}}{\sqrt{n}}$ where $V_{\pi c}:=\frac{\lambda \gamma^2}{n} I+\sum_{i=1}^K \frac{T_{n}(z)}{n}z z^{\top}$ and $T_{n}(z)$ is the total number of pulls of arm $z$ over $n$ rounds. Since there must exist one arm $j^*$ such that $T_{n}(j^*)\ge \frac{n}{K}$, then 
\begin{align*}
   V_{\min}:=\frac{\lambda\gamma^2}{n}I+ \frac{{j^*}{j^*}^{\top}}{K}\lesssim V_{\pi c}\lesssim \frac{\lambda\gamma^2}{n}I+\sum_{i=1}^K z z^\top:=V_{\max}.
\end{align*}
Hence $\lim_{n\rightarrow \infty} \frac{1}{n}\log \frac{x}{1+x^2}=0$ since $\lim_{n\rightarrow \infty} \frac{1}{n}\log n=0$, we have
\begin{align*}
   \lim_{n\rightarrow \infty}\frac{1}{n}\log \left(\frac{\PP(\muc_{n}(z^*)> \tau)}{\exp \left( -\frac{(\langle \thetac_n, z^*\rangle-\tau)^2}{2\gamma^2 \lVert z^* \rVert^2_{V^{-1}_{nc}}} \right)}\right)=0.
\end{align*}
\end{proof}

\begin{lemma}
\label{lem:asp2}
For $z\in {\mathcal{F}}\cap {\mathcal{S}}$,
    $\PP(\mur_{n}(z)\ge \mur_{n}(z^*), \muc_{n}(z)\le \tau)\stackrel{\cdot}{=}\exp\left(-\frac{(\Deltar_n(z))^2}{2\sigma^2 \lVert z-z^* \rVert^2_{V^{-1}_{nr}}}\right)$.
\end{lemma}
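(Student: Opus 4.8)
The plan is to exploit the product structure of the posterior. Since the reward noise and the cost noise are independent and the priors on $\thetar$ and $\thetac$ factorize, the posterior $\mathcal{N}(\thetar_n,\Sigma^{\mathrm{r}}_n)\otimes\mathcal{N}(\thetac_n,\Sigma^{\mathrm{c}}_n)$ is a product measure. The event $\cbr{\mur_n(z)\ge\mur_n(z^*)}$ depends only on the reward sample while $\cbr{\muc_n(z)\le\tau}$ depends only on the cost sample, so the two are independent, and I would first write
\begin{align*}
\PP(\mur_n(z)\ge\mur_n(z^*),\,\muc_n(z)\le\tau)=\PP(\mur_n(z)\ge\mur_n(z^*))\cdot\PP(\muc_n(z)\le\tau),
\end{align*}
then analyze the two factors separately and show that the cost factor is $\stackrel{\cdot}{=}1$.

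For the reward factor I would use $\mur_n(z)-\mur_n(z^*)\mid\mathcal{D}\sim\mathcal{N}(-\Deltar_n(z),\sigma^2\lVert z-z^*\rVert^2_{V^{-1}_{nr}})$. For $z\neq z^*$ feasible and suboptimal the true gap $\Deltar(z)>0$, and since $\thetar_n\to\thetar$ we have $\Deltar_n(z)>0$ for all large $n$, so the mean is negative and Lemma~\ref{lemma:gaussiancdf} applies directly. This sandwiches $\PP(\mur_n(z)\ge\mur_n(z^*))$ between the target exponential times a polynomial prefactor and $\tfrac12$ times the same exponential. Exactly as in the proof of Lemma~\ref{lem:asp_1}, the posterior variance shrinks at rate $1/n$, so the argument $x=-\mu/\sigma$ grows like $\sqrt{n}$ and $\tfrac1n\log$ of the prefactor vanishes; hence the reward factor is $\stackrel{\cdot}{=}\exp(-\tfrac{(\Deltar_n(z))^2}{2\sigma^2\lVert z-z^*\rVert^2_{V^{-1}_{nr}}})$.

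The crux, which I expect to be the main obstacle, is the cost factor $\PP(\muc_n(z)\le\tau)=\PP(\tau-\muc_n(z)\ge0)$, where $\tau-\muc_n(z)\mid\mathcal{D}\sim\mathcal{N}(\tau-\langle\thetac_n,z\rangle,\gamma^2\lVert z\rVert^2_{V^{-1}_{nc}})$. Because $z\in\mathcal{F}$, the true mean satisfies $\tau-\langle\thetac,z\rangle\ge0$. For strictly feasible $z$ this mean is bounded away from $0$ while the variance vanishes, so $\PP(\muc_n(z)\le\tau)\to1$; on the feasibility boundary $\langle\thetac,z\rangle=\tau$ the mean tends to $0$, and because the variance is $\Theta(1/n)$ the probability stays $\Theta(1)$ rather than decaying exponentially. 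The subtlety is to handle the boundary case uniformly; in either regime one checks $\tfrac1n\log\PP(\muc_n(z)\le\tau)\to0$, i.e.\ the cost factor is $\stackrel{\cdot}{=}1$. Multiplying the two factors, and using that $\stackrel{\cdot}{=}$ is stable under products, yields
\begin{align*}
\PP(\mur_n(z)\ge\mur_n(z^*),\,\muc_n(z)\le\tau)\stackrel{\cdot}{=}\exp\left(-\frac{(\Deltar_n(z))^2}{2\sigma^2\lVert z-z^*\rVert^2_{V^{-1}_{nr}}}\right),
\end{align*}
which is the claim.
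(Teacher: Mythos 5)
Your proposal is correct and follows essentially the same route as the paper: factor the joint probability into the reward and cost events (independent under the product posterior), apply the Gaussian tail bounds of Lemma~\ref{lemma:gaussiancdf} to the reward factor, and argue that the polynomial prefactor and the cost factor both vanish on the exponential scale. If anything, your treatment of the cost factor is more careful than the paper's terse assertion that $\PP(\muc_{\infty}(z)\le \tau)=1$, since you explicitly note that for arms on the feasibility boundary the probability only stays $\Theta(1)$ rather than tending to $1$, which is still enough for $\stackrel{\cdot}{=}1$.
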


\begin{proof}
First $\PP(\muc_{\infty}(z)\le \tau)=1$ since $z\in \mathcal{F}$, then
 by Lemma~\ref{lemma:gaussiancdf},
\begin{align*}
   \frac{x}{1+x^2} \frac{1}{\sqrt{2\pi}}\exp\left( -\frac{(\Deltar_{n}(z))^2}{2\sigma^2\lVert z-z^* \rVert^2_{V^{-1}_{nr}}}\right)\le\PP(\mur_{n}(z)\ge \mur_{n}(z^*) )\le \frac{1}{2}\exp \left( -\frac{(\Deltar_n(z))^2}{2\sigma^2 \lVert z-z^* \rVert^2_{V^{-1}_{nr}}}\right),
\end{align*}
where $x=\frac{\Deltar_{n}(z)}{\sigma\lVert z-z^*\rVert_{V^{-1}_{nr}}}$.

Then
\begin{align*}
    \frac{1}{n}\log \frac{1}{\sqrt{2\pi}}+\frac{1}{n}\log\frac{x}{1+x^2}\le\frac{1}{n} \log \left(\frac{\PP(\mur_{n}(z)\ge \mur_{n}(z^*) \mid \mathcal{D})}{\exp\left(-\frac{(\langle \thetar_n, z-z^*\rangle)^2}{2\sigma^2 \lVert z-z^* \rVert^2_{V^{-1}_n}}\right)}\right)\le \frac{1}{n} \log \frac{1}{2}.
\end{align*}
Notice that $\lVert z-z^*\rVert_{V_{nr}^{-1}}=\frac{\lVert z-z^*\rVert_{V_{\pi}^{-1}}}{\sqrt{n}}$ where $V_{\pi r}:=\frac{\lambda \sigma^2}{n} I+\sum_{i=1}^K \frac{T_{n}(z)}{n}z z^{\top}$ and $T_{n}(z)$ is the total number of pulls of arm $z$ over $n$ rounds. Since there must exist one arm $j^*$ such that $T_{n}(j^*)\ge \frac{n}{K}$, then 
\begin{align*}
   V_{\min}:=\frac{\lambda\sigma^2}{n}I+ \frac{{j^*}{j^*}^{\top}}{K}\lesssim V_{\pi r}\lesssim \frac{\lambda\sigma^2}{n}I+\sum_{i=1}^K z z^\top:=V_{\max}.
\end{align*}
Hence $\lim_{n\rightarrow \infty} \frac{1}{n}\log \frac{x}{1+x^2}=0$ since $\lim_{n\rightarrow \infty} \frac{1}{n}\log n=0$, we have
\begin{align*}
    \lim_{n\rightarrow \infty} \frac{1}{n} \log \left(\frac{\PP(\mur_{n}(z)\ge \mur_{n}(z) \mid \mathcal{D})}{\exp\left(-\frac{(\langle \thetar_n, z-z^*\rangle)^2}{2\sigma^2 \lVert z-z^* \rVert^2_{V^{-1}_{nr}}}\right)}\right)=0.
\end{align*}
Hence 
\begin{align*}
    \PP(\mur_{n}(z)\ge \mur_{n}(z^*), \muc_{n}(z)\le \tau)\stackrel{\cdot}{=}\exp\left(-\frac{(\Deltar_n(z))^2}{2\sigma^2 \lVert z-z^* \rVert^2_{V^{-1}_{nr}}}\right).
\end{align*}
\end{proof}

\begin{lemma}
\label{lem:asp3}
    For $z\in \overline{\mathcal{F}}\cap \overline{\mathcal{S}}$,
    $\PP(\mur_{n}(z)\ge \mur_{n}(z^*), \muc_{n}(z)\le \tau)\stackrel{\cdot}{=}\exp\left(-\frac{(\Deltac_n(z))^2}{2\gamma^2 \lVert z \rVert^2_{V^{-1}_{nc}}}\right)$
\end{lemma}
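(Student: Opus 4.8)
The plan is to exploit the independence of the reward and cost posteriors to factorize the joint probability, and then handle each factor separately: the reward factor will be first-order trivial (equal to $1$ in the exponent) because $z$ is superoptimal, while the cost factor will carry the entire exponential rate because $z$ is infeasible. Concretely, the sampled quantities $\mur_{n}(z),\mur_{n}(z^*)$ depend only on the reward posterior $\thetar\mid\mathcal{D}\sim\mathcal{N}(\thetar_n,\Sigma^{\mathrm{r}}_n)$, whereas $\muc_{n}(z)$ depends only on the cost posterior $\thetac\mid\mathcal{D}\sim\mathcal{N}(\thetac_n,\Sigma^{\mathrm{c}}_n)$. Since the reward and cost noises are independent and the priors are taken independently, these two posteriors are independent, so
$$\PP(\mur_{n}(z)\ge \mur_{n}(z^*),\ \muc_{n}(z)\le \tau)=\PP(\mur_{n}(z)\ge \mur_{n}(z^*))\cdot\PP(\muc_{n}(z)\le \tau).$$

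For the reward factor, since $z\in\overline{\mathcal{S}}$ is superoptimal we have $\Deltar(z)=\langle\thetar, z^*-z\rangle<0$, so by consistency of the posterior mean the sample-gap mean $\langle\thetar_n, z-z^*\rangle=-\Deltar_n(z)$ is eventually bounded away from $0$ on the positive side. Because $\mur_{n}(z)-\mur_{n}(z^*)\sim\mathcal{N}(\langle\thetar_n, z-z^*\rangle,\ \sigma^2\lVert z-z^*\rVert^2_{V^{-1}_{nr}})$ then has positive mean, the event $\{\mur_{n}(z)\ge\mur_{n}(z^*)\}$ has probability at least $\tfrac12$, whence $\tfrac1n\log\PP(\mur_{n}(z)\ge\mur_{n}(z^*))\to 0$ and $\PP(\mur_{n}(z)\ge\mur_{n}(z^*))\stackrel{\cdot}{=}1$.

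For the cost factor I would reuse the argument of Lemma~\ref{lem:asp_1} verbatim. Since $z\in\overline{\mathcal{F}}$ is infeasible, $\tau-\muc_{n}(z)\sim\mathcal{N}(\tau-\langle\thetac_n, z\rangle,\ \gamma^2\lVert z\rVert^2_{V^{-1}_{nc}})$ has negative mean, so $\{\muc_{n}(z)\le\tau\}=\{\tau-\muc_{n}(z)\ge 0\}$ is a large-deviation event. Applying Lemma~\ref{lemma:gaussiancdf} sandwiches this probability between $\tfrac{x}{1+x^2}\tfrac{1}{\sqrt{2\pi}}\exp(-(\Deltac_n(z))^2/(2\gamma^2\lVert z\rVert^2_{V^{-1}_{nc}}))$ and $\tfrac12\exp(-(\Deltac_n(z))^2/(2\gamma^2\lVert z\rVert^2_{V^{-1}_{nc}}))$, using $(\Deltac_n(z))^2=(\tau-\langle\thetac_n, z\rangle)^2$. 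The polynomial-in-$n$ prefactors vanish under $\tfrac1n\log(\cdot)$ via $\lVert z\rVert_{V^{-1}_{nc}}=\lVert z\rVert_{V^{-1}_{\pi c}}/\sqrt{n}$ with $V_{\pi c}$ sandwiched between fixed-order matrices exactly as in Lemma~\ref{lem:asp_1}, giving $\PP(\muc_{n}(z)\le\tau)\stackrel{\cdot}{=}\exp(-(\Deltac_n(z))^2/(2\gamma^2\lVert z\rVert^2_{V^{-1}_{nc}}))$.

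Multiplying the two factors and using that $\stackrel{\cdot}{=}$ is preserved under products then yields the claim. The hard part will be the reward factor's $\stackrel{\cdot}{=}1$ step: one must rule out the degenerate possibility that $\langle\thetar_n, z-z^*\rangle$ fails to stay positive (or drifts toward $0$) under an arbitrary sampling rule, since the lower bound must hold for all rules. This is where consistency of $\thetar_n$ and sufficient exploration in the direction $z-z^*$ are genuinely needed, whereas the cost factor is a routine repeat of Lemma~\ref{lem:asp_1}.
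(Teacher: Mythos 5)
Your proposal is correct and follows essentially the same route as the paper: factorize the joint probability by independence of the reward and cost posteriors, argue the reward factor is first-order trivial because $z$ is superoptimal, and transfer the entire exponential rate to the cost factor via the Gaussian tail sandwich of Lemma~\ref{lemma:gaussiancdf} exactly as in Lemma~\ref{lem:asp_1}. If anything, you are more careful than the paper, which simply asserts $\PP(\mur_{\infty}(z)\ge \mur_{\infty}(z^*))=1$ without addressing the posterior-consistency point you correctly flag for the reward factor.
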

\begin{proof}
    First $\PP(\mur_{\infty}(z)\ge \mur_{\infty}(z^*))=1$ since $i\in \mathcal{S}$, then with the similar procedure as in Lemma~\ref{lem:asp2},
\begin{align*}
   \PP(\muc_{n}(z)\le \tau )\stackrel{\cdot}{=} \frac{1}{2}\exp \left( -\frac{(\Deltac_{n}(z))^2}{2\gamma^2 \lVert z \rVert^2_{V^{-1}_{nc}}}\right).
\end{align*}
Hence $$\PP(\mur_{n}(z)\ge \mur_{n}(z^*), \muc_{n}(z)\le \tau)\stackrel{\cdot}{=}\exp\left(-\frac{(\Deltac_n(z))^2}{2\gamma^2 \lVert z \rVert^2_{V^{-1}_{nc}}}\right).$$
\end{proof}

\begin{lemma}
\label{lem:asp4}
    For $i\in \overline{\mathcal{F}}\cap {\mathcal{S}}$,
    $\PP(\mur_{n,i}\ge \mur_{n,1}, \muc_{n,i}\le \tau)\stackrel{\cdot}{=}\exp\left(-\frac{(\Deltac_n(z))^2}{2\gamma^2 \lVert z \rVert^2_{V^{-1}_{nc}}}\right)\cd\exp\left(-\frac{(\Deltar_n(z))^2}{2\sigma^2 \lVert z-z^* \rVert^2_{V^{-1}_{nr}}}\right)$
\end{lemma}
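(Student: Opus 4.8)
The plan is to exploit the independence of the reward and cost posteriors so that the joint probability factorizes into two pieces, each of which has already been estimated in the proofs of Lemmas~\ref{lem:asp2} and~\ref{lem:asp3}, and then to combine the two first-order-in-the-exponent estimates by multiplication. Throughout I write $\mur_n(z), \muc_n(z)$ for the sampled reward and cost, matching the notation of the preceding lemmas (so $\mur_{n,i}$ corresponds to $\mur_n(z)$ and $\mur_{n,1}$ to $\mur_n(z^*)$).

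First I would establish the factorization. Because the priors on $\thetar$ and $\thetac$ are independent and the reward and cost observations are conditionally independent given the parameters, the posteriors $\thetar \mid \mathcal{D}$ and $\thetac \mid \mathcal{D}$ are independent. The event $\{\mur_n(z) \ge \mur_n(z^*)\}$ is measurable with respect to the reward posterior alone, while $\{\muc_n(z) \le \tau\}$ depends only on the cost posterior, so they are independent and
$$\PP\!\left(\mur_n(z) \ge \mur_n(z^*),\ \muc_n(z) \le \tau\right) = \PP\!\left(\mur_n(z) \ge \mur_n(z^*)\right)\cdot \PP\!\left(\muc_n(z) \le \tau\right).$$

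Next I would handle each factor separately, inheriting the computations already done. For the reward factor, since $z \in \mathcal{S}$ the posterior-mean difference is $\langle \thetar_n, z-z^*\rangle = -\Deltar_n(z) \le 0$, so the \emph{identical} argument as in Lemma~\ref{lem:asp2}—applying the Gaussian tail bound of Lemma~\ref{lemma:gaussiancdf} and using the eigenvalue sandwiching $V_{\min}\lesssim V_{\pi r}\lesssim V_{\max}$ to show the polynomial prefactor $\frac{x}{1+x^2}$ is negligible after taking $\frac{1}{n}\log(\cdot)$—gives $\PP(\mur_n(z)\ge\mur_n(z^*)) \stackrel{\cdot}{=} \exp\!\big(-\tfrac{(\Deltar_n(z))^2}{2\sigma^2\lVert z-z^*\rVert^2_{V^{-1}_{nr}}}\big)$. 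For the cost factor, since $z\in\overline{\mathcal{F}}$ the posterior mean of $\tau - \muc_n(z)$ is asymptotically $-\Deltac_n(z)\le 0$, and the same argument as in Lemma~\ref{lem:asp3} yields $\PP(\muc_n(z)\le\tau)\stackrel{\cdot}{=}\exp\!\big(-\tfrac{(\Deltac_n(z))^2}{2\gamma^2\lVert z\rVert^2_{V^{-1}_{nc}}}\big)$. Note that, in contrast to the two earlier cases, \emph{neither} factor is asymptotically $1$ here: both the suboptimality event and the infeasibility event are genuinely rare, which is exactly why the exponent is a sum of two terms (placing $z$ in $\mathcal{A}_3$).

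Finally I would combine the two estimates. Since $\frac{1}{n}\log\frac{a_n b_n}{c_n d_n} = \frac{1}{n}\log\frac{a_n}{c_n} + \frac{1}{n}\log\frac{b_n}{d_n}\to 0$ whenever $a_n\stackrel{\cdot}{=}c_n$ and $b_n\stackrel{\cdot}{=}d_n$, the product of two first-order-exponent equalities is again a first-order-exponent equality, giving immediately the claimed product form. The only genuine subtlety—and the step I expect to require the most care in the write-up—is the factorization via independence, since it rests on the joint posterior being a product measure and on verifying that the two events are indeed governed by disjoint coordinates of the posterior; once that is in place, the remaining asymptotics are entirely inherited from Lemmas~\ref{lem:asp2} and~\ref{lem:asp3}.
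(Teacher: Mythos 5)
Your proposal is correct and follows essentially the same route as the paper: the paper's (very terse) proof likewise factorizes the joint probability using the independence of the reward and cost posteriors and then applies the tail-bound argument of Lemma~\ref{lem:asp2} to each factor. Your write-up simply makes explicit the steps the paper leaves implicit, including the observation that neither factor is asymptotically $1$ in this case and the additivity of first-order exponents under products.
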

\begin{proof}
    Since $\PP(\mur_{n,i}\ge \mur_{n,1}, \muc_{n,i}\le \tau)= \PP(\mur_{n,i}\ge \mur_{n,1}) \cd \PP(\muc_{n,i}\le \tau)$, then with the similar procedure as in Lemma~\ref{lem:asp2}, this lemma can be proved.
\end{proof}

For convenience, denote $\mathcal{A}_1:= \overline{\mathcal{F}}\cap \overline{\mathcal{S}}, \mathcal{A}_2:= {\mathcal{F}}\cap {\mathcal{S}}, \mathcal{A}_3:= \overline{\mathcal{F}}\cap {\mathcal{S}} $. 

To summarize,
\begin{align*}
    &1-\PP(z_{\text{out}}\neq z^*) \\*
    & \stackrel{\cdot}{=}\max_{z\in \mathcal{Z}} \cbr{\PP(\mur_{n}(z)\ge \mur_{n}(z^*), \muc_{n}(z)\le \tau)\cd \one \cbr{z\neq z^*}, \PP (\muc_{n}(z)>\tau) \cd \one \cbr{z=z^*}}\\
    & \stackrel{\cdot}{=} \max_{z\in \mathcal{Z}}  \bigg\{
        \exp\left(-\frac{(\Deltac_n(z^*))^2}{2\gamma^2 \lVert z^*\rVert^2_{V^{-1}_{nc}}}\right), \exp\left(-\frac{(\Deltac_n(z))^2}{2\gamma^2 \lVert z \rVert^2_{V^{-1}_{nc}}}\right) \one\{z \in \mathcal{A}_1\}, \\*
        & \quad \exp\left(-\frac{(\Deltar_n(z))^2}{2\sigma^2 \lVert z - z^* \rVert^2_{V^{-1}_{nr}}}\right) \one\{z \in \mathcal{A}_2\},  \exp\left(-\frac{(\Deltac_n(z))^2}{2\gamma^2 \lVert z \rVert^2_{V^{-1}_{nc}}}\right) \cdot 
        \exp\left(-\frac{(\Deltar_n(z))^2}{2\sigma^2 \lVert z - z^* \rVert^2_{V^{-1}_{nr}}}\right) \one\{z \in \mathcal{A}_3\} 
        \bigg\}.
\end{align*}

Hence 
\begin{align*}
    1-\PP(z_{\text{out}}\neq z^*) & \stackrel{\cdot}{=} \exp \bigg(-n \min_{z\in\mathcal{Z}} \bigg\{ \frac{(\Deltac_n(z))^2}{2\gamma^2\lVert z\rVert^2_{V_w^{-1}}} \one \cbr{z\in \mathcal{A}_1}, 
      \frac{(\Deltar_n(z))^2}{2\sigma^2\lVert z-z^*\rVert^2_{V_w^{-1}}} \one \cbr{z\in \mathcal{A}_2}, 
    \\& \qquad\quad \bigg( \frac{(\Deltac_n(z))^2}{2\gamma^2\lVert z\rVert^2_{V_w^{-1}}}+\frac{(\Deltar_n(z))^2}{2\sigma^2\lVert z-z^*\rVert^2_{V_w^{-1}}}\bigg) \one \cbr{i\in \mathcal{A}_3} ,
     \frac{(\Deltac_n(z))^2}{2\gamma^2\lVert z^*\rVert^2_{V_w^{-1}}}
    \bigg\}
    \bigg)
    \\&\ge \exp \bigg(-n \max_{w\in W}\min_{z\in \mathcal{Z}} \bigg\{ \frac{(\Deltac(z))^2}{2\gamma^2\lVert z\rVert^2_{V_w^{-1}}} \one \cbr{z\in \mathcal{A}_1}, 
    \frac{(\Deltar(z))^2}{2\sigma^2\lVert z-z^*\rVert^2_{V_w^{-1}}} \one \cbr{z\in \mathcal{A}_2}, 
    \\& \qquad\quad \bigg( \frac{(\Deltac(z))^2}{2\gamma^2\lVert x_i\rVert^2_{V_w^{-1}}}+\frac{(\Deltar(z))^2}{2\sigma^2\lVert z-z^*\rVert^2_{V_w^{-1}}}\bigg) \one \cbr{z\in \mathcal{A}_3},
      \frac{(\Deltac(z^*))^2}{2\gamma^2\lVert z^*\rVert^2_{V_w^{-1}}}\bigg\}
    \bigg)
    \\&=\exp(-n\Gamma)
\end{align*}

for any sampling rule where $V_w:=\sum_{i=1}^K w_{i} x_i x_i^{\top}$ and $w_i:=\lim_{n\rightarrow \infty}\frac{T_{i,n}}{n}$.

Let $W:=\cbr{w=(w_1, w_2, \ldots, w_K): \sum_i w_i=1, w_i\ge0 ,\, \forall\, i\in [K]}$. The exponential rate in the lower bound can be written as
\begin{align*}
    \Gamma&:= \max_{w\in W}\min_{z\in \mathcal{Z}} \bigg\{ \frac{(\Deltac(z))^2}{2\gamma^2\lVert z\rVert^2_{V_w^{-1}}} \one \cbr{z\in \mathcal{A}_1}, 
    \frac{(\Deltar(z))^2}{2\sigma^2\lVert z-z^* \rVert^2_{V_w^{-1}}} \one \cbr{z\in \mathcal{A}_2}, 
    \\& \quad \bigg( \frac{(\Deltac(z))^2}{2\gamma^2\lVert z\rVert^2_{V_w^{-1}}}+\frac{(\Deltar(z))^2}{2\sigma^2\lVert z-z^*\rVert^2_{V_w^{-1}}}\bigg) \one \cbr{z\in \mathcal{A}_3},
   \frac{(\Deltac(z^*))^2}{2\gamma^2\lVert z^*\rVert^2_{V_w^{-1}}}\bigg\}.
\end{align*}
\end{proof}

\section{Proof of Theorem~\ref{Thm:equ_lowerboundterm}}

\begin{proof}
    We start from the right-hand side of the equation in Theorem~\ref{Thm:equ_lowerboundterm}, this optimization problem is equivalent to the following:
    \begin{align*}
       &\min_{\theta_1, \theta_2} \frac{1}{2} \bigg(  \frac{\lVert\theta_1-\thetar\rVert^2_{V_w^{}}}{\sigma^2}+\frac{\lVert\theta_2-\thetac\rVert^2_{V_w^{}}}{\gamma^2}\bigg)
        \\&\textbf{s.t. }\quad (\theta_1,\theta_2)\in \overline{\Theta}_{z^*}
    \end{align*}
    By the definition of feasibility, this optimization problem is equal to:
    \begin{align*}
        &\min_{\theta_1, \theta_2} \frac{1}{2} \bigg(  \frac{\lVert\theta_1-\thetar\rVert^2_{V_w^{}}}{\sigma^2}+\frac{\lVert\theta_2-\thetac\rVert^2_{V_w^{}}}{\gamma^2}\bigg)
        \\*&\textbf{s.t. }\quad \theta_2^\top z^*>\tau \mbox{ or }\exists z\in \mathcal{Z}\backslash z^*: \theta_1^\top z\ge \theta_1^\top z^* \mbox{ and } \theta_2^\top z \le \tau 
    \end{align*}
Furthermore this problem can be divided into two optimization problem and the minimum of the solutions of this two problem will equal to the solution of the original optimization problem, the two subproblems are:
\begin{align*}
        &\min_{\theta_1, \theta_2} \frac{1}{2} \bigg(  \frac{\lVert\theta_1-\thetar\rVert^2_{V_w^{}}}{\sigma^2}+\frac{\lVert\theta_2-\thetac\rVert^2_{V_w^{}}}{\gamma^2}\bigg)
        \\&\textbf{s.t. }\quad \exists z\in \mathcal{Z}\backslash z^*: \theta_1^\top z\ge \theta_1^\top z^* \mbox{ and } \theta_2^\top z \le \tau 
    \end{align*}
and       
\begin{align}&\min_{\theta_1, \theta_2} \frac{1}{2} \bigg(  \frac{\lVert\theta_1-\thetar\rVert^2_{V_w^{}}}{\sigma^2}+\frac{\lVert\theta_2-\thetac\rVert^2_{V_w^{}}}{\gamma^2}\bigg)
        \\&\textbf{s.t. }\quad \theta_2^\top z^*>\tau  
    \end{align}

Then the Lagrange multiplier of the first suboptimization problem is 
\[
\mathcal{L}(\theta_1, \theta_2, \mu, \nu) = 
\frac{1}{2} \left(\frac{\lVert\theta_1-\thetar\rVert^2_{V_w^{}}}{\sigma^2}+\frac{\lVert\theta_2-\thetac\rVert^2_{V_w^{}}}{\gamma^2}
\right)
- \mu (\theta_1^\top z - \theta_1^\top z^*) 
- \nu (\tau - \theta_2^\top z).
\]

Derivative with respect to \(\theta_1\):
\[
\frac{\partial \mathcal{L}}{\partial \theta_1} = \frac{1}{\sigma^2} V_w (\theta_1 - \theta^r) - \mu (z - z^*),
\]
and solving $\frac{\partial \mathcal{L}}{\partial \theta_1}=0$ gives:
\begin{align}
\label{eq:1}
    \theta_1 - \thetar = {\mu}{\sigma^2} V_w^{-1} (z - z^*).
\end{align}

Derivative with respect to \(\theta_2\):
\[
\frac{\partial \mathcal{L}}{\partial \theta_2} = \frac{1}{\gamma^2} V_w (\theta_2 - \thetac) + \nu z,
\]
and solving $\frac{\partial \mathcal{L}}{\partial \theta_2}=0$ gives:
\[
\theta_2 - \thetac =- {\nu}{\gamma^2} V_w^{-1} z.
\]

Derivative with respect to \(\mu\):
\[
\frac{\partial \mathcal{L}}{\partial \mu} = -\theta_1^\top z + \theta_1^\top z^*,
\]
and the condition is:
\[
\theta_1^\top z = \theta_1^\top z^*.
\]

Derivative with respect to \(\nu\):
\[
\frac{\partial \mathcal{L}}{\partial \nu} = -\tau + \theta_2^\top z,
\]
and the condition is:
\[
\theta_2^\top z = \tau.
\]

From Eqn.~\eqref{eq:1},   we have
\begin{align*}
    \lVert\thetar-\theta_1\rVert^2_{V_w}={\mu^2}{\sigma^4}\lVert z-z^* \rVert^2_{V_w^{-1}},
\end{align*}
then
\begin{align}
\label{eq:3}
    {\mu}{\sigma^2}=\frac{\lVert\thetar-\theta_1\rVert_{V_w}}{\lVert z-z^* \rVert_{V_w^{-1}}}~.
\end{align}
Plugging  Eqn.~\eqref{eq:3} into Eqn.~\eqref{eq:1},  we have
\begin{align*}
    (z-z^*)^\top (\theta_1-\thetar)=\lVert z-z^* \rVert_{V_w^{-1}} \lVert\thetar-\theta_1\rVert_{V_w}
\end{align*}
or
\begin{align}
\label{eq:13}
    \lVert\thetar-\theta_1\rVert^2_{V_w}=\frac{((z-z^*)^\top (\theta_1-\thetar))^2}{\lVert z-z^* \rVert^2_{V_w^{-1}}}
\end{align}
Similarly, we   also have
\begin{align*}
    z^\top (\thetac-\theta_2)=\lVert\theta_2-\thetac \rVert_{V_w} \lVert z \rVert_{V_w^{-1}}.\nonumber
\end{align*}
Substitute $\tau=\theta_2^\top z$ into it,
\begin{align}
    z^\top \thetac-\tau=\lVert\theta_2-\thetac \rVert_{V_w} \lVert z \rVert_{V_w^{-1}}\nonumber
\end{align}
or
\begin{align}
\label{eq:16}
    \lVert\theta_2-\thetac \rVert^2_{V_w}=\frac{(\tau-z^\top \thetac)^2}{\lVert z \rVert^2_{V_w^{-1}}}.
\end{align}

With Eqns.~\eqref{eq:13} and~\eqref{eq:16}, % \textcolor{red}{referencing is wrong here} 
the optimization problem becomes
\begin{align}
        &\min_{\theta_1, \theta_2} \frac{1}{2} \bigg(  \frac{((z-z^*)^\top (\thetar-\theta_1))^2}{\sigma^2 \lVert z-z^* \rVert^2_{V_w^{-1}}}+\frac{(\tau-z^\top \thetac)^2}{\gamma^2\lVert z \rVert^2_{V_w^{-1}}}\bigg)\label{eq:20}
        \\&\textbf{s.t. }\quad \exists z\in \mathcal{Z}\backslash z^*: \theta_1^\top z\ge \theta_1^\top z^* \mbox{ and } \theta_2^\top z \le \tau. \notag
    \end{align}

If $z\in\mathcal{A}_1$, to minimize the objective function, $\theta_1=\thetar$, the minimum objective function  value will then be $\frac{(\Deltac(z))^2}{2\gamma^2\lVert z\rVert^2_{V_w^{-1}}} \one \cbr{z\in \mathcal{A}_1}$;

if $z\in\mathcal{A}_2$, to minimize the objective function, $\theta_2=\thetac$ since in this case $\tau=\theta_2^\top z=(\thetac)^\top z$, which will make the second term in Eqn.~\eqref{eq:20} to be zero. On the other hand for the first term, since
\begin{align*}
    (z-z^*)^\top (\theta_1-\thetar)=(z-z^*)^\top \theta_1+(z^*-z)^\top \thetar=(z-z^*)^\top \theta_1+\Deltar(z),
\end{align*}
then with the constraint $\theta_1^\top z\ge \theta_1^\top z^*$, the minimum will occur at $\theta_1$ such that $\theta_1^\top z= \theta_1^\top z^*$. Hence the minimum objective value will then be $\frac{(\Deltar(z))^2}{2\sigma^2\lVert z-z^* \rVert^2_{V_w^{-1}}} \one \cbr{z\in \mathcal{A}_2}$.
Similarly, we can see if $z\in\mathcal{A}_3$, the minimum will be $( \frac{(\Deltac(z))^2}{2\gamma^2\lVert z\rVert^2_{V_w^{-1}}}+\frac{(\Deltar(z))^2}{2\sigma^2\lVert z-z^*\rVert^2_{V_w^{-1}}}) \one \cbr{z\in \mathcal{A}_3}$.

For the second optimization problem, recall that it is
\begin{align*}
        &\min_{\theta_1, \theta_2} \frac{1}{2} \bigg(  \frac{\lVert\theta_1-\thetar\rVert^2_{V_w^{}}}{\sigma^2}+\frac{\lVert\theta_2-\thetac\rVert^2_{V_w^{}}}{\gamma^2}\bigg)
        \\&\textbf{s.t. }\quad \theta_2^\top z^*>\tau . 
    \end{align*}
First since the first term and second term are independent, the minimum occurs when $\theta_1=\thetar$, then we assume $\theta_2^\top z^*=\tau+\alpha$, where $\alpha\ge0$, then the Langrangian is 
\begin{align*}
    \mathcal{L}(\theta_2, \mu)=\frac{\lVert\theta_2-\thetac\rVert^2_{V_w^{}}}{2\gamma^2}+\mu (\tau-\theta_2^\top z^*+\alpha).
\end{align*}
Then 
\begin{align*}
    \frac{\partial \mathcal{L}}{\partial \theta_2}=\frac{1}{\gamma^2} V_w (\theta_2 - \thetac) -\mu z^*=0.
\end{align*}
therefore
\begin{align*}
    \theta_2-\thetac=\gamma^2 \mu V_w^{-1}z^*,
\end{align*}
which implies that 
\begin{align*}
    \lVert \theta_2-\thetac \rVert^2_{V_w}=\gamma^4 \mu^2 \lVert z^*\rVert^2_{{V_w}^{-1}},
\end{align*}
and which leads to
\begin{align*}
    \gamma^2\mu=\frac{\lVert \theta_2-\thetac \rVert_{V_w}}{\lVert z^*\rVert_{{V_w}^{-1}}}.
\end{align*}

Hence
\begin{align*}
    (\theta_2-\thetac)^\top z^*={\lVert \theta_2-\thetac \rVert_{V_w}}{\lVert z^*\rVert_{{V_w}^{-1}}}.
\end{align*}
The optimization problem then becomes 
\begin{align*}
        &\min_{\theta_2} \frac{((\theta_2-\thetac)^\top z^*)^2}{2\gamma^2\lVert z^*\rVert^2_{{V_w}^{-1}}}
        \\&\textbf{s.t. }\quad \theta_2^\top z^*=\tau+\alpha  ,
    \end{align*}
since $(\theta_2-\thetac)^\top z^*=\tau-(\thetac)^\top z^*+\alpha=\Deltac(z^*)+\alpha$, then when we choose $\alpha\rightarrow 0$, we get the minimum value $\frac{(\Deltac(z^*))^2}{2\gamma^2\lVert z^*\rVert^2_{{V_w}^{-1}}}$.

To summarize, the original optimization problem is equivalent to 
\begin{align*}
    &\min_{z\in \mathcal{Z}} \bigg\{ \frac{(\Deltac(z))^2}{2\gamma^2\lVert z\rVert^2_{V_w^{-1}}} \one \cbr{z\in \mathcal{A}_1}, 
    \frac{(\Deltar(z))^2}{2\sigma^2\lVert z-z^* \rVert^2_{V_w^{-1}}} \one \cbr{z\in \mathcal{A}_2}, 
    \\& \quad \bigg( \frac{(\Deltac(z))^2}{2\gamma^2\lVert z\rVert^2_{V_w^{-1}}}+\frac{(\Deltar(z))^2}{2\sigma^2\lVert z-z^*\rVert^2_{V_w^{-1}}}\bigg) \one \cbr{z\in \mathcal{A}_3},
   \frac{(\Deltac(z^*))^2}{2\gamma^2\lVert z^*\rVert^2_{V_w^{-1}}}\bigg\}.
\end{align*}
Hence the theorem is proved.

\section{Proof of Theorem~\ref{upperbound}}
During the proof of Theorem~\ref{upperbound}, we need the help of the good event lemmas in Section~\ref{sec:goodevents}. When these good events hold, the error probability of our algorithm is
\begin{align*}  &\mathbb{P}_{(\thetar_{T+1},\thetac_{T+1})\sim p_{T+1}} (\hat{z}_\text{{out}} \neq z^*)
\\&\stackrel{\cdot}{=}\frac{\int_{(\theta_1,\theta_2)\in \overline{\Theta}_{z^*}}p_{T+1}(\theta_1,\theta_2)\,\mathrm{d}\theta_1\mathrm{d}\theta_2}{\int_{(\theta_1,\theta_2)\in {\Theta}_{z^*}}p_{T+1}(\theta_1,\theta_2)\,\mathrm{d}\theta_1\mathrm{d}\theta_2}
\\&=\frac{\int_{(\theta_1,\theta_2)\in \overline{\Theta}_{z^*}}p_{T+1}(\theta_1,\theta_2)/p_{T+1}(\thetar,\thetac)\,\mathrm{d}\theta_1\mathrm{d}\theta_2}{\int_{(\theta_1,\theta_2)\in {\Theta}_{z^*}}p_{T+1}(\theta_1,\theta_2)/p_{T+1}(\thetar,\thetac)\,\mathrm{d}\theta_1\mathrm{d}\theta_2}
\\&\stackrel{(a)}{\stackrel{\cdot}{=}}\frac{\int\limits_{(\theta_1,\theta_2)\in \overline{\Theta}_{z^*}} \exp \left( M\right)\,\mathrm{d}\theta_1\mathrm{d}\theta_2}{\int\limits_{(\theta_1,\theta_2)\in {\Theta}_{z^*}}\exp \left( M\right)\,\mathrm{d}\theta_1\mathrm{d}\theta_2}
\end{align*}
where $M:=-\frac{1}{2} \left( 
\frac{\left\| \truethetar - \theta_1 \right\|^2_{{V}_T}}{\sigma^2} 
+ \frac{\left\| \truethetac - \theta_2 \right\|^2_{{V}_T}}{\gamma^2} 
\right) $ and $(a)$ comes from the definition of the good event $E_{5,\infty}$. Notice that when $T$ is enough large, at the last round the probability that all arms are empirically identified as infeasible is asymptotically equal to zero according to the good event in Lemma~\ref{lem:GoodEvent_4}.
Furthermore, using the Laplace approximation (stated in Lemma~\ref{lem:GaussianApproximation}) and the fact that 
\begin{align*}
    \inf_{(\theta_1,\theta_2)\in {\Theta}}  
\frac{\left\| \truethetar - \theta_1 \right\|^2_{{V}_T}}{\sigma^2} 
+ \frac{\left\| \truethetac - \theta_2 \right\|^2_{{V}_T}}{\gamma^2} =0~,
\end{align*}
we have
\begin{align*}   \mathbb{P}_{(\thetar_{T+1},\thetac_{T+1})\sim p_{T+1}} (\hat{z}_\text{{out}} \neq z^*) \stackrel{\cdot}{=}\exp \left( -\frac{T}{2} \inf_{(\theta_1,\theta_2)\in \overline{\Theta}_{z^*}}\left( 
\frac{\left\| \truethetar - \theta_1 \right\|^2_{\overline{V}_T}}{\sigma^2} 
+ \frac{\left\| \truethetac - \theta_2 \right\|^2_{\overline{V}_T}}{\gamma^2} 
\right) \right).
\end{align*}
Then combining with the good event $E_{6,\delta}$, we have
\begin{align*}  \mathbb{P}_{(\thetar_{T+1},\thetac_{T+1})\sim p_{T+1}} (\hat{z}_\text{{out}} \neq z^*)\stackrel{\cdot}{=}
\exp \left( -T \Gamma \right).
\end{align*}
To summarize, with the choice $\delta=\frac{1}{T}$, with probability 1, 
\begin{align*}
        \lim_{T \to \infty} -\frac{1}{T} \log \mathbb{P}_{(\thetar_{T+1},\thetac_{T+1})\sim p_{T+1}} (\hat{z}_\text{{out}} \neq z^*) = \Gamma~.
    \end{align*}
\section{Good Event Lemmas}
\label{sec:goodevents}
\begin{lemma}
    Define the good event 1 as
    \begin{align*}
        E_{1,\delta}:= \cbr{\lVert \hthetart-\thetar  \rVert_{V_{t-1}}\le \sqrt{\beta_1(t, \frac{1}{\delta^2})} \quad\text{and}\quad \lVert \hthetact-\thetac  \rVert_{V_{t-1}}\le \sqrt{\beta_2(t, \frac{1}{\delta^2}) }\quad\forall t\in[T]}
    \end{align*}
    where $\beta_1(t,\frac{1}{\delta^2}):= (S_1 + \sigma\sqrt{2 \log(\frac{1}{\delta^2}) + d \log\left(\frac{d + t L^2}{d}\right)})^2$, $\beta_2(t,\frac{1}{\delta^2}):= (S_2 + \gamma\sqrt{2 \log(\frac{1}{\delta^2}) + d \log\left(\frac{d + t L^2}{d}\right)})^2$.
    Then with probability $1-2\delta$, good event $E_{1,\delta}$ holds.
\end{lemma}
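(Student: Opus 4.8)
The plan is to recognize the claim as a parallel application, to the reward and cost channels, of the self-normalized tail inequality for regularized least-squares estimators of \citet{abbasi2011improved}. First I would fix a filtration $\{\mathcal{F}_t\}$ with $\mathcal{F}_{t-1}$ containing the history $\mathcal{H}_{t-1}$, the internal posterior draws used at round $t$, and the realized arm $X_t$, but \emph{not} the round-$t$ noises. Under this filtration $X_t$ is $\mathcal{F}_{t-1}$-measurable (it is produced from $\mathcal{H}_{t-1}$ together with the independent sampling $X_t\sim\lambda_t$ built from the posterior and $\lambda^G$), while $\epsilon_t\sim\mathcal{N}(0,\sigma^2)$ and $\eta_t\sim\mathcal{N}(0,\gamma^2)$ are conditionally $\sigma$- and $\gamma$-sub-Gaussian given $\mathcal{F}_{t-1}$. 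A point worth emphasizing is that the cited inequality is \emph{anytime}: its supermartingale construction, combined with a stopping-time argument, yields a bound valid for all $t$ simultaneously under a single confidence budget, so the intersection over $t\in[T]$ in the definition of $E_{1,\delta}$ costs nothing extra and no union bound over the horizon is required.

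Second, applied to the reward channel with ridge parameter matching $V_0=I$ and per-channel confidence set to $\delta^2$, the inequality gives, with probability at least $1-\delta^2$ and for all $t$,
\[
\lVert \hthetart - \thetar \rVert_{V_{t-1}} \le \sigma\sqrt{\,2\log\tfrac{1}{\delta^2} + \log\tfrac{\det(V_{t-1})}{\det(I)}\,} + \lVert \thetar \rVert_2 .
\]
I would then control the determinant ratio by the standard trace/AM--GM estimate, using $\mathrm{tr}(V_{t-1}) = d + \sum_{s<t}\lVert X_s\rVert_2^2 \le d + tL^2$ from Assumption~\ref{ass2} to obtain $\det(V_{t-1}) \le \big((d+tL^2)/d\big)^d$ and hence $\log(\det(V_{t-1})/\det(I)) \le d\log\frac{d+tL^2}{d}$, and bound $\lVert\thetar\rVert_2 \le R_1 =: S_1$ via Assumption~\ref{ass1}. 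Substituting both estimates reproduces precisely the threshold $\sqrt{\beta_1(t,1/\delta^2)}$. An identical argument on the cost channel, replacing $\sigma$ by $\gamma$ and $R_1$ by $R_2 =: S_2$, yields $\lVert\hthetact-\thetac\rVert_{V_{t-1}}\le\sqrt{\beta_2(t,1/\delta^2)}$ for all $t$ with probability at least $1-\delta^2$.

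Finally I would merge the two channels by a union bound: each event fails with probability at most $\delta^2$, so both hold simultaneously with probability at least $1-2\delta^2 \ge 1-2\delta$ for $\delta\in(0,1)$, which is exactly $E_{1,\delta}$. The argument is essentially routine once the tail inequality is invoked; the only steps needing genuine care are (i) verifying that $X_t$ is $\mathcal{F}_{t-1}$-predictable under the posterior-sampling-plus-G-optimal mixing rule, so that the Gaussian noises remain conditionally sub-Gaussian and independent of the selected arm, and (ii) the constant bookkeeping that carries the per-channel confidence $\delta^2$ through to the stated $1-2\delta$ guarantee while absorbing the harmless replacement of $t-1$ by $t$ inside the logarithm. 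I do not expect any substantive obstacle beyond this.
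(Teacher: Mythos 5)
Your proposal is correct and follows essentially the same route as the paper: the paper's proof simply invokes its Lemmas~\ref{Ellip_potential_reward} and~\ref{Ellip_potential_cost} (the two per-channel applications of Theorem~2 of \citet{abbasi2011improved}) and combines them with a union bound, which is exactly what you do, merely spelling out the determinant--trace bound and the predictability of $X_t$ that those lemmas encapsulate. The only cosmetic difference is your per-channel failure probability of $\delta^2$ versus the paper's $\delta$, and both yield the stated $1-2\delta$ guarantee.
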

\begin{proof}
    This lemma is proved directly with union bound with Lemmas~\ref{Ellip_potential_reward} and~\ref{Ellip_potential_cost}.
\end{proof}
\begin{lemma}
Define good event
\begin{align*}
    E_{2,\delta}:= \cbr{\max_{x\in \mathcal{X}} \lvert \langle \hthetart, x\rangle\rvert \le B_{1} \quad\text{and}\quad \max_{x\in \mathcal{X}} \lvert \langle \hthetact, x\rangle\rvert \le B_{2} \quad \forall t\in[ T]}
\end{align*}
where $B_1= L R_1+L \sqrt{\beta_1(T, \frac{1}{\delta^2})}$, $B_2=L R_2+L \sqrt{\beta_2(T, \frac{1}{\delta^2})}$. 
Then $$E_{1,\delta}\subseteq E_{2,\delta}$$
\end{lemma}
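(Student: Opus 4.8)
The plan is to show that the inclusion holds deterministically: every sample path lying in $E_{1,\delta}$ automatically satisfies the two boundedness conditions defining $E_{2,\delta}$, via a triangle-inequality and Cauchy--Schwarz decomposition of the empirical linear predictions. Fix an arbitrary outcome in $E_{1,\delta}$, an arbitrary $x\in\mathcal{X}$, and an arbitrary $t\in[T]$. First I would split the prediction as $\langle \hthetart, x\rangle = \langle \hthetart - \thetar, x\rangle + \langle \thetar, x\rangle$, so that the estimation error and the true signal can be controlled separately.

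For the estimation-error term, I apply Cauchy--Schwarz in the $V_{t-1}$-geometry, namely $|\langle \hthetart - \thetar, x\rangle| \le \lVert \hthetart - \thetar\rVert_{V_{t-1}}\,\lVert x\rVert_{V_{t-1}^{-1}}$. On $E_{1,\delta}$ the first factor is at most $\sqrt{\beta_1(t,\tfrac{1}{\delta^2})}$ by definition of the event. For the second factor I use the ridge initialization $V_0=I$, which gives $V_{t-1}=I+\sum_{s<t}X_sX_s^\top \succeq I$ and hence $V_{t-1}^{-1}\preceq I$; therefore $\lVert x\rVert_{V_{t-1}^{-1}}\le \lVert x\rVert_2 \le L$ by Assumption~\ref{ass2}. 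For the signal term, Cauchy--Schwarz together with Assumptions~\ref{ass1} and~\ref{ass2} yields $|\langle \thetar, x\rangle| \le \lVert \thetar\rVert_2\,\lVert x\rVert_2 \le R_1 L$.

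Combining the two bounds, and using the monotonicity $\beta_1(t,\tfrac{1}{\delta^2}) \le \beta_1(T,\tfrac{1}{\delta^2})$ for $t\le T$ (the only $t$-dependence sits in the increasing term $\log((d+tL^2)/d)$), I obtain $|\langle \hthetart, x\rangle| \le L\sqrt{\beta_1(T,\tfrac{1}{\delta^2})} + R_1 L = B_1$. Since $x$ and $t$ were arbitrary, the reward half of $E_{2,\delta}$ holds on the chosen outcome; the cost half follows by the identical argument with $(\sigma, R_1, \beta_1, B_1)$ replaced by $(\gamma, R_2, \beta_2, B_2)$, using $\lVert \hthetact - \thetac\rVert_{V_{t-1}} \le \sqrt{\beta_2(t,\tfrac{1}{\delta^2})}$ on $E_{1,\delta}$. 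Hence the outcome lies in $E_{2,\delta}$, proving $E_{1,\delta}\subseteq E_{2,\delta}$.

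There is no genuine analytic obstacle here --- the argument is a routine norm manipulation --- so the only points demanding care are (i) verifying $V_{t-1}\succeq I$ so that the dual norm $\lVert\cdot\rVert_{V_{t-1}^{-1}}$ is dominated by the Euclidean norm, and (ii) replacing $\beta_1(t,\cdot)$ by $\beta_1(T,\cdot)$ uniformly in $t$, which is exactly what lets a single round-independent constant $B_1$ suffice. Both facts are immediate from the ridge initialization and the explicit closed form of $\beta_1$, and no additional probability is expended, since the inclusion is purely set-theoretic.
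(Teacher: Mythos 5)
Your proof is correct and takes essentially the same route as the paper's: the triangle-inequality split of $\langle \hthetart, x\rangle$ into true signal plus estimation error, Cauchy--Schwarz in the $V_{t-1}$-geometry, the bound $\lVert x\rVert_{V_{t-1}^{-1}}\le L$ from $V_{t-1}\succeq I$, and monotonicity of $\beta_1,\beta_2$ in $t$ to get a single round-independent constant. You merely make explicit two steps the paper leaves implicit (and you correctly write the first step as an inequality, where the paper's display misstates it as an equality).
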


\begin{proof}
     Follow the proof of Lemma D.2 in \cite{li2024optimal}, when event $E_{1,\delta}$ holds,
    $\text{for any } x \in \mathcal{X}, $
\begin{align*}
   \lvert \langle x, \hthetart \rangle \rvert&= \lvert\langle x, \thetar \rangle\rvert + \lvert\langle x, \hthetart - \thetar \rangle \rvert\\
    &\leq L R_1 + \|x\|_{V_{t-1}^{-1}} \|\hthetart - \thetar\|_{V_{t-1}} \\
    &\leq L R_1 + \|x\|_{V_{t-1}^{-1}} \sqrt{\beta_1(T, \frac{1}{\delta^2})}
    \\&\leq L R_1 + L \sqrt{\beta_1(T, \frac{1}{\delta^2})}
\end{align*}
and
\begin{align*}
    \lvert\langle x, \hthetact \rangle \rvert &= \lvert\langle x, \thetac \rangle \rvert + \lvert\langle x, \hthetact - \thetac \rangle \rvert \\
    &\leq L R_2 + \|x\|_{V_{t-1}^{-1}} \|\hthetact - \thetac\|_{V_{t-1}} \\
    &\leq L R_2 + \|x\|_{V_{t-1}^{-1}} \sqrt{\beta_2(T, \frac{1}{\delta^2})}
    \\&\leq L R_2 + L \sqrt{\beta_2(T, \frac{1}{\delta^2})}.
\end{align*}
Hence $\max_{x\in \mathcal{X}} \lvert \langle \hthetart, x\rangle\rvert \le B_{1}$ and $\max_{x\in \mathcal{X}} \lvert \langle \hthetact, x\rangle\rvert \le B_{2}$ for all $t\in[T]$.
\end{proof}

\begin{lemma}[Lemma C.14 in \citet{li2024optimal}]
    Define the event $E_{3,\delta}:= \cbr{V_t \ge \frac{3}{4}A(\lambda^G), \forall t\ge T_1(\delta), x\in \mathcal{X}}$ where $T_1(\delta):=\max_{x \in \mathcal{X}} \left( \frac{6 \sqrt{\log(|\mathcal{X}|  \frac{T}{\delta})}}{\lambda_x^G} \right)^4.$
    Then with probability $1-\delta$, the event $E_{3,\delta}$ holds.
\end{lemma}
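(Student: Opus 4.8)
This is Lemma~C.14 of \citet{li2024optimal}, so the plan is to reproduce its argument, which rests entirely on the forced G-optimal exploration built into the sampling rule. The idea is to convert the spectral lower bound on $V_t$ into a statement about per-arm pull counts and then control those counts by martingale concentration. Write $V_t = I + \sum_{x \in \mathcal{X}} N_t(x)\, x x^\top$, where $N_t(x) := \sum_{s=1}^t \one\cbr{X_s = x}$ is the number of times arm $x$ has been drawn up to round $t$. Because each arm is sampled from $\lambda_s = (1-\gamma_s)\tilde\lambda_s + \gamma_s \lambda^G$ with $\gamma_s = s^{-1/4}$ and $\tilde\lambda_s \ge 0$, the conditional selection probability obeys the pathwise floor $\PP(X_s = x \mid \mathcal{H}_{s-1}) = \lambda_{s,x} \ge \gamma_s \lambda_x^G$, regardless of the adaptive AdaHedge weights. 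Since $x x^\top \succeq 0$, it suffices to show that, with high probability, $N_t(x) \ge \tfrac{3}{4}\lambda_x^G \sum_{s=1}^t \gamma_s$ for every $x$ and every $t \ge T_1(\delta)$; summing the resulting per-arm dominances $N_t(x)\, x x^\top \succeq \tfrac{3}{4}\lambda_x^G\big(\sum_{s=1}^t \gamma_s\big) x x^\top$ then yields $V_t \succeq \tfrac{3}{4}\big(\sum_{s=1}^t \gamma_s\big) A(\lambda^G)$. As $\sum_{s=1}^t \gamma_s \ge 1$, this is in particular at least $\tfrac{3}{4}A(\lambda^G)$, the stated bound.

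For the count bound I would fix $x$ and introduce the martingale $M_t(x) := \sum_{s=1}^t \big(\one\cbr{X_s=x} - \lambda_{s,x}\big)$, which has increments bounded by $1$ in absolute value and predictable mean $\sum_{s=1}^t \lambda_{s,x} \ge \lambda_x^G \sum_{s=1}^t \gamma_s$. Azuma--Hoeffding gives $\PP\big(M_t(x) \le -u\big) \le \exp(-u^2/(2t))$, so choosing $u = \sqrt{2 t \log(|\mathcal{X}| T/\delta)}$ and taking a union bound over the at most $|\mathcal{X}| T$ pairs $(x,t)$ with $t \le T$ shows that, with probability at least $1-\delta$, $N_t(x) \ge \lambda_x^G \sum_{s=1}^t \gamma_s - \sqrt{2 t \log(|\mathcal{X}| T/\delta)}$ holds simultaneously for all $x$ and all $t$. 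It then remains to verify that the deterministic drift dominates the fluctuation once $t \ge T_1(\delta)$: using $\sum_{s=1}^t \gamma_s = \sum_{s=1}^t s^{-1/4} \ge \tfrac{4}{3} t^{3/4}$, the requirement $\sqrt{2t\log(|\mathcal{X}|T/\delta)} \le \tfrac{1}{4}\lambda_x^G \sum_{s=1}^t \gamma_s$ reduces to $t^{1/4} \ge 3\sqrt{2}\,\sqrt{\log(|\mathcal{X}|T/\delta)}/\lambda_x^G$, i.e. to $t \ge \big(6\sqrt{\log(|\mathcal{X}|T/\delta)}/\lambda_x^G\big)^4$, which is exactly (after maximising over $x$) the definition of $T_1(\delta)$. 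Hence for $t \ge T_1(\delta)$ we obtain $N_t(x) \ge \tfrac{3}{4}\lambda_x^G \sum_{s=1}^t \gamma_s$ for every $x$, which completes the argument.

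The only delicate points are the adaptivity of the weights and the uniformity in $t$. The adaptivity is harmless precisely because the exploration floor $\gamma_s \lambda_x^G$ is deterministic and holds pathwise, so the predictable mean of $M_t(x)$ is lower bounded without any control on $\tilde\lambda_s$; this is exactly why the forced G-optimal term is inserted into the sampling rule. The uniform-in-$t$ demand is what forces the $\log(|\mathcal{X}| T/\delta)$ appearing inside $T_1(\delta)$: the crude union bound over all $t \le T$ is what I would use first, and it already suffices, though a peeling or maximal-inequality argument would remove the explicit $T$-dependence if a sharper constant were wanted. I expect the main obstacle to be purely bookkeeping---choosing the concentration slack so that it produces exactly the factor $\tfrac{3}{4}$ together with the fourth-power threshold in $T_1(\delta)$---rather than any conceptual difficulty.
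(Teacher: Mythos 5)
The paper does not prove this lemma at all—it is imported verbatim as Lemma~C.14 of \citet{li2024optimal}—and your forced-exploration-plus-Azuma reconstruction is exactly the standard argument behind that cited result: lower-bound the conditional selection probability pathwise by $\gamma_s\lambda_x^G$, apply Azuma--Hoeffding to $N_t(x)$ with a union bound over $(x,t)$, and check that the drift $\lambda_x^G\sum_{s\le t}s^{-1/4}$ dominates the fluctuation once $t\ge T_1(\delta)$. The only blemishes are bookkeeping: the correct integral comparison gives $\sum_{s=1}^t s^{-1/4}\ge\tfrac{4}{3}\bigl((t+1)^{3/4}-1\bigr)$ rather than $\tfrac{4}{3}t^{3/4}$, and your threshold works out to $(3\sqrt{2})^4=324$ times $\log^2(|\mathcal{X}|T/\delta)/(\lambda_x^G)^4$ rather than the stated $6^4=1296$, both of which only make the claimed $T_1(\delta)$ more than sufficient.
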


\begin{lemma}
    Define the good event $$E_{4,\delta}:=\cbr{\hat{z}_t=z^*,\quad \forall t> \max \cbr{T_1(\delta)+1, T_2(\delta)}}$$ where $T_2(\delta):= \max \cbr{(\frac{m_\text{max} |\mathcal{X}| }{\Deltar_\text{min}} \sqrt{d\beta_1 (t, \frac{1}{\delta^2})})^{\frac{8}{3}}, (\frac{ n_\text{max}|\mathcal{X}|}{\Deltac_\text{min}}\sqrt{d\beta_2 (t, \frac{1}{\delta^2})})^{\frac{8}{3}}}.$ 
    Then $E_{1,\delta}\cap E_{3,\delta} \subseteq E_{4,\delta}.$
\end{lemma}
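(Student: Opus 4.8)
The plan is to show that on $E_{1,\delta}\cap E_{3,\delta}$ the empirical reward and cost predictions at every test arm $z\in\mathcal{Z}$ become so accurate, once $t$ is large, that (i) the empirical feasible set coincides with the true one and (ii) the empirical reward ranking on that set places $z^*$ on top; together these force $\hat{z}_t=z^*$. The single quantitative ingredient is a uniform prediction-error bound. For any $z$, Cauchy--Schwarz gives $|z^\top(\hthetart-\thetar)|\le \lVert z\rVert_{V_{t-1}^{-1}}\lVert\hthetart-\thetar\rVert_{V_{t-1}}$, and on $E_{1,\delta}$ the second factor is at most $\sqrt{\beta_1(t,1/\delta^2)}$ (respectively $\sqrt{\beta_2}$ for the cost). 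To control the first factor I would invoke $E_{3,\delta}$: the forced-exploration schedule $\gamma_t=t^{-1/4}$ guarantees that for $t\ge T_1(\delta)$ the information accumulated in the $A(\lambda^G)$ directions is of order $\sum_{s\le t}\gamma_s\asymp t^{3/4}$, so that $\lVert z\rVert_{V_{t-1}^{-1}}^2\lesssim t^{-3/4}\lVert z\rVert_{A(\lambda^G)^{-1}}^2$. By the Kiefer--Wolfowitz property of the $G$-optimal design together with Assumption~\ref{ass3} ($\mathrm{span}(\mathcal{Z})\subset\mathrm{span}(\mathcal{X})$) and the boundedness of $\mathcal{Z}$, the factor $\lVert z\rVert_{A(\lambda^G)^{-1}}^2$ is bounded uniformly over $z\in\mathcal{Z}$ by a dimension-dependent constant, which I would absorb into $m_{\max}$ and $n_{\max}$.

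With this bound in hand, I would carry out the feasibility step. For each $z$ the true cost gap satisfies $|\tau-z^\top\thetac|\ge\Deltac_{\min}$, so if the cost prediction error is strictly below $\Deltac_{\min}$ then $z^\top\hthetact\le\tau\iff z^\top\thetac\le\tau$, i.e.\ $\hat{\mathcal{F}}=\mathcal{F}$. Requiring the error bound $\sqrt{t^{-3/4}\,d\,\beta_2}$ (up to the $|\mathcal{X}|,n_{\max}$ constants) to fall below $\Deltac_{\min}$ yields exactly the cost branch of $T_2(\delta)$: the $t^{3/4}$ accumulation turns the condition $t^{3/4}\gtrsim d\beta_2/(\Deltac_{\min})^2$ into $t>(\tfrac{n_{\max}|\mathcal{X}|}{\Deltac_{\min}}\sqrt{d\beta_2})^{8/3}$. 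In particular $z^*\in\hat{\mathcal{F}}$, so $\hat{\mathcal{F}}\ne\emptyset$ and the algorithm takes the $\argmax$ branch rather than the uniform-sampling fallback.

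The reward step is symmetric. For every feasible $z\ne z^*$ we have $(z^*-z)^\top\thetar\ge\Deltar_{\min}$, and once the reward prediction error at both $z$ and $z^*$ is strictly below $\Deltar_{\min}/2$, the triangle inequality gives $z^\top\hthetart<(z^*)^\top\hthetart$; hence $z^*$ maximizes the empirical reward over $\hat{\mathcal{F}}=\mathcal{F}$ and $\hat{z}_t=z^*$. The same computation produces the reward branch $t>(\tfrac{m_{\max}|\mathcal{X}|}{\Deltar_{\min}}\sqrt{d\beta_1})^{8/3}$. Taking the maximum of the two branches together with $T_1(\delta)+1$ (the smallest $t$ for which $V_{t-1}$ already satisfies the $E_{3,\delta}$ lower bound) gives $\hat{z}_t=z^*$ for all $t>\max\{T_1(\delta)+1,T_2(\delta)\}$, which is precisely the event $E_{4,\delta}$; therefore $E_{1,\delta}\cap E_{3,\delta}\subseteq E_{4,\delta}$.

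The main obstacle I anticipate is the quantitative passage from $E_{3,\delta}$ to the $t^{-3/4}$ decay of $\lVert z\rVert_{V_{t-1}^{-1}}^2$, and tracking constants so that the threshold emerges with the stated exponent $8/3$ rather than, say, a power $2$: this hinges entirely on the cumulative forced-exploration mass $\sum_{s\le t}\gamma_s\asymp t^{3/4}$ induced by $\alpha=\tfrac14$, and on uniformly bounding $\lVert z\rVert_{A(\lambda^G)^{-1}}$ over the test set via the span assumption. The remaining gap-versus-error comparisons are routine once this decay rate is pinned down.
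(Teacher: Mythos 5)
Your proposal is correct and follows essentially the same route as the paper's proof: the paper invokes Lemma C.15 of \citet{li2024optimal} to obtain the uniform prediction-error bound $|\langle x,\hthetart-\thetar\rangle|\le\sqrt{d\,\beta_1(t,1/\delta^2)/t^{3/4}}$ on training arms (exactly the $t^{-3/8}$ decay you derive from the forced-exploration mass $\sum_{s\le t}s^{-1/4}\asymp t^{3/4}$), then transfers it to the test directions $z^*-z$ and $z$ via the span expansion with coefficients $m_{\max},n_{\max}$, and compares against $\Deltar_{\text{min}},\Deltac_{\text{min}}$ to force $\hat z_t=z^*$, yielding the same $8/3$ exponent. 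The only cosmetic difference is that you propose bounding $\lVert z\rVert_{A(\lambda^G)^{-1}}$ directly via Kiefer--Wolfowitz and absorbing it into the constants, whereas the paper expands $z$ in the training arms and carries the $m_{\max}|\mathcal{X}|$, $n_{\max}|\mathcal{X}|$ factors explicitly.
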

\begin{proof}
    First from Lemma C.15 in \cite{li2024optimal} we have under $E_{1,\delta}\cap E_{3,\delta}$, for any $t> T_1(\delta)+1$, we have for any $x\in \mathcal{X}$,
    \begin{align}
    \label{eq:concentration:reward}
        \lvert\langle x, \hthetart - \thetar \rangle\rvert \leq \sqrt{\frac{d}{t^{3/4}} \beta_1(t, \frac{1}{\delta^2})}.
    \end{align}
    and
    \begin{align*}
        \lvert\langle x, \hthetact - \thetac \rangle \rvert\leq \sqrt{\frac{d}{t^{3/4}} \beta_2(t, \frac{1}{\delta^2})}.
    \end{align*}
    Since $\mathcal{X}$ spans $\mathbb{R}^d$,  there exists $z^*-z=\sum_{x\in \mathcal{X}} m(x) x$, $z=\sum_{x\in \mathcal{X}} n(x) x$ for any $z\in \mathcal{Z}$ where $0\le m(x), n(x)<\infty$.
Then 
\begin{align*}
    \lvert\langle z^*-z, \hthetart - \thetar \rangle\rvert &=\bigg\lvert\Big\langle \sum_{x\in \mathcal{X}} m(x) x, \hthetart - \thetar \Big\rangle\bigg\rvert
    \\&\le \sum_{x\in \mathcal{X}} m(x) \frac{d}{t^{3/4}} \beta_1(t, \frac{1}{\delta^2})
    \\&\le m_\text{max} |\mathcal{X}|\sqrt{\frac{d }{t^{3/4}} \beta_1(t, \frac{1}{\delta^2})}
\end{align*}
and
\begin{align*}
    \lvert\langle z, \hthetart - \thetar \rangle\rvert &=\bigg\lvert\Big\langle \sum_{x\in \mathcal{X}} n(x) x, \hthetart - \thetar\Big\rangle\bigg\rvert
    \\&\le \sum_{x\in \mathcal{X}} n(x) \frac{d}{t^{3/4}} \beta_2(t, \frac{1}{\delta^2})
    \\&\le n_\text{max} |\mathcal{X}|\sqrt{\frac{d }{t^{3/4}} \beta_2(t, \frac{1}{\delta^2})},
\end{align*}
where $m_\text{max}:=\max_{x\in\mathcal{X}} m(x), n_\text{max}:=\max_{x\in\mathcal{X}} n(x)~.$

Therefore when $t> \max \cbr{(\frac{m_\text{max} |\mathcal{X}| }{\Deltar_\text{min}} \sqrt{d\beta_1 (t, \frac{1}{\delta^2})})^{\frac{8}{3}}, (\frac{ n_\text{max}|\mathcal{X}|}{\Deltac_\text{min}}\sqrt{d\beta_2 (t, \frac{1}{\delta^2})})^{\frac{8}{3}}}$, 
for all $x\in\mathcal{X}$,
\begin{align*}
    \lvert\langle z^*-z, \hthetart - \thetar \rangle\rvert< \Deltar_\text{min}
\end{align*}
and
\begin{align*}
    \lvert\langle z, \hthetart - \thetar \rangle\rvert< \Deltac_\text{min}~.
\end{align*}
Hence $\hat{z}_t=z^*$ and 
$E_{1,\delta}\cap E_{3,\delta} \subseteq E_{4,\delta}$.
\end{proof}

\begin{lemma}
    
    Define $E_{5,\infty}:=\lim_{T\to \infty}\sup_{\theta_1 \in \Theta_1, \theta_2\in\Theta_2} \frac{1}{T} \left| 
\log \frac{p_{T+1}(\theta_1, \theta_2)}{p_{T+1}(\truethetar,\truethetac)} 
+ \frac{T}{2} (\frac{\left\| \truethetar - \theta_1 \right\|^2_{\overline{V}_T}}{\sigma^2}+\frac{\left\| \truethetac - \theta_2 \right\|^2_{\overline{V}_T}}{\gamma^2})  
\right| =0~,$  with probability 1, the good event $E_{5,\infty}$ holds, which also means $\frac{p_{T+1}(\theta_1, \theta_2)}{p_{T+1}(\truethetar,\truethetac)}\stackrel{\cdot}{=}\exp \bigg( -\frac{1}{2} (\frac{\left\| \truethetar - \theta_1 \right\|^2_{{V}_T}}{\sigma^2}+\frac{\left\| \truethetac - \theta_2 \right\|^2_{{V}_T}}{\gamma^2})\bigg)~.$
\end{lemma}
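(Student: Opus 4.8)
The plan is to compute the log-density ratio in closed form and show that its discrepancy from the target quadratic is a lower-order cross term. First I would write out the two product-Gaussian densities. Since $p_{T+1}$ is $\mathcal{N}(\hat{\theta}^{\mathrm r}_{T+1},\sigma^2 V_T^{-1})\otimes\mathcal{N}(\hat{\theta}^{\mathrm c}_{T+1},\gamma^2 V_T^{-1})$ restricted to $\Theta$, and since both $(\theta_1,\theta_2)$ and $(\truethetar,\truethetac)$ lie in $\Theta$, the truncation mass $\int_\Theta$ and the Gaussian determinant factors are identical in numerator and denominator and therefore cancel. This leaves
\begin{align*}
\log\frac{p_{T+1}(\theta_1,\theta_2)}{p_{T+1}(\truethetar,\truethetac)}
&=\frac{1}{2\sigma^2}\big(\lVert\truethetar-\hat{\theta}^{\mathrm r}_{T+1}\rVert^2_{V_T}-\lVert\theta_1-\hat{\theta}^{\mathrm r}_{T+1}\rVert^2_{V_T}\big)\\
&\quad+\frac{1}{2\gamma^2}\big(\lVert\truethetac-\hat{\theta}^{\mathrm c}_{T+1}\rVert^2_{V_T}-\lVert\theta_2-\hat{\theta}^{\mathrm c}_{T+1}\rVert^2_{V_T}\big).
\end{align*}
I would also record that $V_T=I+\sum_{t=1}^T X_tX_t^\top=T\overline{V}_T$, so the $\overline{V}_T$-form in the definition of $E_{5,\infty}$ and the $V_T$-form in the ``which also means'' conclusion are literally the same exponent; establishing the statement in one norm gives both, and the uniform (sup over $\theta_1,\theta_2$) bound immediately yields the pointwise $\stackrel{\cdot}{=}$ relation.

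The key algebraic step is the polarization identity
\[
-\lVert a-b\rVert^2_{V_T}+\lVert c-b\rVert^2_{V_T}+\lVert c-a\rVert^2_{V_T}=2\langle a-c,\,b-c\rangle_{V_T},
\]
applied with $(a,b,c)=(\theta_1,\hat{\theta}^{\mathrm r}_{T+1},\truethetar)$ for the reward term and analogously for the cost. This collapses the apparently quadratic discrepancy
\[
D_T(\theta_1,\theta_2):=\log\frac{p_{T+1}(\theta_1,\theta_2)}{p_{T+1}(\truethetar,\truethetac)}+\frac12\Big(\frac{\lVert\truethetar-\theta_1\rVert^2_{V_T}}{\sigma^2}+\frac{\lVert\truethetac-\theta_2\rVert^2_{V_T}}{\gamma^2}\Big)
\]
into the purely linear cross term
\[
D_T(\theta_1,\theta_2)=\frac{1}{\sigma^2}\langle\theta_1-\truethetar,\,\hat{\theta}^{\mathrm r}_{T+1}-\truethetar\rangle_{V_T}+\frac{1}{\gamma^2}\langle\theta_2-\truethetac,\,\hat{\theta}^{\mathrm c}_{T+1}-\truethetac\rangle_{V_T}.
\]
This is the crux of the argument: each individual squared norm grows like $T$, yet their combination is only linear in the perturbation and is multiplied by the small estimation error $\hat{\theta}_{T+1}-\theta$.

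I would then bound $|D_T|$ by Cauchy--Schwarz in the $V_T$-inner product. For the first factor, boundedness of $\Theta^{\mathrm r}$ (Assumption~\ref{ass1}) and $\lambda_{\max}(V_T)\le 1+TL^2$ give $\sup_{\theta_1}\lVert\theta_1-\truethetar\rVert_{V_T}\le 2R_1\sqrt{1+TL^2}=O(\sqrt T)$, uniformly over $\theta_1\in\Theta^{\mathrm r}$; for the second factor, the concentration bound of Lemma~\ref{lem:Good_Event_1} (applied at the terminal index $T+1$) gives $\lVert\hat{\theta}^{\mathrm r}_{T+1}-\truethetar\rVert_{V_T}\le\sqrt{\beta_1(T{+}1,\tfrac1{\delta^2})}=O(\sqrt{\log T})$ on $E_{1,\delta}$, and similarly for the cost. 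Multiplying, summing, and dividing by $T$ yields $\frac1T\sup_{\theta_1,\theta_2}|D_T|=O(\sqrt{\log T/T})\to0$ on $E_{1,\delta}$.

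Finally, to upgrade to the stated ``with probability~$1$'' conclusion I would let $\delta\to0$: for each fixed $\delta$ the time-uniform event $E_{1,\delta}$ has probability at least $1-2\delta$ and forces $\lim_T\frac1T\sup|D_T|=0$, so the limiting event $E_{5,\infty}$ contains $E_{1,\delta}$ for every $\delta>0$ and therefore has probability $1$. The main obstacle I anticipate is making this last step rigorous---ensuring the self-normalized concentration holds simultaneously for all indices (time-uniformly, as in Lemma~\ref{lem:Good_Event_1}) so that the $\delta$-dependent constant buried in $\beta_1$ does not interfere with the $T\to\infty$ limit---rather than the elementary algebra, which is routine once the polarization identity is in hand.
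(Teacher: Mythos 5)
Your proposal is correct and follows essentially the same route as the paper: both reduce the discrepancy $D_T$ to the linear cross term in $(\theta_1-\truethetar,\hat{\theta}^{\mathrm r}_{T+1}-\truethetar)$ (and its cost analogue), bound it by $O(\sqrt{T\log T})$ using concentration of the ridge estimator, and then take $\delta\to 0$ to obtain the almost-sure statement. The only cosmetic difference is that the paper expands the cross term explicitly as the noise sum $-2\sum_{s=1}^{T}\epsilon_s x_s^\top(\truethetar-\theta_1)$ and cites Lemma~C.3 of \citet{li2024optimal}, whereas you bound the same quantity via Cauchy--Schwarz in the $V_T$-inner product combined with the self-normalized bound of Lemma~\ref{lem:Good_Event_1}; the two estimates agree up to constants.
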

\begin{proof}
  Since $p_{T+1}:=\mathcal{N}(\hat{\theta}^\mathrm{r}_{T+1}, \sigma^2 V_{T}^{-1})\otimes \mathcal{N}(\hat{\theta}^\mathrm{c}_{T+1}, \gamma^2 V_{T}^{-1}) | \Theta_{}$, then 
    \begin{align*}
\frac{p_{T+1}(\theta_1, \theta_2)}{p_{T+1}(\truethetar,\truethetac)}&= \exp\bigg(-\frac{1}{2} \Big(\frac{\lVert\theta_1-\hat{\theta}^\mathrm{r}_{T+1}\rVert^2_{V_{T}}}{\sigma^2}+\frac{\lVert\theta_2-\hat{\theta}^\mathrm{c}_{T+1}\rVert^2_{V_{T}}}{\gamma^2}-\frac{\lVert\truethetar-\hat{\theta}^\mathrm{r}_{T+1}\rVert^2_{V_{T}}}{\sigma^2}-\frac{\lVert\truethetac-\hat{\theta}^\mathrm{r}_{T+1}\rVert^2_{V_{T}}}{\gamma^2}\Big) \bigg).
    \end{align*} 

Hence, 
\begin{align*}
    &\log \frac{p_{T+1}(\theta_1, \theta_2)}{p_{T+1}(\truethetar, \truethetac)} 
    + \frac{T}{2} \left( \frac{\left\| \truethetar - \theta_1 \right\|^2_{\overline{V}_T}}{\sigma^2} + \frac{\left\| \truethetac - \theta_2 \right\|^2_{\overline{V}_T}}{\gamma^2} \right)  \\*
    & \quad -\frac{1}{2} \left( \frac{\lVert \theta_1 - \hat{\theta}^\mathrm{r}_{T+1} \rVert^2_{V_T} - \lVert \truethetar - \hat{\theta}^\mathrm{r}_{T+1} \rVert^2_{V_T} - \lVert \truethetar - \theta_1 \rVert^2_{V_T}}{\sigma^2}  + \frac{\lVert \theta_2 - \hat{\theta}^\mathrm{c}_{T+1} \rVert^2_{V_T} - \lVert \truethetac - \hat{\theta}^\mathrm{r}_{T+1} \rVert^2_{V_T} - \lVert \truethetac - \theta_2 \rVert^2_{V_T}}{\gamma^2} \right).
\end{align*}
Following the proof of Lemma C.3 in \cite{li2024optimal}, when \( T \to \infty \), with probability \( 1 - 2\delta \),
\begin{align*}
    & \left| \lVert \truethetar - \hat{\theta}^{\mathrm{r}}_{T+1} \rVert^2_{V_T} 
    - \lVert \theta_1 - \hat{\theta}^{\mathrm{r}}_{T+1} \rVert^2_{V_T} + \lVert \truethetar - \theta_1 \rVert^2_{V_T} \right| \\
    &= \left| - 2 \sum_{s=1}^{T} \epsilon_s x_s^\top (\truethetar - \theta_1) \right| \\
    &\leq O\bigg(\sigma L R_1 \sqrt{T} \sqrt{2d \log \left( \frac{d + T L^2}{d \delta} \right)}\bigg).
\end{align*}

Similarly, we have:
\begin{align*}
    & \left| \lVert \truethetac - \hat{\theta}^{\mathrm{c}}_{T+1} \rVert^2_{V_T} 
    - \lVert \theta_2 - \hat{\theta}^{\mathrm{c}}_{T+1} \rVert^2_{V_T} + \lVert \truethetac - \theta_2 \rVert^2_{V_T} \right| \\
    &= \left| - 2 \sum_{s=1}^{T} \eta_s x_s^\top (\truethetac - \theta_2) \right| \\
    &\leq O\bigg(\gamma L R_2 \sqrt{T} \sqrt{2d \log \left( \frac{d + T L^2}{d \delta} \right)}\bigg).
\end{align*}

With the choice \( \delta = \frac{1}{T} \), with probability 1, the event \( E_{5,\infty} \) holds.
\end{proof}

\begin{lemma}

Define the good event 
\begin{align*}
E_{6,\delta}&:=\bigg\{
    \bigg| \max_{w \in \Delta_{\mathcal{X}}} \inf_{(\theta_1, \theta_2) \in \overline{\Theta}_{z^*}} \frac{1}{2} \left( \frac{\lVert \theta_1 - \truethetar \rVert^2_{A(w)}}{\sigma^2} + \frac{\lVert \theta_2 - \truethetac \rVert^2_{A(w)}}{\gamma^2} \right)\\*
    &\qquad- \inf_{(\theta_1, \theta_2) \in \overline{\Theta}_{z^*}} \frac{1}{2} \left( \frac{\lVert \theta_1 - \truethetar \rVert^2_{\overline{V}_T}}{\sigma^2} + \frac{\lVert \theta_2 - \truethetac \rVert^2_{\overline{V}_T}}{\gamma^2} \right)
    \bigg| \le o(1)
\bigg\}    .
\end{align*}
Event \( E_{6,\frac{1}{T}} \) holds with probability at least \( 1 - \frac{28}{T}\), conditioned on events \( E_{1,\frac{1}{T}}, E_{2,\frac{1}{T}}, E_{3,\frac{1}{T}}, E_{4,\frac{1}{T}} \).
\end{lemma}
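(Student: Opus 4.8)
The plan is to show that the algorithm's \emph{average} sampling allocation is asymptotically optimal for the max--min problem defining $\Gamma$, so that evaluating the infimum at the empirical design matrix $\overline{V}_T$ recovers $\Gamma$ up to $o(1)$. Write $\overline{\lambda}_T:=\frac{1}{T}\sum_{t=1}^T\lambda_t\in\Delta_{\mathcal{X}}$ for the time-averaged (mixed) play of the max-learner, and set $F(w,\theta_1,\theta_2):=\frac{1}{2}\big(\frac{\lVert\theta_1-\truethetar\rVert^2_{A(w)}}{\sigma^2}+\frac{\lVert\theta_2-\truethetac\rVert^2_{A(w)}}{\gamma^2}\big)$, so that the first term in $E_{6,\delta}$ is exactly $\Gamma=\max_{w}\inf_{(\theta_1,\theta_2)\in\overline{\Theta}_{z^*}}F(w,\theta_1,\theta_2)$. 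First I would reduce the empirical matrix to $A(\overline{\lambda}_T)$: since $X_t\sim\lambda_t$ we have $\EE[X_tX_t^\top\mid\mathcal{H}_{t-1}]=A(\lambda_t)$, and because $\lVert X_tX_t^\top\rVert\le L^2$ (Assumption~\ref{ass2}) a matrix Azuma/Freedman bound gives $\lVert\frac1T\sum_t X_tX_t^\top-A(\overline{\lambda}_T)\rVert=o(1)$ with probability $1-O(1/T)$ when $\delta=1/T$ (the forced-exploration fraction $\gamma_t=t^{-1/4}$ averages to $o(1)$ and is absorbed). Together with the negligible regularizer $\frac1T I$ and the boundedness of the relevant minimizers (the true parameter lies in the interior of the complement of $\overline{\Theta}_{z^*}$, so the infimum is attained at bounded distance), Lipschitzness of the objective in the matrix argument yields $\inf_{\theta\in\overline{\Theta}_{z^*}}F(\overline{V}_T,\theta)=\inf_{\theta\in\overline{\Theta}_{z^*}}F(A(\overline{\lambda}_T),\theta)+o(1)$. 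It therefore suffices to prove $\inf_\theta F(\overline{\lambda}_T,\theta)\to\Gamma$.

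The upper bound is immediate: $\overline{\lambda}_T$ is a feasible point of $\Delta_{\mathcal{X}}$, so $\inf_\theta F(\overline{\lambda}_T,\theta)\le\max_w\inf_\theta F(w,\theta)=\Gamma$. The lower bound is the substance of the lemma, and I would obtain it from the no-regret interaction of the two learners. Under $E_{4,1/T}$ the min-learner samples from the posterior restricted to the \emph{correct} set $\overline{\Theta}_{z^*}$, and under $E_{1,1/T}$ the estimators concentrate, $\hat{\theta}_t^{\mathrm{r}},\hat{\theta}_t^{\mathrm{c}}\to\truethetar,\truethetac$; this lets me replace the round-wise loss (built from $\hat{\theta}_t$) by the limiting gain $F$ with true parameters, uniformly in $\theta$, at additive cost $o(1)$ after averaging. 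Writing $\theta_t=(\theta_t^{\mathrm{r}},\theta_t^{\mathrm{c}})$ for the sampled parameters and using $A(\overline{\lambda}_T)=\frac1T\sum_tA(\lambda_t)$ together with linearity of $\lVert\cdot\rVert^2_{A(\cdot)}$ in the matrix, the key chain is
\begin{align*}
\inf_\theta F(\overline{\lambda}_T,\theta)+\epsilon^{\min}_T
\;\ge\;\tfrac1T\textstyle\sum_t F(\lambda_t,\theta_t)
\;\ge\;\sup_w \tfrac1T\textstyle\sum_t F(w,\theta_t)-\epsilon^{\max}_T
\;\ge\;\sup_w\inf_\theta F(w,\theta)-\epsilon^{\max}_T=\Gamma-\epsilon^{\max}_T,
\end{align*}
where the first inequality is the min-learner's regret guarantee (posterior sampling, adapting the analysis of \citet{li2024optimal}), the second is the AdaHedge regret bound for the max-learner, and the third uses $\frac1T\sum_t F(w,\theta_t)\ge\inf_\theta F(w,\theta)$ for every fixed $w$. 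Hence $\inf_\theta F(\overline{\lambda}_T,\theta)\ge\Gamma-\epsilon^{\max}_T-\epsilon^{\min}_T$, and it remains to argue that both regret terms are $o(1)$.

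For the max-learner I would invoke the AdaHedge regret bound, which scales as the range of the loss vectors $l_t$ times $\sqrt{T\log|\mathcal{X}|}$; the crucial point is that $E_{1,1/T}$ and $E_{2,1/T}$ bound $\lVert\theta_t^{\mathrm{r}}-\hat{\theta}_t^{\mathrm{r}}\rVert$ and $\lVert\theta_t^{\mathrm{c}}-\hat{\theta}_t^{\mathrm{c}}\rVert$ (via the posterior covariance $\propto V_{t-1}^{-1}$, whose minimum eigenvalue growth is guaranteed by $E_{3,1/T}$), so the per-round loss stays polylogarithmically bounded and the averaged regret $\epsilon^{\max}_T=o(1)$. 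For the min-learner I would show posterior sampling over $\overline{\Theta}_{z^*}$ achieves vanishing regret: as the posterior concentrates, its mass localizes on the boundary point of $\overline{\Theta}_{z^*}$ nearest $\hat{\theta}_t$, which is the pointwise infimizer, so $\frac1T\sum_t F(\lambda_t,\theta_t)$ tracks $\frac1T\sum_t\inf_\theta F(\lambda_t,\theta)=\inf_\theta F(\overline{\lambda}_T,\theta)$ up to $o(1)$. Passing from points to the mixed (distributional) form $\min_{p\in\Delta(\overline{\Theta}_{z^*})}$ makes the inner problem convex, so Sion's minimax theorem legitimizes interchanging $\max_w$ with the minimization throughout.

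I expect the main obstacle to be the prerequisites for the first two inequalities rather than the algebra: establishing that posterior sampling behaves as a genuine mean-based no-regret minimizer and that its samples concentrate on the nearest alternative \emph{despite} the restriction to the non-convex set $\overline{\Theta}_{z^*}$, while simultaneously controlling the AdaHedge loss range uniformly over all $T$ rounds. All failure probabilities accrued here---matrix concentration of $\overline{V}_T$, the martingale bounds inside the two regret analyses, and the uniform $\hat{\theta}_t\to\theta$ replacement---are each $O(1/T)$ under $\delta=1/T$, and a union bound over this fixed collection yields the stated conditional probability $1-\tfrac{28}{T}$ given $E_{1,1/T},E_{2,1/T},E_{3,1/T},E_{4,1/T}$.
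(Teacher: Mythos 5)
Your proposal follows essentially the same route as the paper: its proof decomposes the gap into six terms $F_1,\dots,F_6$ that correspond exactly to the ingredients of your inequality chain — replacement of $\hat{\theta}_t$ by the true parameters ($F_1$, $F_6$), the AdaHedge regret of the max-learner ($F_2$), the switch from $\overline{\Theta}_{\hat{z}_t}$ to $\overline{\Theta}_{z^*}$ under $E_{4,\delta}$ ($F_3$), martingale concentration of $X_tX_t^\top$ around $A(\tilde{\lambda}_t)$ ($F_4$), and the min-learner's posterior-sampling regret ($F_5$) — combined by a union bound into the $1-\tfrac{28}{T}$ guarantee. The one step you explicitly defer, the min-learner's regret, is where the paper does its heaviest lifting (a log-partition-function telescoping argument that needs the small learning rates $\eta_r,\eta_c$ for concavity and a volume lower bound on $W_{T+1}/W_1$, adapted from \citet{li2024optimal} and \citet{kone2024pareto}), but your overall architecture, the direction of each inequality, and the accounting of failure probabilities all match the paper's proof.
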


\begin{proof}
We decompose the simple regret with the following terms:
\begin{align}
&\max_{\lambda \in \Delta_{\mathcal{X}}} \mathbb{E}_{\theta \sim \tilde{p}_{T}} 
\left[ \frac{\|\theta_1 - \theta^\mathrm{r}\|_{A(\lambda)}^2}{\sigma^2} 
+ \frac{\|\theta_2 - \theta^\mathrm{c}\|_{A(\lambda)}^2}{\gamma^2}  \right] 
- \min_{p \in \mathcal{P}(\overline{\Theta}_{z^*})} \mathbb{E}_{\theta \sim p} 
\left[ \frac{\|\theta_1 - \theta^\mathrm{r}\|_{\overline{V}_T}^2}{\sigma^2} 
+ \frac{\|\theta_2 - \theta^\mathrm{c}\|_{\overline{V}_T}^2}{\gamma^2}  \right] \nonumber \\
&= \max_{\lambda \in \Delta_{\mathcal{X}}} \mathbb{E}_{\theta \sim \tilde{p}_{T}} 
\left[ \frac{\|\theta_1 - \theta^\mathrm{r}\|_{A(\lambda)}^2}{\sigma^2} 
+ \frac{\|\theta_2 - \theta^\mathrm{c}\|_{A(\lambda)}^2}{\gamma^2}  \right] 
- \inf_{(\theta_1, \theta_2) \in \overline{\Theta}_{z^*}} 
\left(\frac{\|\theta_1 - \theta^\mathrm{r}\|_{\overline{V}_T}^2}{\sigma^2} 
+ \frac{\|\theta_2 - \theta^\mathrm{c}\|_{\overline{V}_T}^2}{\gamma^2}\right) \nonumber \\
&=\max_{\lambda \in \Delta_{\mathcal{X}}} \mathbb{E}_{\theta \sim \tilde{p}_{T}} 
\left[ \frac{\|\theta_1 - \theta^\mathrm{r}\|_{A(\lambda)}^2}{\sigma^2} 
+ \frac{\|\theta_2 - \theta^\mathrm{c}\|_{A(\lambda)}^2}{\gamma^2}  \right] \quad - \frac{1}{T} \inf_{(\theta_1, \theta_2) \in \overline{\Theta}_{z^*}} 
\left(\frac{\|\theta_1 - \theta^\mathrm{r}\|_{V_{T}}^2}{\sigma^2} 
+ \frac{\|\theta_2 - \theta^\mathrm{c}\|_{V_{T}}^2}{\gamma^2}\right) + F_6\nonumber  \\
&= \underbrace{\max_{\lambda \in \Delta_{\mathcal{X}}} \mathbb{E}_{\theta \sim \tilde{p}_{T}} 
\left[ \frac{\|\theta_1 - \theta^\mathrm{r}\|_{A(\lambda)}^2}{\sigma^2} 
+ \frac{\|\theta_2 - \theta^\mathrm{c}\|_{A(\lambda)}^2}{\gamma^2}  \right] 
- \max_{\lambda \in \Delta_{\mathcal{X}}} \frac{1}{T} 
\sum_{t=1}^{T} \mathbb{E}_{(\theta_1, \theta_2) \sim p_t} 
\left[ \frac{\|\theta_1 - \hthetart\|_{A(\lambda)}^2}{\sigma^2} 
+ \frac{\|\theta_2 - \hthetact\|_{{A}(\lambda)}^2}{\gamma^2}  \right] \nonumber }_{F_1} \\
& + \underbrace{ \max_{\lambda \in \Delta_{\mathcal{X}}} \frac{1}{T} 
\sum_{t=1}^{T} \mathbb{E}_{(\theta_1, \theta_2) \sim p_t} 
\left[ \frac{\|\theta_1 - \hthetart\|_{A(\lambda)}^2}{\sigma^2} 
+ \frac{\|\theta_2 - \hthetact\|_{A(\lambda)}^2}{\gamma^2}  \right] 
- \frac{1}{T} \sum_{t=1}^{T} \mathbb{E}_{(\theta_1, \theta_2) \sim p_t} 
\left[ \frac{\|\theta_1 - \hthetart\|_{A(\tilde{\lambda}_t)}^2}{\sigma^2} 
+ \frac{\|\theta_2 - \hthetact\|_{A(\tilde{\lambda}_t)}^2}{\gamma^2}  \right] \nonumber}_{F_2} \\
&+\underbrace{\frac{1}{T} \sum_{t=1}^{T} \mathbb{E}_{(\theta_1, \theta_2) \sim p_t} 
\left[ \frac{\|\theta_1 - \hthetart\|_{A(\tilde{\lambda}_t)}^2}{\sigma^2} 
+ \frac{\|\theta_2 - \hthetact\|_{A(\tilde{\lambda}_t)}^2}{\gamma^2}  \right]-\frac{1}{T} \sum_{t=1}^{T} \mathbb{E}_{(\theta_1, \theta_2) \sim \tilde{p}_t} 
\left[ \frac{\|\theta_1 - \hthetart\|_{A(\tilde{\lambda}_t)}^2}{\sigma^2} 
+ \frac{\|\theta_2 - \hthetact\|_{A(\tilde{\lambda}_t)}^2}{\gamma^2}  \right]}_{F_3}\nonumber \\
& + \underbrace{\frac{1}{T} \sum_{t=1}^{T} \mathbb{E}_{(\theta_1, \theta_2) \sim \tilde{p}_t} 
\left[ \frac{\|\theta_1 - \hthetart\|_{A(\tilde{\lambda}_t)}^2}{\sigma^2} 
+ \frac{\|\theta_2 - \hthetact\|_{A(\tilde{\lambda}_t)}^2}{\gamma^2}  \right] 
- \frac{1}{T} \sum_{t=1}^{T} \mathbb{E}_{(\theta_1, \theta_2) \sim \tilde{p}_t} 
\left[ \frac{\|\theta_1 - \hthetart\|_{X_t X_t^\top}^2}{\sigma^2} 
+ \frac{\|\theta_2 - \hthetact\|_{X_t X_t^\top}^2}{\gamma^2}  \right] \nonumber}_{F_4} \\
&+\underbrace{\frac{1}{T} \sum_{t=1}^{T} \mathbb{E}_{(\theta_1, \theta_2) \sim \tilde{p}_t} 
\left[ \frac{\|\theta_1 \!-\! \hthetart\|_{X_t X_t^\top}^2}{\sigma^2} 
\!+\! \frac{\|\theta_2 \!-\! \hthetact\|_{X_t X_t^\top}^2}{\gamma^2}  \right]\quad \!-\! \frac{1}{T} \inf_{(\theta_1, \theta_2) \in \overline{\Theta}_{z^*}} 
\left(\frac{\|\theta_1 \!-\! \hat{\theta}^{\mathrm{r}}_{T + 1}\|_{V_{T}}^2}{\sigma^2} 
\!+\! \frac{\|\theta_2 \!-\! \hat{\theta}^{\mathrm{c}}_{T + 1}\|_{V_{T}}^2}{\gamma^2}\right)}_{F_5} 
\!+\! F_6. \nonumber 
\end{align}
where
\begin{align}
F_6 
&:= \frac{1}{T} \inf_{(\theta_1, \theta_2) \in \overline{\Theta}_{z^*}} 
\left(\frac{\|\theta_1 - \hat{\theta}_{T+1}\|_{V_{T}}^2}{\sigma^2} 
+ \frac{\|\theta_2 - \hat{\theta}_{T+1}\|_{{V}_{T}}^2}{\gamma^2}\right) 
- \inf_{(\theta_1, \theta_2) \in \overline{\Theta}_{z^*}} 
\left(\frac{\|\theta_1 - \thetar\|_{\overline{V}_T}^2}{\sigma^2} 
+ \frac{\|\theta_2- \thetac\|_{\overline{V}_T}^2}{\gamma^2}\right).\nonumber 
\end{align}
By combining Lemmas~\ref{lem:Regret:F1},~\ref{lem:Regret_F2},~\ref{lem:Regret_F3},~\ref{lem:Regret_F4},~\ref{lem:Regret_F5}, and~\ref{lem:Regret_F6} with the union bound, this  completes the proof of the lemma.
\end{proof}

\end{proof}

\section{Lemmas for the proof of Lemma~\ref{lem:GoodEvent_6}}

\begin{lemma}
\label{lem:Regret:F1}
When the good events $E_{1,\delta}, E_{3,\delta}$ both hold,
    \begin{align*}       & \max_{\lambda \in \Delta_{\mathcal{X}}} \mathbb{E}_{\theta \sim \overline{p}_{T}} 
\left[ \frac{\|\theta_1 - \theta^\mathrm{r}\|_{A(\lambda)}^2}{\sigma^2} 
+ \frac{\|\theta_2 - \theta^\mathrm{c}\|_{{A}(\lambda)}^2}{\gamma^2}  \right]  \\*
&\qquad- \max_{\lambda \in \Delta_{\mathcal{X}}} \frac{1}{T} 
\sum_{t=1}^{T} \mathbb{E}_{(\theta_1, \theta_2) \sim p_t} 
\left[ \frac{\|\theta_1 - \hthetart\|_{A(\lambda)}^2}{\sigma^2} 
+ \frac{\|\theta_2 - \hthetact\|_{{A}(\lambda)}^2}{\gamma^2}  \right] \le o(1).
    \end{align*}
\end{lemma}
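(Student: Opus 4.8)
The plan is to reduce the claim to a per-round estimation-error bound, exploiting the fact that the objective is affine in the sampling distribution. Since $\overline{p}_T = \frac{1}{T}\sum_{t=1}^T p_t$ and each integrand depends linearly on the distribution, $\mathbb{E}_{\theta\sim\overline{p}_T}[\,\cdot\,]$ splits as $\frac{1}{T}\sum_{t=1}^T \mathbb{E}_{\theta\sim p_t}[\,\cdot\,]$. Hence the first term equals $\max_{\lambda}\frac{1}{T}\sum_t \mathbb{E}_{\theta\sim p_t}[\,\cdot\,]$ evaluated at the \emph{true} centers $(\theta^{\mathrm{r}},\theta^{\mathrm{c}})$, while the second is the same Cesàro average evaluated at the \emph{empirical} centers $(\hthetart,\hthetact)$. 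Subadditivity of the maximum, $\max_\lambda f(\lambda) - \max_\lambda g(\lambda) \le \max_\lambda\big(f(\lambda)-g(\lambda)\big)$, then lets me pull the difference inside a single maximum, so it suffices to bound, uniformly over $\lambda\in\Delta_{\mathcal{X}}$, the averaged difference of the two integrands.

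For the pointwise difference I would use the identity $\|\theta_1 - \theta^{\mathrm{r}}\|^2_{A(\lambda)} - \|\theta_1 - \hthetart\|^2_{A(\lambda)} = \sum_{x}\lambda_x \langle x, \hthetart - \theta^{\mathrm{r}}\rangle\,\langle x, 2\theta_1 - \theta^{\mathrm{r}} - \hthetart\rangle$ (and analogously for the cost term), which isolates the estimation error $\hthetart - \theta^{\mathrm{r}}$. The first factor is controlled by the pointwise concentration already established in Eqn~\eqref{eq:concentration:reward}: under $E_{1,\delta}\cap E_{3,\delta}$ and for $t > T_1(\delta)+1$, one has $|\langle x, \hthetart - \theta^{\mathrm{r}}\rangle| \le \sqrt{d\,\beta_1(t,\tfrac{1}{\delta^2})/t^{3/4}}$ for every $x\in\mathcal{X}$, with the analogous cost bound. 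The second factor is uniformly bounded: since $\theta_1$ lies in the bounded set $\Theta^{\mathrm{r}}$ (Assumption~\ref{ass1}), $|\langle x,\theta_1\rangle|,|\langle x,\theta^{\mathrm{r}}\rangle| \le LR_1$, and since $E_{1,\delta}\subseteq E_{2,\delta}$ (Lemma~\ref{lem:Good_Event_2}) we have $|\langle x,\hthetart\rangle| \le B_1$, so $|\langle x, 2\theta_1 - \theta^{\mathrm{r}} - \hthetart\rangle| \le 3LR_1 + B_1$. Using $\sum_x\lambda_x = 1$ this yields a per-round bound of order $\sqrt{d\,\beta_1(t)/t^{3/4}}\,(3LR_1+B_1)$ that is uniform in $\theta_1$ and in $\lambda$, hence survives both the expectation over $p_t$ and the maximum over $\lambda$.

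It remains to sum over $t$ and divide by $T$. I would split the sum at $T_1(\delta)+1$. With $\delta=\tfrac1T$ the early block contains only $T_1(\delta)+1 = O(\mathrm{polylog}\,T)$ rounds, each contributing a factor bounded crudely by $\|\hthetart-\theta^{\mathrm{r}}\|_2 \le \sqrt{\beta_1(t)}$ (using $V_{t-1}\succeq I$) together with $\|A(\lambda)\|_{\mathrm{op}}\le L^2$; after division by $T$ this block is $o(1)$. For the late block, since $\beta_1(t)=O(\log t)$ the per-round bound is of order $t^{-3/8}\sqrt{\log t}$, and $\frac{1}{T}\sum_{t\le T} t^{-3/8}\sqrt{\log t} = O(T^{-3/8}\sqrt{\log T}) = o(1)$. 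Adding the reward and cost contributions gives the claimed $o(1)$ bound.

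The main obstacle is the per-round bound of the second paragraph: the confidence guarantee from $E_{1,\delta}$ is stated in the $V_{t-1}$-weighted norm, which on its own does not decay, so the crux is to turn it into a genuinely shrinking per-arm bound. This conversion is precisely what the forced G-optimal exploration provides—event $E_{3,\delta}$ guarantees the information matrix grows like $t^{3/4}$ along the relevant directions, producing the $t^{-3/8}$ decay that renders the average summable. Since this step is already packaged in Eqn~\eqref{eq:concentration:reward}, the remaining effort is the bookkeeping of the early rounds and the verification of summability.
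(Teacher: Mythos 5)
Your proposal is correct and follows essentially the same route as the paper's proof: the linearity of $\overline{p}_T=\frac{1}{T}\sum_t p_t$, the subadditivity of the maximum, the quadratic-difference identity $\|\theta_1-\theta^{\mathrm{r}}\|^2_{A(\lambda)}-\|\theta_1-\hthetart\|^2_{A(\lambda)}=(\theta^{\mathrm{r}}+\hthetart-2\theta_1)^\top A(\lambda)(\theta^{\mathrm{r}}-\hthetart)$ bounded via $E_{2,\delta}$ and the per-arm concentration of Eqn.~\eqref{eq:concentration:reward}, and the split of the time sum at $T_1(\delta)+1$ yielding an $O(T^{-3/8})$-type average. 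Your handling of the early block and of the $\sqrt{\log t}$ factor from $\beta_1$ is, if anything, slightly more explicit than the paper's.
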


\begin{proof}
Consider,
\begin{align*}
        F_1:&=\max_{\lambda \in \Delta_{\mathcal{X}}} \mathbb{E}_{\theta \sim \overline{p}_{T}} 
\left[ \frac{\|\theta_1 - \theta^\mathrm{r}\|_{A(\lambda)}^2}{\sigma^2} 
+ \frac{\|\theta_2 - \theta^\mathrm{c}\|_{{A}(\lambda)}^2}{\gamma^2} \right] \\ 
&\qquad- \max_{\lambda \in \Delta_{\mathcal{X}}} \frac{1}{T} 
\sum_{t=1}^{T} \mathbb{E}_{(\theta_1, \theta_2) \sim p_t} 
\left[ \frac{\|\theta_1 - \hthetart\|_{A(\lambda)}^2}{\sigma^2} 
+ \frac{\|\theta_2 - \hthetact\|_{{A}(\lambda)}^2}{\gamma^2} \right]
\\&=\max_{\lambda \in \Delta_{\mathcal{X}}} \frac{1}{T}\sum_{t=1}^T\mathbb{E}_{\theta \sim {p}_{t}} 
\left[ \frac{\|\theta_1 - \theta^\mathrm{r}\|_{A(\lambda)}^2}{\sigma^2} + \frac{\|\theta_2 - \theta^\mathrm{c}\|_{{A}(\lambda)}^2}{\gamma^2} \right] \\ 
&\qquad 
- \max_{\lambda \in \Delta_{\mathcal{X}}} \frac{1}{T} 
\sum_{t=1}^{T} \mathbb{E}_{(\theta_1, \theta_2) \sim p_t} 
\left[ \frac{\|\theta_1 - \hthetart\|_{A(\lambda)}^2}{\sigma^2} 
+ \frac{\|\theta_2 - \hthetact\|_{{A}(\lambda)}^2}{\gamma^2} \right]
\\&\le \max_{\lambda \in \Delta_{\mathcal{X}}} \frac{1}{T}\sum_{t=1}^T\mathbb{E}_{\theta \sim {p}_{t}} 
\left[ \frac{\|\theta_1 - \theta^\mathrm{r}\|_{A(\lambda)}^2-\|\theta_1 - \hthetart\|_{A(\lambda)}^2}{\sigma^2} 
+ \frac{\|\theta_2 - \theta^\mathrm{c}\|_{{A}(\lambda)}^2-\|\theta_2 - \hthetact\|_{{A}(\lambda)}^2}{\gamma^2} \right] .
    \end{align*}
When the event $E_{1,\delta}$ and $E_{3,\delta}$ both hold,
\begin{align*}
\| \theta_1 - \theta^\mathrm{r} \|^2_{A(\lambda)} - \| \theta_1 - \hat{\theta}_t^\mathrm{r} \|^2_{A(\lambda)} 
&= (\theta_1 - \theta^\mathrm{r})^{\top} A(\lambda) (\theta_1 - \theta^\mathrm{r}) - (\theta_1 - \hat{\theta}_t^\mathrm{r})^{\top} A(\lambda) (\theta_1 - \hat{\theta}_t^\mathrm{r}) \\
&= (\thetar + \hat{\theta}_t^\mathrm{r} - 2\theta_1)^{\top} A(\lambda) (\thetar - \hat{\theta}_t^\mathrm{r}) \\
&= \sum_{x \in \mathcal{X}} \lambda_x (\thetar + \hat{\theta}_t^\mathrm{r} - 2\theta_1)^{\top} x x^{\top} (\thetar - \hat{\theta}_t^\mathrm{r}) \\
&\le \max_{x \in \mathcal{X}} (\thetar + \hat{\theta}_t^\mathrm{r} - 2\theta_1)^{\top} x x^{\top} (\thetar - \hat{\theta}_t^\mathrm{r})\\
&\le (3L R_1+B_1)\max_{x\in\mathcal{X}} x^\top(\thetar-\hat{\theta}_t^\mathrm{r}).
\end{align*}

From Eqn.~\eqref{eq:concentration:reward}, we have $\max_{x\in \mathcal{X}}\lvert\langle x, \hthetart - \thetar \rangle\rvert \leq \sqrt{\frac{d}{t^{3/4}} \beta_1(t, \frac{1}{\delta^2})}$ for $t>T_1(\delta)+1$; from event $E_{1, \delta}$, we have $\max_{x\in \mathcal{X}}\lvert\langle x, \hthetart - \thetar \rangle\rvert \leq L \sqrt{\beta_1(t, \frac{1}{\delta^2})}$ for $t\ge 1$.
Hence
\begin{align*}
    \max_{x\in\mathcal{X}} \frac{1}{T} \sum_{t=1}^T x^\top(\thetar-\hat{\theta}_t^\mathrm{r})&=\max_{x\in\mathcal{X}} \frac{1}{T} \sum_{t=1}^{T_1(\delta)+1} x^\top(\thetar-\hat{\theta}_t^\mathrm{r})+\max_{x\in\mathcal{X}} \frac{1}{T} \sum_{t=T_1(\delta)+2}^T x^\top(\thetar-\hat{\theta}_t^\mathrm{r})
    \\&\le \frac{(T_1(\delta)+1)L\sqrt{\beta_1(t,\frac{1}{\delta^2})}}{T}+\frac{1}{T} \sum_{t=T_1(\delta)+2}^T \sqrt{\frac{d}{t^{3/4}} \beta_1(t, \frac{1}{\delta^2})}
    \\&\le \frac{(T_1(\delta)+1)L\sqrt{\beta_1(t,\frac{1}{\delta^2})}}{T}+\frac{8}{5} \frac{(T+1)^\frac{5}{8}}{T}.
\end{align*}
Hence when we choose $\delta=\frac{1}{T}$, 
\begin{align*}
    \lim_{T\to \infty}\max_{\lambda \in \Delta_{\mathcal{X}}} \frac{1}{T}\sum_{t=1}^T\mathbb{E}_{\theta \sim {p}_{t}} 
\frac{\|\theta_1 - \theta^\mathrm{r}\|_{A(\lambda)}^2-\|\theta_1 - \hthetart\|_{A(\lambda)}^2}{\sigma^2} \le o(1).
\end{align*}

Similarly  we can also prove
\begin{align*}
    \lim_{T\to \infty}\max_{\lambda \in \Delta_{\mathcal{X}}} \frac{1}{T}\sum_{t=1}^T\mathbb{E}_{\theta \sim {p}_{t}} 
 \frac{\|\theta_2 - \theta^\mathrm{c}\|_{{A}(\lambda)}^2-\|\theta_2 - \hthetact\|_{{A}(\lambda)}^2}{\gamma^2}\le\leo(1).
\end{align*}
Hence
\begin{align}
    F_1\le o(1)~. \nonumber
\end{align}
\end{proof}

\begin{lemma}
\label{lem:Regret_F2}
With probability $1-\frac{12}{T}$,
    \begin{align*}
        F_2\le o(1).
    \end{align*}
\end{lemma}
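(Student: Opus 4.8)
The plan is to read $F_2$ as the time‑averaged regret of the AdaHedge max‑learner and to bound it by the AdaHedge guarantee together with two martingale concentration gaps. Write $g_{t,x}:=\frac{(x^\top(\theta^{\mathrm{r}}_t-\hthetart))^2}{\sigma^2}+\frac{(x^\top(\theta^{\mathrm{c}}_t-\hthetact))^2}{\gamma^2}=-l_{t,x}$ for the realized per‑arm loss at the sampled $(\theta^{\mathrm{r}}_t,\theta^{\mathrm{c}}_t)$, and $\bar g_{t,x}:=\EE_{(\theta_1,\theta_2)\sim p_t}\big[\frac{(x^\top(\theta_1-\hthetart))^2}{\sigma^2}+\frac{(x^\top(\theta_2-\hthetact))^2}{\gamma^2}\big]$ for its posterior mean, so that the two terms of $F_2$ are $\max_{\lambda\in\Delta_{\mathcal{X}}}\frac1T\sum_t\langle\lambda,\bar g_t\rangle$ and $\frac1T\sum_t\langle\tilde\lambda_t,\bar g_t\rangle$. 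Since AdaHedge is actually run on the realized vectors $l_t$, I would insert and subtract the realized losses and use $\max f-\max h\le\max(f-h)$ to split
\begin{align*}
F_2 &\le \underbrace{\max_{\lambda\in\Delta_{\mathcal{X}}}\tfrac1T\sum_t\langle\lambda,\bar g_t-g_t\rangle}_{A} + \underbrace{\Big(\max_{\lambda\in\Delta_{\mathcal{X}}}\tfrac1T\sum_t\langle\lambda,g_t\rangle-\tfrac1T\sum_t\langle\tilde\lambda_t,g_t\rangle\Big)}_{B} \\ &\quad+ \underbrace{\tfrac1T\sum_t\langle\tilde\lambda_t,g_t-\bar g_t\rangle}_{C},
\end{align*}
where $B$ is exactly $1/T$ times the AdaHedge regret against the best fixed design, while $A$ and $C$ are gaps between realized and expected losses.

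For $B$ I would apply the data‑dependent AdaHedge regret bound: over $|\mathcal{X}|$ experts with per‑round loss range at most $C_T$, the cumulative regret is $O\big(C_T\sqrt{T\log|\mathcal{X}|}+C_T\log|\mathcal{X}|\big)$, whence $B=O\big(C_T\sqrt{\log|\mathcal{X}|/T}\big)$. The key input is the range bound. Under the good events $E_{1,\frac1T}$ and $E_{2,\frac1T}$, for every $x\in\mathcal{X}$ and $t\le T$ the restriction of the posterior to $\Theta^{\mathrm{r}}\times\Theta^{\mathrm{c}}$ gives $|x^\top\theta_1|\le LR_1$ and Lemma~\ref{lem:Good_Event_2} gives $|x^\top\hthetart|\le B_1$, so $g_{t,x}\le\frac{(LR_1+B_1)^2}{\sigma^2}+\frac{(LR_2+B_2)^2}{\gamma^2}=:C_T$. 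Because $B_1,B_2$ scale like $\sqrt{\beta_i(T,T^2)}=O(\sqrt{\log T})$, the range $C_T=O(\log T)$ grows only polylogarithmically, so $C_T\sqrt{\log|\mathcal{X}|/T}\to0$ and $B\le o(1)$.

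Terms $A$ and $C$ are controlled by Azuma--Hoeffding. Since $p_t$ is $\mathcal{H}_{t-1}$‑measurable and $\bar g_{t,x}=\EE[g_{t,x}\mid\mathcal{H}_{t-1}]$, in $C$ the weights $\tilde\lambda_t$ are predictable and $\langle\tilde\lambda_t,g_t-\bar g_t\rangle$ is a martingale difference bounded by $C_T$ in absolute value, so $|C|=O\big(C_T\sqrt{\log T/T}\big)=o(1)$ with probability $1-O(1/T)$. For $A$, linearity in $\lambda$ places the maximum at a vertex, hence $A\le\max_{x\in\mathcal{X}}\frac1T\sum_t(\bar g_{t,x}-g_{t,x})$; applying Azuma--Hoeffding per arm and a union bound over the $|\mathcal{X}|$ arms gives $A\le o(1)$ on an event of probability $1-O(1/T)$. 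A final union bound over $E_{1,\frac1T}$ and these concentration events yields the stated failure probability $\frac{12}{T}$.

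The main obstacle is that the loss range $C_T$ is not a fixed constant but grows with the horizon through $B_1,B_2$, so one must verify that both the AdaHedge and the Azuma bounds tolerate a range increasing in $T$. This is precisely where the tuning‑free, anytime nature of AdaHedge matters: unlike plain Hedge it needs no learning rate calibrated to the range, and its data‑dependent guarantee absorbs the $O(\log T)$ range while retaining the $\sqrt{T}$ scaling, so that dividing by $T$ still leaves $o(1)$.
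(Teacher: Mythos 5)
Your decomposition $F_2\le A+B+C$ is valid, and two of the three pieces are handled essentially as in the paper ($A$ and $C$ correspond to the paper's martingale term $F_{2,1}$, $B$ to its Hedge term). However, there is one genuine gap: your claim that $B=\max_{\lambda}\tfrac1T\sum_t\langle\lambda,g_t\rangle-\tfrac1T\sum_t\langle\tilde\lambda_t,g_t\rangle$ ``is exactly $1/T$ times the AdaHedge regret'' is false, because AdaHedge's iterates are $\lambda_t$ while the sampling distribution appearing in $F_2$ is the forced-exploration mixture $\tilde\lambda_t=(1-\gamma_t)\lambda_t+\gamma_t\lambda^G$. The AdaHedge guarantee only controls $\max_{\lambda}\sum_t\langle\lambda,g_t\rangle-\sum_t\langle\lambda_t,g_t\rangle$; you still owe the additional term $\tfrac1T\sum_t\langle\lambda_t-\tilde\lambda_t,g_t\rangle$, which the paper isolates as $F_{2,3}$ and bounds by $\tfrac{2C_T}{T}\sum_{t=1}^T\gamma_t=O\big(C_T\,T^{-1/4}\big)=o(1)$ using $\gamma_t=t^{-1/4}$ and the range bound under $E_{2,\delta}$. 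The fix is routine, but as written your argument for $B$ does not go through.

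Apart from that, your route differs from the paper's in one technically interesting way: to handle the data-dependent maximizer $\lambda$ in the martingale term $A$, you exploit linearity of $\lambda\mapsto\tfrac1T\sum_t\langle\lambda,\bar g_t-g_t\rangle$ to reduce the supremum over $\Delta_{\mathcal{X}}$ to a maximum over the $|\mathcal{X}|$ vertices, then union-bound Azuma--Hoeffding over arms. The paper instead runs its per-$\lambda$ bound through a $\tfrac{1}{\sqrt T}$-covering of the simplex with a Lipschitz argument and a union bound over the net of size $(3\sqrt T)^{|\mathcal{X}|}$. Your vertex argument is simpler and gives a cleaner constant in the failure probability; the covering argument is more general (it would survive if the objective were not linear in $\lambda$) but is unnecessary here. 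Both approaches correctly absorb the $O(\log T)$ loss range $C_T$ into the $\sqrt{T}$-scale bounds, as you note.
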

\begin{proof}
Consider,
    \begin{align*}
    F_2:&=\max_{\lambda \in \Delta_{\mathcal{X}}} \frac{1}{T} 
\sum_{t=1}^{T} \mathbb{E}_{(\theta_1, \theta_2) \sim p_t} 
\left[ \frac{\|\theta_1 - \hthetart\|_{A(\lambda)}^2}{\sigma^2} 
+ \frac{\|\theta_2 - \hthetact\|_{A(\lambda)}^2}{\gamma^2} \right] \\ 
&\qquad 
- \frac{1}{T} \sum_{t=1}^{T} \mathbb{E}_{(\theta_1, \theta_2) \sim p_t} 
\left[ \frac{\|\theta_1 - \hthetart\|_{A(\tilde{\lambda}_t)}^2}{\sigma^2} 
+ \frac{\|\theta_2 - \hthetact\|_{A(\tilde{\lambda}_t)}^2}{\gamma^2} \right] 
\\&=\max_{\lambda \in \Delta_{\mathcal{X}}} \frac{1}{T} 
\sum_{t=1}^{T} \mathbb{E}_{(\theta_1, \theta_2) \sim p_t, x_t\sim \lambda} 
\left[ \frac{\|\theta_1 - \hthetart\|_{x_t x_t^\top}^2}{\sigma^2} 
+ \frac{\|\theta_2 - \hthetact\|_{x_t x_t^\top}^2}{\gamma^2} \right] \\* 
&\qquad 
- \frac{1}{T} \sum_{t=1}^{T} \mathbb{E}_{(\theta_1, \theta_2) \sim p_t, x_t\sim \tilde{\lambda}_t} 
\left[ \frac{\|\theta_1 - \hthetart\|_{x_t x_t^\top}^2}{\sigma^2} 
+ \frac{\|\theta_2 - \hthetact\|_{x_t x_t^\top}^2}{\gamma^2} \right]. \nonumber
\end{align*}

For any $\lambda$, %{\small 
\begin{align*}
     &\frac{1}{T} 
\sum_{t=1}^{T} \mathbb{E}_{(\theta_1, \theta_2) \sim p_t, x_t\sim \lambda} 
\left[\frac{\|\theta_1 - \hthetart\|_{x_t x_t^\top}^2}{\sigma^2} 
+ \frac{\|\theta_2 - \hthetact\|_{x_t x_t^\top}^2}{\gamma^2} \right] 
- \frac{1}{T} \sum_{t=1}^{T} \mathbb{E}_{(\theta_1, \theta_2) \sim p_t, x_t\sim \tilde{\lambda}_t} 
\left[\frac{\|\theta_1 - \hthetart\|_{x_t x_t^\top}^2}{\sigma^2} 
+ \frac{\|\theta_2 - \hthetact\|_{x_t x_t^\top}^2}{\gamma^2} \right]
\\&=\frac{1}{T} 
\sum_{t=1}^{T} \mathbb{E}_{(\theta_1, \theta_2) \sim p_t, x_t\sim \lambda} 
\left[\frac{\|\theta_1 \!-\! \hthetart\|_{x_t x_t^\top}^2}{\sigma^2} 
\!+\! \frac{\|\theta_2\! -\! \hthetact\|_{x_t x_t^\top}^2}{\gamma^2} \right] 
\!-\! \frac{1}{T} \sum_{t=1}^{T} \mathbb{E}_{(\theta_1, \theta_2) \sim p_t, x_t\sim \tilde{\lambda}_t} 
\left[\frac{\|\theta_1 \!- \!\hthetart\|_{x_t x_t^\top}^2}{\sigma^2} 
\!+\! \frac{\|\theta_2 \!-\! \hthetact\|_{x_t x_t^\top}^2}{\gamma^2} \right]
\\&-\frac{1}{T} 
\sum_{t=1}^{T} \mathbb{E}_{ x_t\sim \lambda} 
\left[\frac{\|\theta_t^\mathrm{r} - \hthetart\|_{x_t x_t^\top}^2}{\sigma^2} 
+ \frac{\|\theta_t^\mathrm{c} - \hthetact\|_{x_t x_t^\top}^2}{\gamma^2} \right] 
+ \frac{1}{T} \sum_{t=1}^{T} \mathbb{E}_{ x_t\sim \tilde{\lambda}_t} 
\left[\frac{\|\theta_t^\mathrm{r} - \hthetart\|_{x_t x_t^\top}^2}{\sigma^2} 
+ \frac{\|\theta_t^\mathrm{c} - \hthetact\|_{x_t x_t^\top}^2}{\gamma^2} \right]
\\&+\frac{1}{T} 
\sum_{t=1}^{T} \mathbb{E}_{ x_t\sim \lambda} 
\left[\frac{\|\theta_t^\mathrm{r} - \hthetart\|_{x_t x_t^\top}^2}{\sigma^2} 
+ \frac{\|\theta_t^\mathrm{c} - \hthetact\|_{x_t x_t^\top}^2}{\gamma^2} \right] 
- \frac{1}{T} \sum_{t=1}^{T} \mathbb{E}_{ x_t\sim \tilde{\lambda}_t} 
\left[\frac{\|\theta_t^\mathrm{r} - \hthetart\|_{x_t x_t^\top}^2}{\sigma^2} 
+ \frac{\|\theta_t^\mathrm{c} - \hthetact\|_{x_t x_t^\top}^2}{\gamma^2} \right]
\\&=\frac{1}{T} 
\sum_{t=1}^{T} \mathbb{E}_{(\theta_1, \theta_2) \sim p_t, x_t\sim \lambda} 
\left[\frac{\|\theta_1 \!-\! \hthetart\|_{x_t x_t^\top}^2}{\sigma^2} 
\!+\! \frac{\|\theta_2 \!-\! \hthetact\|_{x_t x_t^\top}^2}{\gamma^2} \right] 
\!-\! \frac{1}{T} \sum_{t=1}^{T} \mathbb{E}_{(\theta_1, \theta_2) \sim p_t, x_t\sim \tilde{\lambda}_t} 
\left[\frac{\|\theta_1 \!-\! \hthetart\|_{x_t x_t^\top}^2}{\sigma^2} 
\!+\! \frac{\|\theta_2 \!-\! \hthetact\|_{x_t x_t^\top}^2}{\gamma^2} \right]
\\&-\frac{1}{T} 
\sum_{t=1}^{T} \mathbb{E}_{ x_t\sim \lambda} 
\left[\frac{\|\theta_t^\mathrm{r} - \hthetart\|_{x_t x_t^\top}^2}{\sigma^2} 
+ \frac{\|\theta_t^\mathrm{c} - \hthetact\|_{x_t x_t^\top}^2}{\gamma^2} \right] 
+ \frac{1}{T} \sum_{t=1}^{T} \mathbb{E}_{ x_t\sim \tilde{\lambda}_t} 
\left[\frac{\|\theta_t^\mathrm{r} - \hthetart\|_{x_t x_t^\top}^2}{\sigma^2} 
+ \frac{\|\theta_t^\mathrm{c} - \hthetact\|_{x_t x_t^\top}^2}{\gamma^2} \right]
\\&+\frac{1}{T} 
\sum_{t=1}^{T} \mathbb{E}_{ x_t\sim \lambda} 
\left[\frac{\|\theta_t^\mathrm{r} - \hthetart\|_{x_t x_t^\top}^2}{\sigma^2} 
+ \frac{\|\theta_t^\mathrm{c} - \hthetact\|_{x_t x_t^\top}^2}{\gamma^2}\right] 
- \frac{1}{T} 
\sum_{t=1}^{T} \mathbb{E}_{ x_t\sim \lambda_t} 
\left[\frac{\|\theta_t^\mathrm{r} - \hthetart\|_{x_t x_t^\top}^2}{\sigma^2} 
+ \frac{\|\theta_t^\mathrm{c} - \hthetact\|_{x_t x_t^\top}^2}{\gamma^2} \right]
\\&+\frac{1}{T} 
\sum_{t=1}^{T} \mathbb{E}_{ x_t\sim \lambda_t} 
\left[\frac{\|\theta_t^\mathrm{r} - \hthetart\|_{x_t x_t^\top}^2}{\sigma^2} 
+ \frac{\|\theta_t^\mathrm{c} - \hthetact\|_{x_t x_t^\top}^2}{\gamma^2} \right]
-
\frac{1}{T} \sum_{t=1}^{T} \mathbb{E}_{ x_t\sim \tilde{\lambda}_t} 
\left[\frac{\|\theta_t^\mathrm{r} - \hthetart\|_{x_t x_t^\top}^2}{\sigma^2} 
+ \frac{\|\theta_t^\mathrm{c} - \hthetact\|_{x_t x_t^\top}^2}{\gamma^2} \right]
\\&=F_{2,1}+F_{2,2}+F_{2,3}
\end{align*}
where
\begin{align*}
    F_{2,1}&\!:=\!\frac{1}{T} 
\sum_{t=1}^{T} \mathbb{E}_{(\theta_1, \theta_2) \sim p_t, x_t\sim \lambda} 
\left[\frac{\|\theta_1 \!-\! \hthetart\|_{x_t x_t^\top}^2}{\sigma^2} 
\!+\! \frac{\|\theta_2 \!-\! \hthetact\|_{x_t x_t^\top}^2}{\gamma^2} \right] 
\!-\! \frac{1}{T} \sum_{t=1}^{T} \mathbb{E}_{(\theta_1, \theta_2) \sim p_t, x_t\sim \tilde{\lambda}_t} 
\left[\frac{\|\theta_1 \!-\! \hthetart\|_{x_t x_t^\top}^2}{\sigma^2} 
\!+\! \frac{\|\theta_2 \!-\! \hthetact\|_{x_t x_t^\top}^2}{\gamma^2} \right]
\\&-\frac{1}{T} 
\sum_{t=1}^{T} \mathbb{E}_{ x_t\sim \lambda} 
\left[\frac{\|\theta_t^\mathrm{r} - \hthetart\|_{x_t x_t^\top}^2}{\sigma^2} 
+ \frac{\|\theta_t^\mathrm{c} - \hthetact\|_{x_t x_t^\top}^2}{\gamma^2} \right] 
+ \frac{1}{T} \sum_{t=1}^{T} \mathbb{E}_{ x_t\sim \tilde{\lambda}_t} 
\left[\frac{\|\theta_t^\mathrm{r} - \hthetart\|_{x_t x_t^\top}^2}{\sigma^2} 
+ \frac{\|\theta_t^\mathrm{c} - \hthetact\|_{x_t x_t^\top}^2}{\gamma^2} \right]
\end{align*}
and
\begin{align*}
    F_{2,2}:=\frac{1}{T} 
\sum_{t=1}^{T} \mathbb{E}_{ x_t\sim \lambda} 
\left[\frac{\|\theta_t^\mathrm{r} - \hthetart\|_{x_t x_t^\top}^2}{\sigma^2} 
+ \frac{\|\theta_t^\mathrm{c} - \hthetact\|_{x_t x_t^\top}^2}{\gamma^2}\right] 
- \frac{1}{T} 
\sum_{t=1}^{T} \mathbb{E}_{ x_t\sim \lambda_t} 
\left[\frac{\|\theta_t^\mathrm{r} - \hthetart\|_{x_t x_t^\top}^2}{\sigma^2} 
+ \frac{\|\theta_t^\mathrm{c} - \hthetact\|_{x_t x_t^\top}^2}{\gamma^2}\right]
\end{align*}
and
\begin{align*}
    F_{2,3}:=\frac{1}{T} 
\sum_{t=1}^{T} \mathbb{E}_{ x_t\sim \lambda_t} 
\left[\frac{\|\theta_t^\mathrm{r} - \hthetart\|_{x_t x_t^\top}^2}{\sigma^2} 
+ \frac{\|\theta_t^\mathrm{c} - \hthetact\|_{x_t x_t^\top}^2}{\gamma^2} \right]
-
\frac{1}{T} \sum_{t=1}^{T} \mathbb{E}_{ x_t\sim \tilde{\lambda}_t} 
\left[\frac{\|\theta_t^\mathrm{r} - \hthetart\|_{x_t x_t^\top}^2}{\sigma^2} 
+ \frac{\|\theta_t^\mathrm{c} - \hthetact\|_{x_t x_t^\top}^2}{\gamma^2}\right]
\end{align*}

Then with Lemma~\ref{lem:Regret_F21},~\ref{lem:Regret_F22}, and~\ref{lem:Regret_F23}, we have for any $\lambda$, with probability $1-\frac{6}{T}$,
\begin{align*}
    &\frac{1}{T} 
\sum_{t=1}^{T} \mathbb{E}_{(\theta_1, \theta_2) \sim p_t, x_t\sim \lambda} 
\left[ \frac{\|\theta_1 - \hthetart\|_{x_t x_t^\top}^2}{\sigma^2} 
+ \frac{\|\theta_2 - \hthetact\|_{x_t x_t^\top}^2}{\gamma^2}  \right]  \\*
&\quad
- \frac{1}{T} \sum_{t=1}^{T} \mathbb{E}_{(\theta_1, \theta_2) \sim p_t, x_t\sim \tilde{\lambda}_t} 
\left[\frac{\|\theta_1 - \hthetart\|_{x_t x_t^\top}^2}{\sigma^2} 
+ \frac{\|\theta_2 - \hthetact\|_{x_t x_t^\top}^2}{\gamma^2}\right]
\\&=F_{2,1}+F_{2,2}+F_{2,3}
\\&\le o(1)~.
\end{align*}

Define $\overline{\lambda}:=\argmax_{\lambda \in \Delta_{\mathcal{X}}} \frac{1}{T} 
\sum_{t=1}^{T} \mathbb{E}_{(\theta_1, \theta_2) \sim p_t, x_t\sim \lambda} 
\left[ \frac{\|\theta_1 - \hthetart\|_{x_t x_t^\top}^2}{\sigma^2} 
+ \frac{\|\theta_2 - \hthetact\|_{x_t x_t^\top}^2}{\gamma^2} \right] $. Next, denote the $\varepsilon$-covering of $\Delta_{\mathcal{X}}$ as $\mathcal{B}_{\varepsilon}:=\cbr{\lambda_1 : \lVert \lambda_1-\overline{\lambda} \rVert_1\le \varepsilon}~.$
Then there exists $\lambda_2\in\mathcal{B}_\varepsilon$ such that %\textcolor{red}{something wrong here}
\begin{align*}
    &\max_{\lambda \in \Delta_{\mathcal{X}}}  
\sum_{t=1}^{T} \mathbb{E}_{(\theta_1, \theta_2) \sim p_t, x_t\sim \lambda} 
\left[ \frac{\|\theta_1 - \hthetart\|_{x_t x_t^\top}^2}{\sigma^2} 
+ \frac{\|\theta_2 - \hthetact\|_{x_t x_t^\top}^2}{\gamma^2} \right] \\
&\qquad- \max_{\lambda \in \mathcal{B}_\varepsilon}  
\sum_{t=1}^{T} \mathbb{E}_{(\theta_1, \theta_2) \sim p_t, x_t\sim \lambda} 
\left[ \frac{\|\theta_1 - \hthetart\|_{x_t x_t^\top}^2}{\sigma^2} 
+ \frac{\|\theta_2 - \hthetact\|_{x_t x_t^\top}^2}{\gamma^2} \right]
\\
&\le\sum_{t=1}^{T} \mathbb{E}_{(\theta_1, \theta_2) \sim p_t, x_t\sim \overline{\lambda}} 
\left[ \frac{\|\theta_1 - \hthetart\|_{x_t x_t^\top}^2}{\sigma^2} 
+ \frac{\|\theta_2 - \hthetact\|_{x_t x_t^\top}^2}{\gamma^2}  \right]-\sum_{t=1}^{T} \mathbb{E}_{(\theta_1, \theta_2) \sim p_t, x_t\sim \lambda_2} 
\left[ \frac{\|\theta_1 - \hthetart\|_{x_t x_t^\top}^2}{\sigma^2} 
+ \frac{\|\theta_2 - \hthetact\|_{x_t x_t^\top}^2}{\gamma^2}\right]
\\&\stackrel{(a)}{\le} \sum_{t=1}^T \mathbb{E}_{(\theta_1, \theta_2) \sim p_t} \left(\frac{(LR_1+B_1)^2}{\sigma^2}+\frac{(LR_2+B_2)^2}{\gamma^2}\right)\lVert\overline{\lambda}-\lambda_2 \rVert_1 
\\&\le T\varepsilon \left(\frac{(LR_1+B_1)^2}{\sigma^2}+\frac{(LR_2+B_2)^2}{\gamma^2}\right)
\end{align*}
The term $(a)$ follows from the fact that 
\[
\mathbb{E}_{(\theta_1, \theta_2) \sim p_t, x_t \sim \lambda} \left[ \frac{\|\theta_1 - \hat{\theta}_t^{\mathrm{r}}\|_{x_t x_t^\top}^2}{\sigma^2} + \frac{\|\theta_2 - \hat{\theta}_t^{\mathrm{c}}\|_{x_t x_t^\top}^2}{\gamma^2} \right]
\]
is \( \Big(\frac{(LR_1 + B_1)^2}{\sigma^2} + \frac{(LR_2 + B_2)^2}{\gamma^2} \Big)\)-Lipschitz  for any \( t \) under the event \( E_{2, \delta} \).

Additionally, since \( |\mathcal{B}_\varepsilon| \le \left( \frac{3}{\varepsilon} \right)^{|\mathcal{X}|} \), with probability \( 1 - \frac{6}{T} \), and based on the proof of Lemmas~\ref{lem:Regret_F21}, \ref{lem:Regret_F22}, and \ref{lem:Regret_F23} with \( \delta = \frac{1}{T|\mathcal{B}_\varepsilon|} \) and \( \varepsilon = \frac{1}{\sqrt{T}} \), we can also conclude that
\begin{align*}
&\max_{\lambda \in \mathcal{B}_\varepsilon} \frac{1}{T} \sum_{t=1}^{T} \mathbb{E}_{(\theta_1, \theta_2) \sim p_t, x_t \sim \lambda} \left[ \frac{\|\theta_1 - \hat{\theta}_t^{\mathrm{r}}\|_{x_t x_t^\top}^2}{\sigma^2} + \frac{\|\theta_2 - \hat{\theta}_t^{\mathrm{c}}\|_{x_t x_t^\top}^2}{\gamma^2} \right] \\
&\qquad- \frac{1}{T} \sum_{t=1}^{T} \mathbb{E}_{(\theta_1, \theta_2) \sim p_t, x_t \sim \tilde{\lambda}_t} \left[ \frac{\|\theta_1 - \hat{\theta}_t^{\mathrm{r}}\|_{x_t x_t^\top}^2}{\sigma^2} + \frac{\|\theta_2 - \hat{\theta}_t^{\mathrm{c}}\|_{x_t x_t^\top}^2}{\gamma^2} \right]
\le o(1)
\end{align*}

Hence with probability $1-\frac{12}{T}$,
\begin{align*}
    F_2\le \frac{1}{\sqrt{T}}\left(\frac{(LR_1+B_1)^2}{\sigma^2}+\frac{(LR_2+B_2)^2}{\gamma^2}\right)+o(1)
    =o(1)~.
\end{align*}

\end{proof}

\begin{lemma}
\label{lem:Regret_F21}
With probability $1-\frac{2}{T}$,
    \begin{align*}
        F_{2,1}\le o(1)~.
    \end{align*}
\end{lemma}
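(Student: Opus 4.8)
The plan is to recognize $F_{2,1}$ as a normalized sum of bounded martingale differences and then invoke the Azuma--Hoeffding inequality. For a fixed weight vector $\nu \in \Delta_{\mathcal{X}}$, write
\begin{align*}
g_t^{\nu}(\theta_1,\theta_2) := \EE_{x \sim \nu}\sbr{\frac{\lVert \theta_1 - \hthetart\rVert_{xx^\top}^2}{\sigma^2} + \frac{\lVert \theta_2 - \hthetact\rVert_{xx^\top}^2}{\gamma^2}},
\end{align*}
so that, denoting by $\EE_{p_t}[\cdot]$ the expectation over $(\theta_1,\theta_2)\sim p_t$,
\begin{align*}
F_{2,1} = \frac{1}{T}\sum_{t=1}^T D_t, \qquad D_t := \Big(\EE_{p_t}[g_t^{\lambda}] - g_t^{\lambda}(\theta_t^{\mathrm{r}},\theta_t^{\mathrm{c}})\Big) - \Big(\EE_{p_t}[g_t^{\tilde\lambda_t}] - g_t^{\tilde\lambda_t}(\theta_t^{\mathrm{r}},\theta_t^{\mathrm{c}})\Big).
\end{align*}
The first step is to verify the martingale property $\EE[D_t \mid \mathcal{H}_{t-1}] = 0$. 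The empirical means $\hthetart,\hthetact$, the truncated posterior $p_t$, the fixed $\lambda$, and the oracle weights $\tilde\lambda_t$ are all $\mathcal{H}_{t-1}$-measurable, since each is computed from data gathered up to round $t-1$, whereas $(\theta_t^{\mathrm{r}},\theta_t^{\mathrm{c}})$ is drawn from $p_t$ conditioned on $\mathcal{H}_{t-1}$. Hence $\EE[g_t^{\nu}(\theta_t^{\mathrm{r}},\theta_t^{\mathrm{c}}) \mid \mathcal{H}_{t-1}] = \EE_{p_t}[g_t^{\nu}]$ for both $\nu=\lambda$ and $\nu=\tilde\lambda_t$, so both centered terms vanish in conditional expectation.

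The second step bounds each increment. Under the good event $E_{2,\delta}$ we have $\max_{x\in\mathcal{X}}\lvert\langle x,\hthetart\rangle\rvert \le B_1$ and $\max_{x\in\mathcal{X}}\lvert\langle x,\hthetact\rangle\rvert\le B_2$, while Assumptions~\ref{ass1}--\ref{ass2} give $\lvert\langle x,\theta_1\rangle\rvert\le LR_1$ and $\lvert\langle x,\theta_2\rangle\rvert\le LR_2$ for any $(\theta_1,\theta_2)$ in the bounded support of the posterior. Thus $\lVert\theta_1-\hthetart\rVert_{xx^\top}^2 = \langle x,\theta_1-\hthetart\rangle^2 \le (LR_1+B_1)^2$, and likewise for the cost term, so $g_t^{\nu}\le C:=\frac{(LR_1+B_1)^2}{\sigma^2}+\frac{(LR_2+B_2)^2}{\gamma^2}$ uniformly in $\nu$ and in the sampled parameters. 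Since $D_t$ is a difference of four quantities each lying in $[0,C]$, this yields $\lvert D_t\rvert\le 2C$ almost surely.

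The third step applies the Azuma--Hoeffding inequality to $\{D_t\}$, giving $\PP\big(\lvert\sum_{t=1}^T D_t\rvert \ge s\big) \le 2\exp\big(-\tfrac{s^2}{8 T C^2}\big)$. Choosing $s = 2C\sqrt{2T\log T}$ equates the right-hand side to $\tfrac{2}{T}$, so with probability at least $1-\tfrac{2}{T}$,
\begin{align*}
F_{2,1} = \frac{1}{T}\sum_{t=1}^T D_t \le \frac{2C\sqrt{2\log T}}{\sqrt{T}}.
\end{align*}
Finally, with $\delta=\tfrac{1}{T}$ the concentration radii satisfy $\beta_i(T,T^2)=O(\log T)$, hence $B_i=O(\sqrt{\log T})$ and $C=O(\log T)$, so the bound is $O\big((\log T)^{3/2}/\sqrt{T}\big)=o(1)$. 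The main obstacle is the bookkeeping in the first step: confirming that every weight and estimator entering $D_t$---in particular the oracle allocation $\tilde\lambda_t$ and the truncated posterior $p_t$---is $\mathcal{H}_{t-1}$-measurable and thus independent of the time-$t$ sample $(\theta_t^{\mathrm{r}},\theta_t^{\mathrm{c}})$, so that the centering is exact and $\{D_t\}$ is genuinely a martingale-difference sequence for the natural filtration.
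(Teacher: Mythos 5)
Your proof is correct and follows essentially the same route as the paper's: write $F_{2,1}$ as a normalized sum of martingale differences (with $p_t$, $\tilde\lambda_t$, and the estimators all $\mathcal{H}_{t-1}$-measurable), bound each increment by a constant of order $\frac{(LR_1+B_1)^2}{\sigma^2}+\frac{(LR_2+B_2)^2}{\gamma^2}$ under the good event $E_{2,\delta}$, and apply Azuma--Hoeffding. Your increment bound $2C$ is slightly tighter than the paper's $4C$, and your explicit check that $C=O(\log T)$ so the final bound is $O((\log T)^{3/2}/\sqrt{T})=o(1)$ is a welcome detail the paper leaves implicit, but these are cosmetic differences only.
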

\begin{proof}
    Denote $\mathcal{F}_t$ as the history up to $t$ as in the algorithm. Since 
\begin{align*}
    F_{2,1}:&=\frac{1}{T} 
\sum_{t=1}^{T} \Bigg( \mathbb{E}_{(\theta_1, \theta_2) \sim p_t, x_t\sim \lambda} 
\left[ \frac{\|\theta_1 - \hthetart\|_{x_t x_t^\top}^2}{\sigma^2} 
+ \frac{\|\theta_2 - \hthetact\|_{x_t x_t^\top}^2}{\gamma^2}  \right] \\
&\qquad- \mathbb{E}_{(\theta_1, \theta_2) \sim p_t, x_t\sim \tilde{\lambda}_t} 
\left[ \frac{\|\theta_1 - \hthetart\|_{x_t x_t^\top}^2}{\sigma^2} 
+ \frac{\|\theta_2 - \hthetact\|_{x_t x_t^\top}^2}{\gamma^2}  \right] \\
 &\qquad-\mathbb{E}_{ x_t\sim \lambda} 
\left[ \frac{\|\theta_t^\mathrm{r} - \hthetart\|_{x_t x_t^\top}^2}{\sigma^2} 
 + \frac{\|\theta_t^\mathrm{c} - \hthetact\|_{x_t x_t^\top}^2}{\gamma^2}  \right]  \\
&\qquad+ \mathbb{E}_{ x_t\sim \tilde{\lambda}_t} 
\left[\frac{\|\theta_t^\mathrm{r} - \hthetart\|_{x_t x_t^\top}^2}{\sigma^2} 
+ \frac{\|\theta_t^\mathrm{c} - \hthetact\|_{x_t x_t^\top}^2}{\gamma^2} \right] \Bigg)  =:\frac{1}{T} 
\sum_{t=1}^{T}  Y_t
\end{align*}
One can see that $\EE[Y_t \mid \mathcal{F}_{t-1}]=0$ is a martingale difference since $p_t$ is determined under $\mathcal{F}_{t-1}$. 

Furthermore, under the good event $E_{2,\delta}$, $$\|\theta_t^\mathrm{r} - \hthetart\|_{x_t x_t^\top}^2\le (B_1+LR_1)^2, \qquad\|\theta_t^\mathrm{c} - \hthetact\|_{x_t x_t^\top}^2\le (B_2+LR_2)^2~,$$ then 
\begin{align*}
    \lvert Y_t\rvert\le 4 \left(\frac{(B_1+LR_1)^2}{\sigma^2}+\frac{(B_2+LR_2)^2}{\gamma^2} \right)
\end{align*}
Then by Azuma--Hoeffding's inequality, with probabaility at least $1-\frac{2}{T}$, 
\begin{align*}
    F_{2,1} &\le \sqrt{\frac{8(\frac{(B_1+LR_1)^2}{\sigma^2}+\frac{(B_2+LR_2)^2}{\gamma^2}) \log(T)}{T}}
     \le o(1)~.
\end{align*}

\end{proof}

\begin{lemma}
\label{lem:Regret_F22}
With probability $1-\frac{2}{T}$,
    \begin{align*}
        F_{2,2}\le o(1)~.
    \end{align*}
\end{lemma}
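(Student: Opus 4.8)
The plan is to recognize $F_{2,2}$ as a normalization of the AdaHedge regret of the iterate $\lambda_t$ against the fixed comparator $\lambda$, and then apply the scale-free AdaHedge regret guarantee. Write $\ell_t(x):=\frac{\lVert \theta_t^{\mathrm{r}}-\hthetart\rVert^2_{xx^\top}}{\sigma^2}+\frac{\lVert \theta_t^{\mathrm{c}}-\hthetact\rVert^2_{xx^\top}}{\gamma^2}$ for the per-arm quantity appearing in $F_{2,2}$; by Line~18 this is exactly $\ell_t(x)=-l_{t,x}$, the negative of the loss vector fed to AdaHedge, and $\lambda_t$ is the AdaHedge iterate updated in Line~21 (the G-optimal mixing that produces the sampling distribution $\tilde\lambda_t$ is isolated in $F_{2,3}$, and the replacement of the posterior expectation by the realized sample is isolated in $F_{2,1}$). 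Hence
\[
T\cdot F_{2,2}=\sum_{t=1}^T\langle \lambda_t,l_t\rangle-\sum_{t=1}^T\langle \lambda,l_t\rangle\le \sum_{t=1}^T\langle \lambda_t,l_t\rangle-\min_{\lambda'\in\Delta_{\mathcal{X}}}\sum_{t=1}^T\langle \lambda',l_t\rangle=:R_T,
\]
where $R_T$ is the cumulative Hedge regret of AdaHedge against the best fixed distribution. This identification is deterministic given the loss sequence.

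First I would control the loss range on the good event $E_{2,\frac{1}{T}}$. The posterior draws in Line~13 are conditioned on the bounded set $\Theta^{\mathrm{r}}\times\Theta^{\mathrm{c}}$, so by Assumption~\ref{ass1} we have $\lVert\theta_t^{\mathrm{r}}\rVert\le R_1$ and $\lVert\theta_t^{\mathrm{c}}\rVert\le R_2$, giving $|\langle x,\theta_t^{\mathrm{r}}\rangle|\le LR_1$ and $|\langle x,\theta_t^{\mathrm{c}}\rangle|\le LR_2$; combined with $|\langle x,\hthetart\rangle|\le B_1$ and $|\langle x,\hthetact\rangle|\le B_2$ from $E_{2,\frac{1}{T}}$, every coordinate of $\ell_t$ lies in $[0,L_{\max}]$ with $L_{\max}:=\frac{(LR_1+B_1)^2}{\sigma^2}+\frac{(LR_2+B_2)^2}{\gamma^2}$. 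Since with $\delta=\frac{1}{T}$ we have $\beta_1(T,\frac{1}{\delta^2}),\beta_2(T,\frac{1}{\delta^2})=O(\log T)$, it follows that $B_1,B_2=O(\sqrt{\log T})$ and thus $L_{\max}=O(\log T)$, i.e.\ the loss range is only polylogarithmic in $T$. The event $E_{1,\frac{1}{T}}\subseteq E_{2,\frac{1}{T}}$ holds with probability at least $1-\frac{2}{T}$, which is the sole source of the stated failure probability; conditioned on it everything below is deterministic.

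Next I would invoke the scale-free AdaHedge guarantee, which for $K:=|\mathcal{X}|$ experts and losses of range $L_{\max}$ gives $R_T\le 2\sqrt{\Xi_T\ln K}+O(L_{\max}\ln K)$, where the cumulative variance satisfies $\Xi_T\le L_{\max}^2 T$ because each per-round variance is at most $L_{\max}^2$. Therefore
\[
F_{2,2}\le \frac{R_T}{T}\le \frac{L_{\max}\sqrt{T\ln K}+O(L_{\max}\ln K)}{T}=O\!\left(\frac{(\log T)\sqrt{\ln K}}{\sqrt{T}}\right)=o(1),
\]
which is the claim. The main obstacle is conceptual rather than computational: one must pin down that $\lambda_t$ is the AdaHedge iterate receiving the full loss vector $l_t$ (so the comparator inequality is legitimate) while the exploration mixing and the posterior-to-realized replacement are cleanly peeled off into $F_{2,3}$ and $F_{2,1}$, and one must verify that $L_{\max}$ grows only polylogarithmically so that the $\sqrt{T}$ in the denominator of the normalized regret forces $F_{2,2}\to 0$. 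All probabilistic content reduces to the good event $E_{2,\frac{1}{T}}$ guaranteeing bounded losses.
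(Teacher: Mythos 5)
Your proposal is correct and follows essentially the same route as the paper: identify $F_{2,2}$ with the (normalized) AdaHedge regret against the best fixed comparator, invoke the de~Rooij et al.\ AdaHedge bound $R_T \le 2r_T\sqrt{T\log|\mathcal{X}|}+O(r_T\log|\mathcal{X}|)$, and bound the loss range $r_T \le \frac{(LR_1+B_1)^2}{\sigma^2}+\frac{(LR_2+B_2)^2}{\gamma^2}=O(\log T)$ on $E_{1,1/T}\subseteq E_{2,1/T}$, which supplies the $1-\frac{2}{T}$ probability. Your additional bookkeeping (the sign convention relating $\ell_t$ to $-l_{t,x}$ and the observation that the exploration mixing and posterior-to-sample replacement are isolated in $F_{2,3}$ and $F_{2,1}$) is a faithful elaboration of the same argument rather than a different approach.
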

\begin{proof}
    Apply the upper bound of the AdaHedge as in \cite{de2014follow}, we have
    \begin{align*}
        &\max_{\lambda\in\Delta_{\mathcal{X}}} 
\sum_{t=1}^{T} \mathbb{E}_{ x_t\sim \lambda} 
\left[ \frac{\|\theta_t^\mathrm{r} - \hthetart\|_{x_t x_t^\top}^2}{\sigma^2} 
+ \frac{\|\theta_t^\mathrm{c} - \hthetact\|_{x_t x_t^\top}^2}{\gamma^2}  \right] 
- \sum_{t=1}^{T} \mathbb{E}_{ x_t\sim \lambda_t} 
\left[ \frac{\|\theta_t^\mathrm{r} - \hthetart\|_{x_t x_t^\top}^2}{\sigma^2} 
+ \frac{\|\theta_t^\mathrm{c} - \hthetact\|_{x_t x_t^\top}^2}{\gamma^2}\right]
\\&\le 2 r_T\sqrt{T\log |\mathcal{X}|}+r_T\Big(\frac{16}{3}\log |\mathcal{X}|+2\Big)
    \end{align*}
    where $r_T:=\max_{t\in [T]} \max_{x\in\mathcal{X}}\frac{\|\theta_t^\mathrm{r} - \hthetart\|_{x_t x_t^\top}^2}{\sigma^2} 
+ \frac{\|\theta_t^\mathrm{c} - \hthetact\|_{x_t x_t^\top}^2}{\gamma^2}~.$

Then with the choice $\delta=\frac{1}{T}$, with probability $1-\frac{2}{T}$, good events $E_{1,\delta}, E_{2,\delta}$ hold for any $t\in [T]$, hence by Lemma~\ref{lem:Good_Event_1} and Lemma~\ref{lem:Good_Event_2}, 
\begin{align*}
    r_T\le \frac{(LR_1+B_1)^2}{\sigma^2}+\frac{(LR_2+B_2)^2}{\gamma^2}=O(\log T)
\end{align*}

Hence
\begin{align*}
    F_{2,2}&\le \frac{1}{T}\max_{\lambda\in\Delta_{\mathcal{X}}} 
\sum_{t=1}^{T} \mathbb{E}_{ x_t\sim \lambda} 
\left[ \frac{\|\theta_t^\mathrm{r} - \hthetart\|_{x_t x_t^\top}^2}{\sigma^2} 
+ \frac{\|\theta_t^\mathrm{c} - \hthetact\|_{x_t x_t^\top}^2}{\gamma^2}  \right] 
- \frac{1}{T}\sum_{t=1}^{T} \mathbb{E}_{ x_t\sim \lambda_t} 
\left[ \frac{\|\theta_t^\mathrm{r} - \hthetart\|_{x_t x_t^\top}^2}{\sigma^2} 
+ \frac{\|\theta_t^\mathrm{c} - \hthetact\|_{x_t x_t^\top}^2}{\gamma^2}\right]
\\&\le O\left(  \frac{2 \log T\sqrt{T\log |\mathcal{X}|}+\log T(\frac{16}{3}\log |\mathcal{X}|+2)}{T}\right)
 \le o(1)~.
\end{align*}

\end{proof}

\begin{lemma}
\label{lem:Regret_F23}
With probability $1-\frac{2}{T}$,
    \begin{align*}
        F_{2,3}\le o(1)~.
    \end{align*}
\end{lemma}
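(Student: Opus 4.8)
The plan is to treat $F_{2,3}$ as a purely deterministic estimate on the good events, exploiting the fact that $\tilde{\lambda}_t$ and $\lambda_t$ differ only through the forced G-optimal exploration, whose weight $\gamma_t=t^{-1/4}$ vanishes on average while the per-arm losses remain bounded. First I would record the exact relation between the two distributions: by the mixed sampling rule in Line~14 of Algorithm~\ref{alg:1}, the distribution actually used to draw $X_t$ satisfies $\tilde{\lambda}_t=(1-\gamma_t)\lambda_t+\gamma_t\lambda^G$, so that $\lambda_t-\tilde{\lambda}_t=\gamma_t(\lambda_t-\lambda^G)$ and hence $\lVert\lambda_t-\tilde{\lambda}_t\rVert_1=\gamma_t\lVert\lambda_t-\lambda^G\rVert_1\le 2\gamma_t$, since both $\lambda_t$ and $\lambda^G$ lie in the simplex $\Delta_{\mathcal{X}}$.

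Next I would exploit linearity. Each summand of $F_{2,3}$ is the expectation over $x_t\sim\lambda$ of the per-arm quantity $g_t(x):=\frac{\lVert\theta_t^\mathrm{r}-\hthetart\rVert^2_{xx^\top}}{\sigma^2}+\frac{\lVert\theta_t^\mathrm{c}-\hthetact\rVert^2_{xx^\top}}{\gamma^2}$, so the map $\lambda\mapsto\mathbb{E}_{x\sim\lambda}[g_t(x)]=\langle\lambda,g_t\rangle$ is linear and $F_{2,3}=\frac{1}{T}\sum_{t=1}^{T}\langle\lambda_t-\tilde{\lambda}_t,g_t\rangle$. By Hölder's inequality, $\lvert F_{2,3}\rvert\le\frac{1}{T}\sum_{t=1}^{T}\lVert\lambda_t-\tilde{\lambda}_t\rVert_1\max_{x\in\mathcal{X}}g_t(x)$. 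On the good event $E_{2,\frac{1}{T}}$ (which contains $E_{1,\frac{1}{T}}$), exactly as in the proof of Lemma~\ref{lem:Regret_F21}, each loss is uniformly bounded, $\max_{x\in\mathcal{X}}g_t(x)\le C_T:=\frac{(LR_1+B_1)^2}{\sigma^2}+\frac{(LR_2+B_2)^2}{\gamma^2}$, and with the choice $\delta=\frac{1}{T}$ one has $C_T=O(\log T)$ because $\beta_1(T,T^2),\beta_2(T,T^2)=O(\log T)$.

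Combining the two bounds with $\gamma_t=t^{-1/4}$ and the elementary estimate $\sum_{t=1}^{T}t^{-1/4}\le\int_0^T s^{-1/4}\,\mathrm{d}s=\tfrac{4}{3}T^{3/4}$ for the decreasing summand then gives
\[
\lvert F_{2,3}\rvert\le\frac{2C_T}{T}\sum_{t=1}^{T}t^{-1/4}\le\frac{2C_T}{T}\cdot\frac{4}{3}T^{3/4}=\frac{8C_T}{3}\,T^{-1/4}=O\!\left(T^{-1/4}\log T\right)=o(1).
\]
The probability $1-\frac{2}{T}$ in the statement is precisely the probability that $E_{1,\frac{1}{T}}$ (and therefore $E_{2,\frac{1}{T}}$) holds, by Lemma~\ref{lem:Good_Event_1}; no additional concentration step is needed here, since once we condition on the good events the bound is deterministic.

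The step I would be most careful about is justifying the uniform loss bound $C_T$: it requires that the posterior-sampled parameters $(\theta_t^\mathrm{r},\theta_t^\mathrm{c})$ remain in the bounded set $\overline{\Theta}_{\hat{z}_t}\subset\Theta^\mathrm{r}\times\Theta^\mathrm{c}$, so that $\lvert\langle x,\theta_t^\mathrm{r}\rangle\rvert\le LR_1$ and $\lvert\langle x,\theta_t^\mathrm{c}\rangle\rvert\le LR_2$; combined with the empirical bounds $\lvert\langle x,\hthetart\rangle\rvert\le B_1$, $\lvert\langle x,\hthetact\rangle\rvert\le B_2$ of event $E_{2,\frac{1}{T}}$, this yields $\lVert\theta_t^\mathrm{r}-\hthetart\rVert_{xx^\top}\le LR_1+B_1$ and the analogous cost bound. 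Everything else is a routine $\ell_1$/linearity estimate, which is why this term, unlike $F_{2,1}$, needs no martingale argument.
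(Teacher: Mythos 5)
Your proposal is correct and follows essentially the same route as the paper: both arguments substitute $\tilde{\lambda}_t=(1-\gamma_t)\lambda_t+\gamma_t\lambda^G$ to reduce $F_{2,3}$ to a sum weighted by $\gamma_t=t^{-1/4}$, bound the per-arm losses by $\frac{(LR_1+B_1)^2}{\sigma^2}+\frac{(LR_2+B_2)^2}{\gamma^2}$ on the good event $E_{2,\frac{1}{T}}$, and conclude with $\sum_{t\le T}t^{-1/4}=O(T^{3/4})$, yielding the same $O(T^{-1/4}\log T)$ rate. Your $\ell_1$/H\"older phrasing is a cosmetic repackaging of the paper's direct computation, not a different method.
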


\begin{proof}
Consider,
    \begin{align*}
        F_{2,3}:=\frac{1}{T} 
\sum_{t=1}^{T} \mathbb{E}_{ x_t\sim \lambda_t} 
\left[ \frac{\|\theta_t^\mathrm{r} - \hthetart\|_{x_t x_t^\top}^2}{\sigma^2} 
+ \frac{\|\theta_t^\mathrm{c} - \hthetact\|_{x_t x_t^\top}^2}{\gamma^2} \right]
-
\frac{1}{T} \sum_{t=1}^{T} \mathbb{E}_{ x_t\sim \tilde{\lambda}_t} 
\left[ \frac{\|\theta_t^\mathrm{r} - \hthetart\|_{x_t x_t^\top}^2}{\sigma^2} 
+ \frac{\|\theta_t^\mathrm{c} - \hthetact\|_{x_t x_t^\top}^2}{\gamma^2} \right]
    \end{align*}
By the definition of $\tilde{\lambda}_t:=(1-\gamma_t)\lambda_t+\gamma_t\lambda^{G}$, we have
\begin{align*}
    F_{2,3}&=\frac{1}{T} \sum_{t=1}^T \gamma_t\left( \mathbb{E}_{ x_t\sim \lambda_t} 
\left[ \frac{\|\theta_t^\mathrm{r} - \hthetart\|_{x_t x_t^\top}^2}{\sigma^2} 
+ \frac{\|\theta_t^\mathrm{c} - \hthetact\|_{x_t x_t^\top}^2}{\gamma^2}  \right]-\mathbb{E}_{ x_t\sim \lambda^G} 
\left[ \frac{\|\theta_t^\mathrm{r} - \hthetart\|_{x_t x_t^\top}^2}{\sigma^2} 
+ \frac{\|\theta_t^\mathrm{c} - \hthetact\|_{x_t x_t^\top}^2}{\gamma^2} \right]\right).
\end{align*}

Furthermore, under good event $E_{2,\delta}$, 
$$\|\theta_t^\mathrm{r} - \hthetart\|_{x_t x_t^\top}^2\le (B_1+LR_1)^2, \|\theta_t^\mathrm{c} - \hthetact\|_{x_t x_t^\top}^2\le (B_2+LR_2)^2~,$$
Hence with $\gamma_t= t^{-\frac{1}{4}}$, $\delta:=\frac{1}{T}$,
\begin{align*}
    F_{2,3}&\le \frac{2}{T} \left(\frac{(LR_1+B_1)^2}{\sigma^2}+\frac{(LR_2+B_2)^2}{\gamma^2}\right)\sum_{t=1}^T t^{-\frac{1}{4}}
    \\&\le \frac{8}{3T} \left(\frac{(LR_1+B_1)^2}{\sigma^2}+\frac{(LR_2+B_2)^2}{\gamma^2}\right) T^{\frac{3}{4}}
    \\&\le o(1)~.
\end{align*}
\end{proof}

\begin{lemma}
With probability $1 - \frac{3}{T}$,
\label{lem:Regret_F3}
\begin{align*}
    F_3 \le o(1).
\end{align*}
\end{lemma}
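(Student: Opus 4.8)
The plan is to exploit that $F_3$ compares the loss under the min-learner's sampling posterior $p_t$, conditioned on $\overline{\Theta}_{\hat{z}_t}$, against the loss under the idealized posterior $\tilde{p}_t$, conditioned on $\overline{\Theta}_{z^*}$, and that these two posteriors coincide once the empirically best feasible arm locks onto $z^*$. Concretely, I set $T^\star:=\max\cbr{T_1(\tfrac{1}{T})+1,\, T_2(\tfrac{1}{T})}$ and split the time-average defining $F_3$ at $T^\star$. Conditioning on $E_{1,1/T}\cap E_{3,1/T}$, Lemma~\ref{lem:GoodEvent_4} yields $E_{4,1/T}$, so $\hat{z}_t=z^*$ for every $t>T^\star$; for these indices $\overline{\Theta}_{\hat{z}_t}=\overline{\Theta}_{z^*}$, hence $p_t=\tilde{p}_t$ and the summand vanishes identically. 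This collapses $F_3$ to a head sum over the first $T^\star$ rounds.

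Next I would bound each surviving summand uniformly in $t$. Because $\overline{\Theta}_z\subseteq\Theta^{\mathrm{r}}\times\Theta^{\mathrm{c}}$, both conditioned posteriors are supported on the bounded set of Assumption~\ref{ass1}, so conditioning cannot inflate the relevant moments. On $E_{2,1/T}$ (which follows from $E_{1,1/T}$ via Lemma~\ref{lem:Good_Event_2}), and using $A(\tilde{\lambda}_t)=\sum_{x\in\mathcal{X}}\tilde{\lambda}_{t,x}xx^\top\preceq L^2 I$, the per-arm losses satisfy $\lVert\theta_1-\hthetart\rVert^2_{A(\tilde{\lambda}_t)}\le (LR_1+B_1)^2$ and $\lVert\theta_2-\hthetact\rVert^2_{A(\tilde{\lambda}_t)}\le (LR_2+B_2)^2$, exactly the bounds already used for $F_{2,1}$. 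Hence each expectation, and therefore each difference of expectations, is at most $\tfrac{(LR_1+B_1)^2}{\sigma^2}+\tfrac{(LR_2+B_2)^2}{\gamma^2}$ in absolute value, uniformly over $t\le T^\star$.

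Combining the two pieces gives
\[
\lvert F_3\rvert \le \frac{T^\star}{T}\Big(\frac{(LR_1+B_1)^2}{\sigma^2}+\frac{(LR_2+B_2)^2}{\gamma^2}\Big).
\]
With $\delta=\tfrac1T$ one has $B_1,B_2=O(\sqrt{\log T})$ and $T^\star=O((\log T)^2)$ (since $T_1(\tfrac1T)=O((\log T)^2)$ and $T_2(\tfrac1T)=O((\log T)^{4/3})$), so the right-hand side is $O((\log T)^3/T)=o(1)$; collecting the failure probabilities of $E_{1,1/T}$ and $E_{3,1/T}$ by a union bound yields the stated $1-\tfrac{3}{T}$. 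The main obstacle is conceptual rather than computational: for $t\le T^\star$ the two posteriors genuinely differ, so the summand cannot be shown small termwise, and the argument must instead rest on (i) uniform boundedness of the losses—precisely where the compactness of $\Theta^{\mathrm{r}},\Theta^{\mathrm{c}}$ in Assumption~\ref{ass1} is indispensable—and (ii) the poly-logarithmic growth of the burn-in time $T^\star$, so that dividing the bounded head sum by $T$ suffices. Verifying that $T^\star=o(T)$ and that $E_{4,1/T}$ indeed forces $p_t=\tilde{p}_t$ on the tail are the two facts that make the averaging argument go through.
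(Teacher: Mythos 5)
Your proposal is correct and follows essentially the same route as the paper's proof: split the sum at the burn-in time $\max\{T_1(\delta)+1,T_2(\delta)\}$, use $E_{4,\delta}$ to conclude $p_t=\tilde p_t$ (so the tail vanishes), bound the head terms via the boundedness guaranteed by $E_{2,\delta}$, and divide by $T$. Your explicit poly-logarithmic accounting of $T^\star$ and of $B_1,B_2$ is a slightly more careful justification of the final $o(1)$ than the paper gives, but the argument is the same.
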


\begin{proof}
    When the good events \( E_{2, \delta} \) and \( E_{4, \delta} \) both hold, by the definition of \( \tilde{p}_t \), for \( t > \max \{T_1(\delta) + 1, T_2(\delta)\} \), we have \( \hat{z}_t = z^* \), and thus \( \tilde{p}_t = p_t \).
    
    Hence, with probability \( 1 - 3\delta \) and \( \delta = \frac{1}{T} \), we can write:
    \begin{align*}
        F_3 &:= \frac{1}{T} \sum_{t=1}^{T} \mathbb{E}_{(\theta_1, \theta_2) \sim p_t} 
        \left[  \frac{\|\theta_1 - \hat{\theta}_t^{\mathrm{r}}\|_{A(\tilde{\lambda}_t)}^2}{\sigma^2} + \frac{\|\theta_2 - \hat{\theta}_t^{\mathrm{c}}\|_{A(\tilde{\lambda}_t)}^2}{\gamma^2}  \right] \\
        &\quad - \frac{1}{T} \sum_{t=1}^{T} \mathbb{E}_{(\theta_1, \theta_2) \sim \tilde{p}_t} 
        \left[  \frac{\|\theta_1 - \hat{\theta}_t^{\mathrm{r}}\|_{A(\tilde{\lambda}_t)}^2}{\sigma^2} + \frac{\|\theta_2 - \hat{\theta}_t^{\mathrm{c}}\|_{A(\tilde{\lambda}_t)}^2}{\gamma^2}  \right] \\
        &= \frac{1}{T} \sum_{t=1}^{\max \{T_1(\delta) + 1, T_2(\delta)\}} \mathbb{E}_{(\theta_1, \theta_2) \sim p_t} 
        \left[  \frac{\|\theta_1 - \hat{\theta}_t^{\mathrm{r}}\|_{A(\tilde{\lambda}_t)}^2}{\sigma^2} + \frac{\|\theta_2 - \hat{\theta}_t^{\mathrm{c}}\|_{A(\tilde{\lambda}_t)}^2}{\gamma^2} \right] \\
        &\quad - \frac{1}{T} \sum_{t=1}^{\max \{T_1(\delta) + 1, T_2(\delta)\}} \mathbb{E}_{(\theta_1, \theta_2) \sim \tilde{p}_t} 
        \left[  \frac{\|\theta_1 - \hat{\theta}_t^{\mathrm{r}}\|_{A(\tilde{\lambda}_t)}^2}{\sigma^2} + \frac{\|\theta_2 - \hat{\theta}_t^{\mathrm{c}}\|_{A(\tilde{\lambda}_t)}^2}{\gamma^2}  \right] \\
        &\le \frac{2 \max \{T_1(\delta) + 1, T_2(\delta)\} \left( \frac{(LR_1 + B_1)^2}{\sigma^2} + \frac{(LR_2 + B_2)^2}{\gamma^2} \right)}{T} \\
        &= o(1).
    \end{align*}
\end{proof}

\begin{lemma}
With probability \( 1 - \frac{2}{T} \),
\label{lem:Regret_F4}
\begin{align*}
    F_4 \le o(1).
\end{align*}
\end{lemma}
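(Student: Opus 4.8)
The plan is to recognize $F_4$ as a normalized sum of bounded martingale differences and to invoke the Azuma--Hoeffding inequality, exactly in the spirit of the argument for $F_{2,1}$ in Lemma~\ref{lem:Regret_F21}. The key structural observation is that for every $t$ the matrix $A(\tilde{\lambda}_t)=\sum_{x\in\mathcal{X}}\tilde{\lambda}_{t,x}\,xx^\top$ equals the conditional mean $\EE[X_tX_t^\top\mid\mathcal{F}_{t-1}]$, since $X_t\sim\tilde{\lambda}_t$ and $\tilde{\lambda}_t$ is $\mathcal{F}_{t-1}$-measurable. Because the map $M\mapsto\lVert\theta_1-\hthetart\rVert^2_{M}$ is linear in the Gram matrix $M$, and because both $\tilde{p}_t$ and $\tilde{\lambda}_t$ are determined by the history $\mathcal{F}_{t-1}$, the first bracketed term in $F_4$ is precisely the conditional expectation of the second given $\mathcal{F}_{t-1}$.

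Concretely, I would set
\begin{align*}
Y_t &:= \EE_{(\theta_1,\theta_2)\sim\tilde{p}_t}\!\left[\frac{\lVert\theta_1-\hthetart\rVert^2_{A(\tilde{\lambda}_t)}}{\sigma^2}+\frac{\lVert\theta_2-\hthetact\rVert^2_{A(\tilde{\lambda}_t)}}{\gamma^2}\right]\\
&\qquad-\EE_{(\theta_1,\theta_2)\sim\tilde{p}_t}\!\left[\frac{\lVert\theta_1-\hthetart\rVert^2_{X_tX_t^\top}}{\sigma^2}+\frac{\lVert\theta_2-\hthetact\rVert^2_{X_tX_t^\top}}{\gamma^2}\right],
\end{align*}
so that $F_4=\frac{1}{T}\sum_{t=1}^T Y_t$. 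Taking the conditional expectation over the only fresh randomness $X_t$ and using $\EE[X_tX_t^\top\mid\mathcal{F}_{t-1}]=A(\tilde{\lambda}_t)$ gives $\EE[Y_t\mid\mathcal{F}_{t-1}]=0$, so $\{Y_t\}$ is a martingale difference sequence.

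Next I would bound the increments. Under the good event $E_{2,\delta}$ we have $\max_{x}\lvert\langle\hthetart,x\rangle\rvert\le B_1$ and $\max_{x}\lvert\langle\hthetact,x\rangle\rvert\le B_2$, while Assumptions~\ref{ass1} and~\ref{ass2} give $\lvert\langle\theta_1,x\rangle\rvert\le LR_1$ and $\lvert\langle\theta_2,x\rangle\rvert\le LR_2$ for $(\theta_1,\theta_2)\in\overline{\Theta}_{z^*}$ and $x\in\mathcal{X}$. Hence $\lVert\theta_1-\hthetart\rVert^2_{xx^\top}\le(LR_1+B_1)^2$ and $\lVert\theta_2-\hthetact\rVert^2_{xx^\top}\le(LR_2+B_2)^2$ for every $x$, so each bracketed term lies in $[0,C]$ with $C:=\frac{(LR_1+B_1)^2}{\sigma^2}+\frac{(LR_2+B_2)^2}{\gamma^2}$, yielding $\lvert Y_t\rvert\le C$.

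Finally, Azuma--Hoeffding applied to $\{Y_t\}$, together with the choice $\delta=\frac{1}{T}$, gives $F_4\le C\sqrt{\frac{2\log T}{T}}$ with probability at least $1-\frac{2}{T}$. Since $\beta_1(T,\frac{1}{\delta^2}),\beta_2(T,\frac{1}{\delta^2})=O(\log T)$ under this choice of $\delta$, we have $B_1,B_2=O(\sqrt{\log T})$ and hence $C=O(\log T)$, so that $F_4\le O\!\big((\log T)^{3/2}/\sqrt{T}\big)=o(1)$. The only point requiring care---the main (if mild) obstacle---is the martingale set-up: one must confirm that $\tilde{p}_t$, $\tilde{\lambda}_t$, and $\hat{z}_t$ are all $\mathcal{F}_{t-1}$-measurable, so that the inner $\theta$-integral passes through the conditional expectation unchanged and the linearity of $\lVert\cdot\rVert^2_{M}$ in $M$ produces the exact cancellation $\EE[Y_t\mid\mathcal{F}_{t-1}]=0$; everything else parallels the proof of Lemma~\ref{lem:Regret_F21}.
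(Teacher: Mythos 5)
Your proposal is correct and follows essentially the same route as the paper: identify the summands as a bounded martingale difference sequence via $\EE[X_tX_t^\top\mid\mathcal{F}_{t-1}]=A(\tilde{\lambda}_t)$, bound the increments under $E_{2,\delta}$ by $\tfrac{(LR_1+B_1)^2}{\sigma^2}+\tfrac{(LR_2+B_2)^2}{\gamma^2}$, and apply Azuma--Hoeffding with $\delta=\tfrac{1}{T}$. Your additional remark that the constant grows only as $O(\log T)$, so the bound remains $o(1)$, is a welcome clarification that the paper leaves implicit.
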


\begin{proof}
    We define \( F_4 \) as follows:
    \begin{align*}
        F_4 &:= \frac{1}{T} \sum_{t=1}^{T} \mathbb{E}_{(\theta_1, \theta_2) \sim \tilde{p}_t} 
        \left[ \frac{\|\theta_1 - \hat{\theta}_t^{\mathrm{r}}\|_{A(\tilde{\lambda}_t)}^2}{\sigma^2} + \frac{\|\theta_2 - \hat{\theta}_t^{\mathrm{c}}\|_{A(\tilde{\lambda}_t)}^2}{\gamma^2}  \right] \\*
        &\quad - \frac{1}{T} \sum_{t=1}^{T} \mathbb{E}_{(\theta_1, \theta_2) \sim \tilde{p}_t} 
        \left[ \frac{\|\theta_1 - \hat{\theta}_t^{\mathrm{r}}\|_{x_t x_t^\top}^2}{\sigma^2} + \frac{\|\theta_2 - \hat{\theta}_t^{\mathrm{c}}\|_{x_t x_t^\top}^2}{\gamma^2}  \right].
    \end{align*}

    Since $$ \mathbb{E}_{x_t} \left[ \mathbb{E}_{(\theta_1, \theta_2) \sim \tilde{p}_t} \left[  \frac{\|\theta_1 - \hat{\theta}_t^{\mathrm{r}}\|_{x_t x_t^\top}^2}{\sigma^2} + \frac{\|\theta_2 - \hat{\theta}_t^{\mathrm{c}}\|_{x_t x_t^\top}^2}{\gamma^2}  \right] \,\bigg|\,  \mathcal{F}_{t-1} \right] 
    = \mathbb{E}_{(\theta_1, \theta_2) \sim \tilde{p}_t} \left[ \frac{\|\theta_1 - \hat{\theta}_t^{\mathrm{r}}\|_{A(\tilde{\lambda}_t)}^2}{\sigma^2} + \frac{\|\theta_2 - \hat{\theta}_t^{\mathrm{c}}\|_{A(\tilde{\lambda}_t)}^2}{\gamma^2}  \right], $$  
    we observe that:
    \[
    \mathbb{E}_{(\theta_1, \theta_2) \sim \tilde{p}_t} \left[ \frac{\|\theta_1 - \hat{\theta}_t^{\mathrm{r}}\|_{A(\tilde{\lambda}_t)}^2}{\sigma^2} + \frac{\|\theta_2 - \hat{\theta}_t^{\mathrm{c}}\|_{A(\tilde{\lambda}_t)}^2}{\gamma^2}  \right]
    - \mathbb{E}_{(\theta_1, \theta_2) \sim \tilde{p}_t} \left[ \frac{\|\theta_1 - \hat{\theta}_t^{\mathrm{r}}\|_{x_t x_t^\top}^2}{\sigma^2} + \frac{\|\theta_2 - \hat{\theta}_t^{\mathrm{c}}\|_{x_t x_t^\top}^2}{\gamma^2}  \right]
    \]
    is a martingale difference sequence.

   When the good event \( E_{2, \delta} \) holds, we have:
\[
\left| \mathbb{E}_{(\theta_1, \theta_2) \sim \tilde{p}_t} \left[ \frac{\|\theta_1 - \hat{\theta}_t^{\mathrm{r}}\|_{A(\tilde{\lambda}_t)}^2}{\sigma^2} + \frac{\|\theta_2 - \hat{\theta}_t^{\mathrm{c}}\|_{A(\tilde{\lambda}_t)}^2}{\gamma^2}  \right]
- \mathbb{E}_{(\theta_1, \theta_2) \sim \tilde{p}_t} \left[  \frac{\|\theta_1 - \hat{\theta}_t^{\mathrm{r}}\|_{x_t x_t^\top}^2}{\sigma^2} + \frac{\|\theta_2 - \hat{\theta}_t^{\mathrm{c}}\|_{x_t x_t^\top}^2}{\gamma^2}  \right] \right|
\]
\[
\le 2 \left( \frac{(LR_1 + B_1)^2}{\sigma^2} + \frac{(LR_2 + B_2)^2}{\gamma^2} \right).
\]

Then, with probability at least \( 1 - \frac{2}{T} \) and \( \delta = \frac{1}{T} \), we have:
\begin{align*}
    F_4 &\le \sqrt{\frac{2 \left( \frac{(LR_1 + B_1)^2}{\sigma^2} + \frac{(LR_2 + B_2)^2}{\gamma^2} \right) \log T}{T}}  \le o(1).
\end{align*}

\end{proof}

\begin{lemma}
    \label{lem:Regret_F5}
    With probability $1-\frac{4}{T}$,
    \begin{align*}
        F_5 &:= \frac{1}{T} \sum_{t=1}^{T} \mathbb{E}_{(\theta_1, \theta_2) \sim \tilde{p}_t} 
        \left[  \frac{\|\theta_1 - \hat{\theta}_t^{\mathrm{r}}\|_{X_t X_t^\top}^2}{\sigma^2} 
        + \frac{\|\theta_2 - \hat{\theta}_t^{\mathrm{c}}\|_{X_t X_t^\top}^2}{\gamma^2}  \right] \\
        &\quad - \frac{1}{T} \inf_{(\theta_1, \theta_2) \in \overline{\Theta}_{z^*}} 
        \left( \frac{\|\theta_1 - \hat{\theta}^{\mathrm{r}}_{T + 1}\|_{V_{T}}^2}{\sigma^2} 
        + \frac{\|\theta_2 - \hat{\theta}^{\mathrm{c}}_{T + 1}\|_{V_{T}}^2}{\gamma^2} \right) 
        \le o(1).
    \end{align*}
\end{lemma}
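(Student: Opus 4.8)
The plan is to read $F_5$ as the regret of the posterior-sampling min-learner. The first average is the realized cumulative loss incurred when, at round $t$, the learner draws $(\theta_1,\theta_2)\sim\tilde p_t$ from the restricted posterior and is charged the rank-one loss $\frac{\lVert\theta_1-\hthetart\rVert^2_{X_tX_t^\top}}{\sigma^2}+\frac{\lVert\theta_2-\hthetact\rVert^2_{X_tX_t^\top}}{\gamma^2}$ along the played arm $X_t$, whereas the second term is the normalized loss of the best fixed competitor in $\overline{\Theta}_{z^*}$, evaluated with the aggregate information matrix $V_T$ and the final estimates $\hat\theta^{\mathrm r}_{T+1},\hat\theta^{\mathrm c}_{T+1}$. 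Throughout I would condition on $E_{1,\frac1T}, E_{2,\frac1T}, E_{3,\frac1T}, E_{4,\frac1T}$: the ridge estimators concentrate (Lemma~\ref{lem:Good_Event_1}), the predictions stay bounded by $B_1,B_2$ (Lemma~\ref{lem:Good_Event_2}), enough exploration accumulates so that $V_t\succeq\frac34 A(\lambda^G)$ (Lemma~\ref{lem:GoodEvent_3}), and the empirical best feasible arm equals $z^*$ after $O(\log T)$ rounds (Lemma~\ref{lem:GoodEvent_4}), which also sharpens the estimator gap to the fast rate of Eqn.~\eqref{eq:concentration:reward}.

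First I would split the competitor. Since $V_T=I+\sum_{t=1}^T X_tX_t^\top$, for any fixed $(\theta_1,\theta_2)\in\overline{\Theta}_{z^*}$,
\[
\frac{\lVert\theta_1-\hat\theta^{\mathrm r}_{T+1}\rVert^2_{V_T}}{\sigma^2}=\frac{\lVert\theta_1-\hat\theta^{\mathrm r}_{T+1}\rVert^2}{\sigma^2}+\sum_{t=1}^T\frac{\lVert\theta_1-\hat\theta^{\mathrm r}_{T+1}\rVert^2_{X_tX_t^\top}}{\sigma^2},
\]
and analogously for the cost, so the competitor decomposes into a bounded identity term (which is $O(1/T)=o(1)$ after dividing by $T$, using the boundedness of $\Theta^{\mathrm r},\Theta^{\mathrm c}$ from Assumption~\ref{ass1}) plus a sum of rank-one terms directly comparable to the first average. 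It then suffices to bound, summand by summand, the realized posterior loss against the fixed competitor's rank-one loss.

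The main work is reconciling the three evaluation points in each summand: the posterior sample $\theta_1\sim\tilde p_t$ versus the fixed minimizer, and the current estimate $\hthetart$ versus the final $\hat\theta^{\mathrm r}_{T+1}$. I would control the estimator drift with the fast rate $\max_{x\in\mathcal X}\lvert\langle x,\hthetart-\thetar\rangle\rvert\le\sqrt{d\,t^{-3/4}\beta_1(t,\tfrac1{\delta^2})}$ of Eqn.~\eqref{eq:concentration:reward}, which (together with the analogous cost bound) forces $\lvert\langle X_t,\hthetart-\hat\theta^{\mathrm r}_{T+1}\rangle\rvert$ to vanish at a rate whose time-average is $o(1)$; expanding the rank-one quadratics, the residual cross terms are martingale differences in the observation noise $\{\epsilon_s,\eta_s\}$, bounded under $E_{2,\frac1T}$ by $O(\mathrm{poly}(B_1,B_2))$, so a single Azuma--Hoeffding step controls their partial sums by $O(\sqrt{T\log T})=o(T)$ with probability $1-O(1/T)$. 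For the posterior sample, I would use that the restricted Gaussian $\tilde p_t$ concentrates, as $V_{t-1}\to\infty$, on the $V_{t-1}$-projection of $\hthetart$ onto $\overline{\Theta}_{z^*}$, so the expected rank-one loss equals its value at that projection up to a fluctuation of order $\eta_\mathrm{r}^{-1}\lVert X_t\rVert^2_{V_{t-1}^{-1}}$; summing over $t$ and invoking the elliptical-potential lemma (Lemmas~\ref{Ellip_potential_reward} and~\ref{Ellip_potential_cost}) gives $\sum_t\lVert X_t\rVert^2_{V_{t-1}^{-1}}=O(d\log T)$, hence an $o(T)$ contribution.

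Assembling, the realized posterior loss is at most the competitor's summed rank-one loss at the projection point — which lies in $\overline{\Theta}_{z^*}$ and hence is at least the infimum defining the second term — plus the $o(T)$ drift and fluctuation errors; dividing by $T$ and choosing $\delta=\tfrac1T$ yields $F_5\le o(1)$, while a union bound over the invoked good events together with the one Azuma--Hoeffding step accounts for the stated $1-\tfrac4T$ failure probability. I expect the genuine obstacle to be the metric mismatch: the posterior concentrates in the $V_{t-1}$-metric, the per-round loss is measured in the rank-one $X_tX_t^\top$-metric, and the competitor in the aggregate $V_T$-metric. The exploration guarantee $V_t\succeq\frac34 A(\lambda^G)$, so that $V_T\succeq c\,T\,A(\lambda^G)$, is exactly what forces the time-averaged $V_{t-1}$-metric to align with the aggregate $V_T$-metric, which is what allows the projection-based per-round bound to telescope into the single final quadratic; making this alignment quantitative is the crux of the argument.
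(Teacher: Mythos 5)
Your reading of $F_5$ as the regret of the posterior-sampling min-learner is the right framing, but your route diverges from the paper's and, as written, has a genuine gap at the final assembly step. You bound the realized posterior loss by the summed rank-one loss at the time-varying projection points $\theta^{(t)}$, and then note that each $\theta^{(t)}\in\overline{\Theta}_{z^*}$ and hence its loss is \emph{at least} the infimum defining the second term. That inequality points the wrong way: to conclude $F_5\le o(1)$ you need the first term to be \emph{upper}-bounded by the infimum plus $o(1)$, i.e.\ you need
\begin{align*}
\sum_{t=1}^{T}\Big(\tfrac{\|\theta^{(t)}-\hthetart\|^2_{X_tX_t^\top}}{\sigma^2}+\tfrac{\|\cdot\|^2}{\gamma^2}\Big)\;\le\;\inf_{(\theta_1,\theta_2)\in\overline{\Theta}_{z^*}}\Big(\tfrac{\|\theta_1-\hat{\theta}^{\mathrm r}_{T+1}\|^2_{V_T}}{\sigma^2}+\tfrac{\|\theta_2-\hat{\theta}^{\mathrm c}_{T+1}\|^2_{V_T}}{\gamma^2}\Big)+o(T),
\end{align*}
which is a follow-the-leader regret statement for a sequence of moving minimizers against a single fixed comparator. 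This is the crux of the lemma and it is asserted ("telescope into the single final quadratic") rather than proved; it does not follow from membership of $\theta^{(t)}$ in $\overline{\Theta}_{z^*}$, nor from the metric-alignment observation $V_T\succeq cTA(\lambda^G)$. A secondary issue is that your Azuma step for the noise cross-terms needs uniformity over the data-dependent comparator in $\overline{\Theta}_{z^*}$ (a covering or self-normalized bound), which you do not address.

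The paper closes exactly this gap with a different mechanism: it introduces the log-partition function $W_t=\int_{\overline{\Theta}_{z^*}}\exp(-\tfrac12[\eta_r\|\theta_1-\hthetart\|^2_{V_{t-1}}+\eta_c\|\theta_2-\hthetact\|^2_{V_{t-1}}])\,\mathrm{d}\theta_1\mathrm{d}\theta_2$ and telescopes $\log(W_{T+1}/W_1)$. Each increment is split by Cauchy--Schwarz into the per-round rank-one loss $\Gamma_1$ and an estimator-drift term $\Gamma_2$; exp-concavity of $\exp(-\Gamma_1)$ (which is precisely why $\eta_r,\eta_c$ are chosen as $\eta/\sigma^2,\eta/\gamma^2$ with $\eta\le\min\{\sigma^2/(8L^2R_1^2),\gamma^2/(8L^2R_2^2)\}$, via Corollary~1 of \citet{kone2024pareto}) converts the mix loss into the expected loss under $\tilde p_t$ up to correction terms $a_1,a_2$, and $\log(W_{T+1}/W_1)$ is lower-bounded by $-(1-\gamma)/2$ times the infimum via a shrinking convex neighborhood $\mathcal{N}_\gamma$ of the minimizer, at the price of $2d\log\gamma$ and a volume ratio. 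The drift and correction sums are $O(\sqrt{T\log^2 T})$ with probability $1-4/T$ by Lemma~12 of \citet{kone2024pareto}. If you want to salvage your projection-based route you would need to supply the be-the-leader induction for these strongly convex cumulative losses and handle the data-dependent comparator uniformly; the potential-function argument does both in one stroke.
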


\begin{proof}
    Since $\tilde{p}_t := \mathcal{N}(\hat{\theta}^{\mathrm{r}}_t, \eta_r^{-1} V_{t-1}^{-1}) \otimes\mathcal{N}(\hat{\thetac_t}, \eta_c^{-1} V_{t-1}^{-1} ) | \overline{\Theta}_{z^*}$, we define the normalization term at time $t$ as 
    \begin{align*}
        W_t:=\int_{(\theta_1, \theta_2) \in \overline{\Theta}_{z^*}}\exp\left(-\frac{1}{2}\left[\eta_r\lVert\theta_1-\hat{\theta}_t^\mathrm{r}\rVert^2_{V_{t-1}}+\eta_c\lVert\theta_2-\hat{\theta}_t^\mathrm{c}\rVert^2_{V_{t-1}}\right]\right) \,\mathrm{d}\theta_1 \mathrm{d}\theta_2
    \end{align*}

Then we have for $t>1$,
\begin{align*}
    &\log \frac{W_{t}}{W_{t-1}}:=\log \frac{\int_{(\theta_1, \theta_2) \in \overline{\Theta}_{z^*}}\exp\left(-\frac{1}{2}\left[\eta_r\lVert\theta_1-\hat{\theta}_{t}^\mathrm{r}\rVert^2_{V_{t-1}}+\eta_c\lVert\theta_2-\hat{\theta}_{t}^\mathrm{c}\rVert^2_{V_{t-1}}\right]\right) \,\mathrm{d}\theta_1 \mathrm{d}\theta_2}{W_{t-1}}
    \\&=\log \frac{\int_{(\theta_1, \theta_2) \in \overline{\Theta}_{z^*}}\exp\left(-\frac{1}{2}\left[\eta_r\lVert\theta_1-\hat{\theta}_{t}^\mathrm{r}\rVert^2_{V_{t-1}} +\eta_c\lVert\theta_2-\hat{\theta}_{t}^\mathrm{c}\rVert^2_{V_{t-1}}+w_{t-1}-w_{t-1}\right]\right) \,\mathrm{d}\theta_1 \mathrm{d}\theta_2}{W_{t-1}}
\end{align*}
where $w_t:=\eta_r\lVert\theta_1-\hat{\theta}_t^\mathrm{r}\rVert^2_{V_{t-1}}+\eta_c\lVert\theta_2-\hat{\theta}_t^\mathrm{c}\rVert^2_{V_{t-1}}$

Then
\begin{align*}
    \log \frac{W_{t}}{W_{t-1}}&=\log {\EE_{(\theta_1, \theta_2) \sim\tilde{p}_t}\exp\left(-\frac{1}{2}\left[\eta_r\lVert\theta_1-\hat{\theta}_{t}^\mathrm{r}\rVert^2_{V_{t-1}} +\eta_c\lVert\theta_2-\hat{\theta}_{t}^\mathrm{c}\rVert^2_{V_{t-1}}-w_{t-1}\right]\right) }
    \\&=\log \EE_{(\theta_1, \theta_2)\sim \tilde{p}_t}\exp \left(-\frac{1}{2}\Gamma_1+\frac{1}{2}\Gamma_2\right)
    \\&\stackrel{(a)}{\le}\frac{1}{2}\log \EE_{(\theta_1, \theta_2)\sim\tilde{p}_t}\exp \left(-\Gamma_1\right)+\frac{1}{2}\log \EE_{(\theta_1, \theta_2) \in \tilde{p}_t}\exp \left(\Gamma_2\right)
\end{align*}

where 
\begin{align*}
    \Gamma_1:&=\eta_r \| \hat{\theta}^\mathrm{r}_{t-1} - \theta_1 \|^2_{x_{t-1} x_{t-1}^\top}+ \eta_c \| \hat{\theta}^\mathrm{c}_{t-1} - \theta_2 \|^2_{x_{t-1} x_{t-1}^\top}
   \\ \Gamma_2:&=\eta_r \| \hat{\theta}^\mathrm{r}_{t-1} - \theta_1 \|_{V_{t-1}}^2 - \eta_r \| \hat{\theta}^\mathrm{r}_{t} - \theta_1 \|_{V_{t-1}}^2+\eta_c \| \hat{\theta}^\mathrm{c}_{t-1} - \theta_2 \|_{V_{t-1}}^2 - \eta_c \| \hat{\theta}^\mathrm{c}_{t} - \theta_2 \|_{V_{t-1}}^2
\end{align*}
and $(a)$  follows from the Cauchy--Schwarz inequality. Next,
\begin{align*}
    -{\| \hat{\theta}_{t-1}^\mathrm{r} - \theta_1 \|^2_{x_{t-1} x_{t-1}^\top}} &\leq - \| \truethetar - \theta_1 \|^2_{x_{t-1} x_{t-1}^\top} + 2 (\truethetar - \hat{\theta}_{t-1}^\mathrm{r})^\top (x_{t-1} x_{t-1}^\top) (\truethetar - \theta_1) \\
    &\overset{}{\leq} - \| \truethetar - \theta_1 \|^2_{x_{t-1} x_{t-1}^\top} + 2 \| \truethetar - \hat{\theta}_{t-1}^\mathrm{r} \|_{V_{t-2}} \| x_{t-1} \|_{V_{t-2}^{-1}} \| \truethetar - \theta_1 \|_{x_{t-1} x_{t-1}^\top} \\
    &\overset{}{\leq} - \| \truethetar - \theta_1 \|^2_{x_{t-1} x_{t-1}^\top} + 2 \| \truethetar - \hat{\theta}_{t-1}^\mathrm{r} \|_{V_{t-2}} \| x_{t-1} \|_{V_{t-2}^{-1}} D_1 \\
    &= - \| \truethetar - \theta_1 \|^2_{x_{t-1} x_{t-1}^\top} + a_1
\end{align*}
where $a_1 := 2 \| \truethetar - \hat{\theta}_{t-1}^\mathrm{r} \|_{V_{t-2}} \| x_{t-1} \|_{V_{t-2}^{-1}} D_1$ and $D_1:=\max_{x\in\mathcal{X}} x^\top (\theta_1-\truethetar)$.

On the other hand, since
\begin{align*}
	 &-{\left \| \truethetar - \theta_1 \right\|^2_{x_{t-1} x_{t-1}^\top}}   \leq  - \| \hat{\theta}_{t-1}^\mathrm{r} - \theta_1 \|^2_{x_{t-1} x_{t-1}^\top} - 2 (\truethetar - \hat{\theta}_{t-1}^\mathrm{r})^\top (x_{t-1} x_{t-1}^\top) (\hat{\theta}_{t-1}^\mathrm{r} - \theta_1) \\
	 &\leq - \| \hat{\theta}_{t-1}^\mathrm{r} - \theta_1 \|^2_{x_{t-1} x_{t-1}^\top}  + 2 \| \truethetar - \hat{\theta}_{t-1}^\mathrm{r} \|_{V_{t-2}} \| x_{t-1} \|_{V_{t-2}^{-1}} \| \hat{\theta}_{t-1}^\mathrm{r} - \theta_1 \|_{x_{t-1} x_{t-1}^\top}.
     \\&\leq - \| \hat{\theta}_{t-1}^\mathrm{r} - \theta_1 \|^2_{x_{t-1} x_{t-1}^\top}  + 2 \| \truethetar - \hat{\theta}_{t-1}^\mathrm{r} \|_{V_{t-2}} \| x_{t-1} \|_{V_{t-2}^{-1}}(\| \truethetar - \hat{\theta}_{t-1}^\mathrm{r} \|_{x_{t-1} x_{t-1}^\top} + \| \truethetar - \theta_1 \|_{x_{t-1} x_{t-1}^\top} )
     \\&\leq - \| \hat{\theta}_{t-1}^\mathrm{r} - \theta_1 \|^2_{x_{t-1} x_{t-1}^\top}  + 2 \| \truethetar - \hat{\theta}_{t-1}^\mathrm{r} \|_{V_{t-2}} \| x_{t-1} \|_{V_{t-2}^{-1}}( \| \truethetar - \hat{\theta}_{t-1}^\mathrm{r} \|_{x_{t-1} x_{t-1}^\top} + D_1)
     \\&\leq - \| \hat{\theta}_{t-1}^\mathrm{r} - \theta_1 \|^2_{x_{t-1} x_{t-1}^\top}  + 2 \| \truethetar - \hat{\theta}_{t-1}^\mathrm{r} \|_{V_{t-2}} \| x_{t-1} \|_{V_{t-2}^{-1}}(\| \truethetar - \hat{\theta}_{t-1}^\mathrm{r} \|_{V_{t-2}} \| x_{t-1} \|_{V_{t-2}^{-1}} + D_1)
     \\&=- \| \hat{\theta}_{t-1}^\mathrm{r} - \theta_1 \|^2_{x_{t-1} x_{t-1}^\top}+a_1+\frac{a_1^2}{2D_1^2}
\end{align*}
then
\begin{align*}
    -{\left \| \truethetar - \theta_1 \right\|^2_{x_{t-1} x_{t-1}^\top}}  \leq - \| \hat{\theta}_{t-1}^\mathrm{r} - \theta_1 \|^2_{x_{t-1} x_{t-1}^\top}+2a_1+\frac{a_1^2}{2D_1^2}.
\end{align*}

Similarly we also have
\begin{align*}
    -{\left \| \truethetac - \theta_2 \right\|^2_{x_{t-1} x_{t-1}^\top}}  \leq - \| \hat{\theta}_{t-1}^\mathrm{c} - \theta_2 \|^2_{x_{t-1} x_{t-1}^\top}+2a_2+\frac{a_2^2}{2D_2^2},
\end{align*}
where $a_2 := 2 \| \truethetac - \hat{\theta}_{t-1}^\mathrm{c} \|_{V_{t-2}} \| x_{t-1} \|_{V_{t-2}^{-1}} D_1$ and $ D_2:=\max_{x\in\mathcal{X}} x^\top (\theta_2-\truethetac)$.

Furthermore, from Corollary~1 in \citet{kone2024pareto}, when we choose
\begin{align}
    \eta_r\le \frac{1}{8L^2 R_1^2}\quad \mbox{and}\quad \eta_c\le\frac{1}{8L^2 R_2^2}, \nonumber
\end{align}
whence, $\exp(-\Gamma_1)$ is concave in terms of $(\theta_1, \theta_2)$.

Then
\begin{align*}
    \log \frac{W_t}{W_{t-1}}&\stackrel{}{\le}\frac{1}{2}\log \EE_{(\theta_1, \theta_2) \sim\tilde{p}_t}\exp \left(-\Gamma_1\right)+\frac{1}{2}\log \EE_{(\theta_1, \theta_2) \sim\tilde{p}_t}\exp \left(\Gamma_2\right)
    \\&\le \frac{1}{2}\left(-\EE_{(\theta_1, \theta_2) \sim \tilde{p}_t}\eta_r\| \hat{\theta}_{t-1}^\mathrm{r} - \theta_1 \|^2_{x_{t-1} x_{t-1}^\top}+\eta_c\| \hat{\theta}_{t-1}^\mathrm{c} - \theta_2 \|^2_{x_{t-1} x_{t-1}^\top}+\eta_r\Big(2a_1+\frac{a_1^2}{2D_1^2}\Big)+ \eta_c\Big(2a_2+\frac{a_2^2}{2D_2^2}\Big)\right)
    \\*
    &\qquad+\frac{1}{2}\log \EE_{(\theta_1, \theta_2) \in \tilde{p}_t}\exp \left(\Gamma_2\right)
\end{align*}

Then by telescoping
\begin{align}
    \log \frac{W_{T+1}}{W_1}&=\sum_{t=2}^{T+1} \log \frac{W_{t}}{W_{t-1}}
   \nonumber\\&\le \frac{1}{2}\sum_{t=2}^{T+1}\left(-\EE_{(\theta_1, \theta_2) \sim \tilde{p}_t}\eta_r\| \hat{\theta}_{t-1}^\mathrm{r} - \theta_1 \|^2_{x_{t-1} x_{t-1}^\top} \!+\!\eta_c\| \hat{\theta}_{t-1}^\mathrm{c} \!- \!\theta_2 \|^2_{x_{t-1} x_{t-1}^\top}\!+\!\eta_r\Big(2a_1\!+\!\frac{a_1^2}{2D_1^2}\Big)\!+\! \eta_c\Big(2a_2\!+\!\frac{a_2^2}{2D_2^2}\Big)\right) \nonumber\\* 
&\qquad +\frac{1}{2}\sum_{t=2}^{T+1}\log \EE_{(\theta_1, \theta_2) \sim \tilde{p}_t}\exp \left(\Gamma_2\right)\label{eq:logless2}
\end{align}

{
    \let\OldGamma\gamma
    \renewcommand{\gamma}{z}
On the other hand, let $\gamma > 0$, and $(\tilde{\theta}_T^\mathrm{r}, \tilde{\theta}_T^\mathrm{c}) = \arg\min_{(\theta_1, \theta_2) \in \overline{\Theta}_{z^*}} 
        \left( \eta_r{\|\theta_1 - \hat{\theta}^{\mathrm{r}}_{T + 1}\|_{V_{T}}^2} 
        + \eta_c {\|\theta_2 - \hat{\theta}^{\mathrm{c}}_{T + 1}\|_{V_{T}}^2}\right)$.

Given that \( \overline{\Theta}_{z^*} \) constitutes a union of convex sets, there exists a convex subset \( \mathcal{C} \subseteq \overline{\Theta}_{z^*} \) such that \( (\tilde{\theta}_T^\mathrm{r}, \tilde{\theta}_T^\mathrm{c}) \in \mathcal{C} \) and $\text{vol}(\mathcal{C})>0$.

Next, define the set \( \mathcal{N}_\gamma \) as follows:
\[
\mathcal{N}_\gamma := \big\{ (1-\gamma)(\tilde{\theta}_T^\mathrm{r}, \tilde{\theta}_T^\mathrm{c}) + \gamma (\theta_1, \theta_2) \mid (\theta_1, \theta_2) \in \mathcal{C} \big\} = (1-\gamma)(\tilde{\theta}_T^\mathrm{r}, \tilde{\theta}_T^\mathrm{c}) + \gamma \mathcal{C},
\]
which represents a convex combination of the point \( (\tilde{\theta}_T^\mathrm{r}, \tilde{\theta}_T^\mathrm{c}) \) and elements from \( \mathcal{C} \), parameterized by \( \gamma \in [0, 1] \).

Hence,
\begin{align*}
    \log\frac{W_{T+1}}{W_{1}} &\geq  \log \frac{\int_{\mathcal{C}} \exp\left( - \frac{\eta_r\lVert\theta_1-\hat{\theta}_{T+1}^\mathrm{r}\rVert^2_{V_{T}}+\eta_c\lVert\theta_2-\hat{\theta}_{T+1}^\mathrm{c}\rVert^2_{V_{T}}}{2} \right) \,\mathrm{d}\theta_1 \,\mathrm{d}\theta_2}{W_{1}} \\
    &\geq \log \frac{\int_{\mathcal{N}_\gamma} \exp\left( - \frac{\eta_r\lVert\theta_1-\hat{\theta}_{T+1}^\mathrm{r}\rVert^2_{V_{T}}+\eta_c\lVert\theta_2-\hat{\theta}_{T+1}^\mathrm{c}\rVert^2_{V_{T}}}{2} \right) \,\mathrm{d}\theta_1\, \mathrm{d}\theta_2}{W_{1}} \\
    &= \log \frac{\int_{\gamma \mathcal{C}} \exp\left( - \frac{\eta_r\lVert\hat{\theta}_{T+1}^\mathrm{r}-(1-\gamma)\tilde{\theta}_T^\mathrm{r}-\theta_1\rVert^2_{V_{T}}+\eta_c\lVert\hat{\theta}_{T+1}^\mathrm{c}-\tilde{\theta}_T^\mathrm{c}-\theta_2\rVert^2_{V_{T}}}{2} \right) \,\mathrm{d}\theta_1\,\mathrm{d}\theta_2}{W_{1}} \\
    &= \log \frac{\int_{\mathcal{C}} \gamma^{2d} \exp\left( - \frac{\eta_r \| (1-\gamma)(\hat{\theta}_{T+1}^\mathrm{r}-\tilde{\theta}_T^\mathrm{r}) + \gamma \hat{\theta}_{T+1}^\mathrm{r} - \gamma \theta_1 \|^2_{V_T}+\eta_c \| (1-\gamma)(\hat{\theta}_{T+1}^\mathrm{c}-\tilde{\theta}_T^\mathrm{c}) + \gamma \hat{\theta}_{T+1}^\mathrm{c} - \gamma \theta_2 \|^2_{V_T}}{2} \right) \,\mathrm{d}\theta_1\, \mathrm{d}\theta_2}{W_{1}} \\
    &\geq \log \frac{\int_{\mathcal{C}} \gamma^{2d} \exp\left( - \frac{\eta_r \| (1-\gamma)(\hat{\theta}_{T+1}^\mathrm{r}-\tilde{\theta}_T^\mathrm{r}) \|^2_{V_T}+\gamma \lVert\hat{\theta}_{T+1}^\mathrm{r} - \theta_1 \rVert^2_{V_T} + \eta_c \| (1-\gamma)(\hat{\theta}_{T+1}^\mathrm{c}-\tilde{\theta}_T^\mathrm{c})  \|^2_{V_T}+\gamma \lVert\hat{\theta}_{T+1}^\mathrm{c} - \theta_2\rVert^2_{V_T}}{2} \right) \,\mathrm{d}\theta_1\, \mathrm{d}\theta_2}{W_{1}} \\
    &=2d\log(\gamma) - \frac{\eta_r(1-\gamma)}{2} \| \hat{\theta}_{T+1}^\mathrm{r} - \tilde{\theta}_T^\mathrm{r} \|_{V_T}^2 -\frac{\eta_c(1-\gamma)}{2} \| \hat{\theta}_{T+1}^\mathrm{c} - \tilde{\theta}_T^\mathrm{c} \|_{V_T}^2  \\ 
    &\qquad+ \log \frac{\int_{\mathcal{C}} \exp\left( - \frac{\eta_r \gamma \| \theta_1 - \hat{\theta}_{T+1}^\mathrm{r} \|^2_{V_T}+\eta_c \gamma \| \theta_2 - \hat{\theta}_{T+1}^\mathrm{c} \|^2_{V_T}}{2} \right) \,\mathrm{d}\theta_1 \,\mathrm{d}\theta_2}{W_{1}} \\
    &\geq 2d\log(\gamma) - \frac{\eta_r(1-\gamma)}{2} \| \hat{\theta}_{T+1}^\mathrm{r} - \tilde{\theta}_T^\mathrm{r} \|_{V_T}^2 -\frac{\eta_c(1-\gamma)}{2} \| \hat{\theta}_{T+1}^\mathrm{c} - \tilde{\theta}_T^\mathrm{c} \|_{V_T}^2 \\* 
    &\qquad- \EE_{(\theta_1, \theta_2)\sim \text{Uni}(\mathcal{C})} \frac{\eta_r \gamma \| \theta_1 - \hat{\theta}_{T+1}^\mathrm{r} \|^2_{V_T}+\eta_c \gamma \| \theta_2 - \hat{\theta}_{T+1}^\mathrm{c} \|^2_{V_T}}{2}  + \log \frac{\text{vol}(\mathcal{C})}{W_{1}}.
\end{align*}
Then with Eqn.~\eqref{eq:logless2}, we have
\begin{align*}
   &\frac{1}{2}\sum_{t=1}^T \EE_{(\theta_1, \theta_2)\sim \tilde{p}_t}\left(\eta_r\| \hat{\theta}_{t}^\mathrm{r} - \theta_1 \|^2_{x_{t} x_{t}^\top}+\eta_c\| \hat{\theta}_{t}^\mathrm{c} - \theta_2 \|^2_{x_{t} x_{t}^\top}\right)-\frac{1-\gamma}{2}\inf_{(\theta_1, \theta_2) \in \overline{\Theta}_{z^*}} 
        \left( \eta_r{\|\theta_1 - \hat{\theta}^{\mathrm{r}}_{T + 1}\|_{V_{T}}^2} 
        + \eta_c {\|\theta_2 - \hat{\theta}^{\mathrm{c}}_{T + 1}\|_{V_{T}}^2}\right)
        \\&\le \frac{1}{2}\sum_{t=2}^{T+1}\left[\eta_r\Big(2a_1+\frac{a_1^2}{2D_1^2}\Big)+ \eta_c\Big(2a_2+\frac{a_2^2}{2D_2^2}\Big)\right]+\EE_{(\theta_1, \theta_2)\sim \text{Uni}(\mathcal{C})} \frac{\eta_r \gamma \| \theta_1 - \hat{\theta}_{T+1}^\mathrm{r} \|^2_{V_T}+\eta_c \gamma \| \theta_2 - \hat{\theta}_{T+1}^\mathrm{c} \|^2_{V_T}}{2}
        \\&-2d\log \gamma -\log \frac{\text{vol}(\mathcal{C})}{W_{1}}+\frac{1}{2}\sum_{t=2}^{T+1}\log \EE_{(\theta_1, \theta_2) \in \tilde{p}_t}\exp \left(\Gamma_2\right).
\end{align*}
  \let\gamma\OldGamma
}

Then with the choice $z=\frac{1}{T}$, 
\begin{align*}
     \eta_r:=\frac{\eta}{\sigma^2},\quad
    \eta_c:=\frac{\eta}{\gamma^2}\quad\mbox{and}\quad
    \eta:=\min \cbr{\frac{\sigma^2}{8L^2R_1^2},\frac{\gamma^2}{8L^2R_2^2}}.
\end{align*}
From Lemma~12 in \cite{kone2024pareto},  we have with probability $1-\frac{4}{T}$,
\begin{align*}
    \frac{1}{2}\sum_{t=2}^{T+1}\left[\eta_r \Big(2a_1+\frac{a_1^2}{2D_1^2}\Big)+ \eta_c\Big(2a_2+\frac{a_2^2}{2D_2^2}\Big)\right] &=O(\sqrt{T\log^2 T}),\quad\mbox{and}\\
      \frac{1}{2}\sum_{t=2}^{T+1}\log \EE_{(\theta_1, \theta_2)\sim\tilde{p}_t}\exp \left(\Gamma_2\right)&=O(\sqrt{T\log^2 T})~.
\end{align*}

Also, under good event $E_{1, \frac{1}{T}}$, we have 
\begin{align*}
     \EE_{(\theta_1, \theta_2)\sim \text{Uni}(\mathcal{C})} \frac{\eta_r \gamma \| \theta_1 - \hat{\theta}_{T+1}^\mathrm{r} \|^2_{V_T}+\eta_c \gamma \| \theta_2 - \hat{\theta}_{T+1}^\mathrm{c} \|^2_{V_T}}{2}&=O(1),\quad\mbox{and}\\
    -2d\log z=2d\log T&=O(\log T).
\end{align*}

Since $\overline{\Theta}_{z^*}$ is finite and $\text{vol}(\mathcal{C})>0$, observe from   the definition of $W_t$ that $W_1\le \text{vol}(\overline{\Theta}_{z^*})$. Thus,   $-\log \frac{\text{vol}(\mathcal{C})}{W_{1}}=O(1)$.

To summarize, with probability $1-\frac{4}{T}$,
\begin{align*}
        F_5 &:= \frac{1}{T} \sum_{t=1}^{T} \mathbb{E}_{(\theta_1, \theta_2) \sim \tilde{p}_t} 
        \left[ \frac{\|\theta_1 \!-\! \hat{\theta}_t^{\mathrm{r}}\|_{X_t X_t^\top}^2}{\sigma^2} 
       \! + \!\frac{\|\theta_2 \!-\! \hat{\theta}_t^{\mathrm{c}}\|_{X_t X_t^\top}^2}{\gamma^2} \right]   - \frac{1}{T} \inf_{(\theta_1, \theta_2) \in \overline{\Theta}_{z^*}} 
        \left( \frac{\|\theta_1 \! -\! \hat{\theta}^{\mathrm{r}}_{T + 1}\|_{V_{T}}^2}{\sigma^2} 
        \!+\! \frac{\|\theta_2 \!-\! \hat{\theta}^{\mathrm{c}}_{T + 1}\|_{V_{T}}^2}{\gamma^2} \right) \\
        &\le O\Big(\sqrt{\frac{\log^2 T}{T}}\Big) = o(1).
    \end{align*}

\end{proof}

\begin{lemma}
\label{lem:Regret_F6}
With probability \( 1 - \frac{4}{T} \),
\begin{align*}
F_6 
&:= \frac{1}{T} \inf_{(\theta_1, \theta_2) \in \overline{\Theta}_{z^*}} 
\left( \frac{\|\theta_1 - \hat{\theta}_{T+1}\|_{V_{T}}^2}{\sigma^2} 
+ \frac{\|\theta_2 - \hat{\theta}_{T+1}\|_{{V}_{T}}^2}{\gamma^2} \right)   - \inf_{(\theta_1, \theta_2) \in \overline{\Theta}_{z^*}} 
\left( \frac{\|\theta_1 - \thetar\|_{\overline{V}_T}^2}{\sigma^2} 
+ \frac{\|\theta_2 - \thetac\|_{\overline{V}_T}^2}{\gamma^2} \right) \le o(1).
\end{align*}
\end{lemma}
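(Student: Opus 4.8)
The plan is to first exploit the identity $\overline{V}_T=\frac{1}{T}V_T$ recorded in the notation table (since $V_T=I+\sum_{t=1}^{T}X_tX_t^\top$ and $\overline{V}_T=\frac{1}{T}(I+\sum_{t=1}^{T}X_tX_t^\top)$), which gives $\frac{1}{T}\lVert\cdot\rVert_{V_T}^2=\lVert\cdot\rVert_{\overline{V}_T}^2$. This rewrites $F_6$ as a difference of two infima over the \emph{same} set $\overline{\Theta}_{z^*}$ in the \emph{same} weighted norm $\lVert\cdot\rVert_{\overline{V}_T}$: the first centred at the ridge estimate $(\hat{\theta}^{\mathrm{r}}_{T+1},\hat{\theta}^{\mathrm{c}}_{T+1})$, the second at the truth $(\thetar,\thetac)$. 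Because the lemma only asserts the one-sided bound $F_6\le o(1)$, I would, for any $\varepsilon>0$, fix a near-minimizer $(\theta_1^*,\theta_2^*)\in\overline{\Theta}_{z^*}$ of the truth-centred objective attaining value at most $B+\varepsilon$, where $B$ denotes that second infimum, and plug it into the estimate-centred objective to upper bound the first infimum $A$.

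The core estimate is then a triangle-inequality expansion of each squared norm. Writing $\lVert\theta_1^*-\hat{\theta}^{\mathrm{r}}_{T+1}\rVert_{\overline{V}_T}\le\lVert\theta_1^*-\thetar\rVert_{\overline{V}_T}+\lVert\thetar-\hat{\theta}^{\mathrm{r}}_{T+1}\rVert_{\overline{V}_T}$ and squaring produces the matching term $\lVert\theta_1^*-\thetar\rVert^2_{\overline{V}_T}$ (which, together with its cost analogue, reconstructs $B+\varepsilon$), plus a cross term and a pure estimation-error term. The estimation-error term is controlled by Lemma~\ref{lem:Good_Event_1}: on $E_{1,\delta}$ one has $\lVert\thetar-\hat{\theta}^{\mathrm{r}}_{T+1}\rVert^2_{\overline{V}_T}=\frac{1}{T}\lVert\thetar-\hat{\theta}^{\mathrm{r}}_{T+1}\rVert^2_{V_T}\le\frac{\beta_1(T+1,\frac{1}{\delta^2})}{T}$, and with $\delta=\frac{1}{T}$ we have $\beta_1(T+1,T^2)=O(\log T)$, so this term is $O(\tfrac{\log T}{T})=o(1)$; the cost term is identical with $\beta_2$.

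For the cross term I would bound $\lVert\thetar-\hat{\theta}^{\mathrm{r}}_{T+1}\rVert_{\overline{V}_T}=O(\sqrt{\log T/T})$ times a uniformly bounded factor $\lVert\theta_1^*-\thetar\rVert_{\overline{V}_T}$; the latter is bounded since $\lVert\overline{V}_T\rVert_{\mathrm{op}}\le\frac{1}{T}(1+TL^2)\le 1+L^2$ by Assumption~\ref{ass2}, while $\theta_1^*,\thetar\in\Theta^{\mathrm{r}}$ is bounded by Assumption~\ref{ass1}, giving $\lVert\theta_1^*-\thetar\rVert_{\overline{V}_T}\le 2R_1\sqrt{1+L^2}$. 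Hence the cross term is $o(1)$, and combining the three pieces yields $A\le(B+\varepsilon)+o(1)$; letting $\varepsilon\to 0$ gives $F_6=A-B\le o(1)$. The probability $1-\frac{4}{T}$ follows by applying the self-normalized (elliptical-potential) concentration underlying Lemma~\ref{lem:Good_Event_1} at the final step $T+1$ to both the reward and the cost estimators and taking a union bound. The main obstacle I anticipate is not the concentration bookkeeping but the uniform control of the cross-term multiplier $\lVert\theta_1^*-\thetar\rVert_{\overline{V}_T}$: one must confirm that the near-minimizer stays in the compact parameter set and that $\overline{V}_T$ admits a $T$-independent operator-norm bound, which Assumptions~\ref{ass1} and~\ref{ass2} supply; the fact that $\overline{\Theta}_{z^*}$ is only a union of convex pieces (and possibly not closed) is handled cleanly by the $\varepsilon$-near-minimizer device, avoiding any appeal to attainment of the infimum.
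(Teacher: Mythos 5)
Your proposal is correct and follows essentially the same route as the paper's proof: plug a (near-)minimizer of the truth-centred infimum into the estimate-centred objective, expand the difference of squared $V_T$-norms via the triangle inequality, control the estimation-error factor $\lVert\thetar-\hat{\theta}^{\mathrm{r}}_{T+1}\rVert_{V_T}\le\sqrt{\beta_1(T+1,T^2)}$ by the elliptical-potential concentration with $\delta=1/T$, and bound the remaining multiplier by $O(\sqrt{T})$ using boundedness of the parameter and arm sets, yielding $O(\sqrt{\log T/T})=o(1)$. The only cosmetic differences are your $\varepsilon$-near-minimizer device (the paper assumes the infimum is attained) and bounding the cross-term multiplier via the operator norm of $\overline{V}_T$ rather than via the good event $E_{2,\delta}$.
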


\begin{proof}
    Denote \( (\theta_3, \theta_4) := \arg\min_{(\theta_1, \theta_2) \in \overline{\Theta}_{z^*}} \left( \frac{\|\theta_1 - \thetar\|_{V_T}^2}{\sigma^2} 
    + \frac{\|\theta_2 - \thetac\|_{V_T}^2}{\gamma^2} \right) \). Then,
    \begin{align*}
        F_6 &= \frac{1}{T} \left[ \inf_{(\theta_1, \theta_2) \in \overline{\Theta}_{z^*}} 
        \left( \frac{\|\theta_1 - \hat{\theta}_{T+1}\|_{V_T}^2}{\sigma^2} + \frac{\|\theta_2 - \hat{\theta}_{T+1}\|_{{V}_T}^2}{\gamma^2} \right) \right. \\*
        &\quad \left. - \inf_{(\theta_1, \theta_2) \in \overline{\Theta}_{z^*}} 
        \left( \frac{\|\theta_1 - \thetar\|_{{V}_T}^2}{\sigma^2} + \frac{\|\theta_2 - \thetac\|_{{V}_T}^2}{\gamma^2} \right) \right] \\
        &\le \frac{1}{T} \left[ \left( \frac{\|\theta_3 - \hat{\theta}_{T+1}\|_{V_T}^2}{\sigma^2} 
        + \frac{\|\theta_4 - \hat{\theta}_{T+1}\|_{{V}_T}^2}{\gamma^2} \right) - \left( \frac{\|\theta_3 - \thetar\|_{{V}_T}^2}{\sigma^2} 
        + \frac{\|\theta_4 - \thetac\|_{{V}_T}^2}{\gamma^2} \right) \right] \\
        &= \frac{1}{T} \Bigg[ \frac{(\lVert \theta_3 - \hat{\theta}_{T+1} \rVert_{V_T} - \lVert \theta_3 - \thetar \rVert_{V_T})(\lVert \theta_3 - \hat{\theta}_{T+1} \rVert_{V_T} + \lVert \theta_3 - \thetar \rVert_{V_T})}{\sigma^2} \\
        &\quad + \frac{(\lVert \theta_4 - \hat{\theta}_{T+1} \rVert_{V_T} - \lVert \theta_4 - \thetac \rVert_{V_T})(\lVert \theta_4 - \hat{\theta}_{T+1} \rVert_{V_T} - \lVert \theta_4 + \thetac \rVert_{V_T})}{\gamma^2} \Bigg] \\
        &= \frac{1}{T} \Bigg[ \frac{\lVert \thetar - \hat{\theta}_{T+1} \rVert_{V_T} (\lVert \theta_3 - \hat{\theta}_{T+1} \rVert_{V_T} + \lVert \theta_3 - \thetar \rVert_{V_T})}{\sigma^2} \\
        &\quad + \frac{\lVert \thetac - \hat{\theta}_{T+1} \rVert_{V_T} (\lVert \theta_4 - \hat{\theta}_{T+1} \rVert_{V_T} - \lVert \theta_4 + \thetac \rVert_{V_T})}{\gamma^2} \Bigg] \\
        &\stackrel{(a)}{\le} \frac{1}{T} \Bigg[ \frac{\lVert \thetar - \hat{\theta}_{T+1} \rVert_{V_T} (3 L R_1 + B_1) \sqrt{T}}{\sigma^2} \\
        &\quad + \frac{\lVert \thetac - \hat{\theta}_{T+1} \rVert_{V_T} (3 L R_2 + B_2) \sqrt{T}}{\gamma^2} \Bigg] \\
        &\stackrel{(b)}{\le} \frac{1}{\sqrt{T}} \Bigg[ \frac{\sqrt{\beta_1(T+1, \frac{1}{\delta^2})}(3 L R_1 + B_1)}{\sigma^2} \\
        &\quad + \frac{\sqrt{\beta_2(T+1, \frac{1}{\delta^2})}(3 L R_2 + B_2)}{\gamma^2} \Bigg]
    \end{align*}
    where (a) comes from the good event \( E_{2, \delta} \), and (b) comes from Lemmas~\ref{Ellip_potential_reward} and~\ref{Ellip_potential_cost}.

    Then, with probability \( 1 - 4\delta \) and with the choice \( \delta = \frac{1}{T} \),
    \begin{align*}
        F_6 &\le \frac{1}{\sqrt{T}} \Bigg[ \frac{\sqrt{\beta_1(T+1, T^2)}(3 L R_1 + B_1)}{\sigma^2}   + \frac{\sqrt{\beta_2(T+1, T^2)}(3 L R_2 + B_2)}{\gamma^2} \Bigg]  = o(1).
    \end{align*}
\end{proof}

\section{Useful Lemmas}
In this section we present some useful existing lemmas for our proof.
\begin{lemma}[Theorem~2 in \citet{abbasi2011improved}]
    \label{Ellip_potential_reward}
    For all $t>0$, with probability $1-\delta$,
    \begin{align*}
        \lVert \hthetart-\thetar  \rVert_{V_{t-1}}\le \sqrt{\beta_1(t, \frac{1}{\delta^2})}
    \end{align*} 
    where $\beta_1(t,\frac{1}{\delta^2}):= (S_1 + \sigma\sqrt{2 \log(\frac{1}{\delta^2}) + d \log\left(\frac{d + t L^2}{d}\right)})^2~.$
\end{lemma}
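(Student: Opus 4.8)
The plan is to follow the standard self-normalized martingale argument of \citet{abbasi2011improved}, since the stated bound is exactly their Theorem~2 specialized to regularizer $\lambda = 1$, conditional sub-Gaussian parameter $\sigma$, and $\lVert \thetar \rVert_2 \le S_1$. First I would unwind the ridge estimator: at round $t$ the algorithm uses $\hthetart = V_{t-1}^{-1} S_{t-1}^{\mathrm{r}}$ with $V_{t-1} = I + \sum_{s=1}^{t-1} X_s X_s^\top$ and $S_{t-1}^{\mathrm{r}} = \sum_{s=1}^{t-1} X_s Y_s^{\mathrm{r}}$. Substituting the model $Y_s^{\mathrm{r}} = \langle \thetar, X_s\rangle + \epsilon_s$ and using $\sum_{s=1}^{t-1} X_s X_s^\top = V_{t-1} - I$ yields the decomposition
\begin{align*}
\hthetart - \thetar = V_{t-1}^{-1} \sum_{s=1}^{t-1} X_s \epsilon_s - V_{t-1}^{-1} \thetar,
\end{align*}
which separates the estimation error into a stochastic (noise) term and a deterministic regularization-bias term.

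Taking the $V_{t-1}$-norm and applying the triangle inequality, I would bound
\begin{align*}
\lVert \hthetart - \thetar \rVert_{V_{t-1}} \le \Big\lVert \textstyle\sum_{s=1}^{t-1} X_s \epsilon_s \Big\rVert_{V_{t-1}^{-1}} + \lVert \thetar \rVert_{V_{t-1}^{-1}}.
\end{align*}
The bias term is immediate: since $V_{t-1} \succeq I$ we have $V_{t-1}^{-1} \preceq I$, hence $\lVert \thetar \rVert_{V_{t-1}^{-1}} \le \lVert \thetar \rVert_2 \le S_1$ (the role of $R_1$ in Assumption~\ref{ass1}). For the noise term, $\{\epsilon_s\}$ is a martingale difference sequence that is conditionally $\sigma$-sub-Gaussian (being $\mathcal{N}(0,\sigma^2)$) with respect to the history, and each $X_s$ is $\mathcal{H}_{s-1}$-predictable; the anytime self-normalized concentration inequality underlying Theorem~2 of \citet{abbasi2011improved} then gives that, with probability at least $1-\delta$ and simultaneously for all $t \ge 1$,
\begin{align*}
\Big\lVert \textstyle\sum_{s=1}^{t-1} X_s \epsilon_s \Big\rVert_{V_{t-1}^{-1}}^2 \le 2\sigma^2 \log\!\Big( \frac{\det(V_{t-1})^{1/2}}{\delta} \Big) = \sigma^2 \log\det(V_{t-1}) + 2\sigma^2 \log\tfrac{1}{\delta}.
\end{align*}
The uniformity over $t$ here is what lets the paper later intersect the events over all $t \in [T]$ in Lemma~\ref{lem:Good_Event_1} without paying a union-bound factor of $T$.

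Finally I would control the determinant by a trace/AM--GM bound: $\det(V_{t-1}) \le (\mathrm{tr}(V_{t-1})/d)^d$, and $\mathrm{tr}(V_{t-1}) = d + \sum_{s<t} \lVert X_s\rVert_2^2 \le d + t L^2$ by Assumption~\ref{ass2}, so $\log\det(V_{t-1}) \le d\log\frac{d+tL^2}{d}$. Combining the two terms with $\sqrt{a+b}\le\sqrt{a}+\sqrt{b}$ and noting $2\log\frac{1}{\delta}\le 2\log\frac{1}{\delta^2}$ (so the stated $\beta_1$ is a deliberately conservative inflation of the tight constant) gives $\lVert \hthetart - \thetar\rVert_{V_{t-1}} \le S_1 + \sigma\sqrt{2\log\frac{1}{\delta^2} + d\log\frac{d+tL^2}{d}} = \sqrt{\beta_1(t,\frac{1}{\delta^2})}$, as claimed. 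The only genuinely deep step is the self-normalized inequality in the noise term, which \citet{abbasi2011improved} establish via a method-of-mixtures (pseudo-maximization) argument that turns the exponential tilt of $\sum_s X_s\epsilon_s$ into a nonnegative supermartingale and integrates the free parameter against a Gaussian prior; since we invoke their theorem directly, everything else reduces to the routine triangle-inequality and determinant manipulations above.
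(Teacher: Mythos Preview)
Your proposal is correct and is precisely the standard derivation of Theorem~2 in \citet{abbasi2011improved}; the paper itself does not give a proof of this lemma but simply cites that reference, so there is no alternative argument to compare against. Your observation that the paper's $2\log(1/\delta^2)=4\log(1/\delta)$ is a conservative inflation of the tight $2\log(1/\delta)$ constant is also correct and worth noting.
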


\begin{lemma}[Theorem~2 in \citet{abbasi2011improved}]
    \label{Ellip_potential_cost}
    For all $t>0$, with probability $1-\delta$,
    \begin{align*}
        \lVert \hthetact-\thetac  \rVert_{V_{t-1}}\le \sqrt{\beta_2(t, \frac{1}{\delta^2})}
    \end{align*}    
    where $\beta_2(t,\frac{1}{\delta^2}):= (S_2 + \gamma\sqrt{2 \log(\frac{1}{\delta^2}) + d \log\left(\frac{d + t L^2}{d}\right)})^2~.$
\end{lemma}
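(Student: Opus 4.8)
The plan is to reconstruct the self-normalized tail bound of \citet{abbasi2011improved}, specialized to the cost channel where the noise $\eta_s \sim \mathcal{N}(0,\gamma^2)$ is exactly Gaussian and each arm $X_s$ is $\mathcal{H}_{s-1}$-measurable. First I would split the ridge error into a deterministic regularization term and a stochastic martingale term. Writing $V_{t-1} = I + \sum_{s=1}^{t-1} X_s X_s^\top$ and $S_{t-1} := \sum_{s=1}^{t-1} X_s \eta_s$, the observation model gives $S^{\mathrm{c}}_{t-1} = (V_{t-1}-I)\thetac + S_{t-1}$, hence $\hthetact - \thetac = -V_{t-1}^{-1}\thetac + V_{t-1}^{-1}S_{t-1}$. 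The triangle inequality in the $V_{t-1}$-norm then yields
\[
\lVert \hthetact - \thetac \rVert_{V_{t-1}} \le \lVert \thetac \rVert_{V_{t-1}^{-1}} + \lVert S_{t-1} \rVert_{V_{t-1}^{-1}}.
\]
Since $V_{t-1}\succeq I$ we have $V_{t-1}^{-1}\preceq I$, so the first term is at most $\lVert \thetac\rVert_2 \le R_2 =: S_2$ by Assumption~\ref{ass1}. This isolates the self-normalized quantity $\lVert S_{t-1}\rVert_{V_{t-1}^{-1}}$ as the only term requiring concentration.

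For that term I would use the method of mixtures. For each fixed $\lambda\in\RR^d$ define $M_t(\lambda) := \exp\bigl(\gamma^{-2}\langle\lambda,S_t\rangle - (2\gamma^2)^{-1}\lVert\lambda\rVert_{\bar V_t}^2\bigr)$ with $\bar V_t := \sum_{s=1}^{t}X_sX_s^\top$. Because $\eta_t$ is conditionally $\mathcal{N}(0,\gamma^2)$ given $\mathcal{H}_{t-1}$ and $X_t$ is $\mathcal{H}_{t-1}$-measurable, a direct conditional moment-generating-function computation gives $\EE[M_t(\lambda)\mid\mathcal{H}_{t-1}]=M_{t-1}(\lambda)$, so $M_t(\lambda)$ is a nonnegative martingale with unit mean. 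Mixing over the prior $\lambda\sim\mathcal{N}(0,\gamma^2 I)$ and applying Tonelli's theorem produces a nonnegative supermartingale $\bar M_t := \int M_t(\lambda)\,\mathrm{d}\mu(\lambda)$ with $\EE[\bar M_t]\le 1$, and the Gaussian integral evaluates in closed form (with $V_t = I+\bar V_t$) to
\[
\bar M_t = \bigl(\det(V_t)\bigr)^{-1/2}\exp\Bigl(\tfrac{1}{2\gamma^2}\lVert S_t\rVert_{V_t^{-1}}^2\Bigr).
\]
Applying Ville's maximal inequality to $\bar M_t$ then bounds the probability that it ever exceeds a chosen threshold, and on the complementary event rearranging gives $\lVert S_t\rVert_{V_t^{-1}}^2 \le 2\gamma^2\bigl(\log\tfrac{1}{\delta^2} + \tfrac12\log\det V_t\bigr)$ uniformly in $t$, the threshold being calibrated to match the first argument $\tfrac{1}{\delta^2}$ appearing in $\beta_2$.

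Finally I would control the determinant by the trace: $\det V_t \le (\mathrm{tr}(V_t)/d)^d$ and $\mathrm{tr}(V_t) = d + \sum_s\lVert X_s\rVert_2^2 \le d + tL^2$ by Assumption~\ref{ass2}, so $\log\det V_t \le d\log\frac{d+tL^2}{d}$. Substituting and taking square roots yields $\lVert S_{t-1}\rVert_{V_{t-1}^{-1}} \le \gamma\sqrt{2\log\tfrac{1}{\delta^2} + d\log\frac{d+tL^2}{d}}$, and adding the regularization contribution $S_2$ reproduces exactly $\sqrt{\beta_2(t,\tfrac{1}{\delta^2})}$. The main obstacle is the mixture step: one must carefully verify the conditional-MGF identity that makes $M_t(\lambda)$ a martingale — this is where the $\mathcal{H}_{t-1}$-measurability of $X_t$ and the exact Gaussianity of $\eta_t$ are used — and then carry out the Gaussian integration with consistent normalizations so that the prior covariance cancels against the regularizer $I$ and the factor $\det(V_t)^{-1/2}$ emerges cleanly; matching the numerical constants inside $\beta_2$ to the parametrization of the lemma is a routine but error-prone bookkeeping task.
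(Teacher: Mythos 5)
Your proposal is correct: the paper gives no proof of this lemma, merely citing Theorem~2 of \citet{abbasi2011improved}, and your argument is a faithful reconstruction of that theorem's standard proof — the ridge-error split into regularization and martingale parts, the method of mixtures with a Gaussian prior whose covariance cancels the unit regularizer, Ville's maximal inequality, and the determinant–trace bound — with the threshold correctly calibrated to $1/\delta^2$ so that the stated $\beta_2$ and the $1-\delta$ probability both come out. No gaps.
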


\begin{lemma}[Laplace Approximation]
\label{lem:GaussianApproximation}
Denote $\overline{V}_T:=I+\frac{1}{T} \sum_{t=1}^{T} X_t X_t^\top$, $\text{for any bounded open sets } \Theta_1, \Theta_2 \subseteq \mathbb{R}^d, \text{ we have}$
\[
\int_{\Theta_1} \int_{\Theta_2} \exp\left( -\frac{T}{2} (\frac{\left\| \thetar - \theta_1 \right\|^2_{\overline{V}_T}}{\sigma^2}+\frac{\left\| \thetac - \theta_2 \right\|^2_{\overline{V}_T}}{\gamma^2}) \right) \,\mathrm{d}\theta_1 \, \mathrm{d}\theta_2 
\overset{.}{=} \exp\left( -\frac{T}{2} \inf_{\theta_1 \in \Theta_1, \theta_2 \in \Theta_2} \frac{\left\| \thetar - \theta_1 \right\|^2_{\overline{V}_T}}{\sigma^2}+\frac{\left\| \thetac - \theta_2 \right\|^2_{\overline{V}_T}}{\gamma^2} \right).
\]

\end{lemma}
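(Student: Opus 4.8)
The plan is to prove the two matching exponential bounds whose combination is exactly the stated $\stackrel{\cdot}{=}$. Writing $u=(\theta_1,\theta_2)\in\RR^{2d}$, $\mu=(\thetar,\thetac)$, and $Q_T:=\mathrm{blockdiag}\!\big(\sigma^{-2}\overline{V}_T,\ \gamma^{-2}\overline{V}_T\big)$, the integrand becomes $\exp(-\tfrac T2\,g_T(u))$ with $g_T(u):=\|u-\mu\|_{Q_T}^2$, and the target exponent is $g_T^*:=\inf_{u\in\Omega}g_T(u)$ over $\Omega:=\Theta_1\times\Theta_2$, so the claim reduces to $\lim_{T\to\infty}\big(\tfrac1T\log I_T+\tfrac12 g_T^*\big)=0$, where $I_T$ is the integral. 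The decisive structural fact I would record first is that Assumption~\ref{ass2} forces $\tfrac1T\sum_t X_tX_t^\top\preceq L^2 I$, hence $I\preceq\overline{V}_T\preceq(1+L^2)I$ and the eigenvalues of $Q_T$ lie in a fixed interval $[q_{\min},q_{\max}]$ that is independent of $T$. This uniform nondegeneracy is what makes each $g_T$ uniformly Lipschitz on the bounded set $\Omega$: since $\nabla g_T(u)=2Q_T(u-\mu)$ and $\sup_{u\in\overline\Omega}\|u-\mu\|=:R<\infty$, the Lipschitz constant $M:=2q_{\max}R$ is $T$-free.

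For the upper bound I would simply use $g_T(u)\ge g_T^*$ on $\Omega$ to get $I_T\le\mathrm{vol}(\Omega)\exp(-\tfrac T2 g_T^*)$; since $\mathrm{vol}(\Omega)<\infty$ is $T$-independent, $\tfrac1T\log I_T+\tfrac12 g_T^*\le\tfrac1T\log\mathrm{vol}(\Omega)\to0$, giving $\limsup_T\big(\tfrac1T\log I_T+\tfrac12 g_T^*\big)\le0$. For the lower bound, fix $\eta>0$, let $u_T^\dagger$ attain $g_T^*$ on the compact set $\overline\Omega$, and use uniform Lipschitzness to write $A_\eta(T):=\{u\in\Omega:g_T(u)\le g_T^*+\eta\}\supseteq B(u_T^\dagger,\eta/M)\cap\Omega$. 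Then $I_T\ge\mathrm{vol}(A_\eta(T))\exp(-\tfrac T2(g_T^*+\eta))$, so $\tfrac1T\log I_T+\tfrac12 g_T^*\ge\tfrac1T\log\mathrm{vol}(A_\eta(T))-\tfrac\eta2$; letting $T\to\infty$ and then $\eta\to0$ would finish, provided $\mathrm{vol}(A_\eta(T))$ does not decay exponentially in $T$.

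The hard part will be exactly this last uniform volume bound, because the minimizer $u_T^\dagger$ may sit on $\partial\Omega$ and drift with $T$ (indeed, in the application $\mu\notin\Omega$, so the constraint is active and the minimizer genuinely lives on the boundary). I would control it by invoking a measure-density (interior-cone) property of $\Omega$: there exist $\kappa>0$ and $r_0>0$ with $\mathrm{vol}(B(x,r)\cap\Omega)\ge\kappa\,r^{2d}$ for every $x\in\overline\Omega$ and $r\le r_0$. This holds with a single constant $\kappa$ because $\Theta_1,\Theta_2$ and their product have Lipschitz boundary — in the relevant instances they are finite unions of convex polyhedra cut out by the feasibility/optimality inequalities — so $\kappa$ is independent of the boundary point $x=u_T^\dagger$ and of $T$. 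With this, $\mathrm{vol}(A_\eta(T))\ge\kappa\big(\min\{\eta/M,r_0\}\big)^{2d}>0$ is bounded below by a $T$-independent constant, whence $\tfrac1T\log\mathrm{vol}(A_\eta(T))\to0$ and $\liminf_T\big(\tfrac1T\log I_T+\tfrac12 g_T^*\big)\ge0$. Combining the two bounds yields $\lim_T\big(\tfrac1T\log I_T+\tfrac12 g_T^*\big)=0$, which is precisely the asserted first-order-exponent identity. I note that even a polynomial-in-$T$ lower bound on $\mathrm{vol}(A_\eta(T))$ would suffice here, giving some slack should the measure-density constant need to be localized.
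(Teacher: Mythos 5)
Your proof is correct and follows essentially the same route as the paper's: an upper bound by $\mathrm{vol}(\Theta_1\times\Theta_2)$ times the supremum of the integrand, and a lower bound obtained from the ($T$-uniform) Lipschitz continuity of the quadratic on the bounded domain applied to a small neighborhood of a (near-)minimizer. If anything you are more careful than the paper, which integrates over balls $\Theta_i^*$ centered at the minimizer without intersecting them with $\Theta_i$ (problematic when the minimizer sits on $\partial\Theta_i$, as it does in the application); your volume lower bound on $B(u_T^\dagger,\rho)\cap\Omega$ addresses exactly this, and for a fixed radius $\rho$ it in fact follows for any bounded open $\Omega$ by compactness of $\overline{\Omega}$ and continuity of $x\mapsto\mathrm{vol}(B(x,\rho)\cap\Omega)$, without needing the Lipschitz-boundary/cone hypothesis you invoke.
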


\begin{proof}
    First, since $\Theta_1$ and $\Theta_2$ are bounded, 
    \begin{align}
    %\label{eq:bounded}
        \mathrm{LHS}\le \text{Vol}(\Theta_1) \cd \text{Vol}(\Theta_2) \exp\left( -\frac{T}{2} \inf_{\theta_1 \in \Theta_1, \theta_2 \in \Theta_2} \frac{\left\| \truethetar - \theta_1 \right\|^2_{\overline{V}_T}}{\sigma^2}+\frac{\left\| \truethetac - \theta_2 \right\|^2_{\overline{V}_T}}{\gamma^2} \right).\nonumber
    \end{align}
    Second, define $(\theta^*_1, \theta^*_2):=\arg\min
_{(\theta_1,\theta_2) \in \Theta_1\times \Theta_2} \frac{\left\| \truethetar - \theta_1 \right\|^2_{\overline{V}_T}}{\sigma^2}+\frac{\left\| \truethetac - \theta_2 \right\|^2_{\overline{V}_T}}{\gamma^2} $

For any $\theta_1\in \Theta_1, \theta_2 \in \Theta_2$, since
\begin{align*}
    &\Bigg| \left(\frac{\left\| \truethetar - \theta_1 \right\|^2_{\overline{V}_T}}{\sigma^2} 
    + \frac{\left\| \truethetac - \theta_2 \right\|^2_{\overline{V}_T}}{\gamma^2}\right) 
    - \left(\frac{\left\| \truethetar - \theta^*_1 \right\|^2_{\overline{V}_T}}{\sigma^2} 
    + \frac{\left\| \truethetac - \theta^*_2 \right\|^2_{\overline{V}_T}}{\gamma^2}\right) \Bigg| \\
    &\le \frac{1}{\sigma^2} \Big| \left\| \truethetar - \theta_1 \right\|^2_{\overline{V}_T} 
    - \left\| \truethetar - \theta^*_1 \right\|^2_{\overline{V}_T} \Big| 
    + \frac{1}{\gamma^2} \Big| \left\| \truethetac - \theta_2 \right\|^2_{\overline{V}_T} 
    - \left\| \truethetac - \theta^*_2 \right\|^2_{\overline{V}_T} \Big| \\
    &\le \frac{4L^2 R_1}{\sigma^2} \lVert \theta_1 - \theta^*_1 \rVert_2 
    + \frac{4L^2 R_2}{\gamma^2} \lVert \theta_2 - \theta^*_2 \rVert_2
\end{align*}
then when $\lVert \theta_1-\theta^*_1\rVert_2\le \epsilon_1, \lVert \theta_2-\theta^*_2\rVert_2\le \epsilon_2$,
\begin{align*}
    &\Bigg| \left( \frac{\left\| \truethetar - \theta_1 \right\|^2_{\overline{V}_T}}{\sigma^2} 
    + \frac{\left\| \truethetac - \theta_2 \right\|^2_{\overline{V}_T}}{\gamma^2} \right) 
    - \left( \frac{\left\| \truethetar - \theta^*_1 \right\|^2_{\overline{V}_T}}{\sigma^2} 
    + \frac{\left\| \truethetac - \theta^*_2 \right\|^2_{\overline{V}_T}}{\gamma^2} \right) \Bigg| \\
    &\le \frac{4L^2 R_1}{\sigma^2} \epsilon_1 
    + \frac{4L^2 R_2}{\gamma^2} \epsilon_2.
\end{align*}
Define $\Theta^*_1:=\cbr{\theta_1\mid \lVert \theta_1-\theta^*_1\rVert\le \epsilon_1}, \Theta^*_2:=\cbr{\theta_2\mid \lVert \theta_2-\theta^*_2\rVert\le \epsilon_2}$, then
\begin{align*}
    \mathrm{LHS}&\ge \int_{\Theta^*_1} \int_{\Theta^*_2} \exp\left( -\frac{T}{2} (\frac{\left\| \truethetar - \theta_1 \right\|^2_{\overline{V}_T}}{\sigma^2}+\frac{\left\| \truethetac - \theta_2 \right\|^2_{\overline{V}_T}}{\gamma^2}) \right) \,\mathrm{d}\theta_1 \, \mathrm{d}\theta_2 
    \\&\ge \text{Vol}(\Theta^*_1) \cd \text{Vol}(\Theta^*_2) \exp \bigg(-\frac{T}{2} \Big(\frac{4L^2 R_1}{\sigma^2} \epsilon_1+ \frac{4L^2 R_2}{\gamma^2} \epsilon_2\Big)\bigg) \exp\Big(-\frac{T}{2}\inf_{\theta_1 \in \Theta_1, \theta_2 \in \Theta_2} \frac{\left\| \truethetar - \theta_1 \right\|^2_{\overline{V}_T}}{\sigma^2}+\frac{\left\| \truethetac - \theta_2 \right\|^2_{\overline{V}_T}}{\gamma^2}\Big)
\end{align*}
Then when we choose $\epsilon_1 \to 0, \epsilon_2 \to 0$, we can have
\begin{align*}
    \int_{\Theta_1} \int_{\Theta_2} \exp\left( -\frac{T}{2} (\frac{\left\| \thetar - \theta_1 \right\|^2_{\overline{V}_T}}{\sigma^2}+\frac{\left\| \thetac - \theta_2 \right\|^2_{\overline{V}_T}}{\gamma^2}) \right) \,\mathrm{d}\theta_1 \, \mathrm{d}\theta_2 
\overset{.}{=} \exp\left( -\frac{T}{2} \inf_{\theta_1 \in \Theta_1, \theta_2 \in \Theta_2} \frac{\left\| \thetar - \theta_1 \right\|^2_{\overline{V}_T}}{\sigma^2}+\frac{\left\| \thetac - \theta_2 \right\|^2_{\overline{V}_T}}{\gamma^2} \right).
\end{align*}
\end{proof}

%\iffalse
\section{More Empirical Plots}
\label{sec:empirical_plots}
\subsection{More plots for "End of Optimism" Instance} \label{sec:eoo}

\begin{figure}[H]
    \centering
    \includegraphics[width=.75\linewidth]{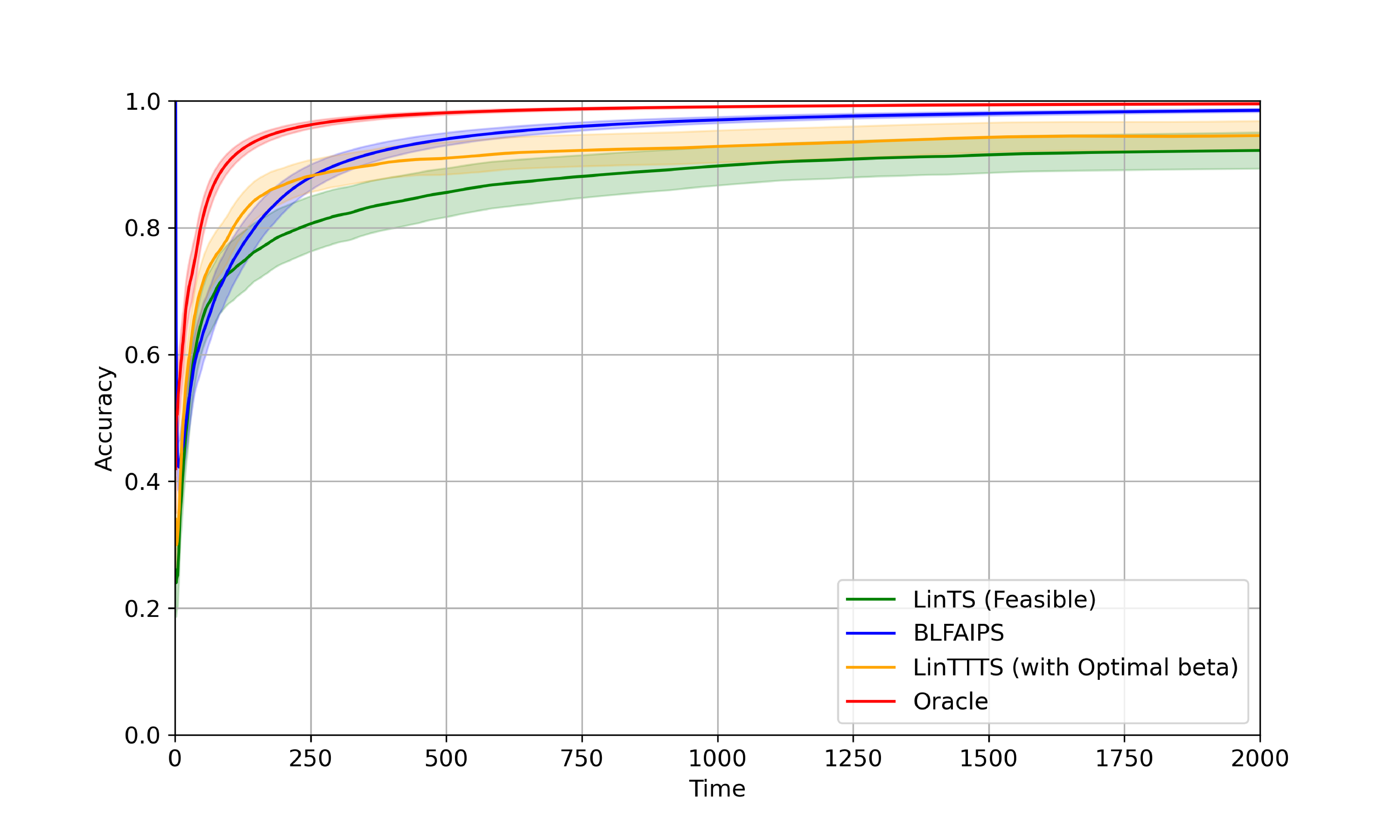}
    
    \caption{$\alpha=0.2$}
    
    \label{fig:2}
    
    \includegraphics[width=.75\linewidth]{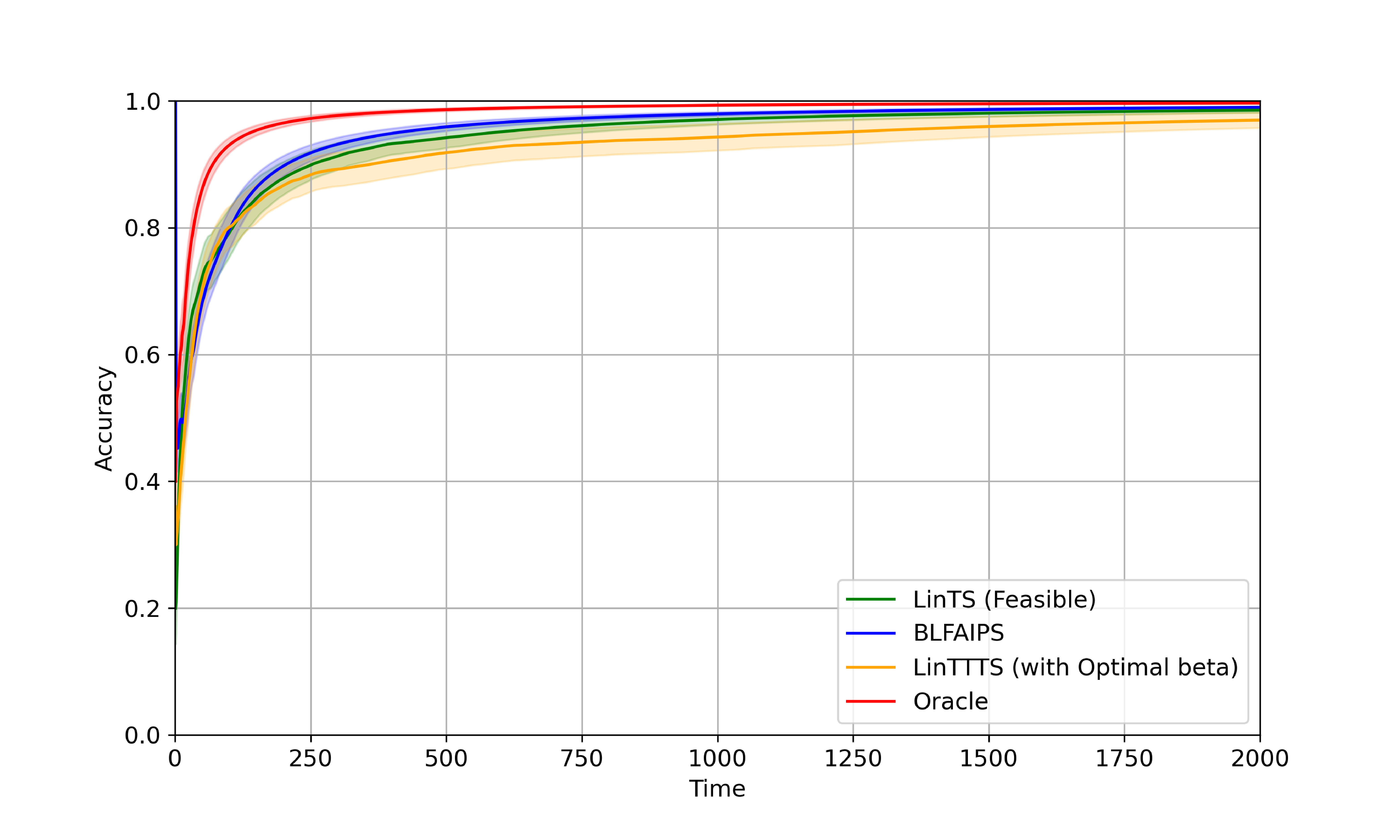}
    
    \caption{$\alpha=0.3$}
    \label{fig:3}

\end{figure}

\subsection{More plots for Random Instance}\label{sec:random}
\begin{figure}[H]
    \centering
    \includegraphics[width=.75\linewidth]{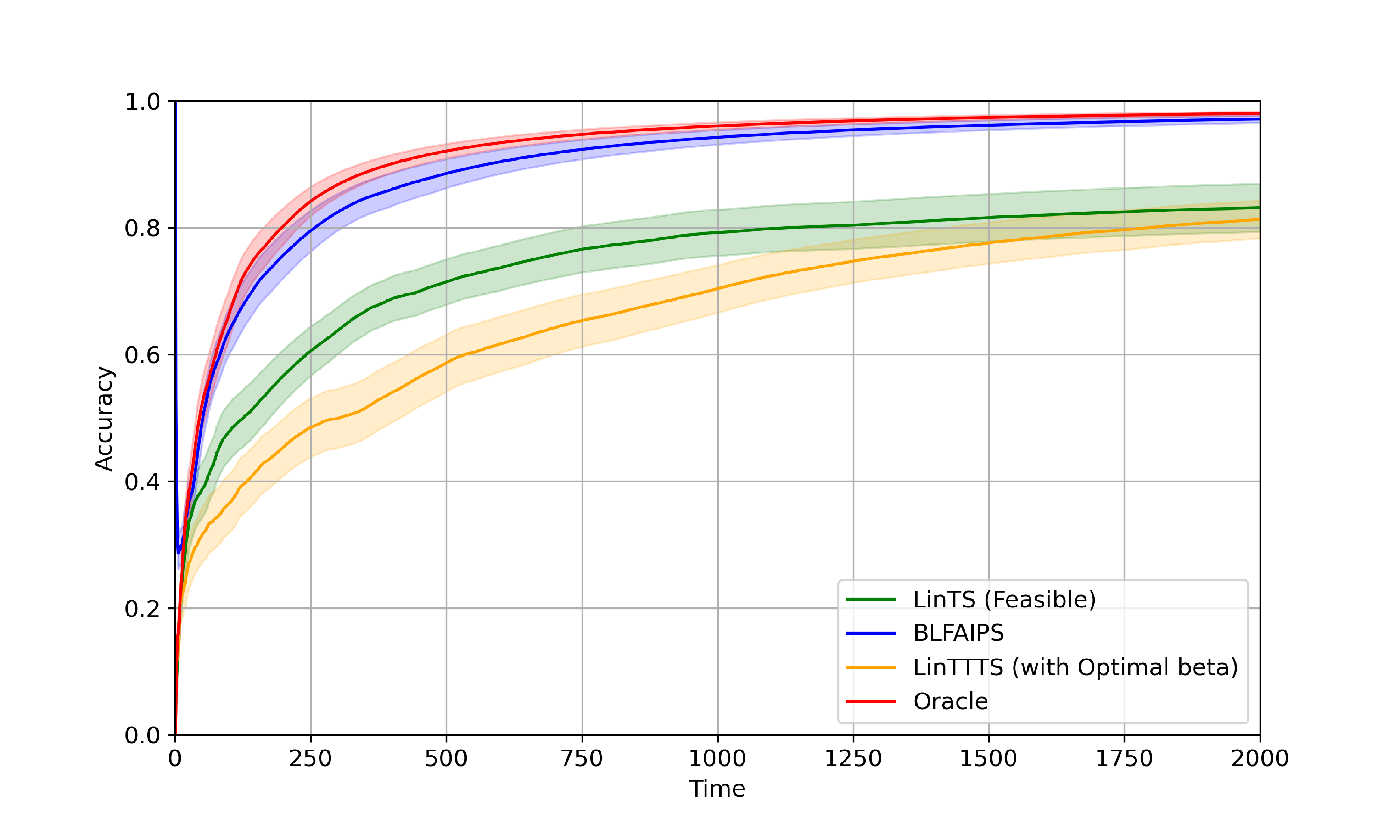}
    \caption{$d=2, K=20$}
    \label{fig:d2K20}
    \includegraphics[width=.7\linewidth]{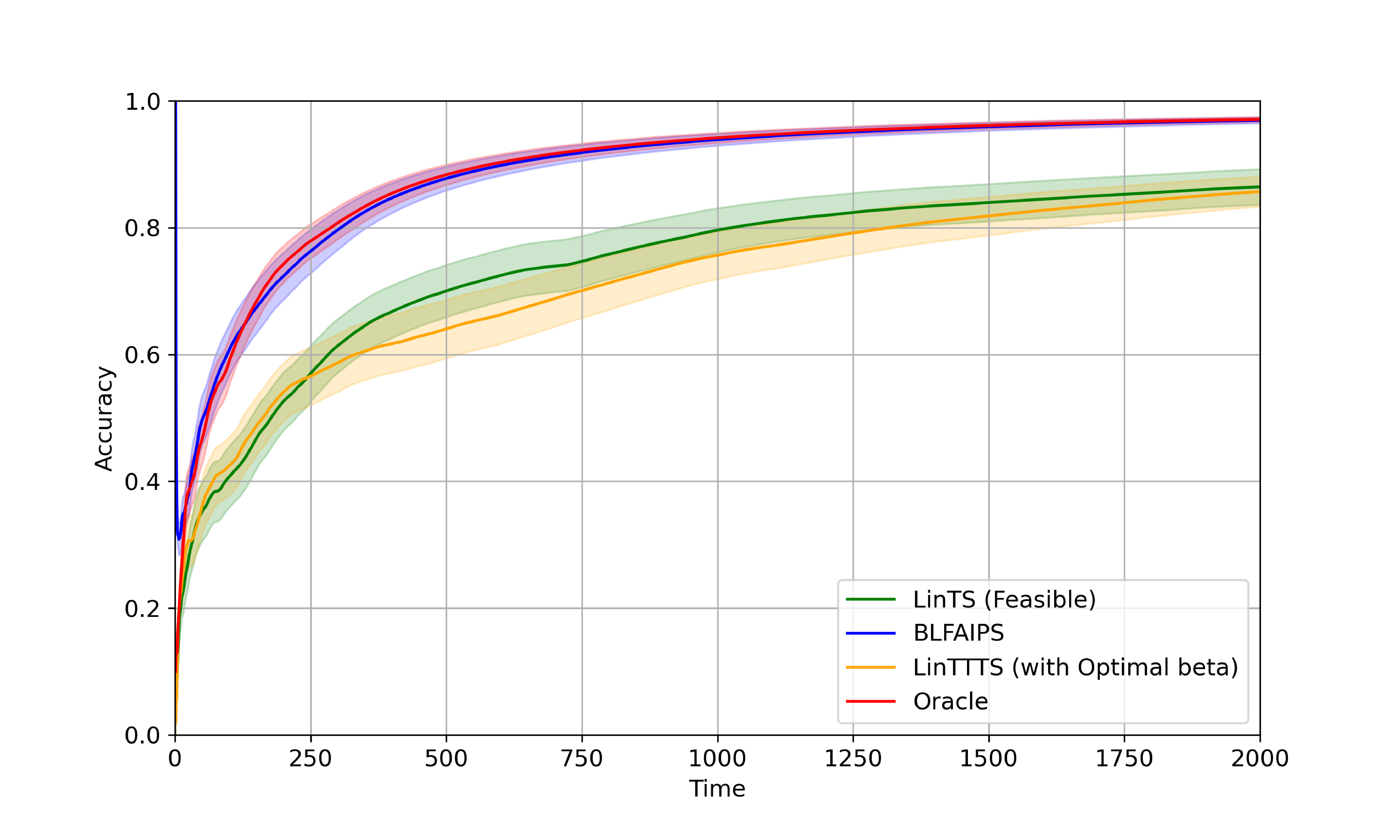}
    \caption{$d=20, K=20$}
    \label{fig:d20K20}

\end{figure}%\fi
\end{document}